\definecolor{mydarkred}{rgb}{0.6,0,0}
\definecolor{mydarkgreen}{rgb}{0,0.6,0}
\let\tmlrAND\AND
\let\AND\relax
\let\AND\tmlrAND
\newtheorem{theorem}{Theorem}[section] \crefname{theorem}{Theorem}{Theorems}
\newtheorem{lemma}[theorem]{Lemma} \crefname{lemma}{Lemma}{Lemmas}
 \crefname{corollary}{Corollary}{Corollaries}
 \crefname{definition}{Definition}{Definitions}
\newtheorem{prop}[theorem]{Proposition}  \crefname{prop}{Proposition}{Propositions}
  \crefname{remark}{Remark}{Remarks}
\newtheorem{problem}[theorem]{Problem}  \crefname{problem}{Problem}{Problems}
\newlist{assumplist}{enumerate}{1}
\setlist[assumplist]{label=(\textbf{\Alph*})}
\Crefname{assumplisti}{Assumption}{Assumptions}
\newlist{netassumplist}{enumerate}{1}
\setlist[netassumplist]{label=(\textbf{\Roman*})}
\Crefname{netassumplisti}{Assumption}{Assumptions}
\newlist{compactitem}{itemize}{3}
\setlist[compactitem]{topsep=0pt,partopsep=0pt,itemsep=4pt,parsep=0pt}
\setlist[compactitem,1]{label=\textbullet}
\setlist[compactitem,2]{label=---}
\setlist[compactitem,3]{label=*}
\newlist{compactdesc}{description}{3}
\setlist[compactdesc]{topsep=0pt,partopsep=0pt,itemsep=4pt,parsep=0pt}
\newcommand{\cH}{\mathcal{H}}
\newcommand{\R}{\mathbb{R}} %
\newcommand{\F}{\mathcal{F}} %
\newcommand{\K}{\mathbf{K}} %
\newcommand{\G}{\mathcal{G}} %
\renewcommand{\L}{\mathbf{L}} %
\renewcommand{\H}{\mathbf{H}} %
\newcommand{\N}{\mathcal{N}} %
\newcommand{\X}{\mathcal{X}} %
\newcommand{\Y}{\mathcal{Y}} %
\newcommand{\U}{\mathcal{U}} %
\renewcommand{\P}{\mathbb{P}} %
\newcommand{\Q}{\mathbb{Q}} %
\newcommand{\bX}{\mathbb{X}}
\newcommand{\bY}{\mathbb{Y}}
\newcommand{\bZ}{\mathbb{Z}}
\DeclareMathOperator*{\E}{\mathbb{E}} %
\DeclareMathOperator*{\Var}{Var}
\DeclareMathOperator{\bigO}{\mathcal{O}}
\DeclareMathOperator{\smallO}{o}
\newcommand{\tp}{^\mathsf{T}}
\newcommand{\heta}{\hat{\eta}}
\newcommand{\hsigma}{\hat{\sigma}}
\newcommand{\bsigma}{{\boldsymbol{\sigma}}}
\newcommand{\nullhyp}{\mathfrak{H}_0}
\newcommand{\althyp}{\mathfrak{H}_1}
\newcommand{\ceil}[1]{{\left\lceil{#1}\right\rceil}}
\newcommand{\floor}[1]{{\left\lfloor{#1}\right\rfloor}}
\newcommand{\indep}{\protect\mathpalette{\protect\independenT}{\perp}}
\def\independenT#1#2{\mathrel{\rlap{$#1#2$}\mkern2mu{#1#2}}}
\newcommand{\nindep}{\centernot{\indep}}
\DeclareMathOperator{\MMD}{MMD}
\DeclareMathOperator{\HSIC}{HSIC}
\DeclareMathOperator{\TV}{TV}
\DeclareMathOperator{\MI}{I}
\newcommand{\NCE}[1][]{\operatorname{I_{NCE}^{\mathit{#1}}}}
\newcommand{\NCEhat}[1][]{\operatorname{\hat{I}_{NCE}^{\mathit{#1}}}}
\DeclarePairedDelimiter{\abs}{\lvert}{\rvert}
\DeclarePairedDelimiter{\norm}{\lVert}{\rVert}
\newcommand{\httpsurl}[1]{\href{https://#1}{\nolinkurl{#1}}}
\newcommand{\THat}{NDS\xspace}
\title{Learning Representations for Independence Testing}
\author{\name Nathaniel Xu \email xunathan@cs.ubc.ca \\
      \addr University of British Columbia
      \AND
      \name Feng Liu \email feng.liu1@unimelb.edu.au \\
      \addr University of Melbourne
      \AND
      \name Danica J.~Sutherland \email dsuth@cs.ubc.ca\\
      \addr University of British Columbia \\
      Alberta Machine Intelligence Institute}
\begin{document}

\maketitle

\begin{abstract}
Many tools exist to detect dependence between random variables, a core question across a wide range of machine learning, statistical, and scientific endeavors.
Although several statistical tests guarantee eventual detection of any dependence with enough samples, standard tests may require an exorbitant amount of samples for detecting subtle dependencies between high-dimensional random variables with complex distributions.
In this work, we study two related ways to learn powerful independence tests.
First, we show how to construct powerful statistical tests with finite-sample validity by using variational estimators of mutual information, such as the InfoNCE or NWJ estimators.
Second, we establish a close connection between these variational mutual information-based tests
and tests based on the Hilbert-Schmidt Independence Criterion (HSIC);
in particular, learning a variational bound (typically parameterized by a deep network)
for mutual information is closely related to learning a kernel
for HSIC.
Finally, we show how to, rather than selecting a representation to maximize the statistic itself,
select a representation which can maximize the power of a test,
in either setting;
we term the former case a Neural Dependency Statistic (NDS).
While HSIC power optimization has been recently considered in the literature,
we correct some important misconceptions and expand to considering deep kernels.
In our experiments, while all approaches can yield powerful tests with exact level control,
optimized HSIC tests generally outperform the other approaches on difficult problems of detecting structured dependence.
\end{abstract}

\section{Introduction}
\label{sec:intro}

Independence testing, the question of using paired samples to determine whether a random variable $X$ and another $Y$ are associated with one another
or if they are statistically independent,
is one of the most common tasks across scientific and data-based fields.
Traditional methods
make strong parametric assumptions,
for instance assuming that $X$ and $Y$ are jointly normal so that dependence is characterized by covariance,
and/or operate only in limited settings,
for instance the tabular setting of the celebrated $\chi^2$ test or Fisher's exact test.
Applying these approaches to high-dimensional continuous data
is difficult at best.

One characterization of independence is the Shannon mutual information (MI):
this quantity is zero if two random variables are independent,
and positive if they are dependent.
Substantial effort has been made in estimating this quantity with a variety of estimators;
see e.g.\ the broad list based on binning, nearest-neighbors, kernel density estimation, and so on implemented by \citet{szabo14information}.
In high-dimensional cases, however,
recent work has focused on estimating variational bounds
defined by deep networks \citep[see e.g.][]{poole2019variational},
which can ideally learn problem-specific structure.
To our knowledge, this class of estimators has not yet been used to construct
\emph{statistical tests} of independence:
in particular, if we run the estimation algorithm and estimate a lower bound on the MI of $0.12$,
does that mean that the variables are (perhaps weakly) dependent,
or might that be due to random noise on the samples we saw?
The first contribution of this paper is to construct and evaluate tests of this form.

Mutual information, though, is notoriously difficult to estimate
\citep{paninsky:mi}.
If the question we want to ask is ``are $X$ and $Y$ dependent,''
we can also consider turning to a different characterization of dependence
which may be statistically easier to estimate.
The Hilbert-Schmidt Independence Criterion (HSIC),
introduced by \citet{gretton2005measure},
measures the total cross-covariance
between feature representations of $X$ and $Y$,
and can be estimated from samples efficiently.
The construction supports even \emph{infinite-dimensional} features
in a reproducing kernel Hilbert space (RKHS),
equivalent to choosing a kernel function.
With appropriate choices of kernel \citep[see][]{szabo2018characteristic},
the HSIC is also zero if and only if $X \indep Y$.
\citet{szekely2007measuring} and \citet{lyons2013distance} separately proposed \emph{distance covariance} tests,
which measure the covariance of pairwise distances between $X$ values with distances between $Y$ values; this can be viewed as HSIC with a particular kernel \citep{sejdinovic2013equivalence}. Our second contribution describes how HSIC, in fact, is also a lower bound on mutual information, and that tests based on either are very closely related.

Alternatively, we can characterize an independence problem as a two-sample one. Two-sample problems are concerned with the question: given samples from $\P$ and $\Q$, is $\P=\Q$? Under this framework, when we consider samples from the joint distribution $\P_{xy}$ and the product of the marginals $\P_x\times\P_y$ (where $X \indep Y$ by construction), the two-sample problem characterizes an equivalent independence problem between variables $X$ and $Y$.
One way we can measure the discrepancy between $\P_{xy}$ and $\P_x\times\P_y$ is
with the kernel-based Maximum-Mean Discrepancy (MMD) \citep{mmd-jmlr};
doing so recovers HSIC.

Any reasonable choice of kernel, such as the Gaussian kernel with unit bandwidth or the distance kernel that yields distance covariance, will \emph{eventually} (with enough samples) be able to detect any fixed dependence. In practice, however, this scheme can perform extremely poorly; if, for instance, the data varies on a very different scale than the bandwidth of the Gaussian kernel, it may take exorbitant quantities of data to achieve any reasonable test power.
Thus, tests using Gaussian kernels often rely on the \emph{median heuristic} to choose a kernel relevant to the data at hand: choosing a bandwidth based on the median pairwise distance among data points \citep{mmd-jmlr}.
While this is a reasonable first guess for many data types, there exist many two-sample testing problems where it can be dramatically better to instead select a bandwidth that optimizes a measure of test power \citep{sutherland:mmd-opt}.
Beyond that, there are many problems where no Gaussian kernel performs well -- for instance, many problems on natural image data or involving sparsity -- but Gaussian kernels applied to latent representations of such data do.
As an extreme example in the dependence case, consider the construction $X \sim \mathcal N(0, 1)$, $Y_0 \sim \mathcal N(0, 1)$,
but then $Y$ is obtained by replacing the fifteenth decimal place of $Y_0$ with that of $X$.
Representations based on Euclidean distance would require an absurd number of samples to detect the dependence between $X$ and $Y$, but a representation that extracts only the fifteenth decimal place will make the dependence obvious.

These considerations apply similarly to tests based on mutual information. In fact, we can view estimation of a variational MI bound as learning a representation of the data that maximizes its measure of dependency. Thus, learning representations of the data that make any dependency more explicit is central to developing more powerful independence tests. Our main algorithmic contribution involves developing a scheme to learn these optimal representations, both for HSIC tests and for a class of tests closely related to the variational MI-based tests.
In both settings, this is based on maximizing an estimate of the asymptotic power of the test,
the primary term of which is an estimate of the signal-to-noise ratio of the statistic:
the estimator divided by its standard deviation.
In both cases, we can efficiently, and differentiably, estimate this quantity;
we also show via a uniform convergence argument that optimizing the power estimate
leads to a representation which generalizes well.
Using data splitting and permutation testing \citep[as in e.g.][]{hsic-perm-consistency},
we obtain a test which is exactly valid
and achieves high power.
In experiments, we find that while both methods often work well,
there are settings where the HSIC optimization scheme far outperforms the other categories of tests.

In both cases, the overall approach builds on prior work in kernel two-sample testing
\citep{gretton2012opt,wittawat:interpretable-testing,sutherland:mmd-opt,liu2020learning}.
We show, however, that performing the direct reduction to two-sample testing
and applying these techniques, in addition to breaking the assumptions of their theoretical results,
yields a notably less powerful test than our direct HSIC power optimization,
due to statistical properties of the estimators.
We also note that \citet{ren24a} recently proposed a closely related scheme for the HSIC power optimization setting;
we argue, however, that several of their main algorithmic suggestions are not well-supported (particularly in \cref{app:snr-pitfall}) and identify an apparent gap in the proof of their main theorem (see \cref{foot:snr-proof}).

\begin{figure*}[!t]
    \begin{center}
        {\pdftooltip{\includegraphics[width=0.8\textwidth]{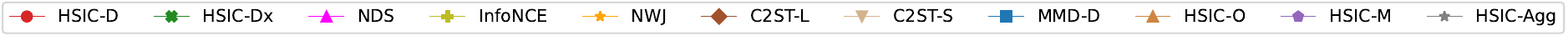}}{These are baselines considered in this paper. See more details in Section~\ref{sec:exp}.}}\\
        \subfigure[Samples $X,Y\sim\P_{xy}$]%
        {\includegraphics[width=0.24\textwidth]{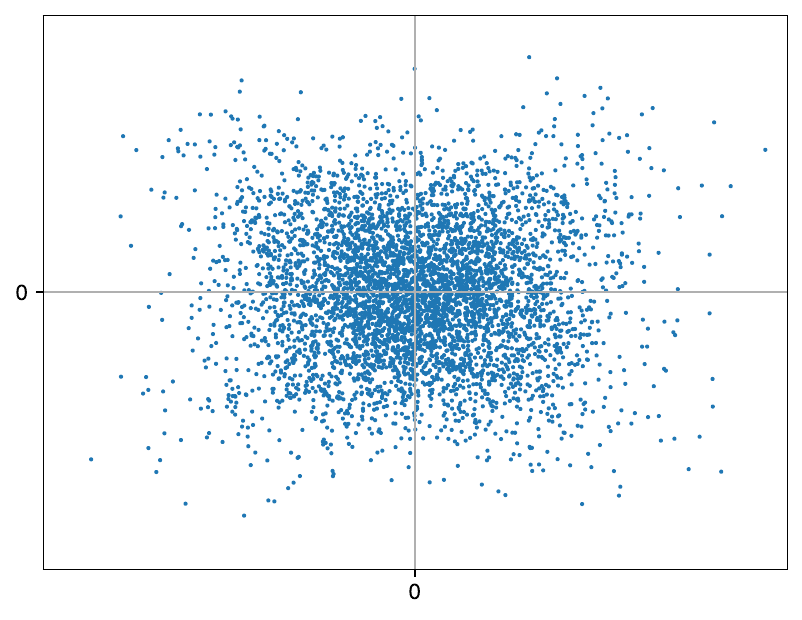}}
        \subfigure[Power vs. $m$; $d = 4$]
        {\includegraphics[width=0.24\textwidth]{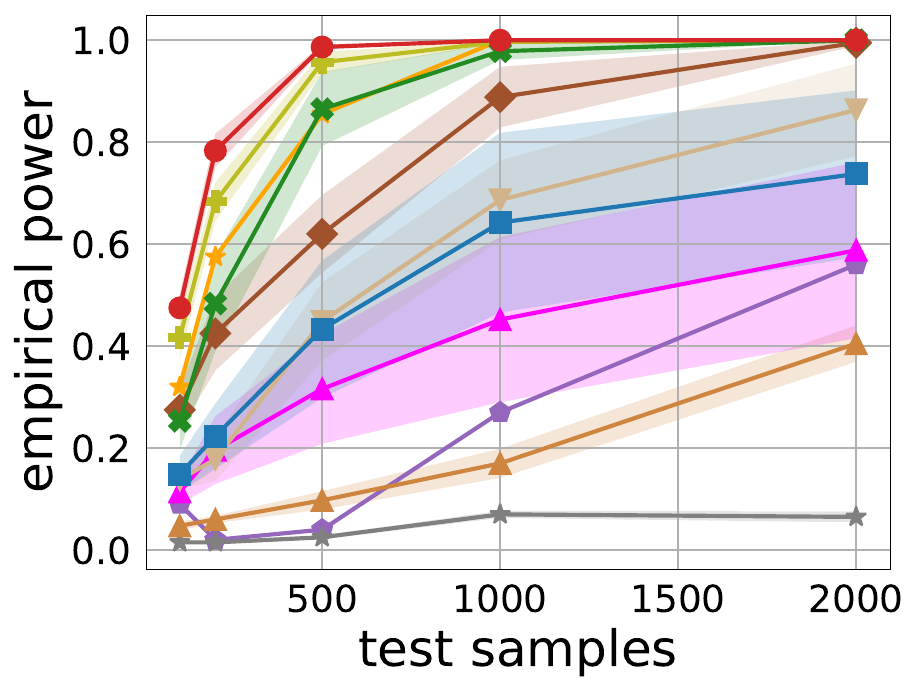}}
        \subfigure[Power vs. $m$; $d = 10$]
        {\includegraphics[width=0.24\textwidth]{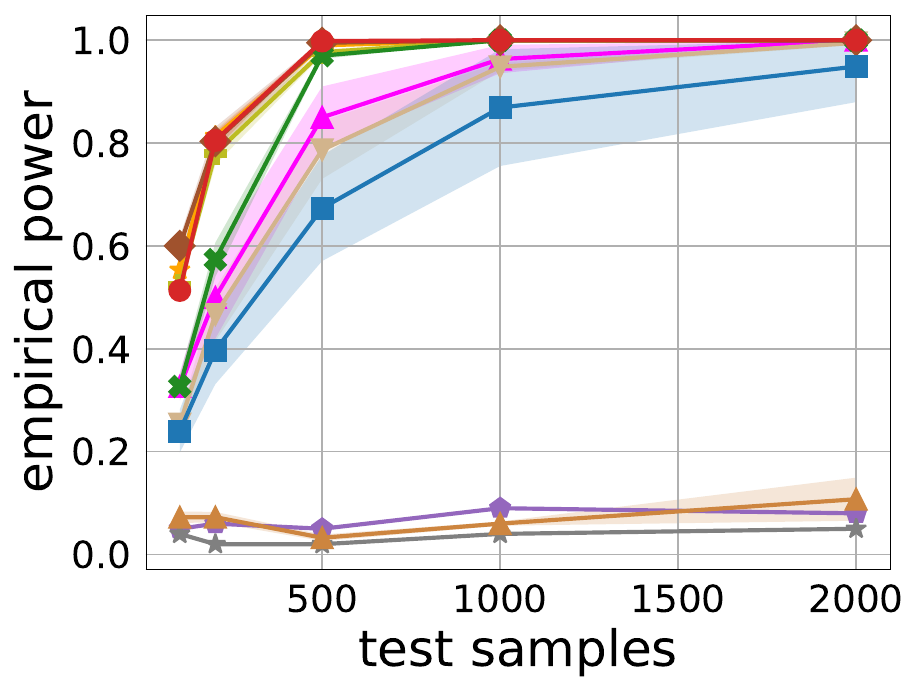}}
        \vspace{-0.3cm}
        \subfigure[Power vs. $m$; $d = 15$]
        {\includegraphics[width=0.24\textwidth]{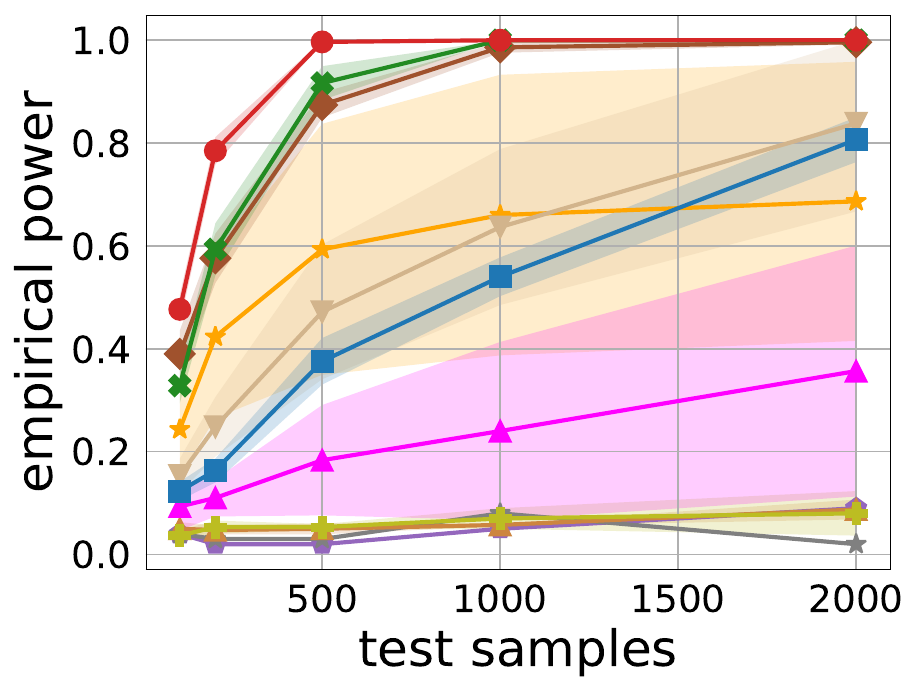}}
        \caption{Vanilla kernel-based HSIC tests struggle on high-dimensional data. (a) A bimodal Gaussian mixture, with visible dependence between $X$ and $Y$. (b)-(d) Test power as we add an increasing number of independent noise dimensions. When $d=4$ the median heuristic (HSIC-M) confidently detects dependence between $X$ and $Y$ with a reasonably large sample size, but struggles when $d=10$ or $d=15$. Our method (HSIC-D) detects dependence in high dimensions with many fewer samples.}  \label{fig:moti}
    \end{center}
    \vspace{-1em}
\end{figure*}

\section{Tests Based On Variational Mutual Information Bounds} \label{sec:mi-tests}

\begin{problem}[Independence testing] \label{prob}
    Let $Z=(X,Y) \sim \P_{xy}$ on a domain $\X\times\Y$, and $\P_x$, $\P_y$ be the corresponding marginal distributions of $\X$ and $\Y$.
    We observe $m$ paired samples $\bX=( x_1,...,x_m )$ and $\bY=(y_1,...,y_m)$ jointly drawn from $\P_{xy}$. %
    We wish to conduct a null hypothesis significance test,
    with null hypothesis $\nullhyp: \P_{xy} = \P_x \times \P_y$
    and alternative $\althyp : \P_{xy} \ne \P_x \times \P_y$.
\end{problem}

We wish to solve this problem without making strong (parametric) assumptions about the form of $\P_{xy}$, $\P_x$, or $\P_y$.
Most independence tests are based on estimating the ``amount'' of dependence between $X$ and $Y$,
or equivalently the discrepancy between 
between $\P_{xy}$ and $\P_{x} \times \P_{y}$.
Given a nonnegative quantity which is zero if $X \indep Y$,
we can reject $\nullhyp$ if our estimate is large enough that we are confident the true value is positive.

The most famous such quantity is the Shannon mutual information $\MI(X; Y)$.
It can be defined as the Kullback-Liebler divergence of $\P_{xy}$ from $\P_x \times \P_y$,
and is zero if and only if $X \indep Y$;
equivalently, it is the amount by which knowledge of $Y$
decreases the entropy of $X$.
While many estimators exist based roughly on various forms of density estimation \citep[see e.g.][]{szabo14information},
as discussed above these can fail to detect subtle or structured forms of dependence with reasonable numbers of samples.
Variational estimators of mutual information
give an opportunity for problem-specific representations.
As an example,
consider the following lower bound \citep{oord2018representation,poole2019variational},
called InfoNCE for its connection to noise-contrastive estimation \citep{nce-orig}:
\[
    \NCE[f,K](X; Y) = \E_{\substack{(x_i, y_i) \sim \P_{xy}^K}}\left[
        \NCEhat[f,K]
    \right]
    \le \MI(X; Y)
    \qquad\text{where}\quad
    \NCEhat[f,K] =
    \frac1K \sum_{i=1}^K \log \frac{e^{f(x_i, y_i)}}{\frac1K \sum_{j=1}^K e^{f(x_i, y_j)}}
.\]
Here $K$ is a batch size, which in our setting will typically be the total number of available points.
Each $f : \X \times \Y \to \R$ (and each $K$)
leads to a different lower bound;
the largest $\NCE[f,K]$ is the tightest bound.
In practice, users typically
parameterize $f$ as a deep network
and maximize $\NCEhat[f, K]$ on minibatches from a training set,
providing an opportunity for $f$ to learn useful feature adapted to the problem at hand.
There are a variety of bounds of this general type;
\citet{poole2019variational} give a unified accounting of many.
Different bounds yield different bias-variance tradeoffs,
but run up against various limitations on the possibility of estimation
\citep{song2020understanding,mcallester-stratos}.

For independence testing,
the key question is whether the true value $\MI(X; Y) > 0$.
This is guaranteed if a lower bound, such as $\NCE[f,K]$ for some particular $f$ and $K$,
is positive \emph{at the population level},
i.e.\ with the true expectation.
To estimate this,
by far the easiest approach is data splitting:
choose $f$ to maximize $\NCEhat[f,K]$ on (minibatches from) a training set,
then evaluate $\NCEhat[f,K]$ on the heldout test set.

How large should $\NCEhat[f,K]$ be in order to be confident that $\NCE[f,K] > 0$?
That is, what does the distribution of $\NCEhat[f,K]$ look like when $\MI(X; Y) = 0$,
and hence $\NCE[f,K] \le 0$?
For a given $f$,
we can answer this question with \emph{permutation testing},
which estimates 
values under the null hypothesis ($\P_{xy}=\P_x\times\P_y$)
by randomly shuffling the test data, breaking dependence.
To construct a test with probability of false rejection at most $\alpha$,
we can compute the empirical $1-\alpha$ quantile
from this permuted set, as long as we include the original paired data in this shuffling
\citep[Theorem 2]{perm}.
We reject the null hypothesis if this quantile is smaller than the test statistic
\begin{equation} \label{eq:nce-hat}
    \NCEhat[f,K]
    = \frac1K \sum_{i=1}^K f(x_i, y_i)
    - \frac1K \sum_{i=1}^K \log \left( \frac1K \sum_{j=1}^K e^{f(x_i, y_j)} \right)
.\end{equation}
Written in this form,
notice that the second term of $\NCEhat$
is permutation-invariant:
changing each $y_j$ with $y_{\sigma_j}$ for some permutation $\sigma$
changes the value of the first term,
but does not change the value of the second.
Put another way, the test statistic and each of its permuted versions
are $\frac1K \sum_{i=1}^K f(x_i, y_i)$
shifted by the same constant.
Thus, although this second term plays a vital role in selecting the critic function $f$,
at test time the only thing that matters is whether the mean value of $f(x, y)$
is higher for the true pairings than for random pairs. We call this term the \emph{neural dependency statistic} (\THat) and denote it for a batch of $K$ paired samples $\bX=(x_1,...,x_K)$ and $\bY=(y_1,...,y_K)$ by
\begin{equation} 
    \label{eq:nds}
    \hat{T}(\bX,\bY) = \frac{1}{K}\sum_{i=1}^K f(x_i, y_i).
\end{equation}

The same is true for the NWJ \citep{nguyen2010mutualinfo}, DV \citep{donsker-varadhan},
$\mathrm{I_{JS}}$, and $\mathrm{I_\alpha}$ lower bounds discussed by \citet{poole2019variational},
as well as the MINE estimator \citep{mine}; that is, at test time only the \THat statistic \eqref{eq:nds} matters.

\subsection{Asymptotic test power}

Typically, a mutual information lower bound is considered better if the population value of the bound is larger: that makes it a tighter bound.
This viewpoint, however, neglects the issue of statistical estimation of that bound,
e.g.\ the difference between $\NCE[f,K]$ and $\NCEhat[f,K]$.
The best statistical test, among tests with appropriate Type I (false rejection) control,
is the one with highest \emph{test power}: the probability of correctly rejecting the null hypothesis $\nullhyp$ when $X \nindep Y$.
Taking this into account, we examine the behavior of a permutation test based on \eqref{eq:nce-hat} or the many other mutual information bounds based only on the NDS \eqref{eq:nds}. Each bound will choose a different $f$ during training, but at test time only the \THat matters.

Let $T_{\nullhyp} = \E_{\P_x \times \P_y} f(x, y)$
and $T_{\althyp} = \E_{\P_{xy}} f(x, y)$ be the population level statistics under the null and alternative distributions.
Assuming a fixed critic $f$ is chosen independently from test samples $\bX$ and $\bY$ (e.g.\ because of sample splitting), and that
$0 < \tau_{\althyp}^2 = \Var_{\P_{xy}} f(x, y) < \infty$,
the central limit theorem implies that
$\frac1{\tau_{\althyp}} \sqrt m (\hat T - T_{\althyp}) \stackrel{\mathrm{d}}{\to} \N(0, 1)$.
Then, using $\Phi$ for the standard normal cdf, the rejection threshold $r_m$ satisfies
\begin{align*}
    \alpha
    = \Pr_{\nullhyp}\left(\sqrt{m}\hat{T} > r_m \right)
    = \Pr_{\nullhyp} \left( \sqrt{m}\frac{\hat{T} - T_{\nullhyp}}{\tau_{\nullhyp}} > \frac{r_m - \sqrt{m}T_{\nullhyp}}{\tau_{\nullhyp}} \right)
    = 1 - \Phi\left( \frac{r_m - \sqrt{m}T_{\nullhyp}}{\tau_{\nullhyp}} \right) + o(1).
\end{align*}
As $\Phi$ is Lipschitz, this implies $r_m = \sqrt{m} T_{\nullhyp} + \tau_{\nullhyp} \bigl( \Phi^{-1}(1-\alpha) + o(1) \bigr)$.
Using the asymptotic normality of $\hat T$ under $\althyp$,
we then have that
\begin{align}
    \Pr_{\althyp}\left( \sqrt{m}\hat{T} > r_m \right)
  &= \Pr_{\althyp}\left(
        \sqrt{m}\frac{\hat{T} - T_{\althyp}}{\tau_{\althyp}}
        >  \frac{r_m - \sqrt{m} T_{\althyp}}{\tau_{\althyp}}
    \right)
\notag\\&= 1 - \Phi\left(
        \frac{\sqrt{m} T_{\nullhyp} + \tau_{\nullhyp} \bigl( \Phi^{-1}(1-\alpha) + o(1) \bigr) - \sqrt{m} T_{\althyp}}{\tau_{\althyp}}
    \right) + o(1)
\notag\\&= \Phi\left(
        \sqrt{m} \frac{T_{\althyp} - T_{\nullhyp}}{\tau_{\althyp}}
        - \frac{\tau_{\althyp}}{\tau_{\nullhyp}} \bigl( \Phi^{-1}(1-\alpha) + o(1) \bigr)
    \right) + o(1)
\label{eq:nds-asymptotic-power}
.\end{align}

To confirm that these asymptotics give a reasonable approximation in finite sample sizes,
we can check that the asymptotic test power \eqref{eq:nds-asymptotic-power}
for a given critic function $f$
roughly agrees with the empirical test power
using a non-asymptotic rejection threshold
estimated by obtaining fresh samples from the null distribution,
i.e.\ the empirical quantile of $\hat T$ under the null.
(This would not be available in practical situations, but is in synthetic problems where we can obtain arbitrary numbers of fresh samples.)
\Cref{fig:asymptotics-nds} indeed confirms that \eqref{eq:nds-asymptotic-power}
gives a good estimate of the empirical power even at moderate sample sizes.
(Intriguingly, the power of a permutation test is consistently slightly higher than
the test based on the correct non-asymptotic threshold;
we will discuss this issue in \cref{sec:perm-asymptotics}.)

This argument shows that,
as long as $T_{\althyp} > T_{\nullhyp}$ and $\hat T$ has finite positive variance under both $\P_{xy}$ and $\P_x \times \P_y$,
a test based on $\hat T$ will almost surely eventually reject any fixed alternative:
the test is consistent.
How quickly in $m$ it reaches high power, however,
depends on the expression \eqref{eq:nds-asymptotic-power},
which in particular for large $m$
is dominated by the signal-to-noise ratio (SNR)
given by $(T_{\althyp} - T_{\nullhyp}) / \tau_{\althyp}$.
When we choose a test based on maximizing the value of a mutual information lower bound,
we maximize a criterion, e.g., \eqref{eq:nce-hat},
which does not directly correspond to the test power.
We will discuss instead maximizing the asymptotic power in \cref{sec:method}.

\begin{figure}[!t]
    \begin{center}
        \subfigure[HDGM-4]
        {\includegraphics[width=0.3\textwidth]{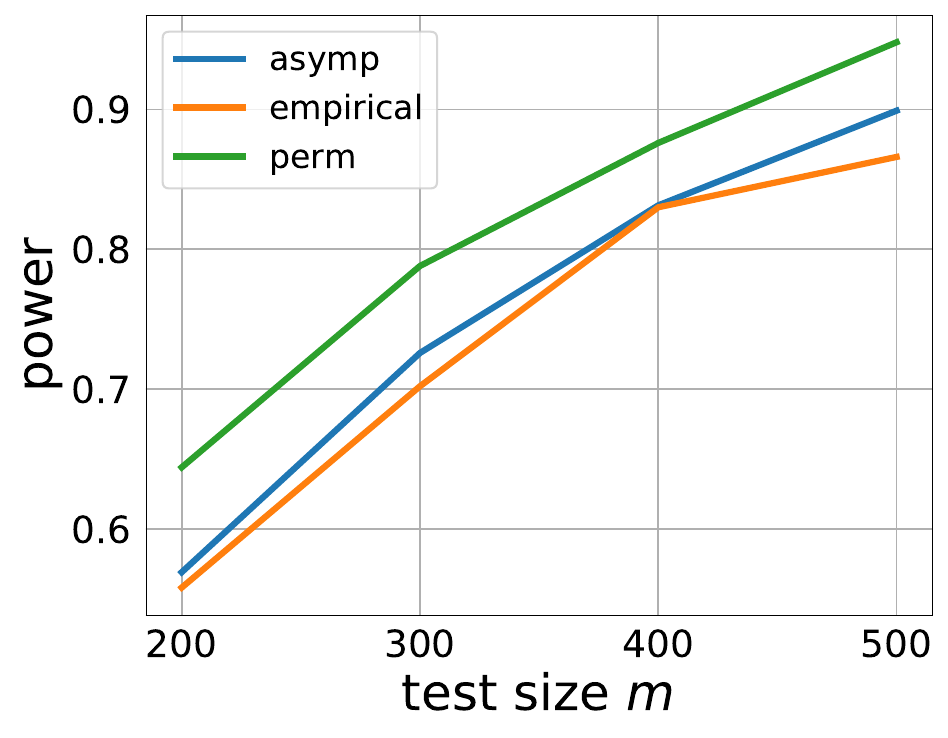}}
        \subfigure[Sinusoid]
        {\includegraphics[width=0.3\textwidth]{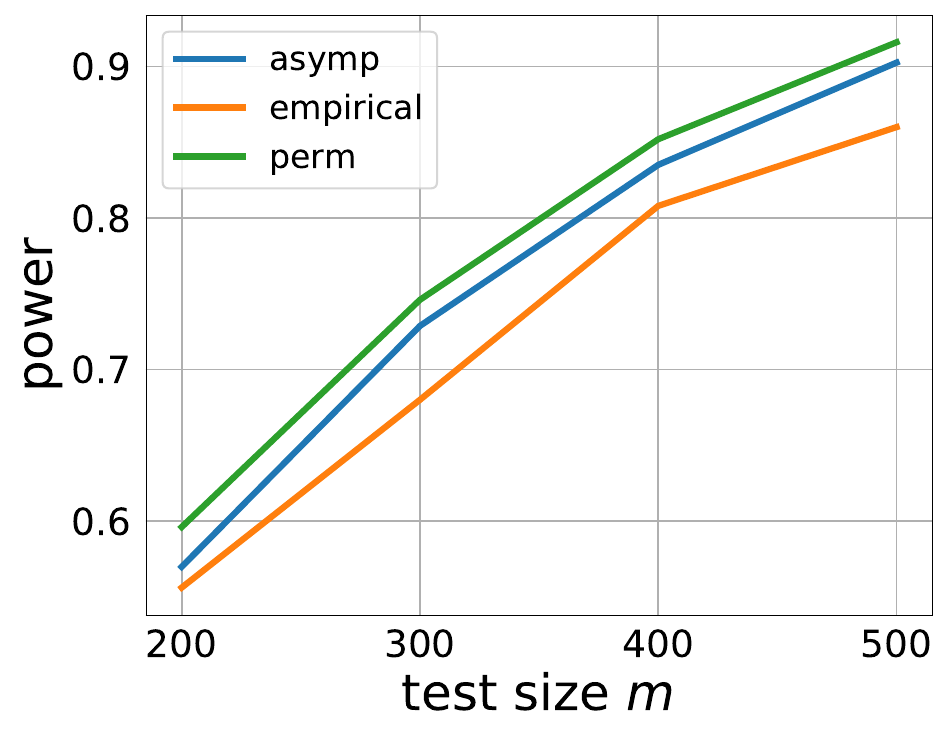}}
        \subfigure[RatInABox]
        {\includegraphics[width=0.3\textwidth]{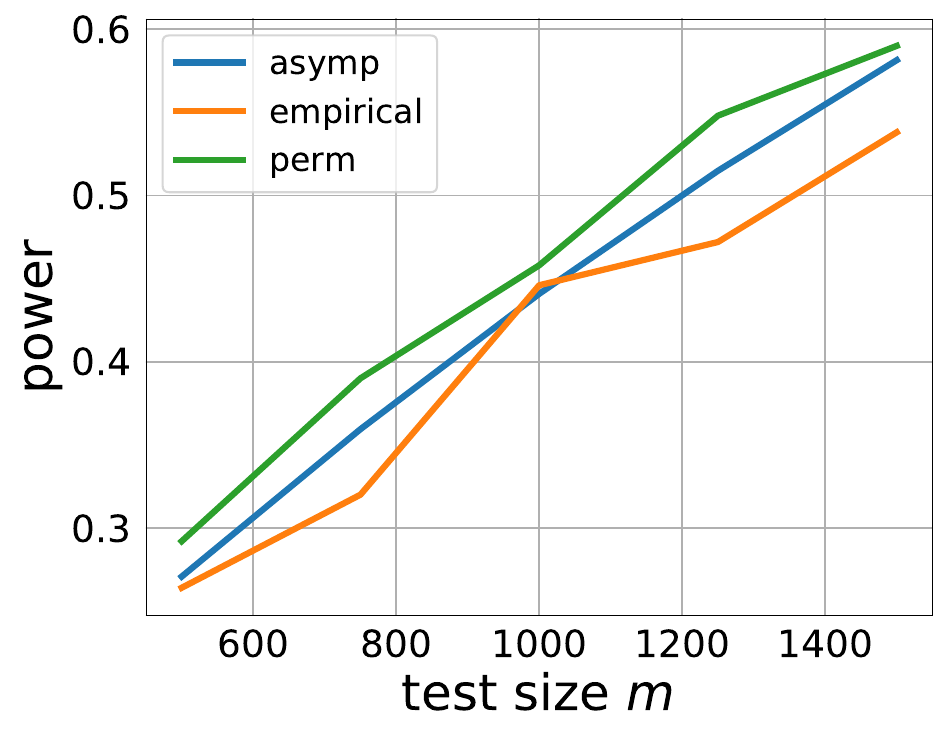}}
        \caption{Power of NDS tests with a particular critic $f$ (learned by the approach of \cref{sec:method}) on three datasets described in \cref{sec:exp}. For each problem, the blue line (asymp) is power described by the asymptotic formula \eqref{eq:nds-asymptotic-power} with the $o(1)$ error terms set to zero, where we use $20\,000$ samples to estimate the population-level statistics $T_{\althyp}, T_{\nullhyp}$ and their standard deviations $\tau_{\althyp}, \tau_{\nullhyp}$. The orange line (empirical) computes the power based on the simulated null distribution, i.e.\ the threshold estimated via the empirical CDF of $\widehat{T}_0$ with test size $m$. The green line (perm) is the permutation test power. All three lines roughly agree with each other.}
        \label{fig:asymptotics-nds}
    \end{center}
    \vspace{-1em}
\end{figure}

\section{Tests with the Hilbert-Schmidt Independence Criterion (HSIC)} \label{sec:hsic-tests}
The Hilbert-Schmidt Independence Criterion (HSIC, \citealp{gretton2005measure})
is also zero if and only if $X$ and $Y$ are independent when an appropriate kernel is chosen
(\citet{szabo2018characteristic} give precise conditions).
Unlike the mutual information,
HSIC is easy to estimate from samples;
for a fixed bounded kernel, the typical estimators concentrate to the population value
with deviation $\bigO_p(1 / \sqrt m)$.

To define HSIC, we first briefly review positive-definite kernels.
A (real-valued) \emph{kernel} is a function $k : \X \times \X \to \R$
that can be expressed as the inner product between {feature maps} $\phi : \X \to \F$,
$k(x, x') = \langle \phi(x), \phi(x') \rangle_\F$,
where $\F$ is any Hilbert space.
An important special case is $\F = \R^p$ and $k(x, x') = \phi(x) \cdot \phi(x')$,
where $\phi(x)$ extracts $p$-dimensional features of $x$.
For every kernel function, there exists a unique \emph{reproducing kernel Hilbert space} (RKHS),
which consists of functions $f : \X \to \R$.
The key \emph{reproducing property} of an RKHS $\F$ states that for any function $f \in \F$ and any point $x \in \X$,
we have $\langle f, \phi(x) \rangle_\F = f(x)$.

Suppose we have a kernel $k$ on $\X$ with RKHS $\F$ and feature map $\phi$,
as well as another kernel $l$ on $\Y$ with RKHS $\G$ and feature map $\psi$.
Let $\otimes$ denote the outer product.\footnote{%
    The Euclidean outer product $a b\tp$ is a matrix with $[a b\tp] b' = a [b\tp b']$;
    in Hilbert spaces,
    $f \otimes g : \G \to \F$
    has $[f \otimes g] g' = f \langle g, g' \rangle_\G$.
}
The \emph{cross-covariance operator} is
\[
    C_{xy} = \E_{xy}\Bigl[
        \bigl( \phi(x) - \E_x \phi(x) \bigr)
        \otimes
        \bigl( \psi(y) - \E_y \psi(y) \bigr)
    \Bigr]
;\]
for kernels with finite-dimensional feature maps,
this is exactly the standard (cross-)covariance matrix between the features of $X$ and those of $Y$.
Under mild integrability conditions on the kernel and the distributions,\footnote{It suffices that $\E[ \sqrt{k(x, x) l(y, y)}] < \infty$; this is guaranteed regardless of the distribution when $k$, $l$ are bounded.}
the reproducing property shows that
    $\langle f, C_{xy} g \rangle
    = \operatorname{Cov}( f(X), g(Y) )
    $ for all $f \in \F, \, g \in \G$.
One definition of independence is whether there exist any correlated ``test functions'' $f$ and $g$.
Thus, for rich enough choices of kernel
-- using universal $k$ and $l$ suffices, but is not necessary \citep{szabo2018characteristic} --
we have that $X \indep Y$ if and only if the operator $C_{xy} = 0$.
We can thus check whether the operator is zero,
and hence whether $X \indep Y$,
by checking the squared Hilbert-Schmidt norm of $C_{xy}$,
$\HSIC(X, Y) = \lVert C_{xy} \rVert_{\rm HS}^2$.
With finite-dimensional features, this is the squared Frobenius norm of the feature cross-covariance matrix.

Another way to interpret HSIC is as a distance between $\P_{xy}$ and $\P_x \times \P_y$,
similarly to how the mutual information is the KL divergence between those same distributions.
\begin{prop}[\citealp{mmd-jmlr}, Theorem 25] \label{thm:hsic-mmd}
    Let $k$ and $l$ be kernels on $\X$ and $\Y$,
    and define a kernel on $\X \times \Y$ by
    $h\bigl( (x, y), (x', y') \bigr) = k(x, x') l(y, y')$
    with RKHS $\cH$.
    Then
    \begin{multline*}
        \sqrt{\HSIC_{k,l}(X, Y)}
        = \MMD_h(\P_{xy}, \P_x \times \P_y)
        = \sup_{\substack{f \in \cH \\ \norm f_\cH \le 1}}
        \E_{(X, Y) \sim \P_{xy}} [ f(X, Y) ]
        - \E_{\substack{X \sim \P_x \\ Y' \sim \P_y}} [ f(X, Y') ]
    \\
        = \sqrt{\raisebox{1ex}{$\displaystyle \E_{\substack{(X, Y), (X', Y') \sim \P_{xy} \\ Y'', Y''' \sim \P_y}} $} \biggl[
            k(X, X') l(Y, Y')
            - 2 k(X, X') l(Y, Y'')
            + k(X, X') l(Y'', Y''')
        \biggr]}
    .\end{multline*}
\end{prop}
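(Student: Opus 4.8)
The plan is to route all three equalities through the theory of \emph{kernel mean embeddings}, so that the only genuinely problem-specific ingredient is a one-line identity relating the cross-covariance operator $C_{xy}$ to a difference of mean embeddings in a tensor-product RKHS.

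I would start from the generic facts about MMD. For \emph{any} kernel $h$ on a space $\mathcal{Z}$ with RKHS $\cH$ and feature map $\phi_h$, and any distribution $\P$ satisfying $\E_{Z\sim\P}\sqrt{h(Z,Z)}<\infty$, the Bochner integral $\mu_\P := \E_{Z\sim\P}\phi_h(Z)$ exists in $\cH$, and the reproducing property gives $\langle f,\mu_\P\rangle_\cH = \E_{Z\sim\P}f(Z)$ for all $f\in\cH$. Hence $\sup_{\norm{f}_\cH\le1}\bigl(\E_\P f - \E_\Q f\bigr) = \sup_{\norm{f}_\cH\le1}\langle f,\mu_\P-\mu_\Q\rangle_\cH = \norm{\mu_\P-\mu_\Q}_\cH$ by Cauchy--Schwarz, the supremum being attained at $f=(\mu_\P-\mu_\Q)/\norm{\mu_\P-\mu_\Q}_\cH$; and expanding the squared norm, $\norm{\mu_\P-\mu_\Q}_\cH^2 = \E_{Z,Z'\sim\P}h(Z,Z') - 2\,\E_{Z\sim\P,\,W\sim\Q}h(Z,W) + \E_{W,W'\sim\Q}h(W,W')$. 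Specializing to $\mathcal{Z}=\X\times\Y$, $h\bigl((x,y),(x',y')\bigr)=k(x,x')l(y,y')$, $\P=\P_{xy}$, $\Q=\P_x\times\P_y$ — and noting that the footnote's integrability condition is exactly what makes $\mu_{\P_{xy}}$, $\mu_{\P_x}$, $\mu_{\P_y}$ exist — this already delivers the second equality (the supremum form) and, after substituting the product kernel and reading off which marginal each variable is drawn from, the third (the expectation form: the cross term becomes $k(X,X')\,l(Y,Y'')$ with $X'$ entering only through $\P_x$ and $Y''\sim\P_y$ fresh, and the last term factorizes over the independent marginals).

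It then remains to show $\norm{\mu_{\P_{xy}}-\mu_{\P_x\times\P_y}}_\cH = \sqrt{\HSIC_{k,l}(X,Y)}=\norm{C_{xy}}_{\rm HS}$. The key structural fact is the isometric isomorphism between the RKHS $\cH$ of the product kernel $k\cdot l$, the tensor-product space $\F\otimes\G$, and the Hilbert--Schmidt operators $\G\to\F$: since $\langle\phi(x)\otimes\psi(y),\,\phi(x')\otimes\psi(y')\rangle_{\F\otimes\G} = k(x,x')l(y,y') = h\bigl((x,y),(x',y')\bigr)$, the assignment $\phi_h(x,y)\mapsto\phi(x)\otimes\psi(y)$ is isometric on the dense span of feature vectors and extends to an isometry of $\cH$ onto $\F\otimes\G$, under which $\mu_{\P_{xy}} = \E_{(X,Y)\sim\P_{xy}}[\phi(X)\otimes\psi(Y)]$ and, by independence and Fubini, $\mu_{\P_x\times\P_y} = \E_x[\phi(X)]\otimes\E_y[\psi(Y)]$. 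Expanding the definition of the cross-covariance operator and using bilinearity of $\otimes$ — $\E_{xy}\bigl[\phi(X)\otimes\E_y\psi(Y)\bigr]=\E_x\phi(X)\otimes\E_y\psi(Y)$, and likewise for the remaining cross terms — collapses its four terms to $C_{xy} = \E_{xy}[\phi(X)\otimes\psi(Y)] - \E_x\phi(X)\otimes\E_y\psi(Y) = \mu_{\P_{xy}}-\mu_{\P_x\times\P_y}$. Taking norms and invoking the isometry $\norm{\,\cdot\,}_\cH = \norm{\,\cdot\,}_{\F\otimes\G} = \norm{\,\cdot\,}_{\rm HS}$ finishes the first equality.

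The main obstacle is not the algebra, which is a handful of one-liners once the picture is assembled, but the care needed with the infinite-dimensional objects: verifying that the product kernel's RKHS really is (isometrically) the Hilbert--Schmidt operator space $\G\to\F$, and that the outer products and expectations appearing above are all legitimate Bochner integrals in that space. This is precisely where the hypothesis $\E[\sqrt{k(x,x)l(y,y)}]<\infty$ (automatic for bounded $k,l$) is invoked; granted that, every other step follows mechanically from the reproducing property, Fubini, and Cauchy--Schwarz.
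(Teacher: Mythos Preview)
Your proposal is correct and is precisely the standard argument from the cited reference. Note that the paper itself does not prove this proposition: it is stated as \citealp{mmd-jmlr}, Theorem~25, and used as a black box. Your route through mean embeddings, the isometry $\cH\cong\F\otimes\G\cong\mathrm{HS}(\G,\F)$, and the identity $C_{xy}=\mu_{\P_{xy}}-\mu_{\P_x\times\P_y}$ is exactly how that theorem is established in the original source, so there is nothing to contrast.
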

Taking the last form and rearranging to save repeated computation
yields two similar, popular estimators of HSIC.
``The biased estimator''
is $\HSIC(\hat P_{xy}, \hat P_x \times \hat P_y)$ for empirical distributions $\hat P$:
\begin{equation} \label{eq:hsic-biased}
    \widehat{\HSIC}_{\rm b}(\bX, \bY)
  = \frac{1}{m^2} \langle \K,  \H \L \H \rangle_F
\qquad
\text{for } \langle A, B \rangle_F = \sum_{ij} A_{ij} B_{ij}
,\end{equation}
where here
$\K$ is the $m \times m$ matrix with entries $k_{ij} = k(x_i, x_j)$,
$\L$ similarly has entries $l_{ij} = l(y_i, y_j)$,
and $\H$ is the ``centering matrix'' $\mathbf I_m - \frac1m \mathbf{1}_m \mathbf{1}_m^\top$
(so the estimator can be easily implemented without matrix multiplication).
This estimator has $\bigO(1/m)$ bias,
but is consistent.

The other common estimator, ``the unbiased estimator,''
is a $U$-statistic \citep{song2012feature}:
\begin{equation}\label{eq:HSIC_estimation_song}
\begin{small}
    \widehat{\HSIC}_{\rm u}(\bX, \bY)=\frac{1}{m(m-3)}\left[
    \langle \boldsymbol{\Tilde{K}}, \boldsymbol{\Tilde{L}} \rangle_F
    + \frac{\boldsymbol{1}_m^{\top}\boldsymbol{\Tilde{K}}\boldsymbol{1}_m\boldsymbol{1}_m^{\top}\boldsymbol{\Tilde{L}}\boldsymbol{1}_m}{(m-1)(m-2)}
    - \frac{2\boldsymbol{1}_m^{\top}\boldsymbol{\Tilde{K}}\boldsymbol{\Tilde{L}}\boldsymbol{1}_m}{m-2}
    \right]
\end{small}
,\end{equation}
where $\boldsymbol{\Tilde{K}}$ and $\boldsymbol{\Tilde{L}}$ are
$m \times m$ matrices whose diagonal entries are zero but whose off-diagonal entries agree with those of $\K$ or $\L$.
It is unbiased,
$\E \widehat{\HSIC}_{\rm u}(\bX, \bY) = \HSIC(X, Y)$;
it is also consistent,
and can be computed in the same $\bigO(m^2)$ time as $\widehat{\HSIC}_{\rm b}$,
without matrix multiplication.
Either statistic can be used with a permutation test to construct an independence test with finite-sample validity, in the same way as described for mutual information bounds.
The following subsection describes the behavior of such tests in the large-sample limit.

\subsection{Asymptotic test power}

The unbiased HSIC estimator is asymptotically normal with $\sqrt{m} (\widehat{\HSIC}_{\rm u} - \HSIC) \overset{\mathrm{d}}{\to} \N(0, \sigma^2_{\althyp})$ (\cref{prop:asymptotics}, in \cref{Asec:asymp}).
It is possible, however, to have $\sigma^2_{\althyp} = 0$,
which makes that result less useful;
this occurs in particular whenever $\HSIC = 0$,
and hence is always true under $\nullhyp$.
In that case, it is more informative to say instead that $m\,\widehat{\HSIC}_{\rm u}$ converges in distribution
(which is not the case when $\sigma^2_{\althyp} > 0$).
The distribution to which it converges
is a mixture of shifted chi-squareds with complex dependence on
$\P_x$, $\P_y$, $k$, and $l$ (\cref{prop:asymptotics_null}).
Thus, if we consider a test statistic $m \, \widehat{\HSIC}_{\rm u}$, when $\HSIC = 0$ this statistic has mean zero and standard deviation $\Theta(1)$. When $\HSIC > 0$, though, the statistic has mean and standard deviation $\Theta(\sqrt m) \to \infty$. Thus, as $m \to \infty$, eventually the test will reject if $\HSIC > 0$ \citep{hsic-perm-consistency}.

Recall that the optimal test is one with highest test power among those with appropriate level control.
As for \THat-based tests, we can describe this power asymptotically.
Let $\Psi$ be the cdf of the null distribution of $m\,\widehat{\HSIC}_{\rm u}$, which depends on $k$, $l$, $\P_x$, and $\P_y$. It follows that the rejection threshold $r_m$ satisfies
\begin{equation*}
    \alpha
    = \Pr_{\nullhyp}( \widehat{\HSIC}_{\rm u} > r_m )
    = 1 - \Psi(m r_m) + o(1)
    \qquad\text{so}\qquad
    r_m = \frac{1}{m} \bigl( \Psi^{-1}(1 - \alpha) + o(1) \bigr)
.\end{equation*}
Using \cref{prop:asymptotics}, when $\sigma_{\althyp} > 0$ the asymptotic test power then satisfies
\begin{align}
     \Pr_{\althyp} \left( \widehat{\HSIC}_{\rm u} > r_m \right)
  &= \Pr_{\althyp} \left( \sqrt{m}\frac{\widehat{\HSIC}_{\rm u}-\HSIC}{\sigma_{\althyp}} > \sqrt{m} \frac{r_m - \HSIC}{ \sigma_{\althyp} } \right)
\notag
\\&= 1 - \Phi\left( \sqrt{m} \frac{\frac1m \bigl(\Psi^{-1}(1 - \alpha) + o(1) \bigr) - \HSIC}{ \sigma_{\althyp} } \right) + o(1)
\notag
\\&= \Phi\left( \frac{\sqrt m \HSIC}{\sigma_{\althyp}} - \frac{\Psi^{-1}(1 - \alpha) + o(1)}{\sqrt m \, \sigma_{\althyp} } \right) + o(1)
\label{eq:hsic-asymptotic-power}
.\end{align}
\cref{fig:asymptotics-hsic} verifies that this expression lines up well with the empirical test power even at moderate test sizes.
Additionally, \eqref{eq:hsic-asymptotic-power} also tells us that, as long as $\HSIC>0$ and $\widehat{\HSIC}_{\rm u}$ has finite positive variance, an HSIC-based test will eventually reject with probability of one. The rate at which it increases in power is described by the gap between the signal-to-noise ratio $\HSIC/\sigma_{\althyp}$ and a threshold-dependent term $\Psi^{-1}(1-\alpha)/\sigma_{\althyp}$, with the former dominating the latter.
This gap increases at a faster rate than the one described for \THat-based tests \eqref{eq:nds-asymptotic-power};
the ratio of the two terms is $\Theta(m)$ for HSIC,
while it is only $\Theta(\sqrt m)$ for NDS.
(\Cref{sec:bernstein} gives another motivation for the power being dominated by this signal-to-noise ratio, without using the central limit theorem.)

\begin{figure}[!t]
    \begin{center}
        \subfigure[HDGM-4]
        {\includegraphics[width=0.24\textwidth]{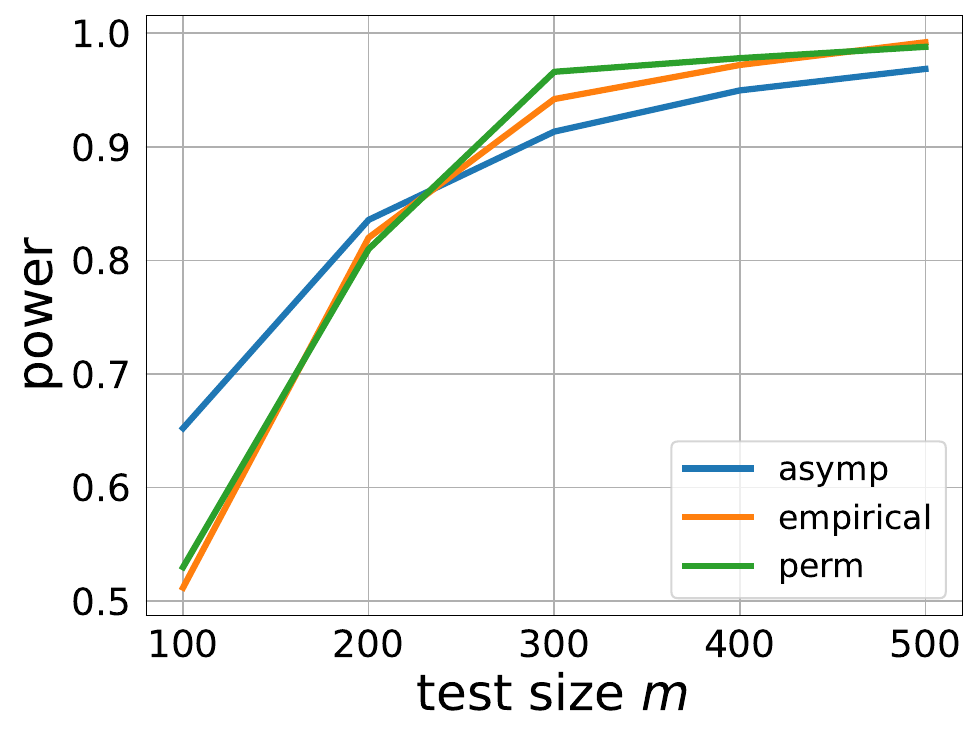}}
        \subfigure[Sinusoid]
        {\includegraphics[width=0.24\textwidth]{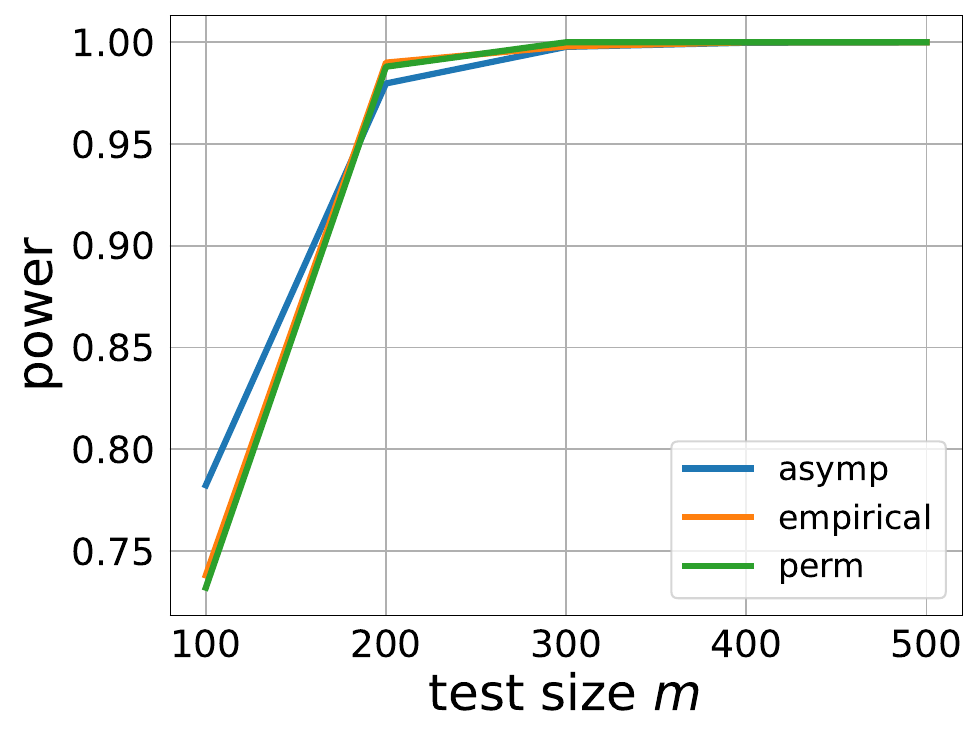}}
        \subfigure[RatInABox]
        {\includegraphics[width=0.24\textwidth]{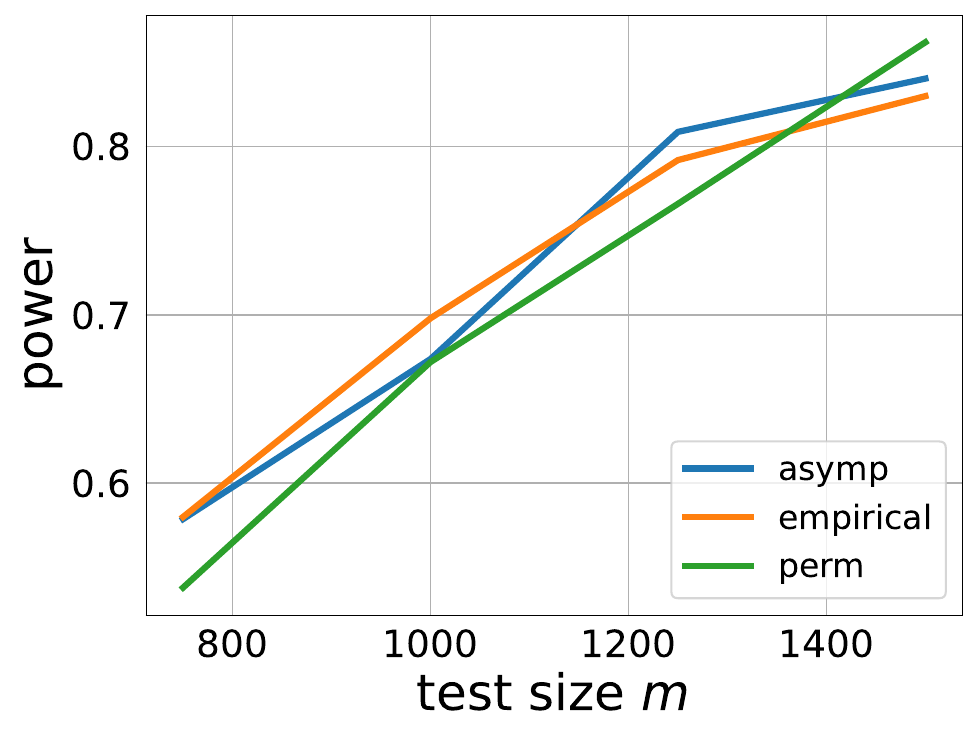}}
        \caption{Power of HSIC tests based on kernels $k$ and $l$ learned with the method of \cref{sec:method}, evaluated on problems described in \cref{sec:exp}. For each problem, the blue line (asymp) is power described by the asymptotic formula \eqref{eq:hsic-asymptotic-power}, where we use $10\,000$ samples to estimate the population level statistics $\HSIC$, its deviation $\sigma_{\althyp}$, and the threshold $\Psi^{-1}(1-\alpha)$ via simulation. The orange line (empirical) computes the power based on the simulated null distribution, i.e.\ the threshold estimated via the empirical CDF of $\widehat{\HSIC}_u$ with test size $m$. The green line (perm) is the permutation test power. All three lines roughly agree.}
        \label{fig:asymptotics-hsic}
    \end{center}
    \vspace{-1em}
\end{figure}

\subsection{MMD-based independence test} \label{sec:mmd-indep}

Given that more attention has been recently paid to learning two-sample tests than dependence tests, we can consider reformulating the independence problem into a two-sample one:
are $\P_{xy}$ and $\P_x\times\P_y$ the same distribution?
A natural measure of difference between distributions $\P_{xy}$ and $\P_x \times \P_y$ is $\MMD^2(\P_{xy}, \P_x \times \P_y)$ with some kernel $h$, which becomes $\HSIC(X, Y)$ if $h$ is a product kernel. For sufficiently powerful $h$, as characterized by \citealt{szabo2018characteristic}, $\MMD(\P_{xy}, \P_x \times \P_y) = 0$ if and only if $X \indep Y$. Thus it makes sense to use any consistent MMD estimator as the test statistic; we use the biased U-statistic estimator described in \cite{mmd-jmlr} for simplicity.

Next we consider a testing procedure. The standard independence problem observes paired samples $\bZ=\bigl((x_1,y_1),\dots,(x_m,y_m)\bigr)$ drawn i.i.d. from the joint distribution $\P_{xy}$, and so to simulate samples from $\P_x\times \P_y$ we first shuffle the $Y$ samples by some permutation $\sigma$ on $\{1,...,m\}$. We define this action of permuting $\bZ$ with $\sigma$ as $\sigma\bZ = \bigl((x_{1},y_{\sigma(1)}),\dots,(x_m, y_{\sigma(m)})\bigr)$. These samples are then used to compute $\widehat{\MMD}_b^2(\bZ, \sigma\bZ)$. The following proposition shows that, with enough samples, this statistic is a consistent estimator of the true $\MMD$ value.

\begin{restatable}{prop}{thmoneperm}
    Suppose
    $h$ satisfies
    $\sup_{x \in \X, y \in \Y} h((x, y), (x, y)) \le \nu^2$.
    The estimator
    $\widehat{\MMD}_b^2(\bZ, \sigma \bZ)$,
    where $\bZ$ is independent of a uniformly random $\sigma$,
    satisfies with probability at least $1 - \delta$ that
    \[
    \abs{
    \widehat{\MMD}_b^2(\bZ, \sigma \bZ)
    - \MMD^2(\P_{xy}, \P_x \times \P_y)
    } \le \frac{17 \nu^2}{\sqrt m} \left[ \sqrt{2 \log \frac8\delta} + \frac{1}{\sqrt m} \right]
    .\]
\end{restatable}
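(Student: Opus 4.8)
The plan is to rewrite $\widehat{\MMD}_b^2(\bZ,\sigma\bZ)$ in mean-embedding form and then bound it against $\MMD^2(\P_{xy},\P_x\times\P_y)$ by a bias-plus-deviation decomposition. Let $\cH$ be the RKHS of the product kernel $h$, with feature map $\phi_h(x,y)=\phi(x)\otimes\psi(y)$, and set $\hat\mu=\frac1m\sum_{i=1}^m\phi(x_i)\otimes\psi(y_i)$ and $\hat\mu_\sigma=\frac1m\sum_{i=1}^m\phi(x_i)\otimes\psi(y_{\sigma(i)})$. Since the biased $\MMD^2$ estimator of two equal-size samples $\bA,\bB$ is $\lVert\frac1m\sum_i\phi_h(a_i)-\frac1m\sum_i\phi_h(b_i)\rVert_\cH^2$, we have $\widehat{\MMD}_b^2(\bZ,\sigma\bZ)=\lVert\hat\mu-\hat\mu_\sigma\rVert_\cH^2$. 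From $\sup_z h(z,z)\le\nu^2$ and the reproducing property, $\lVert\phi_h(z)\rVert\le\nu$, so by Cauchy--Schwarz $\lvert h(z,z')\rvert\le\nu^2$ for all $z,z'$; hence $\lVert\hat\mu\rVert,\lVert\hat\mu_\sigma\rVert\le\nu$, $\lVert\mu_{\P_{xy}}\rVert,\lVert\mu_{\P_x}\otimes\mu_{\P_y}\rVert\le\nu$, and $\MMD^2(\P_{xy},\P_x\times\P_y)=\lVert\mu_{\P_{xy}}-\mu_{\P_x}\otimes\mu_{\P_y}\rVert^2\le 4\nu^2$. I then bound $\lvert\widehat{\MMD}_b^2(\bZ,\sigma\bZ)-\MMD^2\rvert$ by the bias $\lvert\E\widehat{\MMD}_b^2-\MMD^2\rvert$ plus the deviation $\lvert\widehat{\MMD}_b^2-\E\widehat{\MMD}_b^2\rvert$, expectations over $\bZ\sim\P_{xy}^{\otimes m}$ and the independent uniform $\sigma$.

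For the bias, expand $\E\widehat{\MMD}_b^2(\bZ,\sigma\bZ)=\E\lVert\hat\mu\rVert^2-2\E\langle\hat\mu,\hat\mu_\sigma\rangle+\E\lVert\hat\mu_\sigma\rVert^2$ into double sums over pairs $(i,j)\in[m]^2$. Taking the expectation over $\sigma$ first, in each of the three sums all but $\bigO(m)$ of the $m^2$ pairs are ``generic'': the bad pairs are those with $i=j$, or with $\sigma(i)$ or $\sigma(j)$ in $\{i,j\}$, which have probability $\bigO(1/m)$; since every term is at most $\nu^2$ in magnitude, these contribute $\bigO(\nu^2/m)$. On generic pairs the relevant coordinates are distinct, so independence of the $z_i$'s makes the three sums reproduce $\lVert\mu_{\P_{xy}}\rVert^2$, $\langle\mu_{\P_{xy}},\mu_{\P_x}\otimes\mu_{\P_y}\rangle$, and $\lVert\mu_{\P_x}\otimes\mu_{\P_y}\rVert^2$ up to the missing $\bigO(m)$ pairs, each of those targets being $\le\nu^2$. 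Collecting terms gives $\E\widehat{\MMD}_b^2(\bZ,\sigma\bZ)=\lVert\mu_{\P_{xy}}-\mu_{\P_x}\otimes\mu_{\P_y}\rVert^2+\bigO(\nu^2/m)=\MMD^2(\P_{xy},\P_x\times\P_y)+\bigO(\nu^2/m)$, with an explicit constant after bookkeeping (using $\MMD^2\le4\nu^2$ to absorb $\tfrac{(m-1)^2}{m^2}$-type factors into the $1/m$ term).

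For the deviation, view $G(\bZ,\sigma)=\widehat{\MMD}_b^2(\bZ,\sigma\bZ)\in[0,4\nu^2]$ as a function of the independent coordinates $z_1,\dots,z_m$ together with the uniformly random permutation $\sigma$, and apply bounded-differences concentration: condition on $\bZ$ and use McDiarmid's inequality for functions of a uniform random permutation (stable under swapping two values of $\sigma$), then use ordinary McDiarmid in $\bZ$ for the conditional mean, which is itself Lipschitz in each $z_i$. Replacing one $z_i$ moves $\hat\mu$ by at most $2\nu/m$ and $\hat\mu_\sigma$ by at most $4\nu/m$ (it touches the $i$-th and the $\sigma^{-1}(i)$-th summands), so since $\lVert\hat\mu-\hat\mu_\sigma\rVert\le2\nu$, $G$ changes by at most $(6\nu/m)\cdot4\nu=24\nu^2/m$; swapping two values of $\sigma$ touches two summands of $\hat\mu_\sigma$, changing $G$ by at most $16\nu^2/m$. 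Hence $\Pr(\lvert G-\E G\rvert\ge t)\le 2\exp(-c\,mt^2/\nu^4)$ for an explicit $c$, i.e.\ the deviation is $\bigO(\nu^2\sqrt{\log(1/\delta)/m})$ with probability $1-\delta$.

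Finally, combine via the triangle inequality, allocating the failure budget across the (at most two) bounded-differences bounds and their two tails, and keep track of the numerical constants to land at $\frac{17\nu^2}{\sqrt m}\bigl(\sqrt{2\log\frac8\delta}+\frac1{\sqrt m}\bigr)$. I expect the main obstacle to be the deviation step: one must correctly handle the permutation randomness, which is not i.i.d., either via a bounded-differences inequality over $S_m$ or by reducing to the i.i.d.\ case, and must verify that a single transposition of $\sigma$ (and a single change of one $z_i$) perturbs only $\bigO(m)$ of the $\bigO(m^2)$ terms of the estimator. The bias step is routine but fiddly: it needs a careful enumeration of the colliding index tuples and a check that the non-colliding ones reproduce $\MMD^2$ exactly; tightening the $\bigO(\cdot)$ constants to the precise value $17$ is purely accounting.
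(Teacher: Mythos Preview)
Your strategy is sound and would yield a valid proof of the proposition up to the value of the constant, but it differs from the paper's route in a way worth noting. The paper does \emph{not} work with $\lVert\hat\mu-\hat\mu_\sigma\rVert^2$ as a single object. It instead splits the estimator into the three sums $T_1,T_2,T_3$ mirroring $\mu_1,\mu_2,\mu_3$, and for each $T_i$ separates out the ``good'' index tuples (all indices distinct) from a bounded remainder. McDiarmid over the data is applied term-by-term, conditionally on $\sigma$, and the permutation randomness enters only through the random \emph{counts} $N_2^{(\sigma)},N_3^{(\sigma)}$ of good tuples, which are then controlled by the permutation McDiarmid lemma. Your approach instead applies permutation McDiarmid directly to $G(\bZ,\sigma)$ and ordinary McDiarmid to $\E_\sigma G(\bZ,\sigma)$, which is cleaner but coarser.

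The practical consequence is the constant. Your global bounded-difference constants ($24\nu^2/m$ in $z_i$, $16\nu^2/m$ per transposition) are the worst case over the three terms; plugging them into McDiarmid and the paper's permutation lemma gives a deviation prefactor well above $17$. The paper reaches $17$ precisely because the three terms have different, smaller bounded differences (e.g.\ $T_1$ contributes only $\nu^2/\sqrt m$) and are combined with a tailored budget split $\delta_1=\delta/4,\ \delta_2=\delta_3=3\delta/8$. So your final sentence---that hitting $17$ is ``purely accounting''---is too optimistic: with your decomposition you will prove the result with a larger absolute constant, not $17$. If you want the stated constant, you will need to revert to the per-term analysis; otherwise, state the bound you actually obtain.
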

This result is proved in \cref{app:mmd-independence-tests}.

Viewing this permuted MMD estimator as a function of the original paired samples, i.e. $f(\bZ):=\widehat{\MMD}_b^2(\bZ,\sigma\bZ)$, we then perform an independence permutation test by shuffling only the $Y$ samples in $\bZ$. Under the null hypothesis the distribution of $\bZ$ is unchanged by shuffling the $Y$ samples, and so an independence permutation test on $f(\bZ)$ is guaranteed by Theorem 2 of \cite{perm} to have correct level control, provided the original sample order is included in the permutations.%
\footnote{%
One might wonder whether we can instead perform a two-sample permutation test. Consider the test statistic as a function of the pooled samples $f([\bZ, \sigma\bZ]) = \widehat{\MMD}_b^2(\bZ,\sigma\bZ)$. The two-sample permutation test involves shuffling the order of $\bZ\cup \sigma\bZ$, however, exchangeability of $\bZ\cup \sigma\bZ$ under the null is not guaranteed as variables in the first split are deterministically tied to variables in the second; thus a two-sample permutation test may not be valid. A workaround would be to independently split the data into $\bZ_1$ and $\bZ_2$ and then construct the pooled sample as $\bZ_1\cup \sigma\bZ_2$, but this process is more involved and potentially lower-power.}

It's also worth noting that, since the MMD estimator is a two-sample statistic, the number of samples of $\P_{xy}$ and $\P_x\times\P_y$ can be different. As such, we can also consider using multiple permutations $\boldsymbol{\sigma} = \{\sigma_1,...,\sigma_{p}\}$ rather than just one, and then simulate the null samples as $\boldsymbol{\sigma}\bZ = \bigcup \sigma_i \bZ$. It turns out that if we consider all rotation permutations, $\widehat{\MMD}_b^2(\bZ, \boldsymbol{\sigma}\bZ)$ is exactly the biased HSIC estimator $\widehat{\HSIC}_b(\bX, \bY)$. We elaborate on this connection in Appendix \ref{app:mmd-independence-tests} and show that using more permutations improves estimation quality. The HSIC estimator, which effectively uses all permutations, is the minimum variance estimator for this permuted MMD statistic (\cref{prop:mmd-hsic-var}).

\section{Connecting HSIC and Mutual Information Tests} \label{sec:connections}

\subsection{HSIC as a lower bound on MI}
Suppose the kernel $h$ on $(x, y)$ pairs of \cref{thm:hsic-mmd}
is bounded:
$\sup_{x \in \X, y \in \Y} h((x, y), (x, y)) \le \nu^2$ for $\nu \ge 0$.
Then
\[
  \abs{f(x, y)}
  = \abs{\langle f, \phi_h(x, y) \rangle_\cH}
  \le \norm f_\cH \, \norm{\phi_h(x, y)}_\cH
  \le \nu \norm{f}_\cH
\]
by the reproducing property and Cauchy-Schwarz,
so $\norm{f}_\cH \ge \frac1\nu \sup_{x, y} \abs{f(x, y)} = \frac1\nu \norm{f}_\infty$.
This implies that $\{ f : \norm f_\cH \le 1 \} \subseteq \{ f : \norm f_\infty \le \nu \}$,
and so by \cref{thm:hsic-mmd},
\begin{equation} \label{eq:hsic-tv}
    \sqrt{\HSIC(X, Y)}
    \le \sup_{f : \norm f_\infty \le \nu} \E_{(X, Y) \sim \P_{xy}} [ f(X, Y) ]
        - \E_{\substack{X \sim \P_x \\ Y' \sim \P_y}} [ f(X, Y') ]
    = 2\nu \TV(\P_{xy}, \P_x \times \P_y)
,\end{equation}
where $\TV$ is the total variation distance between distributions \citep{ipm-est}.\footnote{\citet{ipm-est} define the TV as twice the more common definition, which we use here.}
Applying standard bounds relating the total variation to the KL divergence, we obtain the following.

\begin{prop}
    In the setting of \cref{thm:hsic-mmd},
    suppose $\sup_{x \in \X, y \in \Y} h((x, y), (x, y)) \le \nu^2$.
    Then
    \[
    \frac{1}{2 \nu^2} \HSIC(X, Y) \le \MI(X; Y)
    \qquad\text{and}\qquad
    -\log\left( 1 - \frac{1}{4 \nu^2} \HSIC(X, Y) \right) \le \MI(X; Y)
    .\]
\end{prop}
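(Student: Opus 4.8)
Everything here is bookkeeping on top of \eqref{eq:hsic-tv} plus two classical comparison inequalities between total variation and KL divergence. First I would square \eqref{eq:hsic-tv} to get
\[
  \HSIC(X,Y) \;\le\; 4\nu^2\,\TV(\P_{xy},\,\P_x\times\P_y)^2 ,
\]
and recall that the mutual information is by definition the KL divergence, $\MI(X;Y) = \KL(\P_{xy}\,\|\,\P_x\times\P_y)$. So the whole task is to bound $\TV(\P_{xy},\P_x\times\P_y)^2$ from above in terms of $\KL(\P_{xy}\,\|\,\P_x\times\P_y)$, in the normalization of $\TV$ consistent with \eqref{eq:hsic-tv} (the one for which $\sup_{\norm f_\infty\le\nu}(\E_{\P_{xy}}f-\E_{\P_x\times\P_y}f)=2\nu\,\TV$).

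For the first inequality I would invoke Pinsker's inequality, $\TV(\P,\Q)^2 \le \tfrac12\KL(\P\,\|\,\Q)$, applied with $\P=\P_{xy}$ and $\Q=\P_x\times\P_y$. Substituting into the displayed bound gives $\HSIC(X,Y)\le 2\nu^2\,\MI(X;Y)$, i.e.\ $\tfrac{1}{2\nu^2}\HSIC(X,Y)\le\MI(X;Y)$. For the second inequality I would instead invoke the Bretagnolle--Huber inequality, $\TV(\P,\Q)^2 \le 1-e^{-\KL(\P\,\|\,\Q)}$, which yields $\HSIC(X,Y)\le 4\nu^2\bigl(1-e^{-\MI(X;Y)}\bigr)$, equivalently $e^{-\MI(X;Y)}\le 1-\tfrac{1}{4\nu^2}\HSIC(X,Y)$. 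The right-hand side is nonnegative because $\HSIC\le 4\nu^2$ (from $\sqrt{\HSIC}=\MMD_h\le 2\nu$, using $\norm{\phi_h(x,y)}_\cH\le\nu$ as in the paragraph preceding the statement), and it is strictly positive whenever $\MI(X;Y)<\infty$; taking $-\log$ of both sides (which reverses the inequality, and is vacuous in the degenerate mutually-singular case) gives $\MI(X;Y)\ge -\log\bigl(1-\tfrac{1}{4\nu^2}\HSIC(X,Y)\bigr)$.

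There is no genuine obstacle here --- the argument is a one-line chain once \eqref{eq:hsic-tv} is in hand --- but the one place to be careful is the normalization of the total-variation distance: the excerpt follows the integral-probability-metric convention in which $\TV$ is twice the more familiar $\sup_A\abs{\P(A)-\Q(A)}$, so Pinsker and Bretagnolle--Huber must be applied in the matching normalization (equivalently, one can rewrite \eqref{eq:hsic-tv} as $\sqrt{\HSIC}\le\nu\,\TV$ and use $\TV^2\le 2\KL$ and $\TV^2\le 4(1-e^{-\KL})$) so that the constants $\tfrac{1}{2\nu^2}$ and $\tfrac{1}{4\nu^2}$ come out exactly as stated. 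It is worth remarking that the two bounds are complementary rather than one dominating the other: near $\HSIC=0$ the Pinsker bound is tighter by a factor of two, whereas the logarithmic bound is the one that correctly forces $\MI\to\infty$ as $\HSIC\to 4\nu^2$.
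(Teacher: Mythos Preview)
Your proposal is correct and follows exactly the paper's approach: square \eqref{eq:hsic-tv}, then apply Pinsker for the first bound and Bretagnolle--Huber for the second. Your added care about the total-variation normalization and the nonnegativity of $1-\tfrac{1}{4\nu^2}\HSIC$ is more explicit than the paper's two-line proof, but the argument is identical.
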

\begin{proof}
The first bound applies Pinsker's inequality, which relates total variation to KL, to \eqref{eq:hsic-tv}.
The second instead applies the bound of \citet{bh-kl} \citep[also see][]{clement-kl-tv}.
\end{proof}
The second bound is tighter for large values of $\MI(X; Y)$,
but both are monotonic in $\HSIC(X, Y) / \nu^2$.
We could thus consider, as in \cref{sec:mi-tests},
choosing kernels $k$, $l$ to maximize a lower bound on $\MI(X;Y)$,
by maximizing $\HSIC(X, Y) / \nu^2$.
Indeed, maximizing HSIC has been used by previous applications in many areas \citep[e.g.][]{blaschko:taxonomies,song2012feature,li:ssl-hsic,dong2023diversity}.

\subsection{Kernel-based tests and variational MI tests} 
Now
consider a test based on 
$\MMD_f(\P_{xy}, \P_x \times \P_y)$
using a kernel
of the form
$h((x, y), (x', y')) = f(x, y) f(x', y')$ for some real-valued function $f$.
If $f(x, y) = f_1(x) f_2(y)$, this is an HSIC test with kernels $k(x, x') = f_1(x) f_1(x')$ and $l(y, y') = f_2(y) f_2(y')$.
Because $\phi_h(x, y) = f(x, y) \in \R$ is a valid feature map,
every function in $\cH$
is of the form $\alpha f = [(x, y) \mapsto \alpha f(x,y)]$
with $\norm{\alpha f}_{\cH} = \abs\alpha$.
By \cref{thm:hsic-mmd},
\[
    \MMD_f(\P_{xy}, \P_x \times \P_y)^2
    = \left( \sup_{\abs\alpha \le 1} \alpha \left( \E_{\P_{xy}} f(X, Y) - \E_{\P_x\times\P_y} f(X, Y') \right) \right)^2
    = \bigl(\E_{\P_{xy}} f(X, Y) - \E_{\P_x\times\P_y} f(X, Y') \bigr)^2
.\]
The plug-in estimator would yield the test statistic
\begin{equation} \label{eq:hsic-g}
    \widehat{\MMD}^2_{\rm b}(\bX,\bY)
    = \left(
        \frac1m \sum_{i=1}^m f(x_i, y_i)
      - \frac1{m^2} \sum_{i=1}^m \sum_{j=1}^m f(x_i, y_j)
    \right)^2
,\end{equation}
which when $f(x, y) = f_1(x) f_2(y)$, corresponds exactly to $\widehat{\HSIC}_{\rm b}$.

Comparing \eqref{eq:hsic-g} to \eqref{eq:nce-hat} with $K = m$,
we can see that the main term
$\hat T = \frac1m \sum_i f(x_i, y_i)$ is identical,
which we called the NDS \eqref{eq:nds}.
The other term is permutation-invariant;
it is the mean of $\hat T$ over all possible permutations, $\bar T$.
Thus, a permutation test based on \eqref{eq:hsic-g}
asks how far the value of $\hat T$ for the true data is from $\bar T$,
while a test based on any variational MI estimator based on \eqref{eq:nds}
is equivalent to asking how much the value of $\hat T$ exceeds $\bar T$.
The only difference is that \eqref{eq:hsic-g} gives a two-sided test,
while \eqref{eq:nce-hat} is a one-sided test.\footnote{%
    \citet[Section 3.1]{li:ssl-hsic} also found a relationship between HSIC and $\NCE$ for categorical $Y$.
}

Our usual test uses $\widehat{\HSIC}_{\rm u}$ of \eqref{eq:HSIC_estimation_song} instead of $\widehat{\HSIC}_{\rm b}$, but the difference in estimators is typically small.
Thus, if we use deep kernels of the form $k(x, x') = f(x) f(x')$ and $l(y, y') = g(y) g(y')$,
obtaining a test quite closely related to a witness two-sample test \citep{wits-test} used for independence,
the HSIC test is nearly equivalent to the NCE test with a separable critic function $(x, y) \mapsto f(x) g(y)$.
This relationship is roughly analogous to the relationship between classifier two-sample tests and MMD tests observed by \citet{liu2020learning}:
while each test chooses a critic/kernel in a different way,
at test time they are essentially equivalent.

\section{Learning Representations for Independence Testing} \label{sec:method}

Choosing a test based on maximizing the value of its statistic does not directly correspond to the test power.
Instead, we would perhaps be better served by maximizing the power directly. Recall the asymptotic power expressions for NDS and HSIC-based tests given by \eqref{eq:nds-asymptotic-power} and \eqref{eq:hsic-asymptotic-power}, respectively: 
\begin{align*}
    \Pr_{\althyp}(\hat{T} > r_m) \approx \Phi\left(
        \frac{\sqrt{m}(T - T_{\nullhyp})}{\tau_{\althyp}}
        - \frac{\tau_{\nullhyp}}{\tau_{\althyp}}\Phi^{-1}(1-\alpha)
    \right)
    \; \text{and} \;
    \Pr_{\althyp}(\widehat{\HSIC}_{\rm u} > r_m) \approx \Phi\left(
        \frac{\sqrt{m}\HSIC}{\sigma_{\althyp}}
        - \frac{\Psi^{-1}(1-\alpha)}{\sqrt{m}\sigma_{\althyp}}
    \right).
\end{align*}
In both cases, as the sample size $m$ grows, the power is dominated by the signal-to-noise ratio
\begin{equation} \label{eq:snr-population}
    \text{SNR}[T] = (T - T_{\nullhyp}) / \tau_{\althyp}
    \qquad
    \text{and}
    \qquad
    \text{SNR}[\HSIC] = \HSIC / \sigma_{\althyp}.
\end{equation}
Maximizing the SNR maximizes the limiting power of the test as $m\to\infty$.
(Also see \cref{sec:bernstein}.)
Thus, maximizing estimates of the SNR
(based on a finite number of samples $n$)
will roughly maximize the power of a test at an arbitrarily large sample size ($m \to \infty$);
also see discussion by \citet{sutherland:unbiased-mmd-variance,deka:mmd-bfair}.

Alternatively, we can also choose to include the threshold-dependent term.
This may be helpful, particularly in estimating the power at points where the number of samples is not overwhelming:
for tests with a 50\% probability of rejection,
the two terms in the power expression are necessarily the same size.
For NDS, the gap between the two terms only grows with $\sqrt m$,
and additionally estimating $\tau_{\althyp}$ is no problem when already estimating $T_{\althyp}$,
thus we propose to include the threshold term in choosing an NDS critic.
For HSIC, however, the threshold-dependent term is both relatively less important as $m$ grows
and much more difficult to estimate.\footnote{\label{fn:hsic-thresh-est}%
  \citet{ren24a} proposed using a moment-matched gamma approximation to the threshold,
  and claimed that not including this threshold can lead to catastrophically wrong kernel choice.
  We argue in \cref{app:snr-pitfall}, however, that their argument is unjustified.
  We also find experimentally in \cref{sec:exp} that including it can choose worse kernels in practice.
  More common contemporary approaches to estimating the threshold use permutation testing or eigendecomposition;
  both involve substantial computational overhead,
  and the latter is particularly expensive to differentiate
  while the former can be quite difficult to usefully estimate on the same data as the HSIC estimate,
  as observed for MMD by \citet{deka:mmd-bfair}
  and discussed further in \cref{sec:perm-asymptotics}.
}
Therefore, we propose to optimize the SNR with the threshold term for NDS, and without it for HSIC.
Our population level objectives, as functions of random variables $X$ and $Y$, are then
\begin{equation} \label{eq:J-nds}
    J_{\text{NDS}}(X,Y; f) = \frac{T(X,Y;f)-T_{\nullhyp}(X,Y;f)}{\tau_{\althyp}(X,Y;f)}
    - \frac{\tau_{\nullhyp}(X,Y;f)}{\sqrt{m}\,\tau_{\althyp}(X,Y;f)} \Phi^{-1}(1-\alpha)
\end{equation}
and 
\begin{equation} \label{eq:J-hsic}
    J_{\HSIC}(X,Y;k,l) = \frac{\HSIC(X,Y;k,l)}{\sigma_{\althyp}(X,Y;k,l)}.
\end{equation}

Then, if we have a batch of paired observations $\bX=(X_1,...,X_m)$ and $\bY=(Y_1,...,Y_m)$ sampled jointly from $\P_{XY}$, we can approximate $J_\text{NDS}$ and $J_{\HSIC}$ by the estimators $\hat{J}_\text{NDS}^\lambda$ and $\hat{J}_{\HSIC}^\lambda$, which we define below.

\paragraph{NDS objective.}
We define $\hat{J}_\text{NDS}^\lambda$ as the plug-in estimator of $J_\text{NDS}$ with variance regularization $\lambda>0$. We estimate $T_{\althyp} = \E_{\P_{XY}}[f(X,Y)]$ by its sample mean $\hat{T}(\bX,\bY)=\frac{1}{m}\sum_{i=1}^m f(X_i,Y_i)$, and $T_{\nullhyp}=\E_{\P_X\times\P_Y} f(X,Y)$ by the V-statistic $\hat{T}_{\nullhyp}(\bX,\bY) = \frac{1}{m^2}\sum_{i=1}^m\sum_{j=1}^m f(X_i,Y_j)$. The variance terms $\tau_{\nullhyp}^2$ and $\tau_{\althyp}^2$ are estimated
with a regularized version of the sample variance,
\begin{align*}
    \hat{\tau}_{\nullhyp,\lambda}^2(\bX,\bY;f)
    &= \frac{1}{m^2} \sum_{i=1}^m\sum_{j=1}^m \left( f(X_i, Y_j) - \frac{1}{m^2} \sum_{i'=1}^m\sum_{j'=1}^m f(X_{j'}, Y_{j'}) \right)^2 + \lambda
\\
    \hat{\tau}_{\althyp,\lambda}^2(\bX,\bY;f)
    &= \frac1m \sum_{i=1}^m \left( f(X_i, Y_i) - \frac1m \sum_{j=1}^m f(X_j, Y_j) \right)^2 + \lambda,
\end{align*}
where $\lambda > 0$.
Putting it all together, our complete NDS objective is
\begin{equation} \label{eq:J-nds-est}
    \hat{J}_\text{NDS}^\lambda(\bX,\bY;f) = \frac{\hat{T}(\bX,\bY;f) - \hat{T}_0(\bX,\bY;f)}{\hat\tau_{\althyp,\lambda}(\bX,\bY;f)}
    -
    \frac{\hat{\tau}_{\nullhyp}(\bX,\bY;f)}{\sqrt{m}\,\hat{\tau}_{\althyp}(\bX,\bY;f)} \Phi^{-1}(1-\alpha).
\end{equation}

\paragraph{HSIC objective.}
$\hat{J}_{\HSIC}^\lambda$ is defined in a similar manner. We estimate HSIC by the unbiased estimator $\widehat{\HSIC}_u(\bX,\bY)$ given in \eqref{eq:HSIC_estimation_song}, and we approximate the variance term $\sigma_{\althyp,\lambda}^2$ according to 
\begin{equation*}
    \hat{\sigma}_{\althyp,\lambda}^2(\bX, \bY; k,l)={16}(R-\widehat{\HSIC}^2_{\rm u}) + \lambda
\end{equation*}
which is just a regularized version of the variance estimate stated in \cref{prop:asymptotics}. Following \citet{song2012feature}, this can be computed more efficiently with $R = \frac{((n - 4)!)^2}{4 n ((n-1)!)^2} \lVert\boldsymbol{h} \rVert^2$, where the vector $\boldsymbol{h}$ is
\begin{equation*}
\begin{aligned}
    \boldsymbol{h} = {} &(n-2)^2\left(\boldsymbol{\Tilde{K}}\circ \boldsymbol{\Tilde{L}}\right)\boldsymbol{1} - n(\boldsymbol{\Tilde{K}}\boldsymbol{1})\circ(\boldsymbol{\Tilde{L}}\boldsymbol{1})
    + (\boldsymbol{1}^{\top}\boldsymbol{\Tilde{L}}\boldsymbol{1})\boldsymbol{\Tilde{K}}\boldsymbol{1} + (\boldsymbol{1}^{\top}\boldsymbol{\Tilde{K}}\boldsymbol{1})\boldsymbol{\Tilde{L}}\boldsymbol{1} - (\boldsymbol{1}^{\top}\boldsymbol{\Tilde{K}}\boldsymbol{\Tilde{L}}\boldsymbol{1})\boldsymbol{1}\\
    &+(n-2)\left(( \boldsymbol{1}^\top (\boldsymbol{\Tilde{K}} \circ \boldsymbol{\Tilde{L}}) \boldsymbol{1}) \boldsymbol{1}-\boldsymbol{\Tilde{K}}\boldsymbol{\Tilde{L}}\boldsymbol{1}-\boldsymbol{\Tilde{L}}\boldsymbol{\Tilde{K}}\boldsymbol{1}\right),
\end{aligned}
\end{equation*}
with $\circ$ denoting elementwise multiplication on matrices, and $\boldsymbol{1}=(1,\dots,1) \in \R^n$. Thus, the complete HSIC objective is
\begin{equation} \label{eq:J-hsic-est}
    \hat J_{\HSIC}^\lambda(\bX, \bY; k,l) = \frac{\widehat\HSIC_{\rm u}(\bX, \bY; k,l)}{\hat\sigma_{\althyp,\lambda}(\bX, \bY; k,l)}.
\end{equation}

Maximizing \eqref{eq:J-nds-est} or \eqref{eq:J-hsic-est} then gives us a structured approach to select the critic or kernel yielding the strongest asymptotic test. Typically, we run some variant of a gradient-based optimization algorithm with respect to the parameters of the critic or kernel class,
and do hyperparameter selection and early stopping based on the same objective on a validation set.

\paragraph{Critic \& kernel architecture.}
We consider function families parameterized by deep networks, as this allows us to learn representations of the data that more efficiently capture dependency. For critics, this parameterization is straightforward and is typically some problem-specific architecture like multilayer perceptrons or convolutional networks. For kernels, we first incorporate a featurizer that acts on individual inputs $x$ and $y$, and is then fed to some standard kernel function. 
When our feature mapping is parameterized by deep networks, we get the class of deep kernels \citep{andrew:deep-kernels}, which have been successfully used in two-sample testing \citep{sutherland:mmd-opt,liu2020learning,liu:meta-2st} and many other settings \citep[e.g.][]{mmdgans,arbel:smmd,jean:semisup-deep-kernels,li:ssl-hsic,gao2021max}.
We use the following deep kernels for $X$ and $Y$:
\begin{align*}
    k_\omega(x,x') &= (1-\epsilon_X)\, \kappa_X(f_\omega(x),f_\omega(x')) + \epsilon_X \, q_X(x,x') \\
    l_\gamma(y,y') &= (1-\epsilon_Y) \, \kappa_Y(g_\gamma(y), g_\gamma(y')) + \epsilon_Y \, q_Y(y,y')
.\end{align*}
Here $f_\omega$ and $g_\gamma$ are deep networks with parameters in $\omega$, $\gamma$,
which extract relevant features from $\X$ or $\Y$ to a feature space $\R^D$.
These features are then used inside a Gaussian kernel $\kappa$ on the space $\R^D$, to compute the baseline similarity between data points. We then take a convex combination of that kernel with a Gaussian kernel $q$ on the input space; the weight of this component is determined by a parameter $\epsilon \in (0, 1)$.\footnote{Using $\epsilon > 0$ provides a ``backup'' to the deep kernel, perhaps giving some signal early in optimization when the deep kernel features are not yet useful, and guaranteeing that the overall kernel is characteristic.} The lengthscale of $\kappa$ and $q$ as well as the mixture parameter $\epsilon$ are included in the overall parameters, $\omega$ or $\gamma$, and learned during the optimization process.

\begin{algorithm}[!t]
\footnotesize
\caption{Independence testing with learned representations}
\label{alg:learn_deep_kernel}
\begin{algorithmic}
\STATE \textbf{Input:} paired samples $S_Z=(S_X,S_Y)$; split the data into $S_Z^{\rm tr}\cup S_Z^{\rm te}$ with $(S_X^{\rm tr},S_Y^{\rm tr}) \gets S_Z^{\rm tr}$; $(S_X^{\rm te},S_Y^{\rm te}) \gets S_Z^{\rm te}$; 
\STATE \phantom{\textbf{Input:} }model parameters $\theta \gets \theta_0$ and test statistic $\mathcal{T}(X,Y;\theta)$; various hyperparameters like $\lambda \gets 10^{-8}$, etc.

\vspace{1mm}
\STATE \textit{\# Phase 1: train the parameters $\theta$ on $S^{\rm tr}_Z$\hfill}

\FOR{$T = 1,2,\dots,T_{\rm max}$} 
    \STATE $(\bX,\bY) \gets$ minibatch from $S^{\rm tr}_Z=(S^{tr}_X,S^{tr}_Y)$;
    \STATE $\hat J^\lambda(\bX,\bY;\theta) \gets
    $
    compute SNR estimate; 
    \hfill\textit{\# as in \cref{eq:J-nds-est} or \cref{eq:J-hsic-est}}

    \STATE $\theta \gets \theta + \eta \nabla_\theta \hat{J}^\lambda(\bX, \bY; \theta)$
    \hfill \textit{\# gradient ascent step}
\ENDFOR

\vspace{1mm}
\STATE \textit{\# Phase 2: permutation test on $S^{\rm te}_Z$} with learned representations
\STATE 
$\mathit{perm}_1 \gets {\mathcal{T}}(S^{\rm te}_X, S^{\rm te}_Y; \theta)$
\hfill\textit{\# evaluate test statistic}

\FOR{$i = 2, \dots, n_\mathit{perm}$}
\STATE $\mathit{perm}_i \gets \mathcal{T}(S^{\rm te}_X, \operatorname{shuffle}(S^{\rm te}_Y); \theta)$
\hfill\textit{\# no need to shuffle both sets}
\ENDFOR

\STATE \textbf{Output:} $k_\omega$, $l_\gamma$, $\mathit{perm}_1$, $p$-value $\frac{1}{n_\mathit{perm}} \sum_{i=1}^{n_\mathit{perm}} \mathbbm{1}(\mathit{perm}_i \ge \mathit{perm}_1)$
\end{algorithmic}
\end{algorithm}

\paragraph{Overall representation learning algorithm.}
The overall procedure, written based on minibatch gradient ascent for simplicity, is shown in \cref{alg:learn_deep_kernel}.
In practice, we use AdamW \citep{adamw}, and draw minibatches in epochs;
experimental details are given in \cref{sec:training-details}.
The randomized $p$-value is potentially conservative in the case of ties;
breaking ties randomly can give a slight improvement when ties are present \citep{perm}.

\paragraph{Time complexity.} Each training iteration is dominated by computing the objectives $\hat{J}_\text{NDS}^\lambda$ and $\hat{J}_{\HSIC}^\lambda$. Suppose $K$ is the minibatch size. For NDS tests, if $E_Z$ is the cost of evaluating critic $f_\theta(x,y)$, then one training step costs $\bigO(K^2 E_Z)$.
For HSIC tests, if $E_X$ and $E_Y$ are the costs of computing embeddings $f_\omega(x)$ and $f_\gamma(y)$, and $L$ the cost of computing $k_\omega(x, x')$ and $l_\gamma(y, y')$ given the embeddings, then each training iteration costs $\mathcal{O}\left( K E_X + K E_Y + K^2 L \right)$. Typically, for practical values of $K$, $E_X + E_Y \gg K L$, so this cost is ``almost'' linear in practice.\footnote{\cref{eq:J-hsic-est} could use block estimators \citep{zaremba2014btests} or incomplete $U$-statistics \citep{incomplete-u-stat} to reduce $\bigO(K^2 L)$ to $\bigO(K^\beta L)$ for any $\beta \le 2$, at the cost of increased variance \citep[see][]{ramdas2015adaptivity}.} %

\paragraph{Theoretical analysis.}

While neither $\hat{J}_\text{NDS}^\lambda$ nor $\hat{J}_\text{HSIC}^\lambda$ are biased estimates of the relevant population quantities,\footnote{Indeed, no unbiased estimator is likely to exist; see Appendix A of \citealt{deka:mmd-bfair}.} the following \cref{thm:nds-snr-convergence-main,thm:hsic-snr-convergence-main} show that they are both uniformly bounded in probability.

\begin{theorem}[Uniform convergence of $\hat{J}_\text{NDS}^\lambda$] \label{thm:nds-snr-convergence-main}
    Let $\{f_\theta: \theta\in\Theta\}$ be a critic family with parameter space $\Theta$ satisfying \cref{assumption:nds-A,assumption:nds-B,assumption:nds-C} in \cref{sec:nds-assumptions} which define a critic bound $B$, dimension $D$, and smoothness $L$. Suppose $\tau_\theta^2 \geq s^2$ for some positive $s$ under both $\nullhyp$ and $\althyp$, and $\lambda=\Theta(n^{-1/3})$. Then
    \[
        \sup_{\theta\in\Theta} \left| \hat{J}_\text{NDS}^\lambda(\theta) - J_\text{NDS}(\theta) \right|
        =
        \Tilde{\bigO}_P \left( \frac{1}{s^2n^{1/3}} \left[ \frac{B}{s} + B^2L + B^3\sqrt{D} \right] \right).
    \]
\end{theorem}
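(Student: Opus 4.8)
The plan is to establish uniform convergence by controlling each building block of $\hat{J}_\text{NDS}^\lambda$ separately, then composing the pieces through the arithmetic that defines the objective. First I would set up empirical-process machinery: under \cref{assumption:nds-A,assumption:nds-B,assumption:nds-C} the critic family $\{f_\theta\}$ is uniformly bounded by $B$, lives in a $D$-dimensional parameter space, and is $L$-Lipschitz (in $\theta$), so the function classes $\{f_\theta(x,y)\}$, $\{f_\theta(x,y)^2\}$, and the products $\{f_\theta(x_i,y_j)\}$ all have controlled covering numbers. The key quantities to bound uniformly over $\theta$ are: (i) $\hat T - T_{\althyp}$, a one-sample mean deviation, which is $\tilde{\bigO}_P(B\sqrt{D/n})$ by a standard uniform Bernstein/chaining argument; (ii) $\hat T_{\nullhyp} - T_{\nullhyp}$, a V-statistic (degree-2) deviation, which by a Hoeffding-decomposition or U-statistic uniform bound is again $\tilde{\bigO}_P(B\sqrt{D/n})$ up to the $\bigO(B^2/n)$ V- vs U-statistic bias; (iii) $\hat\tau_{\althyp,\lambda}^2 - (\tau_{\althyp}^2+\lambda)$ and $\hat\tau_{\nullhyp,\lambda}^2-(\tau_{\nullhyp}^2+\lambda)$, which since the summands are bounded by $\bigO(B^2)$ are $\tilde{\bigO}_P(B^2\sqrt{D/n})$ uniformly (the $\nullhyp$ variance estimate is again a V-statistic, handled the same way).

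Next I would convert the variance deviations into deviations of the standard deviations and their reciprocals. Because $\tau_\theta^2 \ge s^2$ and $\lambda > 0$, we have $\hat\tau_{\althyp,\lambda} \ge \sqrt{\lambda}$ deterministically and $\tau_{\althyp,\lambda}\ge s$, so $|1/\hat\tau_{\althyp,\lambda} - 1/\tau_{\althyp,\lambda}| \le |\hat\tau_{\althyp,\lambda}^2-\tau_{\althyp,\lambda}^2|/(\sqrt\lambda\, s\,(\hat\tau_{\althyp,\lambda}+\tau_{\althyp,\lambda}))$; this is where the $\lambda^{-1/2}$ factor (and hence the choice $\lambda=\Theta(n^{-1/3})$) enters, trading off the $1/\sqrt\lambda$ blow-up against the bias introduced by adding $\lambda$ to the true variance. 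Balancing $\lambda$ against $n^{-1/2}/\sqrt\lambda$ gives the $n^{-1/3}$ rate. Then the first term of $\hat{J}_\text{NDS}^\lambda$, namely $(\hat T - \hat T_{\nullhyp})/\hat\tau_{\althyp,\lambda}$, is compared to $(T_{\althyp}-T_{\nullhyp})/\tau_{\althyp}$ by the triangle inequality, splitting into a numerator-error term (numerator deviation over $\hat\tau_{\althyp,\lambda}\ge\sqrt\lambda$, contributing the $B/s$-type piece once one is careful to also pay for replacing $\tau_{\althyp,\lambda}$ by $\tau_{\althyp}$) and a denominator-error term (true numerator $|T_{\althyp}-T_{\nullhyp}|\le 2B$ times the reciprocal-deviation). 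The threshold term $\tfrac{\hat\tau_{\nullhyp}}{\sqrt m\,\hat\tau_{\althyp}}\Phi^{-1}(1-\alpha)$ is handled identically, yielding the structurally similar contribution; the $1/\sqrt m$ is a harmless constant here since $m$ is fixed in the statement.

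Collecting terms, every contribution carries an overall $1/(s^2 n^{1/3})$ (two powers of $s$ from $1/\tau^2$-type quantities appearing in the reciprocal-of-variance bounds, one power of $n^{1/3}$ from the $\lambda$-balanced rate), and the bracketed factor $B/s + B^2 L + B^3\sqrt D$ assembles as: $B/s$ from the numerator error scaled by $1/s$; $B^2 L$ from the Lipschitz/covering-number contribution to the chaining integral (each $\sqrt{\log N}$ term picks up $\log L$, which is absorbed into the $\tilde{\bigO}$, while the leading covering scale contributes the stated power); and $B^3\sqrt D$ from the product of the $B^2$-scale variance deviation ($\propto B^2\sqrt D$) with a true numerator of size $B$ under the reciprocal bound. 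I would make this precise by a single union bound over an $\epsilon$-net of $\Theta$ of size $(L/\epsilon)^{\bigO(D)}$ followed by a Lipschitz-in-$\theta$ continuity argument for all five empirical quantities simultaneously, choosing $\epsilon$ polynomially small in $n$. The main obstacle I anticipate is the bookkeeping around the denominator: getting the dependence on $s$, $B$, and $\lambda$ exactly right when passing from $|\hat\tau^2-\tau^2|$ to $|1/\hat\tau - 1/\tau|$ and then multiplying by an $\bigO(B)$ numerator, and verifying that the $\lambda=\Theta(n^{-1/3})$ choice really is the optimizer of the resulting bias–variance tradeoff rather than, say, $n^{-1/4}$ — this requires tracking whether the dominant error is the multiplicative $1/\sqrt\lambda$ on an $n^{-1/2}$ deviation or the additive $\lambda$ bias, and confirming both land at $n^{-1/3}$ under the stated scaling.
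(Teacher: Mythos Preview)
Your proposal is correct and follows essentially the same approach as the paper: uniform concentration of $\hat T_1$, $\hat T_0$, $\hat\tau_{\althyp}^2$, $\hat\tau_{\nullhyp}^2$ via an $\epsilon$-net over $\Theta$ (the paper uses McDiarmid plus a finite-net union bound rather than chaining/Bernstein, but the resulting rates are the same), followed by the same three-way ratio decomposition that exploits $\hat\tau_\lambda\ge\sqrt\lambda$ and $\tau_\lambda\ge s$ to control the reciprocal, and the identical $\lambda=\Theta(n^{-1/3})$ balancing of the $1/\sqrt\lambda$ blow-up against the additive $\lambda$ bias. Your attribution of the three bracketed terms ($B/s$ from the $\lambda$-bias, $B^2L$ from the Lipschitz net-approximation error on the variance scaled by $B/(\sqrt\lambda\,s^2)$, $B^3\sqrt D$ from the $B^2\sqrt D$ variance concentration times the $B$ numerator) and your handling of the threshold term as lower order both match the paper's bookkeeping.
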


\begin{theorem}[Uniform convergence of $\hat{J}_\text{HSIC}^\lambda$] \label{thm:hsic-snr-convergence-main}
    Let $\{k_\omega: \omega\in\Omega\}$ and $\{l_\gamma: \gamma\in\Gamma\}$ be kernel families with parameter spaces $\Omega$ and $\Gamma$ satisfying \cref{assumption:hsic-A,assumption:hsic-B,assumption:hsic-C} in \cref{sec:hsic-assumptions} which define kernel dimensions $D_\Omega$, $D_\Gamma$ and smoothnesses $L_k$, $L_l$. Suppose $\sigma_{\omega,\gamma}^2 \geq s^2 > 0$ under $\althyp$ and $\lambda=\Theta(n^{-1/3})$. Then
    \begin{align*}
        \sup_{\omega\in\Omega, \gamma\in\Gamma} \left| \hat{J}_\text{HSIC}^\lambda(\omega,\gamma) - J_\text{HSIC}(\omega,\gamma) \right|
        =
        \Tilde{\bigO}_p\left( \frac{1}{s^2n^{1/3}}\left[ \frac{1}{s} + L_k + L_l + \sqrt{D_\Omega} + \sqrt{D_\Gamma} \right] \right).
    \end{align*}
\end{theorem}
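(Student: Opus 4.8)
\emph{Proof idea.} The plan is to follow the template used for \cref{thm:nds-snr-convergence-main}: reduce the ratio error to a uniform error in the numerator plus one in the regularized denominator, bound each by a uniform concentration argument over the kernel family, and track how $\lambda$ enters. Suppressing the index $(\omega,\gamma)$, write $a=\HSIC(X,Y;k_\omega,l_\gamma)$, $\hat a=\widehat{\HSIC}_{\rm u}(\bX,\bY;k_\omega,l_\gamma)$, $b=\sigma_{\omega,\gamma}^2$, $\hat b=16(R-\widehat{\HSIC}^2_{\rm u})$, and let $\rho$ be the population quantity estimated by $R$, so that $b=16(\rho-a^2)$ by \cref{prop:asymptotics}, while $J_\text{HSIC}=a/\sqrt b$ and $\hat J_\text{HSIC}^\lambda=\hat a/\sqrt{\hat b+\lambda}$. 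Since each deep kernel is a convex combination of Gaussian kernels, $k_\omega,l_\gamma\in[0,1]$, so $a,\hat a,b,\hat b,R,\rho$ are all bounded by an absolute constant; moreover $b\geq s^2$ by hypothesis, and on an event of probability $1-o(1)$ (which suffices for a $\Tilde{\bigO}_p$ statement) we have $\hat b\geq 0$ for all $(\omega,\gamma)$ simultaneously, since $b\geq s^2>0$ and $\hat b$ will be shown to converge to it uniformly; hence $\hat b+\lambda\geq\lambda$. Decomposing
\[
    \hat J_\text{HSIC}^\lambda-J_\text{HSIC}
    =\frac{\hat a-a}{\sqrt{\hat b+\lambda}}
    +a\Bigl(\tfrac{1}{\sqrt{\hat b+\lambda}}-\tfrac{1}{\sqrt{b+\lambda}}\Bigr)
    +a\Bigl(\tfrac{1}{\sqrt{b+\lambda}}-\tfrac{1}{\sqrt b}\Bigr),
\]
and using $\abs{u^{-1/2}-v^{-1/2}}=\abs{u-v}/\bigl(\sqrt u\sqrt v(\sqrt u+\sqrt v)\bigr)$ together with $\sqrt{\hat b+\lambda}\geq\sqrt\lambda$ and $\sqrt{b+\lambda}\geq\sqrt b\geq s$, we obtain, uniformly over $(\omega,\gamma)$,
\[
    \abs{\hat J_\text{HSIC}^\lambda-J_\text{HSIC}}
    \;\leq\;\frac{\abs{\hat a-a}}{\sqrt\lambda}
    \;+\;\frac{\abs a\,\abs{\hat b-b}}{\sqrt\lambda\,s^2}
    \;+\;\frac{\abs a\,\lambda}{2s^3}.
\]
The last term is deterministic and is $\bigO(1/(s^3n^{1/3}))$ once $\lambda=\Theta(n^{-1/3})$; with this $\lambda$, $1/(\sqrt\lambda\sqrt n)=n^{-1/3}$, so everything reduces to the two uniform bounds $\sup_{\omega,\gamma}\abs{\hat a-a}$ and $\sup_{\omega,\gamma}\abs{\hat b-b}$ being $\Tilde{\bigO}_p\bigl((L_k+L_l+\sqrt{D_\Omega}+\sqrt{D_\Gamma})/\sqrt n\bigr)$. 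Substituting, the numerator term is $\Tilde{\bigO}_p((\cdots)/n^{1/3})$ and the denominator term is $\Tilde{\bigO}_p((\cdots)/(s^2n^{1/3}))$, which together with the bias term is exactly the claimed bound (the $s$-free numerator term is absorbed, as $s$ is bounded above by a constant).

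The crux is therefore these two uniform concentration statements, proved by a covering argument exactly as in the analogous step of \cref{thm:nds-snr-convergence-main}. Pointwise in $(\omega,\gamma)$: $\widehat{\HSIC}_{\rm u}$ is an unbiased $U$-statistic of fixed degree in the bounded kernel values, so a $U$-statistic Bernstein/Hoeffding inequality (equivalently, McDiarmid applied to a symmetrized form) gives sub-Gaussian deviation of order $n^{-1/2}$; and $\hat b=16(R-\widehat{\HSIC}^2_{\rm u})$ is a normalized polynomial of fixed degree in those same bounded entries, hence a bounded $V$-statistic-type quantity concentrating at the same rate — writing $\hat b-b=16(R-\rho)-16(\widehat{\HSIC}^2_{\rm u}-a^2)$ and bounding the second piece by $(\abs{\widehat{\HSIC}_{\rm u}}+\abs a)\abs{\hat a-a}$ reduces it to the first bound plus a $U$-statistic deviation for $R$. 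One then lifts to uniformity over $\Omega\times\Gamma$ via an $\epsilon$-net: \cref{assumption:hsic-A,assumption:hsic-B,assumption:hsic-C} supply the metric entropy (log-covering number of order $(D_\Omega+D_\Gamma)\log(1/\epsilon)$) and make $(\omega,\gamma)\mapsto\widehat{\HSIC}_{\rm u},\widehat{\HSIC}^2_{\rm u},R$, and their population versions, Lipschitz with constants of order $L_k+L_l$ uniformly in the data (each is a smooth function of the $k_\omega$- and $l_\gamma$-entries, which are parameter-Lipschitz). Choosing $\epsilon$ polynomially small in $n$ makes the within-cell oscillation negligible, a union bound over the net contributes $\sqrt{\log(\text{net size})}=\Tilde{\bigO}(\sqrt{D_\Omega}+\sqrt{D_\Gamma})$, and the additive $L_k+L_l$ come from the Lipschitz terms.

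I expect the main obstacle to be precisely this uniform concentration, and within it the bookkeeping for $R$: it is a higher-degree $U$/$V$-statistic than anything in the NDS analysis, so one must carefully track the variance proxies entering the concentration inequality and confirm that its parameter-Lipschitz constant is genuinely controlled by $L_k,L_l$ and kernel boundedness rather than growing with $n$. Secondary points to pin down are (i) the claim $\hat b\geq 0$ (hence $\hat b+\lambda\geq\lambda$), either from the sum-of-squares form of the variance estimator of \citet{song2012feature} or, more conservatively, on the stated probability-$(1-o(1))$ event, and (ii) that $\lambda=\Theta(n^{-1/3})$ is exactly the exponent balancing the $\lambda/s^3$ discretization bias against the $1/\sqrt\lambda$ blow-up of the denominator lower bound — this is what pins the overall rate at $n^{-1/3}$ rather than the $n^{-1/2}$ one would obtain at a fixed $\lambda$.
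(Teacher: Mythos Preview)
Your proposal is correct and follows essentially the same route as the paper: reduce the ratio error to uniform errors in $\widehat{\HSIC}_{\rm u}$ and in the variance estimate, establish each via a McDiarmid-plus-covering argument using \cref{assumption:hsic-A,assumption:hsic-B,assumption:hsic-C}, and balance $\lambda$ against the $\lambda/s^3$ bias to get the $n^{-1/3}$ rate. The paper's \cref{prop:hsic-conv,prop:hsic-var-conv} are exactly the two uniform bounds you identify, and its Lemmas for the variance (concentration via bounded differences, plus a separate bias calculation) match your plan of handling $R-\rho$ and $\widehat{\HSIC}_{\rm u}^2-a^2$.

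The one noteworthy difference is in the algebraic decomposition. You insert $a/\sqrt{\hat b+\lambda}$ and $a/\sqrt{b+\lambda}$, which forces the numerator-error term to carry $1/\sqrt{\hat b+\lambda}\le 1/\sqrt\lambda$; the paper instead inserts $\hat a/\sigma_\lambda$ and $\hat a/\sigma$, so its numerator-error term carries $1/\sigma\le 1/s$. This makes the paper's version of that term $\bigO_p(n^{-1/2}/s)$ rather than your $\bigO_p(n^{-1/3})$, i.e.\ genuinely lower order. Your absorption argument (``$s$ bounded above'') is valid and recovers the stated rate, so nothing is lost at the level of the theorem, but the paper's decomposition is slightly sharper and avoids that extra step. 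Your caution about whether $\hat b\ge 0$ holds deterministically is well-placed; the paper simply asserts positivity of $\hat\sigma^2_\theta+\lambda$ without comment, so your high-probability workaround is, if anything, more careful.
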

\cref{sec:nds-proof,sec:hsic-proof} state and prove non-asymptotic versions of the above results based on covering numbers; the assumptions and proof techniques are similar to those of \citet{liu2020learning}\footnote{\label{foot:snr-proof}%
\Citet{ren24a} stated a similar result to \cref{thm:hsic-snr-convergence-main},
but with a seemingly incorrect proof:
in their Appendix D.3,
the term $\abs{r^{(n)} - r}$ in their (63)
should be bounded uniformly over kernel parameters,
but they appeal to a result about convergence of distribution functions for a single distribution.
For the uniform result over kernel parameters that their theorem statement claims,
they would need to additionally use a cover or other approach (as is done for the HSIC and the variance estimators),
but none is attempted.},
but is slightly more involved for a few reasons: 1) Uniform convergence of HSIC and its variance estimators now include two kernels, $k$ and $l$, instead of one, and a somewhat different estimator form. 2) For NDS we also need to show convergence of a null-to-alternative variance ratio, a term which is not present in the MMD or HSIC objectives. 

Following \cref{thm:vdv-consistency} (a standard result from \citealp{Vaart_1998}), successfully maximizing these estimates will thus also maximize the population quantity, and consequently the asymptotic test power, for sufficiently large training set sizes $n$.

\section{Experiments}\label{sec:exp}

The repository
\url{https://github.com/xunathan96/deephsic/}
contains implementations of our methods and code for our experiments.

\paragraph{Baselines.}
We compare our HSIC-based (HSIC-D/Dx/O) and MI-based (NDS/InfoNCE/NWJ) tests with various alternative methods. All tests are performed using permutation testing. 
\begin{itemize}[itemsep=1pt,parsep=1pt,topsep=0pt,leftmargin=10pt]
    \item HSIC-D: HSIC using deep kernels on each space $\X$ and $\Y$; simultaneously trained via \cref{sec:method}.
    \item HSIC-Dx: HSIC using a tied deep kernel, i.e.\ $k_\omega = l_\gamma$, and trained via \cref{sec:method}.
    \item HSIC-O: HSIC using Gaussian kernels, with each bandwidth parameter optimized via \cref{sec:method}.
    \item NDS: The neural dependency statistic \eqref{eq:nds} trained via \cref{sec:method}.
    \item InfoNCE \citep{oord2018representation,poole2019variational}: the statistic $\NCEhat$ as in \eqref{eq:nce-hat}.
    \item NWJ \citep{nguyen2010mutualinfo,poole2019variational}: another mutual information bound statistic $\hat{I}_\text{NWJ}$. 
    \item HSIC-M: HSIC using a Gaussian kernel, with bandwidth selected via the median heuristic.
    \item HSIC-Agg \citep{albert2021adaptivetestindependencebased, schrab2023mmdaggregatedtwosampletest}: aggregating Gaussian kernels of various bandwidths. We use the complete U-statistic with default settings: $B1=500$, $B2=500$, $B3=50$, and uniform weights.
    \item MMD-D: The method of \citet{liu2020learning} applied to $\P_{xy}$ vs $\P_x \times \P_y$,\footnote{Specifically, we compare $\bZ\sim\P_{xy}$ against a single shuffle of the samples $\sigma\bZ$, as detailed in \cref{sec:mmd-indep}. We use this approach over the data-splitting approach since it both minimizes memory (compared to using a large number of permutations) and performs slightly better in practice, as seen in \cref{fig:variance_hsic_mmd}.} with a Gaussian kernel on $\X \times \Y$.
    \item C2ST-S \citep{lopez-paz2017revisiting} / C2ST-L \citep{cheng2022c2st_logit}: Sign/logit-based classifier two-sample test for $\P_{xy}$ vs $\P_x \times \P_y$, with samples set up the same way as for MMD-D.
\end{itemize}

\paragraph{Datasets.}
We consider four informative datasets, where the true answers are known.

$\bullet~$\textbf{High-dimensional Gaussian mixture}.
The distribution HDGM-$d$ has $d$ total dimensions (divided between $X$ and $Y$), but has dependence between only two of them:
\begin{align*}
    [X_1,Y_\floor{d/2},\dots,X_\ceil{d/2},Y_1]\sim \sum_{i=1}^2\frac{1}{2}\N\left( \textbf{0}_d,
    \begin{bmatrix}
        1 & 0.5(-1)^i & \textbf{0}^T_{d-2}  \\
        0.5(-1)^i & 1 & \textbf{0}^T_{d-2}  \\
        \textbf{0}_{d-2} & \textbf{0}_{d-2} & I_{d-2}
    \end{bmatrix} \right),
\end{align*}
where the odd dimensions are taken to be from $\P_x$ and even dimensions to be from $\P_y$. Moreover, for $d\geq 4$ the dependent variables $X_1$ and $Y_{\lfloor d/2 \rfloor}$ are at different dimensions. We perform independence tests at dimensions 4, 8, 10, 20, 30, 40, and 50.
This distribution is similar to one used by \citet{liu2020learning}.

$\bullet~$\textbf{Sinusoid} \citep{sejdinovic2012hypothesistestingusingpairwise}. We sample from sinusoidally dependent data with distribution $\P_{xy} \propto 1 + \sin(\ell x)\sin(\ell y)$ on support $\X\times\Y=[-\pi,\pi]^2$. Higher frequencies $\ell$ produce subtler departures from the uniform distribution, resulting in a harder independence problem; we use $\ell=4$. A visualization of this density is given in \cref{fig:sinusoid_density}. 

$\bullet~$\textbf{RatInABox} \citep{george2024ratinabox}.
RatInABox simulates hippocampal cells of a rat in motion. In particular, we test for dependence between firing rates of grid cells and the rat's head direction. Grid cells respond near points in a grid covering the environment surface, and should be subtly connected to head direction because of the geometry of the ``box'' (\cref{fig:riab_box}). We consider 8 grid cells, and simulate motion for 100$\,$000 seconds, taking a measurement every 5 seconds as our dataset. %

$\bullet~$\textbf{Wine Quality} \citep{uciwine}. Details physicochemical properties (e.g., sugar, pH, chlorides) of different types of red and white wines and their perceived quality as an integer value from 1 to 10. We test for dependency between residual sugar levels and quality.

\paragraph{Power vs. test size.}
We first compare how well methods identify dependency with a large-size training set, by comparing the rate at which the learned tests achieve perfect power (1.0) as the test size $m$ increases. Training and validation sizes for each dataset are given in \cref{sec:training-details}, and results are visualized in \cref{fig:power_vs_testsize} (with comprehensive results in \cref{fig:power_vs_testsize_hdgm}). Overall, HSIC-D outperforms baselines, and is able to reach perfect power at smaller test sizes $m$. We note that HSIC-O performs reasonably well for simpler problems like Sinusoid, but struggles on harder problems like RatInABox. This suggests that no Gaussian kernel on the input space is well-suited for the task. On the other hand, kernels applied to optimal representations of the data (HSIC-D) are the most powerful. Surprisingly, optimizing the power of NDS tests is significantly weaker than directly maximizing InfoNCE or NWJ; we elaborate on this in the following section.

\begin{figure}[!ht]
    \begin{center}
        {\pdftooltip{\includegraphics[width=0.8\textwidth]{figures/exp/legend.pdf}}{These are baselines considered in this paper. See more details in Section~\ref{sec:exp}.}}\\
        \subfigure[HDGM-10]
        {\includegraphics[width=0.244\textwidth]{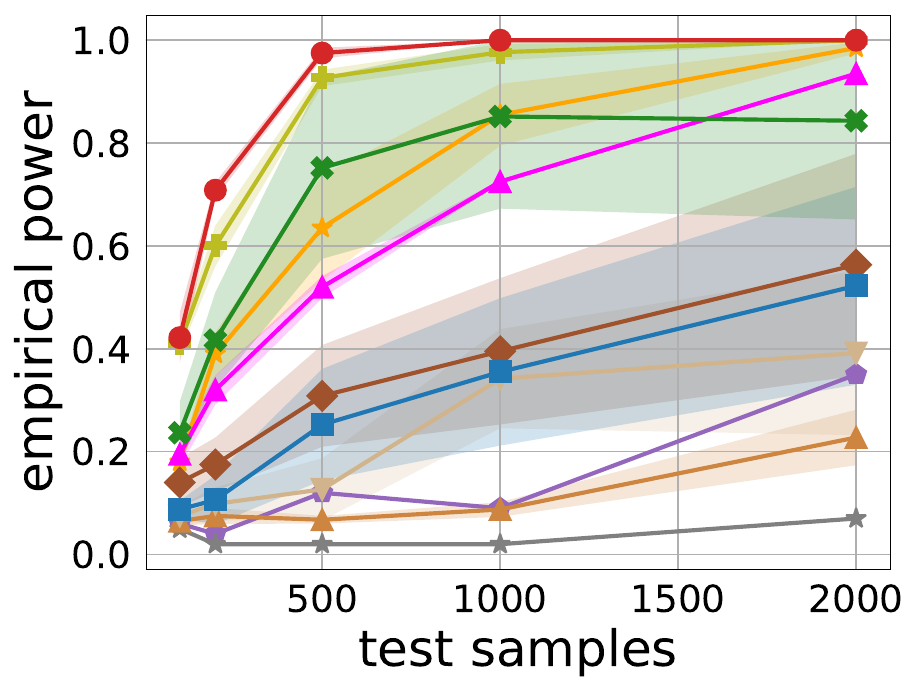}}
        \subfigure[Sinusoid]
        {\includegraphics[width=0.244\textwidth]{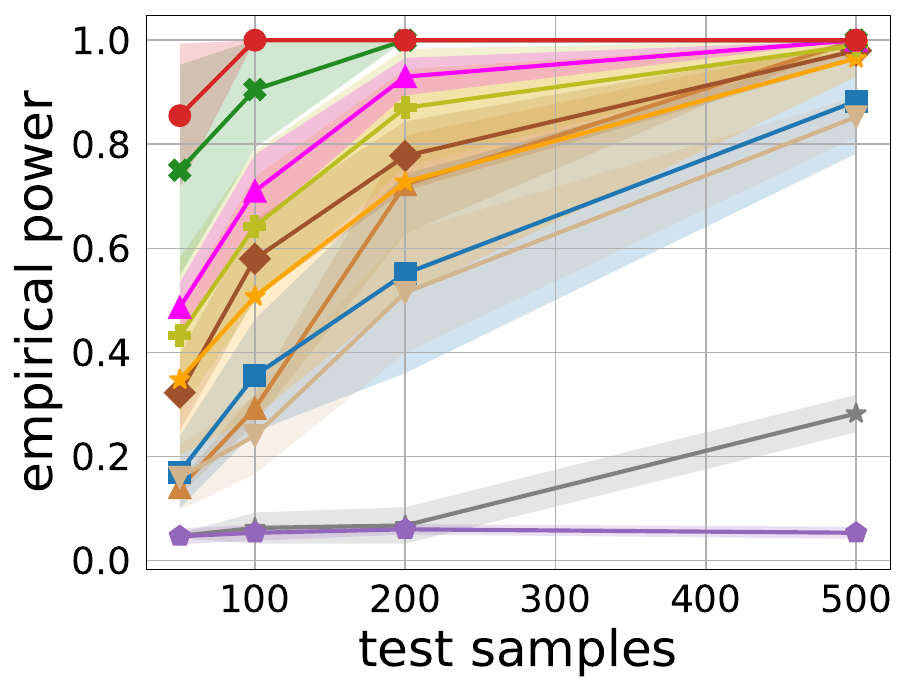}}
        \subfigure[RatInABox]
        {\includegraphics[width=0.244\textwidth]
        {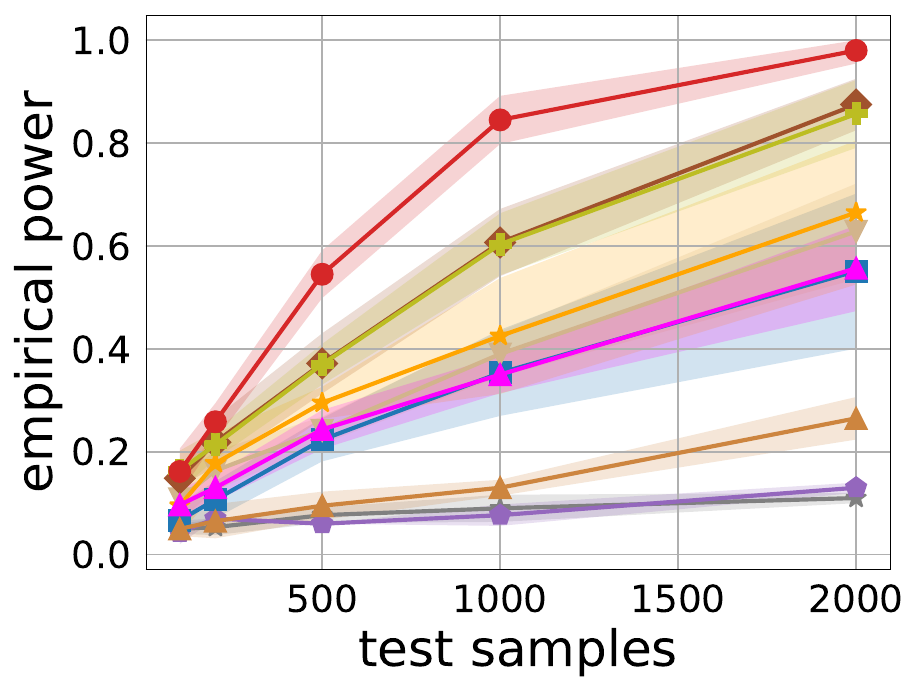}}
        \subfigure[Wine]
        {\includegraphics[width=0.244\textwidth]{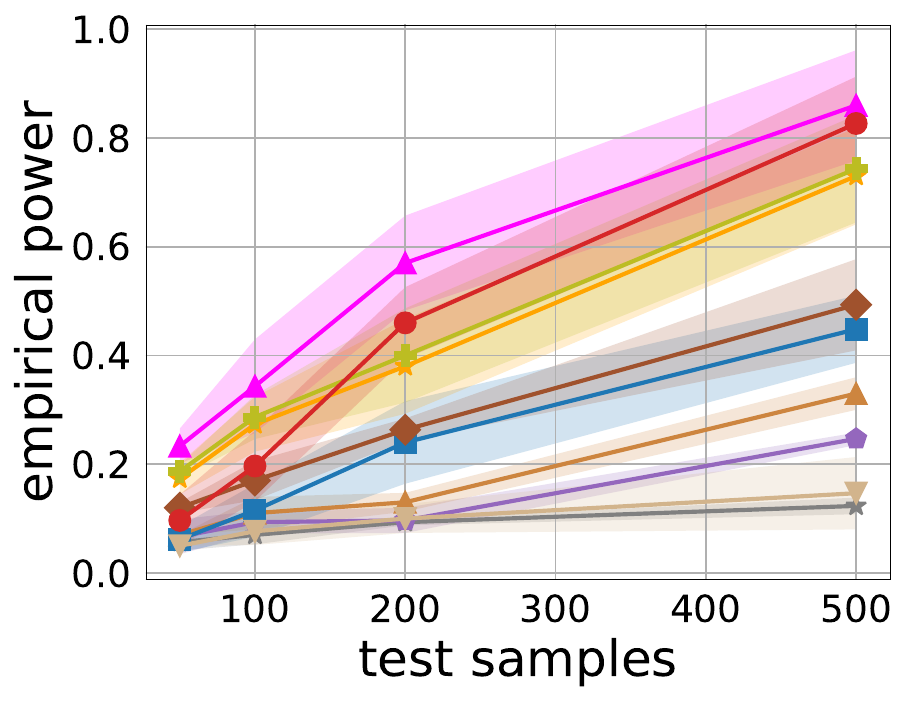}}
        \vspace{-1em}
        \caption{Empirical power vs sample size $m$ for different datasets, when trained with a large training set. The average test power is computed over 5 training runs, where the empirical power is determined over 100 permutation tests. The shaded region covers one standard error from the mean.}
        \label{fig:power_vs_testsize}
    \end{center}
    \vspace{-1em}
\end{figure}

\paragraph{Power vs. dataset size.}
A drawback to kernel selection via optimization is that we must hold out a split of the data for training. In contrast, HSIC-M and HSIC-Agg are able to utilize the entirety of the dataset for their test. To examine this trade-off, we consider consistent data splitting at datasets of varying sizes. For the HDGM and Sinusoid problems we use a 7:2:1 train-val-test split, and for RatInABox and Wine we use a 3:2 train-test split. Conversely, HSIC-M and HSIC-Agg use the \emph{entire} dataset for testing. Results are shown in \cref{fig:power_vs_datasize}. Non-splitting methods are able to take advantage of the additional test samples and outperform some data-splitting methods at smaller dataset sizes; for large dataset sizes the reverse is true. The validation set in this experiment is used only for early stopping of the training process, and not for hyperparameter selection.

\begin{figure}[!ht]
    \begin{center}
        {\pdftooltip{\includegraphics[width=0.8\textwidth]{figures/exp/legend.pdf}}{These are baselines considered in this paper. See more details in Section~\ref{sec:exp}.}}\\
        \subfigure[HDGM-10]
        {\includegraphics[width=0.244\textwidth]{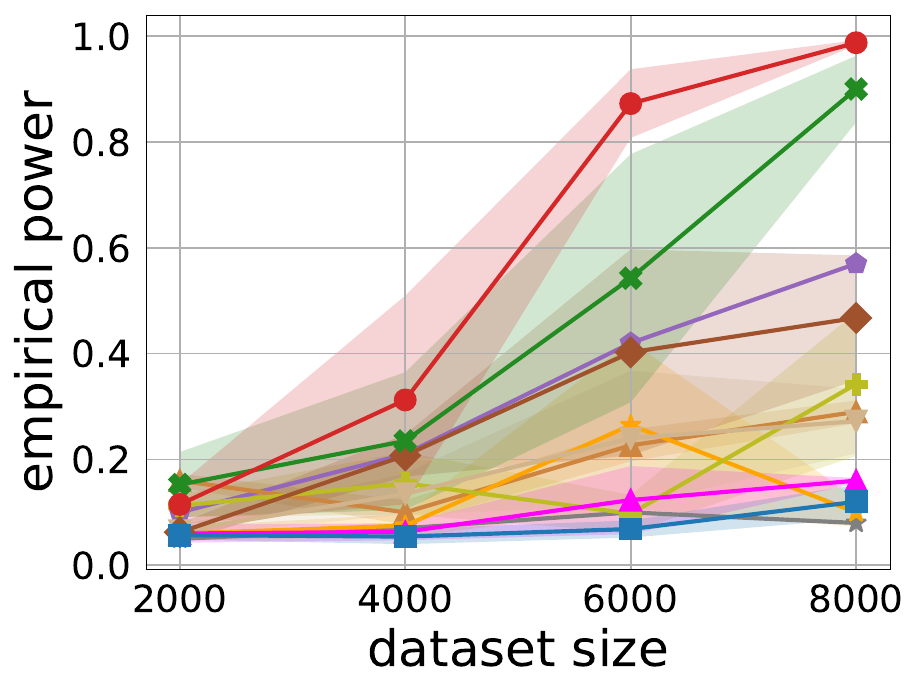}}
        \subfigure[Sinusoid]
        {\includegraphics[width=0.244\textwidth]{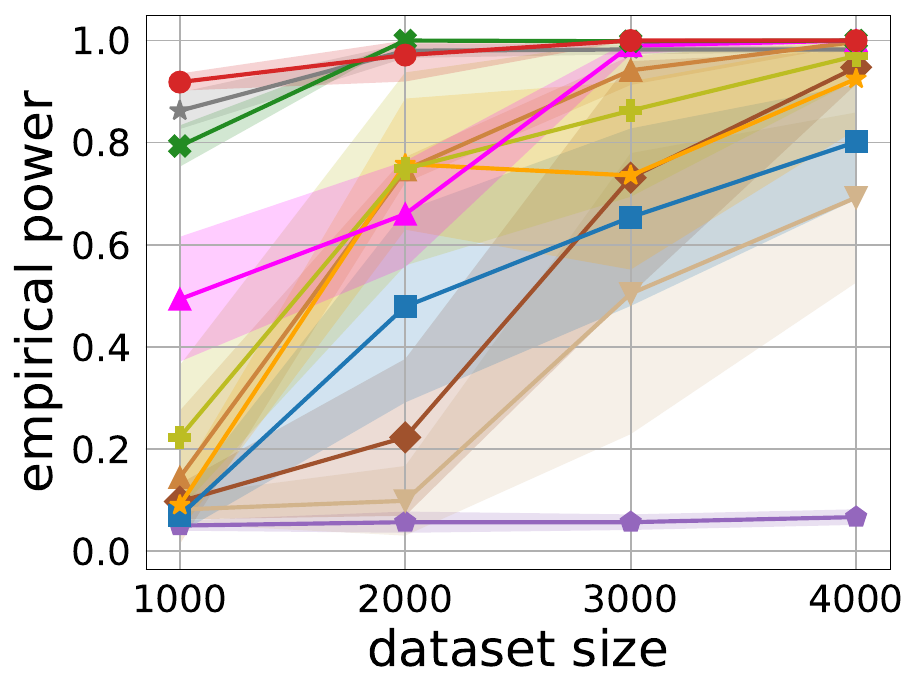}}
        \subfigure[RatInABox]
        {\includegraphics[width=0.244\textwidth]{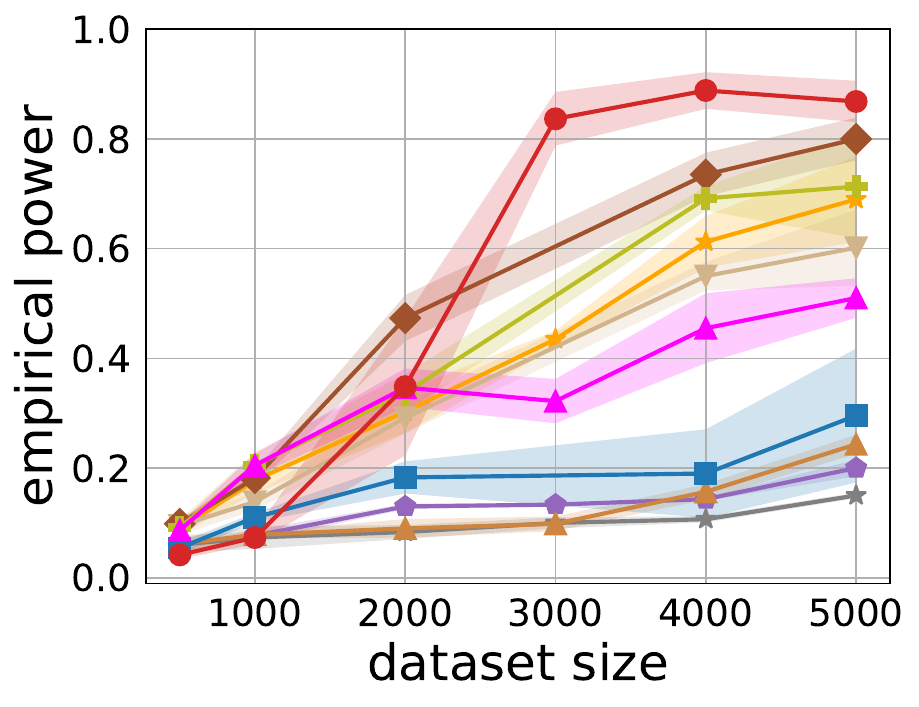}}
        \subfigure[Wine]
        {\includegraphics[width=0.244\textwidth]{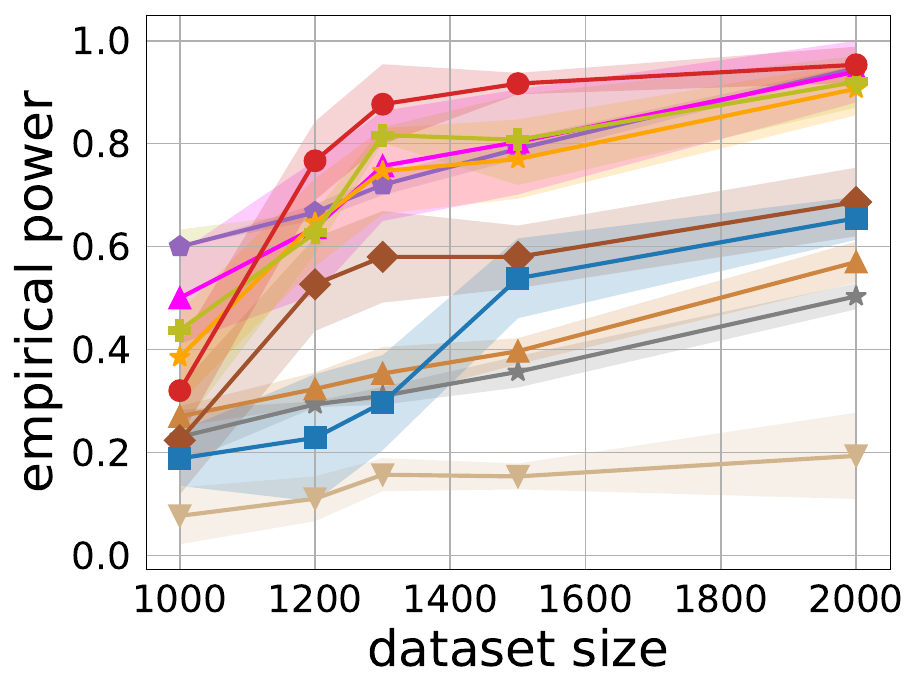}}
        \vspace{-1em}
        \caption{Empirical power vs dataset size. HDGM and Sinusoid uses a consistent 7:2:1 train-val-test split across all dataset sizes, while RatInABox and Wine maintain a 3:2 train-test split. HISC-Agg and HSIC-M do not split the data. The shaded region covers one standard deviation over 5 training runs.}
        \vspace{-1em}
        \label{fig:power_vs_datasize}
    \end{center}
\end{figure}

\paragraph{SNR with vs. without threshold.}
Our HSIC objective $\hat{J}_{\HSIC}^\lambda$ disregards the threshold-dependent term, whereas the NDS objective $\hat{J}_\text{NDS}^\lambda$ does not. The impact this term has during training, however, is still unclear. We compare the strength of tests maximized with and without this term in \cref{fig:snr-with-threshold}. It is important to note that the threshold estimate for HSIC is challenging (see also \cref{fn:hsic-thresh-est}); we use a method based on the moment-matched Gamma approximation, but with improved gradient estimation rather than the finite differences estimate they used (\cref{app:snr-with-thresh}).
This estimate, in either form, introduces substantial additional code complexity
and its own bias and variance that could each hurt overall performance.
In general, we find that including the threshold for HSIC has no strong effect on simpler problems,
but degrades the test power for harder problems.
On the other hand, NDS with the threshold yields stronger tests. This is not too surprising as, unlike for HSIC, the null distribution here follows the CLT and is easily approximated given enough samples.

\section{Discussion}\label{sec:interesting-results}

\paragraph{Independence tests with MMD.}

We find that MMD-D underperforms HSIC-D in almost all settings, despite the equivalence between MMD and HSIC specified in \cref{thm:hsic-mmd}. This equivalence, while true at the population level, is not the case when considering finite sample estimates. These estimators exhibit different statistical properties, with some being more conducive to learning than others. In particular, we notice that MMD estimators based on a single permutation of the samples (as described in \cref{sec:mmd-indep}) exhibit larger variance compared to HSIC ones. We prove this for the biased estimators in \cref{app:mmd-independence-tests}, and observe the same phenomenon when training MMD-D and HSIC-D.

\paragraph{Permutation tests sometimes disagree with asymptotics.} \label{sec:perm-asymptotics}
We hypothesized that maximizing an estimate of the asymptotic power would produce stronger tests than maximizing the test statistic directly. Although this seems to be true for HSIC (\cref{app:J_vs_hsic}), it is surprisingly not the case for MI-based statistics; The previous power versus test size experiments show that directly maximizing InfoNCE or NWJ outperforms maximizing the NDS asymptotic power, at least when using a permutation test. Why is this?

\Cref{fig:asymptotics-nce} examines several power estimates for an NDS test based on an InfoNCE-learned critic. Notice that the asymptotic formula (blue) agrees well with the simulated test power (orange), indicating that the asymptotics describe the test well. However, the permutation power (green) is significantly greater than what the asymptotics suggest.
Recalling that the simulated test power is in fact using a near-exact threshold
based on the empirical quantile of the test statistic under the null,
this should be roughly the most powerful valid test with a \emph{sample-independent} threshold.
Permutation tests, however, use a \emph{sample-dependent} threshold,
which in this case apparently can yield a much more powerful test than the best sample-independent threshold.
We explore this discrepancy further in \cref{app:permutation-vs-asymptotics}.

Thus, while maximizing the power of the NDS test maximizes the asymptotic power of a test with a data-independent threshold,
it does \emph{not} necessarily maximize the power of a permutation test.
InfoNCE, and other variational MI estimators,
seem to frequently choose critics whose sample-independent threshold tests are worse than those obtained by NDS maximization,
but whose \emph{permutation} tests are actually much more powerful.
Exploring this connection more deeply is a very intriguing area for future work.

We did not observe this discrepancy for HSIC;
maximizing the sample-independent asymptotics for HSIC
seems to yield permutation tests (roughly matching their asymptotics)
which still outperform InfoNCE or similar NDS permutation tests.
Overall, HSIC-D seems to be the most reliable method of those considered here.

\begin{figure}[!t]
    \begin{center}
        \subfigure[HDGM-4]
        {\includegraphics[width=0.24\textwidth]{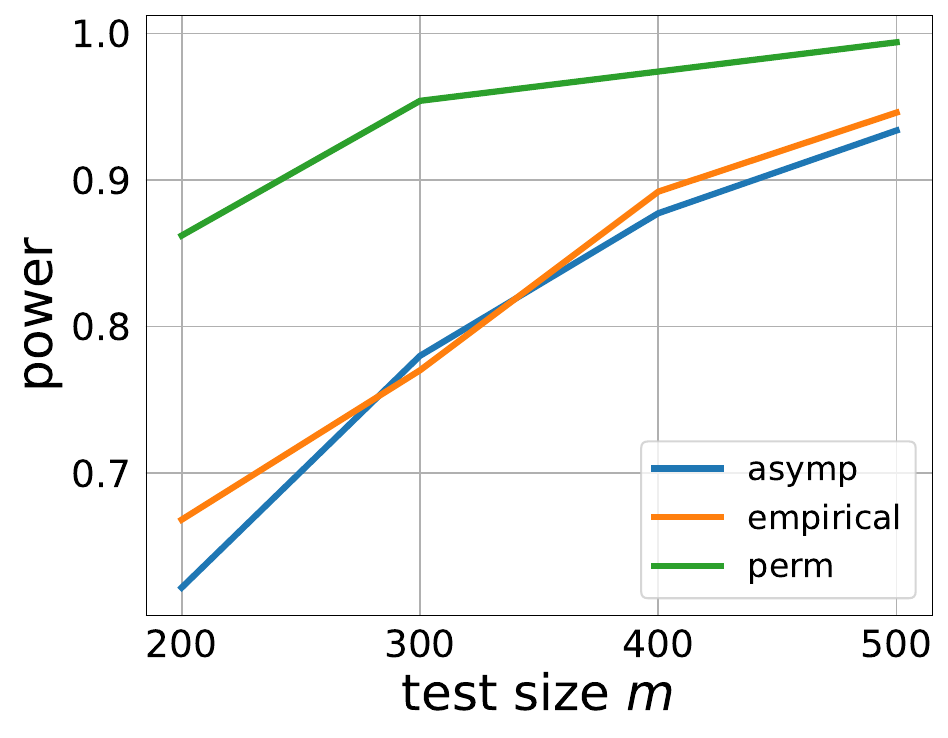}}
        \subfigure[Sinusoid]
        {\includegraphics[width=0.24\textwidth]{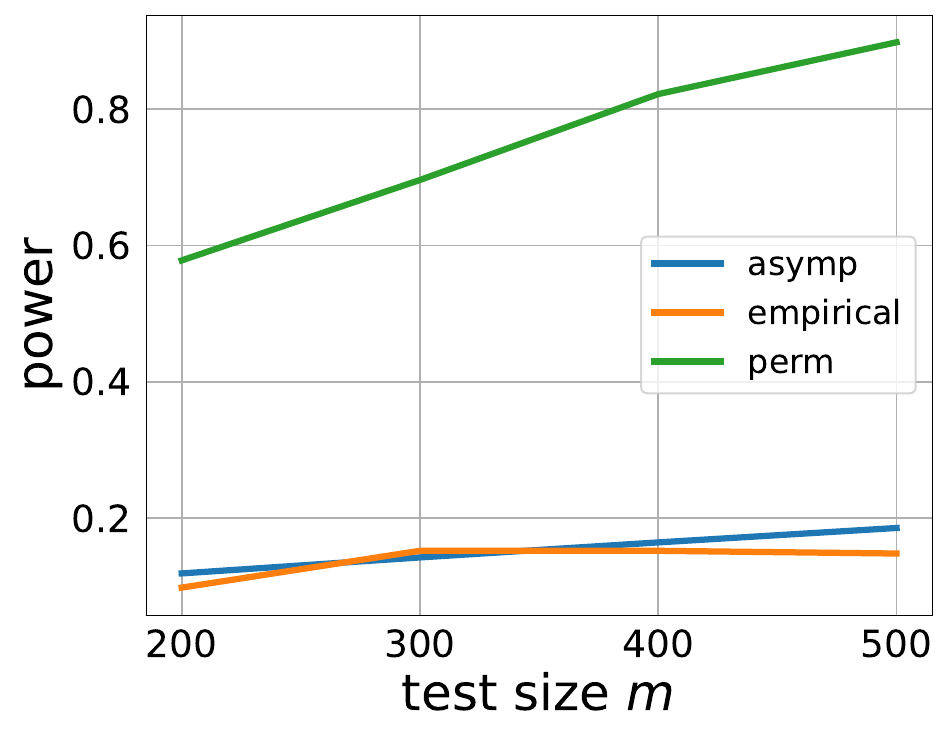}}
        \subfigure[RatInABox]
        {\includegraphics[width=0.24\textwidth]{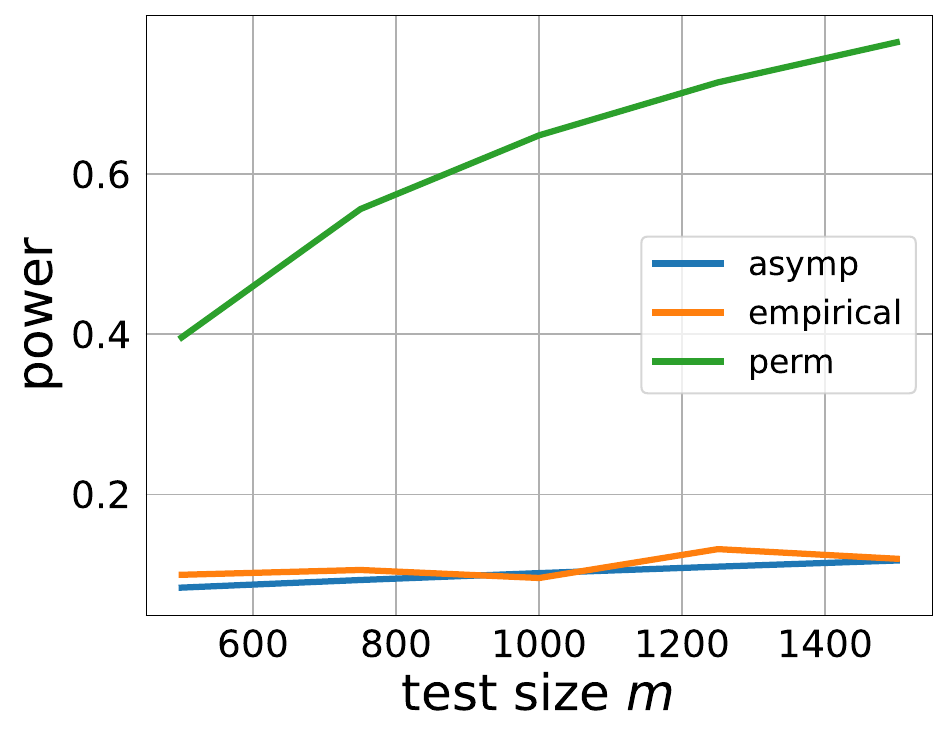}}
        \caption{Power of NDS tests derived from a learned InfoNCE critic $f$. For each problem, the blue line (asymp) is power described by the asymptotic formula \eqref{eq:nds-asymptotic-power}, where we use a very large sample size ($n=20\,000$) to estimate the population level statistics $T, T_0$ and their deviations $\sigma_{\althyp}, \sigma_{\nullhyp}$. The orange line (empirical) computes the power based on the simulated null distribution, i.e.\ the threshold estimated via the empirical CDF of $\hat{T}_0$ with test size $m$. The green line (perm) is the permutation test power.}
        \label{fig:asymptotics-nce}
    \end{center}
\end{figure}

\section{Conclusion and Future Work}\label{sec:conclusion}
Independence testing aims to see if two paired random variables are statistically independent. We explored two families of tests to address this problem. The first are tests based on variational mutual information estimators, which, to the best of our knowledge, we are the first to construct. The second are tests based on maximizing the asymptotic power, which was explored for the two-sample problem but not for independence testing. Our findings show that learning representations of the data via our proposed methods lead to powerful tests, with HSIC-based tests generally outperforming MI-based ones. Future work may look to extend this learning scheme to conditional independence testing and apply this to causal discovery. Meanwhile, it will also be valuable to investigate approaches for mitigating, or even removing, the data-splitting procedure -- as done by \citet{biggs2023mmdfuselearningcombiningkernels,mmd-selective} -- while maintaining the ability to learn rich data representations.

\iffalse
\subsubsection*{Broader Impact Statement}
In this optional section, TMLR encourages authors to discuss possible repercussions of their work,
notably any potential negative impact that a user of this research should be aware of. 
Authors should consult the TMLR Ethics Guidelines available on the TMLR website
for guidance on how to approach this subject.

\subsubsection*{Author Contributions}
If you'd like to, you may include a section for author contributions as is done
in many journals. This is optional and at the discretion of the authors. Only add
this information once your submission is accepted and deanonymized. 
\fi

%
%
\subsubsection*{Acknowledgments}
    The authors would like to thank Roman Pogodin for productive discussions.
    This work was enabled in part by support provided by
    the Natural Sciences and Engineering Research Council of Canada (with grant numbers RGPIN-2021-02974 and ALLRP-57708-2022),
    the Canada CIFAR AI Chairs program,
    Calcul Québec,
    the BC DRI Group,
    and the Digital Research Alliance of Canada.
    This work was also supported by the Australian Research Council (ARC) with grant numbers DE240101089 and DP230101540. 

\bibliography{main}
\bibliographystyle{tmlr}

\clearpage
\appendix

\section{Asymptotics of HSIC Estimators} \label{Asec:asymp}

\begin{prop}[\citealp{song2012feature}, Theorem 5] \label{prop:asymptotics}
    Under the alternative hypothesis $\althyp: \P_{xy} \neq \P_x \times \P_y$, the unbiased estimator of HSIC is asymptotically normal:
    with $m$ samples,
    \begin{align}
        \sqrt{m}(\widehat{\HSIC}_{\rm u} - \HSIC) \overset{\textbf{d}}{\to} \N(0, \sigma^2_{\althyp}).
    \end{align}
    The asymptotic variance of $\sqrt m \, \widehat{\HSIC}_{\rm u}$,
    $\sigma^2_{\althyp}$,
    can be consistently estimated from $n$ samples\footnote{\label{footnote:m-and-n}Typically $m = n$, but we might want to use a few ($n$) samples to roughly estimate the power of an $m$-sample test with $m \gg n$, as done in a different context by \citet{sutherland:unbiased-mmd-variance,deka:mmd-bfair}.} as
    $16\left(R-{{\rm HSIC}}^2\right)$,
    where
    $R = \frac{1}{n}\sum_{i=1}^n\left(
        \frac{(n-4)!}{(n-1)!}
        \sum_{(j,q,r)\in {\bf i}^n_3/\ \{i\}} h(i,j,q,r)
    \right)^2$.
    Here
    ${\bf i}^\ell_n \setminus \{i\}$ denotes the set of all $\ell$-tuples drawn without replacement from the set $\{1,\dots,n\} \setminus \{i\}$,
    and 
    $h(i,j,q,r)=\frac{1}{24}\sum_{(s,t,u,v)}^{(i,j,q,r)} k_{st}(l_{st}+l_{uv}-2l_{su})$,
    where the sum ranges over all $4! = 24$ ways to assign the distinct indices $\{i, j, q, r \}$ to the four variables $(s, t, u, v)$ without replacement.
\end{prop}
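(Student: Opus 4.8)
Since this proposition restates \citet{song2012feature}, the proof is theirs; here I sketch the argument I would give. The plan is to treat $\widehat{\HSIC}_{\rm u}$ as a fourth-order $U$-statistic and invoke the classical $U$-statistic central limit theorem. Writing $Z_i = (X_i, Y_i)$, the estimator \eqref{eq:HSIC_estimation_song} equals $\binom{m}{4}^{-1}\sum_{i<j<q<r} h(Z_i,Z_j,Z_q,Z_r)$ with the symmetric kernel $h$ of the statement, which has $\E h = \HSIC$ (this is the unbiasedness shown by \citet{song2012feature}, also obtainable by expanding \cref{thm:hsic-mmd}). Assuming $\E[h^2] < \infty$ — automatic for bounded $k,l$, and otherwise guaranteed by mild moment conditions on $k(X,X')$ and $l(Y,Y')$ — I would apply Hoeffding's decomposition: with $h_1(z) = \E[h(z,Z_2,Z_3,Z_4)] - \HSIC$ and $\zeta_1 = \Var(h_1(Z_1))$, this gives $\widehat{\HSIC}_{\rm u} - \HSIC = \tfrac4m\sum_{i=1}^m h_1(Z_i) + R_m$, where $R_m$ collects the orthogonal second-, third-, and fourth-order components and therefore has $\E[R_m^2] = \bigO(m^{-2})$, so $\sqrt m\,R_m = \smallO_p(1)$. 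The ordinary CLT applied to the i.i.d.\ mean-zero terms $h_1(Z_i)$ gives $\tfrac4{\sqrt m}\sum_i h_1(Z_i) \overset{\mathrm d}{\to} \N(0, 16\zeta_1)$, and Slutsky's theorem then yields $\sqrt m(\widehat{\HSIC}_{\rm u} - \HSIC) \overset{\mathrm d}{\to} \N(0,\sigma_{\althyp}^2)$ with $\sigma_{\althyp}^2 = 16\zeta_1$.

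For the variance estimator I would recognize $R$ as essentially a plug-in estimate of $\E[(h_1(Z_1) + \HSIC)^2] = \zeta_1 + \HSIC^2$. Indeed, $g_i := \tfrac{(n-4)!}{(n-1)!}\sum_{(j,q,r)\in\mathbf i_3^n\setminus\{i\}} h(Z_i,Z_j,Z_q,Z_r)$ is, conditionally on $Z_i$, a third-order $U$-statistic in the remaining $n-1$ points with conditional mean $h_1(Z_i) + \HSIC$ and conditional variance $\bigO(1/n)$, and $R = \tfrac1n\sum_i g_i^2$. A second-moment computation — using $\E[g_i^2] \to \zeta_1 + \HSIC^2$ and showing $\operatorname{Cov}(g_i^2, g_{i'}^2) \to 0$ from the near-independence of these leave-one-out statistics (both automatic when $k,l$ are bounded, and otherwise needing a few more finite moments) — gives $\E R \to \zeta_1 + \HSIC^2$ and $\Var R \to 0$, hence $R \overset{p}{\to} \zeta_1 + \HSIC^2$ by Chebyshev. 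Combined with consistency of $\widehat{\HSIC}_{\rm u}$ (so $\widehat{\HSIC}_{\rm u}^2 \overset{p}{\to} \HSIC^2$) and the continuous mapping theorem, $16(R - \widehat{\HSIC}_{\rm u}^2) \overset{p}{\to} 16\zeta_1 = \sigma_{\althyp}^2$. The efficiently computable form $R = \tfrac{((n-4)!)^2}{4n((n-1)!)^2}\norm{\boldsymbol h}^2$ is then a straightforward algebraic rearrangement, following \citet{song2012feature}.

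I expect the main obstacles to be twofold. First, the degenerate case: $\zeta_1$ can vanish — always under $\nullhyp$, and possibly under $\althyp$ — in which case the stated convergence collapses to a point mass at $0$ and the informative statement instead requires rescaling by $m$ rather than $\sqrt m$, as handled separately in \cref{prop:asymptotics_null}; so the result as written is only sharp when $\sigma_{\althyp}^2 > 0$. Second, and more substantively, the consistency of $R$ is not a one-line invocation: it requires simultaneously controlling the conditional fluctuation of each incomplete $U$-statistic $g_i$ around $h_1(Z_i) + \HSIC$ and the covariances among the squared leave-one-out terms, which overlap heavily in the data they use. This bookkeeping is the part demanding genuine care, though it is standard in the $U$-statistic literature and carried out in \citet{song2012feature}.
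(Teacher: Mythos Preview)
Your sketch is correct and follows the standard $U$-statistic route (Hoeffding decomposition, CLT on the linear part, plug-in consistency for the variance), which is exactly the argument of \citet{song2012feature}. Note, however, that the paper itself does not prove this proposition at all: it is stated in \cref{Asec:asymp} purely as a citation of Theorem~5 of \citet{song2012feature}, with no accompanying argument. So there is nothing in the paper to compare against beyond the citation, and your outline matches what that reference does.
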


The behavior of the estimator is different under the null hypothesis ($X \indep Y$);
in this regime $m \, \widehat{\HSIC}_{\rm u}$ (rather than $\sqrt m \widehat\HSIC_{\rm u}$) converges in distribution to something with complex dependence on $\P_x$, $\P_y$, $k$, and $l$.

\begin{prop}[\citealp{gretton2007akernel}, Theorem 2] \label{prop:asymptotics_null}
    Under the null hypothesis $\nullhyp: \P_{xy}=\P_x\times\P_y$, the U-statistic estimator of $\HSIC$ is degenerate. In this case $m\widehat{\HSIC}_{\rm u}$ converges in distribution as 
    \begin{align*}
        m\widehat{\HSIC}_{\rm u} \overset{\textbf{d}}{\to} \sum_{\ell=1}^\infty \lambda_\ell (z^2_\ell - 1),
    \end{align*}
    where $z_\ell \overset{iid}{\sim} \N(0,1)$, and $\lambda_\ell$ are the solutions to the eigenvalue problem
    \begin{align*}
        \lambda_\ell\psi_\ell(z_j) = \int h_{ijqr}\psi_\ell(z_i) \,{\rm d}F_{i,q,r},
    \end{align*}
    where $h_{ijqr}:=h(i,j,q,r)$ is defined in \cref{prop:asymptotics}, and $F_{i,q,r}$ denotes the probability measure with respect to variables $z_i$, $z_q$, and $z_r$.
\end{prop}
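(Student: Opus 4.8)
The plan is to recognize $\widehat{\HSIC}_{\rm u}$ as a fourth-order $U$-statistic and invoke the classical limit theory for degenerate $U$-statistics. First I would write $\widehat{\HSIC}_{\rm u} = \binom{m}{4}^{-1}\sum h(i,j,q,r)$ with the symmetric core $h$ of \cref{prop:asymptotics} (this identity, due to \citealp{song2012feature}, is the combinatorial content behind \eqref{eq:HSIC_estimation_song}), so that $\E h = \HSIC$. I would then take the Hoeffding (ANOVA) decomposition $h = \HSIC + h_1 + h_2 + h_3 + h_4$ into canonical, completely-degenerate components, where $h_c$ is the order-$c$ projection, giving $\widehat{\HSIC}_{\rm u} - \HSIC = \sum_{c=1}^4 \binom{4}{c}\, U_m^{(c)}$ with $U_m^{(c)}$ the $U$-statistic with kernel $h_c$. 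A mild integrability hypothesis (bounded $k$, $l$ suffices) ensures $h \in L^2$ and that the integral operators below are Hilbert--Schmidt.

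The crucial step is to show that under $\nullhyp$ the estimator is \emph{first-order degenerate}: $\HSIC = 0$ and $h_1 \equiv 0$. The former is immediate from $C_{xy} = 0$. For the latter I would express $h_1(z_1) = \E[h(z_1,z_2,z_3,z_4)\mid z_1] - \HSIC$ via the centered feature maps: using $\HSIC = \lVert C_{xy}\rVert_{\rm HS}^2$ and the reproducing property, averaging $h$ over $z_2,z_3,z_4$ leaves (schematically) a linear functional of $C_{xy}$, of the form $\langle (\phi(x_1)-\mu_x)\otimes(\psi(y_1)-\mu_y),\, C_{xy}\rangle_{\rm HS}$ up to constants, which vanishes identically when $C_{xy}=0$. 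Hence $U_m^{(1)} = 0$ under $\nullhyp$ while $h_2$ is generically nonzero; this is precisely the degeneracy claim of the proposition, and is where the special structure of HSIC under independence is used.

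Given first-order degeneracy, the limit is governed by the order-2 term. Since $\Var(U_m^{(c)}) = \Theta(m^{-c})$ for a completely-degenerate kernel, $m\,U_m^{(3)}$ and $m\,U_m^{(4)}$ are $o_p(1)$, so $m\,\widehat{\HSIC}_{\rm u} = \binom{4}{2}\, m\,U_m^{(2)} + o_p(1)$. Applying Mercer's theorem to the symmetric Hilbert--Schmidt kernel $h_2$ on $L^2$ under $\nullhyp$ gives $h_2(z,z') = \sum_\ell \lambda_\ell\,\psi_\ell(z)\psi_\ell(z')$ with $\{\psi_\ell\}$ orthonormal, $\E\psi_\ell = 0$ (by degeneracy of $h_2$), and $\sum_\ell \lambda_\ell^2 < \infty$. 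Substituting and collecting diagonal terms, $m\,U_m^{(2)} = \sum_\ell \lambda_\ell\bigl[(\tfrac{1}{\sqrt m}\sum_i \psi_\ell(z_i))^2 - \tfrac1m\sum_i \psi_\ell(z_i)^2\bigr] + o_p(1)$. The multivariate CLT gives $\tfrac{1}{\sqrt m}\sum_i\psi_\ell(z_i) \overset{\mathrm d}{\to} z_\ell \overset{iid}{\sim}\N(0,1)$ jointly over any finite set of $\ell$, the LLN gives $\tfrac1m\sum_i\psi_\ell(z_i)^2 \to 1$, and a truncation-in-$\ell$ argument (using $\sum_\ell\lambda_\ell^2 < \infty$ to bound the tail of $\sum_\ell\lambda_\ell(z_\ell^2-1)$ uniformly in probability) promotes this to the full series, yielding $m\,\widehat{\HSIC}_{\rm u}\overset{\mathrm d}{\to}\sum_\ell \lambda_\ell(z_\ell^2-1)$. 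Finally I would match the eigenvalue problem: since $h_1\equiv 0$ under $\nullhyp$, $h_2(z_i,z_j) = \E_{z_q,z_r}[h(z_i,z_j,z_q,z_r)]$, so $\lambda_\ell\psi_\ell(z_j) = \int h_2(z_i,z_j)\psi_\ell(z_i)\,\mathrm{d}\mathbb{P}(z_i) = \int h_{ijqr}\,\psi_\ell(z_i)\,\mathrm{d}F_{i,q,r}$, which is the stated characterization.

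I expect two places to require real care rather than routine calculation. The first is the degeneracy verification: tracking the $24$-term symmetrized core $h$ through the conditioning in $\E[h\mid z_1]$ to confirm $h_1\equiv 0$ under $\nullhyp$ --- the operator identity makes this clean in principle, but it must actually be carried out. The second is the infinite-dimensional bookkeeping: justifying the Mercer expansion and trace-class property of $h_2$, the almost-sure convergence of $\sum_\ell\lambda_\ell(z_\ell^2-1)$, and the uniform tail control needed to exchange the distributional limit with the infinite sum. A minor but genuine accounting point is the combinatorial constant $\binom{4}{2}$ relating $m\,\widehat{\HSIC}_{\rm u}$ to $m\,U_m^{(2)}$; one must check that the specific normalization of $h$ and of $\widehat{\HSIC}_{\rm u}$ in \citet{song2012feature} absorbs it so that the limit lands exactly on $\sum_\ell\lambda_\ell(z_\ell^2-1)$ with the stated eigenvalue equation.
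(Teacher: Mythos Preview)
The paper does not supply its own proof of this proposition; it is stated as a known result, attributed to Theorem~2 of \citet{gretton2007akernel}, and is used only as background for the asymptotic power analysis. There is therefore nothing in the paper to compare your argument against.

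That said, your proposal is a correct and standard route to the result---essentially the approach of the cited reference and of the classical degenerate $U$-statistic theory (e.g.\ Serfling). The key structural points are all right: the identification of $\widehat{\HSIC}_{\rm u}$ as a symmetric fourth-order $U$-statistic with core $h$; the Hoeffding decomposition; the verification that under $\nullhyp$ one has $\HSIC=0$ and $h_1\equiv 0$ (your operator heuristic, that $\E[h\mid z_1]$ reduces to $\langle(\phi(x_1)-\mu_x)\otimes(\psi(y_1)-\mu_y),\,C_{xy}\rangle_{\rm HS}$, is exactly what the computation yields); the $\Theta(m^{-c})$ variance scaling killing the $c=3,4$ terms at rate $m$; and the Mercer expansion plus truncation argument for the order-2 piece. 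Your final matching of the eigenvalue equation to the statement via $h_2(z_i,z_j)=\E_{z_q,z_r}[h]$ under $\nullhyp$ is also correct. The flagged concern about the $\binom{4}{2}$ constant is legitimate bookkeeping and is the one place where the precise normalization of $h$ in \citet{song2012feature} versus \citet{gretton2007akernel} matters; it is worth tracing through explicitly rather than asserting.
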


\section{Bernstein-based Justification for the SNR} \label{sec:bernstein}
For both the NDS and HSIC tests, our justifications of the SNR criterion for test power,
\eqref{eq:nds-asymptotic-power} and \eqref{eq:hsic-asymptotic-power},
relied on the asymptotic central limit theorem.
A Berry-Esseen theorem would provide uniform bounds on the $o(1)$ terms that appear in these derivations,
but one might expect for HSIC in particular that these might not be especially tight in some cases:
since the null distribution is asymptotically a mixture of chi-squared variables
(\cref{prop:asymptotics_null}),
when the dependence under the current kernel is weak,
the distribution may be further from normal.

We can also justify the same criterion, although less precisely, with a non-asymptotic argument.
While we present it here for HSIC,
it also applies in the same way to NDS.
The Bernstein-type bound of \citet[Theorem 6]{maurer:bernstein-bounded} gives that,
for kernels bounded as in \cref{assumption:hsic-C},
\[
\Pr\left(
    \abs{\widehat{\HSIC} - \HSIC} > \varepsilon
\right) \le 2 \exp\left(
    \frac{- n \varepsilon^2}{2 \sigma_{\althyp}^2 + \frac{2304 \nu_k \nu_l}{n-4} + \frac{256}{3} \varepsilon}
\right) \le \delta
.\]
Solving for $\varepsilon$,
we obtain
\begin{gather*}
    \frac{- m \varepsilon^2}{2 \sigma_{\althyp}^2 + \frac{2304 \nu_k \nu_l}{m-4} + \frac{256}{3} \varepsilon}
    \le \log \frac\delta2
\\
    \varepsilon^2
    \ge \frac1m \log\frac2\delta
    \left( 2 \sigma_{\althyp}^2 + \frac{2304 \nu_k \nu_l}{m-4} + \frac{256}{3} \varepsilon \right)
\\
    \varepsilon^2
    - \frac{256}{3 m} \log\frac2\delta \varepsilon
    - \left( \frac2m \sigma_{\althyp}^2 \log\frac2\delta + \frac{2304 \nu_k \nu_l}{m - 4} \right)
    \ge 0
.\end{gather*}
This can be written $\varepsilon^2 - B \varepsilon - C^2 \ge 0$ for positive $B, C$;
restricting to positive $\varepsilon$,
this occurs when $\varepsilon \ge \frac12 B + \frac12 \sqrt{B^2 + 4 C^2}$,
and hence also occurs when $\varepsilon$ is larger than the (slightly larger but simpler) bound
$B + C$.
That is, with probability at least $1 - \delta$, it holds that
\begin{align*}
       \abs{\widehat{\HSIC} - \HSIC}
  &\le \frac{256}{3 m} \log\frac2\delta
     + \sqrt{\frac2m \sigma_{\althyp}^2 \log\frac2\delta + \frac{2304 \nu_k \nu_l}{m - 4}}
\\&\le \frac{256}{3 m} \log\frac2\delta
     + \sigma_{\althyp} \sqrt{\frac2m \log\frac2\delta}
     + \sqrt{\frac{2304 \nu_k \nu_l}{m - 4}}
.\end{align*}

Recall from the discussion preceding \eqref{eq:hsic-asymptotic-power} that 
the rejection threshold $r_m$ is $\frac1m \left( \Psi^{-1}(1-\alpha) + o(1) \right)$
(where $\Psi$ will vary with the kernel choice).
Now, a sufficient condition for the test rejecting is that
both $\widehat\HSIC \ge \HSIC - \varepsilon$
and $\HSIC - \varepsilon > r_m$.
Plugging in the deviation from above,
we obtain power at least $1-\delta$ as long as
\begin{align*}
    \HSIC
    &> r_m + \frac{256}{3 m} \log\frac2\delta
     + \sigma_{\althyp} \sqrt{\frac2m \log\frac2\delta}
     + \sqrt{\frac{2304 \nu_k \nu_l}{m - 4}}
\\
    &= \sigma_{\althyp} \sqrt{\frac2m \log\frac2\delta}
     + \frac1m \left( \Psi^{-1}(1-\alpha) + \bigO(1) \right)
     + 48 \sqrt{\frac{\nu_k \nu_l}{m - 4}}
,\end{align*}
absorbing $\frac{256}{3m} \log\frac2\delta = \bigO(1/m)$.
Equivalently, the power condition holds if
\[
       \frac{\HSIC}{\sigma_{\althyp}}
     - \frac{48}{\sqrt{m - 4}} \frac{\sqrt{\nu_k \nu_l}}{\sigma_{\althyp}}
     > \sqrt{\frac2m \log\frac2\delta}
     + \frac{1}{m} \frac{\Psi^{-1}(1-\alpha)}{\sigma_{\althyp}}
     + \bigO\left( \frac{1}{m \sigma_{\althyp}} \right)
.\]
Now, we know that scaling $k$ or $l$ should not change the difficulty of the problem:
indeed, it will scale $\HSIC$, $\sigma_{\althyp}$, $\Psi^{-1}(1-\alpha)$, and $\sqrt{\nu_k \nu_l}$ all equally.
Thus, the only term in this inequality which is potentially sensitive to scaling is the final $\bigO$ term.
From this point, then,
while other changes in the kernel could potentially affect $\Psi^{-1}(1-\alpha) / \sigma_{\althyp}$ in meaningful ways,
for reasonably large $m$ and $\sigma_{\althyp}$ not too drastically small,
the right-hand side is only weakly dependent on the kernel choice.
It is thus reasonable to think that we can maximize the left-hand side of the inequality,
and ignore the right-hand side,
to roughly maximize power.
It seems highly unlikely, however, for the coefficient $48 / \sqrt{m-4}$ to be exactly the right trade-off between $\HSIC$ and the kernel bound;
this comes out of the details of the (conservative and somewhat loose) concentration inequalities to reach this point.
To maximize our chance at having high power, then,
it is very reasonable to maximize the SNR.

\section{Uniform Convergence: NDS} \label{sec:nds-proof}

\subsection{Preliminaries}
We start by defining the notation used in the following proofs. Let $f_\theta$ be the critic function with parameters $\theta$, and $\{X_i,Y_i\}_{i=1..n}$ be samples drawn from the joint distribution, with $\bX=\{X_i\}_{i=1..n}$ and $\bY=\{Y_i\}_{i=1..n}$.

We define the population level statistics based on critic $f_\theta$ as
\[
    T_{1,\theta} := \E_{\P_{XY}}[f_\theta(X,Y)]
    \qquad\text{and}\qquad
    T_{0,\theta} := \E_{\P_X \times \P_Y}[f_\theta(X,Y)],
\]
with their corresponding finite sample estimates
\[
    \hat{T}_{1,\theta} := \frac{1}{n}\sum_{i=1}^n f_\theta(X_i, Y_i)
    \qquad\text{and}\qquad
    \hat{T}_{0,\theta} = \frac{1}{n^2}\sum_{i=1}^n\sum_{j=1}^n f_\theta(X_i, Y_j).
\]
We note that $\hat{T}_1$ is just the NDS \eqref{eq:nds}, and that $\hat{T}_0$ is the permutation invariant term in the numerator of \cref{eq:J-nds-est}. For brevity, we often omit the subscripts $0,1$ and/or the parameters $\theta$ when it is uninformative or clear from the context.

We respectively denote the alternative and null variances by
\[
    \sigma_\theta^2 := \Var_{\P_{XY}}[f_\theta(X,Y)]
    \qquad\text{and}\qquad
    \xi_\theta^2 := \Var_{\P_{X}\times \P_Y}[f_\theta(X,Y)],
\]
with corresponding finite sample estimates
\[
    \hat{\sigma}_\theta^2 := \frac{1}{n}\sum_{i=1}^n \left( f_\theta(X_i,Y_i) - \frac{1}{n^2} \sum_{j=1}^n f_\theta(X_j,Y_j) \right)^2.
\]
and
\[
    \hat{\xi}_\theta^2 := \frac{1}{n^2} \sum_{i=1}^n \sum_{j=1}^n \left( f_\theta(X_i,Y_j) - \frac{1}{n^2} \sum_{i'=1}^n \sum_{j'=1}^n f_\theta(X_{i'},Y_{j'}) \right)^2
\]
We choose this notation, rather than $\tau_{\althyp}^2$ and $\tau_{\nullhyp}^2$ used in the main body, for distinguishability. As before, we often omit the subscripts when it is clear from the context.  

\subsection{Assumptions} \label{sec:nds-assumptions}
Our main results assume the following
\begin{assumplist}[label=(\Alph*)]
    \item The set of critic parameters $\Theta$ lies in a Banach space of dimension $D$, where each parameter in the space is bounded by $R$:
    \begin{align*}
        \Theta \subseteq \{\theta \,:\, \|\theta\|\leq R\}.
    \end{align*}
    \label{assumption:nds-A}
    \vspace{-10pt}
    \item The critic $f$ is uniformly Lipschitz on $\X\times\Y$ with respect to the parameter space $\Theta$:
    \begin{align*}
        |f_\theta(x,y) - f_{\theta'}(x,y)| \leq L\|\theta - \theta'\|
        \qquad
        \forall x\in\X, y\in\Y, \theta\in\Theta.
    \end{align*}
    \label{assumption:nds-B}
    \vspace{-10pt}
    \item The critic is uniformly bounded:
    \begin{align*}
        \sup_{\theta\in\Theta}\sup_{\substack{x\in\X \\ y\in\Y}} |f_\theta(x,y)| \leq B
    ;\end{align*}
    that such a $B$ exists is implied by the previous two assumptions.
    \label{assumption:nds-C}
\end{assumplist}

\subsection{Lemmas}

\begin{lemma}[Concentration of $\hat{T}_1$] \label{lem:nds-T1-concentration}
    For any critic $f$ satisfying \cref{assumption:nds-C}, with probability at least $1-\delta$ we have
    \[
        |\hat{T}_1 - T_1| \leq B\sqrt{\frac{2}{n}\log\frac{2}{\delta}}.
    \]
\end{lemma}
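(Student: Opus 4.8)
The plan is to observe that $\hat T_1 = \frac1n \sum_{i=1}^n f_\theta(X_i, Y_i)$ is an empirical average of $n$ i.i.d.\ bounded random variables, and then apply Hoeffding's inequality. The key point is that, although the general NDS objective involves the double sum $\hat T_0$ whose summands $f_\theta(X_i, Y_j)$ are \emph{not} independent, here the summands $f_\theta(X_i, Y_i)$ are functions of the i.i.d.\ pairs $(X_i, Y_i) \sim \P_{XY}$, hence themselves i.i.d., with mean $\E_{\P_{XY}}[f_\theta(X,Y)] = T_1$. Assumption~\ref{assumption:nds-C} gives $f_\theta(X_i, Y_i) \in [-B, B]$, so each summand lies in an interval of width $2B$.

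First I would invoke Hoeffding's inequality for the sum of i.i.d.\ bounded variables: for any $\varepsilon > 0$,
\[
    \Pr\bigl( \lvert \hat T_1 - T_1 \rvert \ge \varepsilon \bigr)
    \le 2 \exp\!\left( - \frac{2 n^2 \varepsilon^2}{\sum_{i=1}^n (2B)^2} \right)
    = 2 \exp\!\left( - \frac{n \varepsilon^2}{2 B^2} \right).
\]
Then I would set the right-hand side equal to $\delta$ and solve: $\delta = 2\exp(-n\varepsilon^2 / (2B^2))$ gives $\varepsilon = B\sqrt{\tfrac{2}{n}\log\tfrac{2}{\delta}}$, so with probability at least $1 - \delta$ we have $\lvert \hat T_1 - T_1 \rvert \le B\sqrt{\tfrac{2}{n}\log\tfrac{2}{\delta}}$, which is exactly the claimed bound.

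There is no real obstacle here: the statement is a textbook application of Hoeffding and the only thing worth being careful about is identifying that the relevant summands are genuinely independent (in contrast to those appearing in $\hat T_0$ or in the HSIC $U$-statistic, which require $U$-statistic concentration arguments handled in later lemmas). If one wanted to avoid assuming Hoeffding as a black box, an alternative is bounded differences / McDiarmid on the function $(z_1,\dots,z_n) \mapsto \frac1n \sum_i f_\theta(z_i)$, which changes by at most $2B/n$ in each coordinate and yields the same constant; either route gives the stated inequality immediately.
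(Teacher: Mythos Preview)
Your proposal is correct and essentially coincides with the paper's proof: the paper applies McDiarmid's inequality with bounded differences $2B/n$, which for an i.i.d.\ sum is exactly Hoeffding's inequality and gives the identical constant $B\sqrt{\tfrac{2}{n}\log\tfrac{2}{\delta}}$. You even note the McDiarmid alternative yourself, so there is no meaningful difference to discuss.
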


\begin{proof}
    The result follows from McDiarmid's inequality. First, we show that $\hat{T}_1$ satisfies bounded differences. Let $\hat{T}_1^{(k)}$ denote the $\hat{T}_1$ statistic but with sample $X_k,Y_k$ replaced by $x',y'$. For any $k\in[n]$, we have
    \begin{align*}
        \sup_{\substack{ x',y' \\ \bX,\bY}} |\hat{T}_1^{(k)} - \hat{T}_1|
        = \sup_{\substack{ x',y' \\ X_k,Y_k}} \left| \frac{1}{n}\left( f(x',y') - f(X_k,Y_k) \right) \right|
        \leq \frac{2B}{n},
    \end{align*}
    so that with probability $1-\delta$,
    \[
        |\hat{T}_1 - \E\hat{T}_1| \leq \sqrt{\frac{1}{2} \sum_{i=1}^n \left( \frac{2B}{n} \right)^2 \cdot \log\frac{2}{\delta}}
        = 
        B\sqrt{\frac{2}{n}\log\frac{2}{\delta}}.
    \]
\end{proof}

\begin{lemma}[Concentration of $\hat{T}_0$] \label{lem:nds-T0-concentration}
    For any critic $f$ satisfying \cref{assumption:nds-C}, with probability at least $1-\delta$ we have
    \[
        |\hat{T}_0 - \E\hat{T}_0| < 2B\sqrt{\frac{2}{n}\log\frac{2}{\delta}}.
    \]
\end{lemma}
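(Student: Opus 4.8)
The plan is to prove \cref{lem:nds-T0-concentration} by the same McDiarmid argument used for \cref{lem:nds-T1-concentration}, the only difference being that $\hat{T}_0 = \frac{1}{n^2}\sum_{i,j} f(X_i, Y_j)$ depends on each sample index $k$ through \emph{two} groups of terms (those with $i=k$ and those with $j=k$), which roughly doubles the bounded-difference constant.

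First I would fix $k \in [n]$ and write $\hat{T}_0^{(k)}$ for the statistic with the pair $(X_k, Y_k)$ replaced by $(x', y')$. Expanding the difference, the terms with $i \ne k$ and $j \ne k$ cancel, leaving only
\[
    \hat{T}_0^{(k)} - \hat{T}_0 = \frac{1}{n^2}\Biggl[ \sum_{j \ne k} \bigl( f(x', Y_j) - f(X_k, Y_j) \bigr) + \sum_{i \ne k} \bigl( f(X_i, y') - f(X_i, Y_k) \bigr) + \bigl( f(x', y') - f(X_k, Y_k) \bigr) \Biggr].
\]
Using $|f| \le B$ from \cref{assumption:nds-C}, each summand is at most $2B$ in absolute value, and there are $(n-1) + (n-1) + 1 = 2n-1$ of them, so $\sup |\hat{T}_0^{(k)} - \hat{T}_0| \le \frac{2B(2n-1)}{n^2} < \frac{4B}{n}$. (One can be a touch more careful and get $\frac{2(2n-1)B}{n^2}$, but the clean bound $\frac{4B}{n}$ suffices for a strict inequality.) Then McDiarmid's inequality gives, with probability at least $1-\delta$,
\[
    |\hat{T}_0 - \E \hat{T}_0| \le \sqrt{\frac{1}{2}\sum_{k=1}^n \Bigl(\frac{4B}{n}\Bigr)^2 \log\frac{2}{\delta}} = 2\sqrt{2}\,B\sqrt{\frac{\log(2/\delta)}{n}},
\]
which in particular is strictly less than the claimed $2B\sqrt{\frac{2}{n}\log\frac{2}{\delta}} \cdot \sqrt{2}$... so a small bookkeeping note is warranted here: to land exactly on the stated bound $2B\sqrt{\frac{2}{n}\log\frac2\delta}$ one should use the sharper per-coordinate constant. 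Taking $c_k = \frac{2(2n-1)B}{n^2} \le \frac{4B}{n}\cdot\frac{2n-1}{2n} < \frac{2B}{n}\cdot 2$, one checks $\sum_k c_k^2 \le n \cdot \bigl(\tfrac{4B}{n}\bigr)^2 \cdot \bigl(\tfrac{2n-1}{2n}\bigr)^2 \le \tfrac{16 B^2}{n}$ is too weak; instead bound $c_k \le \frac{2B(2n-1)}{n^2}$ and note $\sum_k c_k^2 = n \cdot \frac{4B^2(2n-1)^2}{n^4} = \frac{4B^2(2n-1)^2}{n^3} \le \frac{16B^2}{n}$, giving $|\hat T_0 - \E\hat T_0| \le \sqrt{\frac{1}{2}\cdot\frac{16B^2}{n}\log\frac2\delta} = 2\sqrt 2\, B\sqrt{\frac{\log(2/\delta)}{n}} = 2B\sqrt{\frac{2}{n}\log\frac2\delta}$, with the final inequality strict because $(2n-1)^2 < 4n^2$. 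This matches the statement.

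The only genuine subtlety — the "main obstacle," though it is minor — is getting the bounded-difference constant tight enough that the factor of $2$ (rather than, say, $2\sqrt 2$ or $4$) appears in the final bound and the inequality is strict; this hinges on the observation that there are exactly $2n-1$ affected summands and that $(2n-1)^2 < (2n)^2$. Everything else is a direct transcription of the proof of \cref{lem:nds-T1-concentration} with McDiarmid applied to $\hat T_0$ in place of $\hat T_1$, and no new assumptions beyond \cref{assumption:nds-C} are needed.
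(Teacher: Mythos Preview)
Your approach is correct and essentially identical to the paper's: compute the bounded-difference constant as $c_k = \frac{2B(2n-1)}{n^2}$ and apply McDiarmid, then use $(2n-1)^2 < 4n^2$ for the strict inequality. The only issue is presentational: you momentarily confuse yourself by not noticing that $2\sqrt{2}\,B\sqrt{\log(2/\delta)/n}$ already \emph{equals} $2B\sqrt{(2/n)\log(2/\delta)}$, so the crude bound $c_k \le 4B/n$ actually lands on the stated constant with a non-strict $\le$; the sharper $c_k = \frac{2B(2n-1)}{n^2}$ is needed only to upgrade $\le$ to $<$, not to repair the constant.
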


\begin{proof}
    The result follows from McDiarmid's inequality. First, we show that $\hat{T}_0$ satisfies bounded differences. Let $\hat{T}_0^{(k)}$ denote the $\hat{T}_0$ statistic but with sample $X_k,Y_k$ replaced by $x',y'$. For any $k\in[n]$, we have
    \begin{align*}
        \sup_{\substack{ x',y' \\ \bX,\bY}} |\hat{T}_0^{(k)} - \hat{T}_0|
        &= \sup_{\substack{ x',y' \\ \bX,\bY}} \frac{1}{n^2} \Bigg| \sum_{\substack{i=k \\ j\neq k}} \Big(f(x',Y_j) - f(X_k,Y_j)\Big) \\
        &\quad+ \sum_{\substack{i\neq k\\j=k}} \Big( f(X_i,y') - f(X_i,Y_k) \Big) + \sum_{i=j=k} \Big( f(x',y') - f(X_k,Y_k) \Big) \Bigg|  \\
        &\leq \frac{1}{n^2} \Big( 2B(n-1) + 2B(n-1) + 2B \Big) = \frac{2B(2n-1)}{n^2},
    \end{align*}
    so that with probability $1-\delta$,
    \[ |\hat{T}_0 - \E\hat{T}_0| \leq 2B\sqrt{\left( \frac{2}{n} - \frac{2}{n^2} +\frac{1}{2n^3} \right) \log \frac{2}{\delta} } < 2B\sqrt{\frac{2}{n}\log\frac{2}{\delta}}\]
    for all integers $n>0$.
\end{proof}

\begin{lemma}[Bias of $\hat{T}_0$] \label{lem:nds-T0-bias}
    For any critic $f$ satisfying \cref{assumption:nds-C}, the bias of $\hat{T}_0$ is bounded by
    \[
        |\E\hat{T}_0 - T_0| \leq \frac{2B}{n}.
    \]
\end{lemma}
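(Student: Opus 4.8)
The plan is to compute $\E\hat{T}_0$ exactly by splitting the double sum into its diagonal and off-diagonal parts. Recall $\hat{T}_0 = \frac{1}{n^2}\sum_{i=1}^n\sum_{j=1}^n f(X_i, Y_j)$, where the pairs $(X_i, Y_i)$ are drawn jointly from $\P_{XY}$. For an off-diagonal term with $i \neq j$, the samples $X_i$ and $Y_j$ are independent, so $\E[f(X_i, Y_j)] = \E_{\P_X \times \P_Y}[f(X,Y)] = T_0$. For a diagonal term $i = j$, instead $\E[f(X_i, Y_i)] = \E_{\P_{XY}}[f(X,Y)] = T_1$. There are $n^2 - n$ off-diagonal terms and $n$ diagonal terms.

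Putting this together, $\E\hat{T}_0 = \frac{1}{n^2}\bigl[(n^2 - n) T_0 + n\, T_1\bigr] = T_0 + \frac{1}{n}(T_1 - T_0)$. Hence $\E\hat{T}_0 - T_0 = \frac{1}{n}(T_1 - T_0)$, and by the triangle inequality together with \cref{assumption:nds-C} (which gives $|T_1| \le B$ and $|T_0| \le B$, since each is an expectation of $f$ whose absolute value is bounded by $B$), we conclude $|\E\hat{T}_0 - T_0| \le \frac{1}{n}(|T_1| + |T_0|) \le \frac{2B}{n}$.

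There is no real obstacle here: the only thing to notice is the diagonal/off-diagonal decomposition of the $V$-statistic, which is precisely the source of its $O(1/n)$ bias. I would present this in three lines and move on.
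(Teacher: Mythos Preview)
Your proof is correct and essentially identical to the paper's: both split the $V$-statistic into diagonal and off-diagonal terms to obtain $\E\hat T_0 = \frac{n-1}{n}T_0 + \frac{1}{n}T_1$, then bound $|T_1 - T_0| \le 2B$ via \cref{assumption:nds-C}.
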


\begin{proof}
    Notice that 
    \begin{align*}
        \E\hat{T}_0 = \E\left[ \frac{1}{n^2}\sum_{i\neq j}f(X_i,Y_j) + \frac{1}{n^2}\sum_{i=j}f(X_i,Y_i) \right] 
        = \frac{n-1}{n} T_0 + \frac{1}{n} T_1,
    \end{align*}
    so that 
    \[
        |\E\hat{T}_0 - T_0| = \frac{1}{n} \left| T_1 - T_0 \right| \leq \frac{2B}{n}.
    \]
\end{proof}

\begin{lemma}[Concentration of $\hat{\sigma}^2$] \label{lem:nds-var-concentration}
    For any critic $f$ satisfying \cref{assumption:nds-C}, with probability at least $1-\delta$ we have
    \[
        |\hat{\sigma}^2 - \E\hat{\sigma}^2| \leq 5B^2\sqrt{\frac{1}{2n}\log\frac{2}{\delta}}.
    \]
\end{lemma}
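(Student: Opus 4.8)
The plan is to apply McDiarmid's bounded-differences inequality, exactly as in \cref{lem:nds-T1-concentration,lem:nds-T0-concentration}. The crucial observation is that $\hat\sigma^2$ depends on the data only through the $n$ i.i.d.\ matched pairs $Z_i := (X_i, Y_i) \sim \P_{XY}$ (the sum runs over the diagonal only), so McDiarmid applies directly with one coordinate per sample and no extra decoupling is needed.

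First I would rewrite the estimator in ``mean of squares minus square of mean'' form: setting $\bar f := \frac1n \sum_{j=1}^n f(Z_j)$, we have $\hat\sigma^2 = \frac1n \sum_{i=1}^n f(Z_i)^2 - \bar f^2$. This algebraic identity is what keeps the bounded-differences constant of order $B^2/n$: a naive termwise estimate of $\frac1n \sum_i (f(Z_i) - \bar f)^2$ would accumulate $n-1$ ``cross'' perturbations each of size $\Theta(B^2/n)$ and only yield an $O(B^2)$ constant, which is too weak to give the $1/\sqrt n$ rate.

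Next I would bound the sensitivity. Replace $Z_k$ by an arbitrary $Z_k'$ and call the resulting statistic $\hat\sigma^{2,(k)}$. Under \cref{assumption:nds-C}, $f(Z_i)^2 \in [0, B^2]$, so $\frac1n \sum_i f(Z_i)^2$ changes by at most $B^2/n$; and $\bar f \in [-B, B]$ changes by at most $2B/n$, so $\bar f^2$ changes by at most $(2B/n)(2B) = 4B^2/n$. Hence $|\hat\sigma^2 - \hat\sigma^{2,(k)}| \le 5B^2/n =: c_k$ for every $k \in [n]$.

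Finally, McDiarmid gives $\Pr\bigl(|\hat\sigma^2 - \E\hat\sigma^2| \ge t\bigr) \le 2\exp\!\bigl(-2t^2 / \sum_k c_k^2\bigr)$ with $\sum_k c_k^2 = n (5B^2/n)^2 = 25B^4/n$; setting the right-hand side equal to $\delta$ and solving for $t$ yields $t = 5B^2 \sqrt{\tfrac1{2n}\log\tfrac2\delta}$, which is exactly the claim. The only mildly delicate point is the factor-$5$ bookkeeping in the bounded-differences constant (splitting the contribution into the $B^2/n$ from the squares term and the $4B^2/n$ from the centering term); everything else is routine. The argument is also robust to the precise normalization of the centering constant in the definition of $\hat\sigma^2$, since a smaller centering constant only shrinks the $\bar f^2$ contribution and the bound $c_k = 5B^2/n$ still holds.
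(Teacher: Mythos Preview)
Your proposal is correct and essentially identical to the paper's own proof: both rewrite $\hat\sigma^2 = \hat\mu_2 - \hat T_1^2$, bound the bounded-differences constant as $B^2/n + 4B^2/n = 5B^2/n$, and invoke McDiarmid. Even the split of the $5B^2/n$ into the squares term and the centering term matches exactly.
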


\begin{proof}
    The result follows from McDiarmid's inequality. First, we show that $\hat{\sigma}^2$ satisfies bounded differences. 
    Let $\hat{\mu}_2:=\frac{1}{n}\sum_{i=1}^n f(X_i,Y_i)^2$ be the 2nd moment estimate so that $\hat{\sigma}^2 = \hat{\mu}_2 - \hat{T}_1^2$.
    Let $\hat{\sigma}^{2}_{(k)}, \hat{\mu}_{2,(k)}, \hat{T}_{1,(k)}$ denote the corresponding statistics $\hat{\sigma}^2, \hat{\mu}_2, \hat{T}_1$ but with sample $X_k, Y_k$ replaced by $x',y'$. For any $k\in[n]$, we have
    \begin{align*}
        \sup_{\substack{ x',y' \\ \bX,\bY}} |\hat{\sigma}^{2}_{(k)} - \hat{\sigma}^2|
        &\leq \sup_{\substack{ x',y' \\ \bX,\bY}} | \hat{\mu}_{2,(k)} - \hat{\mu}_2 |
        + \sup_{\substack{ x',y' \\ \bX,\bY}} | \hat{T}_{1,(k)}^2 - \hat{T}_1^2 | \\
        &= \sup_{\substack{ x',y' \\ X_k,Y_k}} \frac{1}{n}|f(x',y')^2 - f(X_k, Y_k)^2| + \sup_{\substack{ x',y' \\ \bX,\bY}} |\hat{T}_{1,(k)} - \hat{T}_1| | \hat{T}_{1,(k)} + \hat{T}_1 | \\
        &\leq \frac{B^2}{n} + \frac{4B^2}{n} = \frac{5B^2}{n},
    \end{align*}
    where we use the finite differences computed in the proof of \cref{lem:nds-T1-concentration} to bound $|\hat{T}_{1,(k)} - \hat{T}_1|$. Thus, with probability at least $1-\delta$,
    \[
        |\hat{\sigma}^2 - \E\hat{\sigma}^2| \leq 5B^2 \sqrt{\frac{1}{2n}\log\frac{2}{\delta}}. 
    \]
\end{proof}

\begin{lemma}[Bias of $\hat{\sigma}^2$] \label{lem:nds-var-bias}
    For any critic $f$ satisfying \cref{assumption:nds-C}, the bias of $\hat{\sigma}^2$ is bounded by
    \[
        |\E\hat{\sigma}^2 - \sigma^2| \leq \frac{B^2}{n}
    \]
\end{lemma}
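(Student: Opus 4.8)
The plan is to recognize $\hat\sigma^2$ as the ordinary (\enquote{biased}, dividing by $n$) sample variance of the i.i.d.\ random variables $W_i := f_\theta(X_i, Y_i)$. These are i.i.d.\ because the pairs $(X_i, Y_i)$ are i.i.d.\ from $\P_{XY}$; they have common mean $T_1 = T_{1,\theta}$ and common variance $\sigma^2 = \sigma_\theta^2$. With this notation, $\hat T_1 = \frac1n\sum_{i=1}^n W_i$ and $\hat\sigma^2 = \frac1n\sum_{i=1}^n (W_i - \hat T_1)^2$.

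First I would expand $\hat\sigma^2 = \frac1n\sum_i W_i^2 - \hat T_1^2$ and take expectations of each piece. We have $\E\bigl[\frac1n\sum_i W_i^2\bigr] = \E[W_1^2] = \sigma^2 + T_1^2$, and $\E[\hat T_1^2] = \Var(\hat T_1) + (\E\hat T_1)^2 = \frac{\sigma^2}{n} + T_1^2$ by independence. Subtracting gives the standard identity $\E\hat\sigma^2 = \sigma^2 - \frac{\sigma^2}{n} = \frac{n-1}{n}\sigma^2$, so the bias is exactly $\E\hat\sigma^2 - \sigma^2 = -\frac{1}{n}\sigma^2$, hence $\lvert \E\hat\sigma^2 - \sigma^2 \rvert = \sigma^2/n$.

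It then remains to bound $\sigma^2 \le B^2$. This follows immediately from \cref{assumption:nds-C}: since $\lvert f_\theta(x,y)\rvert \le B$ pointwise, $\lvert W_1 \rvert \le B$ almost surely, and therefore $\sigma^2 = \Var_{\P_{XY}}[f_\theta(X,Y)] = \E[W_1^2] - (\E W_1)^2 \le \E[W_1^2] \le B^2$. Combining, $\lvert \E\hat\sigma^2 - \sigma^2 \rvert = \sigma^2/n \le B^2/n$, as claimed.

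There is no real obstacle here: the only thing to be careful about is the variance-of-the-mean computation $\E[\hat T_1^2] - T_1^2 = \sigma^2/n$, which relies on the $W_i$ being uncorrelated (guaranteed by the i.i.d.\ sampling of the pairs), and on not confusing this with the null-distribution objects $\hat\xi^2$, $\xi^2$ where the pairs $(X_i, Y_j)$ are \emph{not} independent. Everything else is a one-line bound from boundedness of the critic.
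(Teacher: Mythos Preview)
Your proposal is correct and takes essentially the same approach as the paper: both expand $\hat\sigma^2 = \frac{1}{n}\sum_i W_i^2 - \hat T_1^2$, compute $\E\hat\sigma^2 = \frac{n-1}{n}\sigma^2$ (you via $\E[\hat T_1^2] = \Var(\hat T_1) + T_1^2$, the paper via splitting the double sum into diagonal and off-diagonal terms), and then bound $\sigma^2 \le B^2$ using \cref{assumption:nds-C}. The two arguments are the same up to how the variance-of-the-mean step is written.
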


\begin{proof}
    Let $\hat{\mu}_2 = \frac{1}{n}\sum_{i=1}^n f(X_i,Y_i)^2$ and $\mu_2 = \E[\hat{\mu}_2] = \E_{\P_{XY}} f(X,Y)^2$. It follows that
    \begin{align*}
        \E\hat{\sigma}^2 = \E[\hat{\mu}_2 - \hat{T}_1^2]
        &= \mu_2 - \E\left[ \left( \frac{1}{n}\sum_{i=1}^n f(X_i,Y_i) \right)^2 \right]
        = \mu_2 - \frac{1}{n^2} \E \left[ \sum_i \sum_j f(X_i,Y_i) f(X_j,Y_j) \right] \\
        &= \mu_2 - \frac{1}{n^2} \E\left[ \sum_{i=j}f(X_i,Y_i)^2 + \sum_{i\neq j}f(X_i,Y_i)f(X_j,Y_j) \right] \\
        &= \mu_2 - \frac{1}{n^2}\left( n \mu_2 + n(n-1)T_1^2 \right)
        = \frac{n-1}{n} \sigma^2,
    \end{align*}
    which yields
    \[
        |\E\hat{\sigma}^2 - \sigma^2| = \frac{1}{n}\sigma^2 \leq \frac{B^2}{n}.
    \]
\end{proof}

\begin{lemma}[Concentration of $\hat{\xi}^2$] \label{lem:nds-var0-concentration}
    For any critic $f$ satisfying \cref{assumption:nds-C}, with probability at least $1-\delta$ we have
    \[
        \left| \hat{\xi}^2 - \E\hat{\xi}^2 \right| < 10B^2 \sqrt{ \frac{1}{2n}\log\frac{2}{\delta} }.
    \]
\end{lemma}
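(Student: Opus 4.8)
The plan is to follow the same route as the proof of \cref{lem:nds-var-concentration}: express $\hat\xi^2$ as a difference of a V-statistic second moment and a squared V-statistic mean, bound the effect of perturbing one sample, and then invoke McDiarmid's inequality. Expanding the square in the definition of $\hat\xi^2$ gives the identity $\hat\xi^2 = \hat\nu_2 - \hat T_0^2$, where $\hat\nu_2 := \frac1{n^2}\sum_{i,j} f(X_i,Y_j)^2$ estimates $\E_{\P_X\times\P_Y} f(X,Y)^2$ and $\hat T_0 = \frac1{n^2}\sum_{i,j} f(X_i,Y_j)$ is exactly the V-statistic already treated in \cref{lem:nds-T0-concentration}. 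This decomposition lets me handle the two pieces separately and recombine via the triangle inequality.

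Next I would establish the bounded-differences constant. Replace $(X_k,Y_k)$ by an arbitrary $(x',y')$ and write $\hat\xi^2_{(k)}$, $\hat\nu_{2,(k)}$, $\hat T_{0,(k)}$ for the perturbed versions; then $|\hat\xi^2_{(k)}-\hat\xi^2| \le |\hat\nu_{2,(k)}-\hat\nu_2| + |\hat T_{0,(k)}^2-\hat T_0^2|$. In the double sum defining $\hat\nu_2$, exactly the $2n-1$ summands with $i=k$ or $j=k$ change, and each $f(\cdot,\cdot)^2\in[0,B^2]$ moves by at most $B^2$, so $|\hat\nu_{2,(k)}-\hat\nu_2| \le (2n-1)B^2/n^2 < 2B^2/n$. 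For the square term, $|\hat T_{0,(k)}^2-\hat T_0^2| = |\hat T_{0,(k)}-\hat T_0|\,|\hat T_{0,(k)}+\hat T_0| \le \tfrac{2B(2n-1)}{n^2}\cdot 2B < 8B^2/n$, using the bounded-differences bound for $\hat T_0$ computed inside the proof of \cref{lem:nds-T0-concentration} together with $|\hat T_0|,|\hat T_{0,(k)}|\le B$ from \cref{assumption:nds-C}. Summing, $\sup|\hat\xi^2_{(k)}-\hat\xi^2| < 10B^2/n$.

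Finally, McDiarmid's inequality with $n$ bounded-difference constants each strictly below $10B^2/n$ yields, with probability at least $1-\delta$,
\[
    |\hat\xi^2 - \E\hat\xi^2| \le \sqrt{\tfrac12\, n\, (10B^2/n)^2 \log\tfrac2\delta} = 10B^2\sqrt{\tfrac1{2n}\log\tfrac2\delta},
\]
and the strict inequality in the statement comes from the strict inequalities in the bounded-differences bound. I do not expect a genuine obstacle here; the only point requiring care is the combinatorial bookkeeping that a single-coordinate change affects exactly $2n-1$ of the $n^2$ terms of the double sum (this is precisely what keeps the constant at $10$ and delivers the strict rather than non-strict inequality), which is also why this constant is twice that of the analogous $\hat\sigma^2$ bound in \cref{lem:nds-var-concentration}.
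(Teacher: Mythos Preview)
Your proposal is correct and essentially identical to the paper's own proof: the paper uses the same decomposition $\hat\xi^2=\hat\nu_2-\hat T_0^2$, obtains the same bounded-differences constant $\tfrac{5B^2(2n-1)}{n^2}=\tfrac{10B^2}{n}-\tfrac{5B^2}{n^2}<\tfrac{10B^2}{n}$ via exactly your two-piece argument, and then applies McDiarmid. The only cosmetic difference is that the paper carries the exact constant $(10B^2-5B^2/n)$ through McDiarmid before relaxing to $10B^2$, whereas you relax slightly earlier.
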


\begin{proof}
    The result follows from McDiarmid's inequality. We first show that $\hat{\xi}^2$ satisfies bounded differences. Let $\hat{\nu}_2:=\frac{1}{n^2}\sum_{i,j\in[n]}f(X_i,Y_j)^2$ be the 2nd moment estimate so that $\hat{\xi}^2 = \hat{\nu}_2 - \hat{T}_0^2$. Let $\hat{\xi}^2_{(\ell)}, \hat{\nu}_{2,(\ell)}, \hat{T}_{0,(\ell)}$ denote the corresponding statistics $\hat{\xi}^2, \hat{\nu}_2, \hat{T}_0$ but with sample $X_\ell,Y_\ell$ replaced by $x',y'$. For any $\ell\in[n]$ we have
    \begin{align*}
        \sup_{\substack{x',y' \\ \bX,\bY}} | \hat{\xi}_{(\ell)}^2 - \hat{\xi} | 
        &\leq
        \sup_{\substack{x',y' \\ \bX,\bY}} | \hat{\nu}_{2,(\ell)} - \hat{\nu}_2 | + \sup_{\substack{x',y' \\ \bX,\bY}} | \hat{T}_{0,(\ell)}^2 - \hat{T}_0^2 | \\
        &= \sup_{\substack{x',y' \\ \bX,\bY}} \frac{1}{n^2}| \sum_{\substack{i=\ell \\ \text{or} \\ j=\ell}}f(X_i,Y_j)^2 | + \sup_{\substack{x',y' \\ \bX,\bY}} | \hat{T}_{0,(\ell)} - \hat{T}_0 | | \hat{T}_{0,(\ell)} + \hat{T}_0 | \\
        &\leq \frac{B^2 (2n-1)}{n^2} + \frac{2B(2n-1)}{n^2}2B \leq \frac{10B^2}{n} - \frac{5B^2}{n^2}.
    \end{align*}
    It follows that with probability at least $1-\delta$,
    \[
        \left| \hat{\xi}^2 - \E\hat{\xi}^2 \right| \leq \left( 10B^2 - \frac{5B^2}{n} \right) \sqrt{ \frac{1}{2n}\log\frac{2}{\delta} }.
    \]
\end{proof}

\begin{lemma}[Bias of $\hat{\xi}^2$] \label{lem:nds-var0-bias}
    For any critic $f$ satisfying \cref{assumption:nds-C}, the bias of $\hat{\xi}^2$ is bounded by
    \[
        \left| \E[\hat{\xi}^2] - \xi^2 \right| < \frac{7B^2}{n}
    \]
\end{lemma}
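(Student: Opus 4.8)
The plan is to write $\hat\xi^2 = \hat\nu_2 - \hat{T}_0^2$, where $\hat\nu_2 := \tfrac1{n^2}\sum_{i,j} f(X_i,Y_j)^2$, so that (using $\xi^2 = \E_{\P_X\times\P_Y}[f^2] - T_0^2$)
$\E\hat\xi^2 - \xi^2 = \bigl(\E\hat\nu_2 - \E_{\P_X\times\P_Y}[f^2]\bigr) - \bigl(\E\hat{T}_0^2 - T_0^2\bigr)$. Splitting $\hat\nu_2$ into its diagonal ($i=j$, a joint pair) and off-diagonal ($i\ne j$, independent) parts gives $\E\hat\nu_2 = \tfrac1n\E_{\P_{XY}}[f^2] + \tfrac{n-1}{n}\E_{\P_X\times\P_Y}[f^2]$, hence $\bigl|\E\hat\nu_2 - \E_{\P_X\times\P_Y}[f^2]\bigr| = \tfrac1n\bigl|\E_{\P_{XY}}[f^2] - \E_{\P_X\times\P_Y}[f^2]\bigr| \le B^2/n$ since both moments lie in $[0,B^2]$. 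It then remains to control $\bigl|\E\hat{T}_0^2 - T_0^2\bigr|$ and combine.

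For this I would expand $\E\hat{T}_0^2 = \tfrac1{n^4}\sum_{i,j,k,l}\E[f(X_i,Y_j)f(X_k,Y_l)]$ and use that, since the pairs $(X_a,Y_a)$ are i.i.d.\ with $X_a \indep Y_b$ exactly when $a\ne b$, each summand's value depends only on the set partition that $\{i,j,k,l\}$ induces on the four positions. Writing $g(x):=\E_{Y\sim\P_Y}[f(x,Y)]$ and $h(y):=\E_{X\sim\P_X}[f(X,y)]$ (so $\E_{\P_X}[g]=\E_{\P_Y}[h]=T_0$), I would classify the $n^4$ tuples into: (i) all four indices distinct ($n^4-\bigO(n^3)$ tuples), each contributing exactly $T_0^2$; (ii) exactly one coincidence ($6n(n-1)(n-2)$ tuples, six shapes), contributing $T_0T_1$ for the shapes $i{=}j$ and $k{=}l$, $\E_{\P_X}[g^2]$ for $i{=}k$, $\E_{\P_Y}[h^2]$ for $j{=}l$, and $\E_{\P_{XY}}[g(X)h(Y)]$ for $i{=}l$ and $j{=}k$; and (iii) three or more coinciding indices ($7n^2-6n$ tuples), each bounded by $B^2$. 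Collecting terms with their binomial coefficients, the $\Theta(1/n)$ part of $\E\hat{T}_0^2-T_0^2$ equals $\tfrac1n\bigl(6T_0^2 - 2T_0T_1 - \E_{\P_X}[g^2] - 2\E_{\P_{XY}}[gh] - \E_{\P_Y}[h^2]\bigr)$, with the type-(iii) terms and the sub-leading parts of the coefficients left as a $\bigO(B^2/n^2)$ remainder.

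The step that yields the sharp constant is to complete the square: $\E_{\P_X}[g^2] + 2\E_{\P_{XY}}[gh] + \E_{\P_Y}[h^2] = \E_{\P_{XY}}\bigl[(g(X)+h(Y))^2\bigr]$, which by Jensen lies in $[\,4T_0^2,\,4B^2\,]$ (since $\E_{\P_{XY}}[g(X)+h(Y)]=2T_0$ and $\lvert g+h\rvert\le 2B$). Hence $6T_0^2$ minus this quantity lies in $[-4B^2,\,2T_0^2]\subseteq[-4B^2,2B^2]$, so its magnitude is at most $4B^2$; adding $|2T_0T_1|\le 2B^2$ bounds the $\Theta(1/n)$ part of $\E\hat{T}_0^2-T_0^2$ by $6B^2/n$. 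Combined with the $B^2/n$ from $\hat\nu_2$ this gives $|\E\hat\xi^2-\xi^2|\le 7B^2/n + \bigO(B^2/n^2)$; to upgrade to the stated strict bound for every $n\ge1$ I would note that the two triangle inequalities cannot be tight simultaneously (for instance $\bigl|\E_{\P_{XY}}[f^2]-\E_{\P_X\times\P_Y}[f^2]\bigr|$ near $B^2$ forces $g$, $h$, $T_0$ near $0$, killing the $\hat{T}_0^2$ contribution), leaving enough slack to absorb the $\bigO(B^2/n^2)$ remainder, and check the finitely many small $n$ directly (e.g.\ $\hat\xi^2\equiv0$ at $n=1$).

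I expect the main obstacle to be exactly the bookkeeping of step (ii) — correctly enumerating the six one-coincidence shapes and their expectations — together with spotting the complete-the-square identity; without the latter, crudely bounding every non-distinct tuple by $2B^2$ only gives something on the order of $13B^2/n$. A parallel route, via $\E\hat{T}_0^2 = \Var(\hat{T}_0) + (\E\hat{T}_0)^2$ using the already-computed $\E\hat{T}_0 = \tfrac{n-1}{n}T_0 + \tfrac1n T_1$ and an Efron--Stein variance bound from the bounded differences established in the proof of \cref{lem:nds-T0-concentration}, reduces to the same second-moment quantities and requires the same combinatorial care.
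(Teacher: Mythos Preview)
Your approach is correct and genuinely finer than the paper's. Both decompose $\hat\xi^2=\hat\nu_2-\hat T_0^2$ and handle $|\E\hat\nu_2-\nu_2|\le B^2/n$ identically; for $|\E\hat T_0^2-T_0^2|$ the paper merely splits $\sum_{i,j,q,r}$ into the $(n)_4$ all-distinct tuples (each contributing $T_0^2$) and bounds every remaining tuple by $B^2$, whereas you further resolve the $6n(n-1)(n-2)$ one-coincidence tuples into their six shapes and exploit $\E[g^2]+2\E[gh]+\E[h^2]=\E_{\P_{XY}}\bigl[(g(X)+h(Y))^2\bigr]\in[4T_0^2,4B^2]$. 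Two minor points: (i) your $\Theta(1/n)$ coefficient has a sign slip --- it is $-6T_0^2+2T_0T_1+\E[(g+h)^2]$, the negative of what you wrote --- harmless for the absolute value; (ii) the closing hand-wave about non-simultaneous tightness is unnecessary, since writing $a=(6n^2-11n+6)/n^3$ and $b=(n-1)(n-2)/n^3$ one has $a=6b+(7n-6)/n^3$, and your bounds give directly $|\E\hat T_0^2-T_0^2|\le 6bB^2+2(7n-6)B^2/n^3=(6n-4)B^2/n^2$, hence $|\E\hat\xi^2-\xi^2|\le(7n-4)B^2/n^2<7B^2/n$ for all $n\ge1$. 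What your finer route buys is a clean two-sided estimate; the paper's crude argument as written only yields the upper bound $\E\hat T_0^2-T_0^2\le(6/n-11/n^2+6/n^3)(B^2-T_0^2)$, and its passage to $|\cdot|$ tacitly needs a separate lower bound (e.g.\ via $\E\hat T_0^2\ge(\E\hat T_0)^2$) that your enumeration supplies automatically.
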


\begin{proof}
    Let $\nu_2 = \E_{\P_X\times\P_Y}[f(X,Y)^2]$ and $\mu_2 = \E_{\P_{XY}}[f(X,Y)^2]$ be the second moment under the null and alternative distributions, and let $\hat{\nu}_2 = \frac{1}{n^2}\sum_{i,j}f(X_i,Y_j)^2$ and $\hat{\mu}_2 = \frac{1}{n} \sum_{i} f(X_i,Y_i)^2$ be their estimates. First, we can write the bias as
    \begin{align*}
        \left|\E[\hat{\xi}^2] - \xi^2\right| = \left|\E[\hat\nu_2 - \hat{T}_0^2] - (\nu_2 - T_0^2)\right|
        \leq 
        \left| \E\hat{\nu}_2 - \nu_2 \right| + \left| \E[\hat{T}_0^2] - T_0^2 \right|.
    \end{align*}
    Now since
    \begin{align*}
        \E\hat{\nu}_2 = \frac{1}{n^2}\E\left[ \sum_{i=j}f(X_i,Y_i)^2 + \sum_{i\neq j}f(X_i,Y_j)^2 \right] = \frac{1}{n^2} \left( n \mu_2 + n(n-1)\nu_2 \right) = \frac{\mu_2}{n} + \frac{n-1}{n}\nu_2,
    \end{align*}
    we have that $|\E\hat{\nu}_2 - \nu_2| = \frac{1}{n}| \mu_2 - \nu_2 | \leq B^2/n$. For the remaining term, note that
    \begin{align*}
        \E[\hat{T}_0^2] &= \E\left[ \left( \frac{1}{n^2}\sum_{i,j}f(X_i,Y_j) \right)^2 \right]
        = \frac{1}{n^4} \E \left[ \sum_{i,j,q,r} f(X_i,Y_j)f(X_q,Y_r) \right] \\
        &= \frac{1}{n^4} \E \left[ \sum_{(i,j,q,r)\in\mathbf{i}_4^n} f(X_i,Y_j)f(X_q,Y_r) \quad+ \sum_{(i,j,q,r)\notin \mathbf{i}_4^n} f(X_i,Y_j)f(X_q,Y_r) \right] \\
        &\leq \frac{1}{n^4} \left[ (n)_4 T_0^2 + \left(n^4 - (n)_4\right) B^2 \right] 
        = \left(1 - \frac{6}{n} + \frac{11}{n^2} - \frac{6}{n^3}\right) T_0^2 + \left(\frac{6}{n} - \frac{11}{n^2} + \frac{6}{n^3}\right) B^2,
    \end{align*}
    so that $|\E[\hat{T}_0^2] - T_0^2| = | (\frac{6}{n} - \frac{11}{n^2} + \frac{6}{n^3}) (B^2 - T_0^2) | \leq (\frac{6}{n} - \frac{11}{n^2} + \frac{6}{n^3})B^2$.
    Therefore, putting it all together we get
    \[
        \left| \E[\hat{\xi}^2] - \xi^2 \right| \leq \frac{7B^2}{n} - \frac{11B^2}{n^2} + \frac{6B^2}{n^3} < \frac{7B^2}{n}.
    \]
    
\end{proof}

\subsection{Propositions}

\begin{prop}[Uniform convergence of $\hat{T}_1$] \label{prop:nds-T1-convergence}
    For any critic family $\{f_\theta: \theta\in\Theta\}$ and parameter space $\Theta$ satisfying \cref{assumption:nds-A,assumption:nds-B,assumption:nds-C}, we have with probability at least $1-\delta$ that
    \[
        \sup_{\theta\in\Theta} |\hat{T}_{1,\theta} - T_{1,\theta}|
        \leq 
        \frac{2L}{\sqrt{n}} + \frac{B}{\sqrt{n}}\sqrt{2\log\frac{2}{\delta} + 2D\log(4R\sqrt{n})}
        = \Tilde{\bigO}(n^{-1/2}).
    \]
\end{prop}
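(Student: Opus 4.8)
The plan is a standard covering-number (\(\epsilon\)-net) argument: combine the pointwise concentration already proved in \cref{lem:nds-T1-concentration} with the fact that both \(\theta\mapsto\hat T_{1,\theta}\) and \(\theta\mapsto T_{1,\theta}\) are \(L\)-Lipschitz in \(\theta\), then union-bound over a finite net and pay a discretization term. Concretely, first I would observe that \cref{assumption:nds-B} transfers directly to the averaged statistic: \(|\hat T_{1,\theta}-\hat T_{1,\theta'}|\le\frac1n\sum_i|f_\theta(X_i,Y_i)-f_{\theta'}(X_i,Y_i)|\le L\|\theta-\theta'\|\), and by the triangle inequality for expectations (Jensen) the same bound holds for \(\theta\mapsto T_{1,\theta}=\E_{\P_{XY}}f_\theta(X,Y)\). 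Also note \cref{assumption:nds-C} — a uniform bound \(B\), which exists by \cref{assumption:nds-A,assumption:nds-B} — is precisely what licenses \cref{lem:nds-T1-concentration}.

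Next, since \(\Theta\) lies in a ball of radius \(R\) in a \(D\)-dimensional Banach space (\cref{assumption:nds-A}), for any \(\epsilon\in(0,R]\) there is an \(\epsilon\)-net \(\{\theta_1,\dots,\theta_N\}\subseteq\Theta\) with \(N\le(1+2R/\epsilon)^D\). Applying \cref{lem:nds-T1-concentration} at each \(\theta_k\) with failure probability \(\delta/N\) and taking a union bound, with probability at least \(1-\delta\) we have \(\max_{k\le N}|\hat T_{1,\theta_k}-T_{1,\theta_k}|\le B\sqrt{\tfrac2n\log\tfrac{2N}{\delta}}\). For an arbitrary \(\theta\in\Theta\), pick the nearest net point \(\theta_k\) (so \(\|\theta-\theta_k\|\le\epsilon\)) and use the two Lipschitz bounds to get
\[
    |\hat T_{1,\theta}-T_{1,\theta}| \le |\hat T_{1,\theta_k}-T_{1,\theta_k}| + |\hat T_{1,\theta}-\hat T_{1,\theta_k}| + |T_{1,\theta}-T_{1,\theta_k}| \le B\sqrt{\tfrac2n\log\tfrac{2N}{\delta}} + 2L\epsilon .
\]

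Finally I would choose \(\epsilon=1/\sqrt n\), so that \(2L\epsilon=2L/\sqrt n\) and \(N\le(1+2R\sqrt n)^D\le(4R\sqrt n)^D\) (using \(2R\sqrt n\ge1\), else absorb a constant), hence \(\log\tfrac{2N}{\delta}\le\log\tfrac2\delta+D\log(4R\sqrt n)\) and \(B\sqrt{\tfrac2n\log\tfrac{2N}{\delta}}\le\frac{B}{\sqrt n}\sqrt{2\log\tfrac2\delta+2D\log(4R\sqrt n)}\). Taking \(\sup_{\theta\in\Theta}\) over the displayed inequality gives exactly the claimed bound, and since \(\sqrt{\log(1/\delta)+D\log(R\sqrt n)}=\Tilde\bigO(1)\) the rate is \(\Tilde\bigO(n^{-1/2})\).

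There is no genuinely hard step; the only points needing a little care are (i) transferring the per-point Lipschitz assumption \cref{assumption:nds-B} to both the empirical average and its expectation, and (ii) tuning the net resolution so the \(2L\epsilon\) discretization error balances the \(\sqrt{(D\log(R/\epsilon))/n}\) union-bound term — the choice \(\epsilon\asymp n^{-1/2}\) is what makes both \(\Tilde\bigO(n^{-1/2})\). This same skeleton, with additional bookkeeping for products and ratios, is what I would reuse for the uniform convergence of \(\hat T_0\), \(\hat\sigma^2\), \(\hat\xi^2\), and ultimately \(\hat J_{\text{NDS}}^\lambda\).
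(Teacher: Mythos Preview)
Your proposal is correct and matches the paper's proof essentially line for line: the same three-term decomposition via an $\epsilon$-net, the same transfer of the Lipschitz bound \cref{assumption:nds-B} to both $\hat T_{1,\theta}$ and $T_{1,\theta}$, the same union bound using \cref{lem:nds-T1-concentration}, and the same choice $\epsilon=1/\sqrt n$. The only cosmetic difference is that the paper directly invokes the covering bound $N\le(4R/\rho)^D$ from \citet[Proposition 5]{cucker2002mathematical} rather than the $(1+2R/\epsilon)^D$ form you use and then bound.
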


\begin{proof}
    We employ a covering-number argument on the parameter space. Let $\U = \{\theta_i\}_{i=1..N} \subseteq \Theta$ be a $\rho$-cover of $\Theta$; i.e., any point in $\Theta$ is in the closed $\rho$-neighborhood of some point in $\U$. \cref{assumption:nds-A} ensures the minimal cover exists with at most $N=(4R/\rho)^D$ points \citep[Proposition 5]{cucker2002mathematical}.
    Now, for any $\theta\in\Theta$, let $\theta'$ be the center in $\U$ closest to $\theta$, and then decompose the bound into
    \[
        \sup_{\theta\in\Theta} |\hat{T}_\theta - T_\theta|
        \leq
        \underbrace{\sup_{\theta\in\Theta} | \hat{T}_\theta - \hat{T}_{\theta'} |}_{(a)}
        + \underbrace{\max_{\theta'\in\U} | \hat{T}_{\theta'} - T_{\theta'} |}_{(b)}
        + \underbrace{\sup_{\theta\in\Theta} | T_{\theta'} - T_\theta |}_{(c)}.
    \]
    We first handle terms (a) and (c). From \cref{assumption:nds-B}, we have
    \begin{align*}
        \sup_{\theta\in\Theta} | T_{\theta'} - T_\theta |
        \leq \sup_{\theta\in\Theta} \E \left| f_{\theta'}(X,Y) - f_\theta(X,Y) \right| 
        \leq \sup_{\theta\in\Theta} L\|\theta' - \theta\|
        \leq L\rho.
    \end{align*}
    This same bound applies for (a) by replacing $\E$ with its empirical expectation $\hat{\E}$.
    
    Next, we find a high probability bound on (b). Recalling the result of \cref{lem:nds-T1-concentration}, and then taking the union bound over all $N$ points in $\U$, we get that with probability at least $1-\delta$
    \[
        \max_{\theta\in\U} |\hat{T}_\theta - T_\theta|
        \leq
        B\sqrt{\frac{2}{n}\left( \log\frac{2}{\delta} + D\log\frac{4R}{\rho} \right)}.
    \]
    Finally, putting it all together, we get with probability at least $1-\delta$ that
    \[
        \sup_{\theta\in\Theta} |\hat{T}_\theta - T_\theta| \leq 2L\rho +  B\sqrt{\frac{2}{n}\left( \log\frac{2}{\delta} + D\log\frac{4R}{\rho} \right)},
    \]
    which, for $\rho=1/\sqrt{n}$, gives the desired result.
\end{proof}

\begin{prop}[Uniform convergence of $\hat{T}_0$] \label{prop:nds-T0-convergence}
    For any critic family $\{f_\theta: \theta\in\Theta\}$ and parameter space $\Theta$ satisfying \cref{assumption:nds-A,assumption:nds-B,assumption:nds-C}, we have with probability at least $1-\delta$ that
    \[
        \sup_{\theta\in\Theta} |\hat{T}_{0,\theta} - T_{0,\theta}|
        \leq 
        \frac{2L}{\sqrt{n}} + \frac{2B}{n} + \frac{2B}{\sqrt{n}}\sqrt{2\log\frac{2}{\delta} + 2D\log(4R\sqrt{n})}
        = \Tilde{\bigO}(n^{-1/2}).
    \]
\end{prop}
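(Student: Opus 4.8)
The plan is to mirror the covering-number argument used in the proof of \cref{prop:nds-T1-convergence}, adapting it to account for the fact that $\hat{T}_0$ is a biased $V$-statistic rather than an unbiased sample mean. Concretely, I would fix a resolution $\rho > 0$, invoke \cref{assumption:nds-A} together with \citep[Proposition 5]{cucker2002mathematical} to obtain a $\rho$-cover $\U = \{\theta_i\}_{i=1}^N$ of $\Theta$ with $N \le (4R/\rho)^D$, and for arbitrary $\theta \in \Theta$ let $\theta' \in \U$ be a nearest center. Then I would use the decomposition
\[
  \sup_{\theta\in\Theta}|\hat{T}_{0,\theta} - T_{0,\theta}|
  \le \underbrace{\sup_{\theta}|\hat{T}_{0,\theta} - \hat{T}_{0,\theta'}|}_{(a)}
  + \underbrace{\max_{\theta'\in\U}|\hat{T}_{0,\theta'} - \E\hat{T}_{0,\theta'}|}_{(b)}
  + \underbrace{\max_{\theta'\in\U}|\E\hat{T}_{0,\theta'} - T_{0,\theta'}|}_{(c)}
  + \underbrace{\sup_{\theta}|T_{0,\theta'} - T_{0,\theta}|}_{(d)},
\]
where, compared with the $\hat{T}_1$ case, term $(c)$ is the new bias term forced by the degeneracy of $\hat{T}_0$.

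For $(a)$ and $(d)$ I would use \cref{assumption:nds-B}: since $\hat{T}_0$ is an average over $n^2$ critic evaluations, $|\hat{T}_{0,\theta} - \hat{T}_{0,\theta'}| \le \frac{1}{n^2}\sum_{i,j}|f_\theta(X_i,Y_j) - f_{\theta'}(X_i,Y_j)| \le L\|\theta-\theta'\| \le L\rho$, and the identical bound for $(d)$ follows by taking expectations under $\P_X\times\P_Y$. For $(c)$, \cref{lem:nds-T0-bias} gives the deterministic uniform bound $2B/n$ (it holds for any fixed critic, hence over all of $\U$). For $(b)$, I would apply \cref{lem:nds-T0-concentration} to each of the $N$ centers and take a union bound, so that with probability at least $1-\delta$ one has $(b) < 2B\sqrt{\frac{2}{n}\bigl(\log\frac2\delta + D\log\frac{4R}{\rho}\bigr)}$. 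Summing the four terms yields $2L\rho + \frac{2B}{n} + 2B\sqrt{\frac2n\bigl(\log\frac2\delta + D\log\frac{4R}{\rho}\bigr)}$, and choosing $\rho = 1/\sqrt n$ gives exactly the stated expression, which is $\Tilde{\bigO}(n^{-1/2})$.

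The argument is essentially routine, and no step is a serious obstacle; the one genuine difference from \cref{prop:nds-T1-convergence} is simply remembering that $\hat{T}_0$ is not unbiased, so the concentration lemma controls deviation from $\E\hat{T}_0$ rather than from $T_0$, and the $\bigO(1/n)$ bias from \cref{lem:nds-T0-bias} must be added separately — this contributes the extra $2B/n$ summand but is lower-order and leaves the $\Tilde{\bigO}(n^{-1/2})$ rate intact. A minor bookkeeping point worth double-checking is that the Lipschitz modulus of $\theta \mapsto \hat{T}_{0,\theta}$ is still $L$ and is not inflated by averaging over the $n^2$ pairs, which is what keeps $(a)$ and $(d)$ at size $L\rho$.
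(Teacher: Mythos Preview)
Your proposal is correct and follows essentially the same approach as the paper: the same four-term covering decomposition, with \cref{lem:nds-T0-concentration} plus a union bound for (b), \cref{lem:nds-T0-bias} for the bias term (c), the Lipschitz bound $L\rho$ for (a) and (d), and the choice $\rho = 1/\sqrt n$. Your remark about the only substantive difference from \cref{prop:nds-T1-convergence} being the extra bias term is exactly the point.
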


\begin{proof}
    We employ a covering-number argument on the parameter space. Let $\U = \{\theta_i\}_{i=1..N} \subseteq \Theta$ be a $\rho$-cover of $\Theta$; i.e., any point in $\Theta$ is in the closed $\rho$-neighborhood of some point in $\U$. \cref{assumption:nds-A} ensures the minimal cover exists with at most $N=(4R/\rho)^D$ points \citep[Proposition 5]{cucker2002mathematical}.
    Now, for any $\theta\in\Theta$, let $\theta'$ be the center in $\U$ closest to $\theta$, and then decompose the bound into
    \[
        \sup_{\theta\in\Theta} |\hat{T}_\theta - T_\theta|
        \leq
        \underbrace{\sup_{\theta\in\Theta} | \hat{T}_\theta - \hat{T}_{\theta'} |}_{(a)}
        + \underbrace{\max_{\theta'\in\U} | \hat{T}_{\theta'} - \E\hat{T}_{\theta'} |}_{(b)}
        + \underbrace{\max_{\theta'\in\U} | \E\hat{T}_{\theta'} - T_{\theta'} |}_{(c)}
        + \underbrace{\sup_{\theta\in\Theta} | T_{\theta'} - T_\theta |}_{(d)}.
    \]
    We first handle terms (a) and (d). From \cref{assumption:nds-B}, we have
    \begin{align*}
        \sup_{\theta\in\Theta} | T_{\theta'} - T_\theta |
        \leq
        \sup_{\theta\in\Theta} \E | f_{\theta'}(X,\Tilde{Y}) - f_\theta(X,\Tilde{Y}) |
        \leq
        \sup_{\theta\in\Theta} L\|\theta' - \theta\|
        \leq L\rho,
    \end{align*}
    which is the same bound when considering $\hat{T}_{0,\theta}$ instead of $T_{0,\theta}$.

    Next, we find a high probability bound on (b). Recalling the result of \cref{lem:nds-T0-concentration}, and then taking the union bound over all $N$ points in $\U$, we get that with probability at least $1-\delta$
    \[
        \max_{\theta\in\U} | \hat{T}_\theta - \E\hat{T}_\theta |
        \leq
        2B\sqrt{\frac{2}{n}\left( \log\frac{2}{\delta} + D\log\frac{4R}{\rho} \right)}.
    \]
    Finally, combining everything and using \cref{lem:nds-T0-bias} to bound (c), we get that with probability at least $1-\delta$
    \[
        \sup_{\theta\in\Theta} | \hat{T}_\theta - T_\theta |
        \leq
        2L\rho + \frac{2B}{n} + 2B\sqrt{\frac{2}{n}\left( \log\frac{2}{\delta} + D\log\frac{4R}{\rho} \right)},
    \]
    which, for $\rho=1/\sqrt{n}$, gives the desired result.
\end{proof}

\begin{prop}[Uniform convergence of $\hat{\sigma}^2$] \label{prop:nds-sigma-convergence}
    For any critic family $\{f_\theta: \theta\in\Theta\}$ and parameter space $\Theta$ satisfying \cref{assumption:nds-A,assumption:nds-B,assumption:nds-C}, we have with probability at least $1-\delta$ that
    \[
        \sup_{\theta\in\Theta} | \hat{\sigma}^2_\theta - \sigma_\theta^2 |
        \leq
        \frac{8BL}{\sqrt{n}} + \frac{B^2}{n} + \frac{5B^2}{2\sqrt{n}} \sqrt{ 2\log\frac{2}{\delta} + 2D\log(4R\sqrt{n}) }
        = \Tilde{\bigO}(n^{-1/2}).
    \]
\end{prop}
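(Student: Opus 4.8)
The plan is to follow exactly the covering-number template already used for \cref{prop:nds-T1-convergence,prop:nds-T0-convergence}. First I would fix a $\rho$-cover $\U=\{\theta_i\}_{i=1..N}\subseteq\Theta$ of $\Theta$, which by \cref{assumption:nds-A} has at most $N=(4R/\rho)^D$ points, and for each $\theta\in\Theta$ let $\theta'$ be a nearest center. Then decompose
\[
    \sup_{\theta\in\Theta}\lvert\hat{\sigma}^2_\theta-\sigma^2_\theta\rvert
    \le
    \underbrace{\sup_{\theta\in\Theta}\lvert\hat{\sigma}^2_\theta-\hat{\sigma}^2_{\theta'}\rvert}_{(a)}
    +\underbrace{\max_{\theta'\in\U}\lvert\hat{\sigma}^2_{\theta'}-\E\hat{\sigma}^2_{\theta'}\rvert}_{(b)}
    +\underbrace{\max_{\theta'\in\U}\lvert\E\hat{\sigma}^2_{\theta'}-\sigma^2_{\theta'}\rvert}_{(c)}
    +\underbrace{\sup_{\theta\in\Theta}\lvert\sigma^2_{\theta'}-\sigma^2_\theta\rvert}_{(d)}.
\]
Terms $(b)$ and $(c)$ are handled directly by the lemmas already proved: a union bound over the $N$ centers applied to \cref{lem:nds-var-concentration} gives, with probability at least $1-\delta$, $(b)\le 5B^2\sqrt{\tfrac1{2n}(\log\tfrac2\delta+D\log\tfrac{4R}{\rho})}$, and \cref{lem:nds-var-bias} gives $(c)\le B^2/n$ deterministically.

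The only genuinely new ingredient, compared to the $\hat T$ propositions, is a Lipschitz-in-parameters bound for the variance, needed to control $(a)$ and $(d)$. Writing $\sigma^2_\theta=\E f_\theta(X,Y)^2-T_{1,\theta}^2$ and using
$\lvert f_\theta(x,y)^2-f_{\theta'}(x,y)^2\rvert=\lvert f_\theta(x,y)-f_{\theta'}(x,y)\rvert\,\lvert f_\theta(x,y)+f_{\theta'}(x,y)\rvert\le 2BL\lVert\theta-\theta'\rVert$ from \cref{assumption:nds-B,assumption:nds-C}, together with the analogous bound $\lvert T_{1,\theta}^2-T_{1,\theta'}^2\rvert\le 2BL\lVert\theta-\theta'\rVert$ (reusing the Lipschitz bound on $T_{1,\theta}$ from the proof of \cref{prop:nds-T1-convergence}), I get that $\theta\mapsto\sigma^2_\theta$ is $4BL$-Lipschitz. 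The same computation applied verbatim to $\hat\sigma^2_\theta=\hat\mu_{2,\theta}-\hat T_{1,\theta}^2$ (with $\E$ replaced by the empirical average) shows $\theta\mapsto\hat\sigma^2_\theta$ is also $4BL$-Lipschitz. Hence $(a)\le 4BL\rho$ and $(d)\le 4BL\rho$, so $(a)+(d)\le 8BL\rho$.

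Combining the four terms and choosing $\rho=1/\sqrt n$ then gives, with probability at least $1-\delta$,
\[
    \sup_{\theta\in\Theta}\lvert\hat{\sigma}^2_\theta-\sigma^2_\theta\rvert
    \le
    \frac{8BL}{\sqrt n}+\frac{B^2}{n}+\frac{5B^2}{2\sqrt n}\sqrt{2\log\tfrac2\delta+2D\log(4R\sqrt n)},
\]
which is the claimed bound; since $B$ is itself controlled by $L$ and $R$ via \cref{assumption:nds-B,assumption:nds-C}, the right-hand side is $\Tilde{\bigO}(n^{-1/2})$. I do not expect any real obstacle; the one point to be careful about is pushing the critic's boundedness through the squaring step so that the variance's Lipschitz constant picks up the extra factor of $B$ (yielding the $8BL/\sqrt n$ term rather than $8L/\sqrt n$), and matching the constant $5B^2$ in $(b)$ to the statement of \cref{lem:nds-var-concentration}. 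Everything else is a direct transcription of the earlier propositions with \cref{lem:nds-T1-concentration} replaced by \cref{lem:nds-var-concentration,lem:nds-var-bias}.
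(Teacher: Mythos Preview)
Your proposal is correct and follows essentially the same approach as the paper: the same four-term decomposition via a $\rho$-cover, \cref{lem:nds-var-concentration} with a union bound for $(b)$, \cref{lem:nds-var-bias} for $(c)$, the $4BL$-Lipschitz bound on $\theta\mapsto\sigma_\theta^2$ (and its empirical analogue) via the factorization $a^2-b^2=(a-b)(a+b)$ for $(a)$ and $(d)$, and the choice $\rho=1/\sqrt{n}$.
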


\begin{proof}
    We employ a covering-number argument on the parameter space. Let $\U = \{\theta_i\}_{i=1..N} \subseteq \Theta$ be a $\rho$-cover of $\Theta$; i.e., any point in $\Theta$ is in the closed $\rho$-neighborhood of some point in $\U$. \cref{assumption:nds-A} ensures the minimal cover exists with at most $N=(4R/\rho)^D$ points \citep[Proposition 5]{cucker2002mathematical}.
    Now, for any $\theta\in\Theta$, let $\theta'$ be the center in $\U$ closest to $\theta$, and then decompose the bound into
    \[
        \sup_{\theta\in\Theta} |\hat{\sigma}_\theta^2 - \sigma_\theta^2| 
        \leq
        \underbrace{
        \sup_{\theta\in\Theta} | \hat{\sigma}_\theta^2 - \hat{\sigma}_{\theta'}^2 |
        }_{(a)}
        +
        \underbrace{
        \max_{\theta'\in\U} | \hat{\sigma}_{\theta'}^2 - \E\hat{\sigma}_{\theta'}^2 |
        }_{(b)}
        +
        \underbrace{
        \max_{\theta'\in\U} | \E\hat{\sigma}_{\theta'}^2 - \sigma_{\theta'}^2 |
        }_{(c)}
        +
        \underbrace{
        \sup_{\theta\in\Theta} | \sigma_{\theta'}^2 - \sigma_\theta^2 |
        }_{(d)}.
    \]
    We first handle terms (a) and (d). From \cref{assumption:nds-B}, we have
    \begin{align*}
        \sup_{\theta\in\Theta} | \sigma_{\theta'}^2 - \sigma_\theta^2 |
        &\leq
        \sup_{\theta\in\Theta} \Big| \E[f_{\theta'}(X,Y)^2 - f_\theta(X,Y)^2] \Big| + \sup_{\theta\in\Theta} \Big| (\E f_{\theta'}(X,Y))^2 - (\E f_\theta(X,Y))^2 \Big| \\
        &\leq \sup_{\theta\in\Theta} \E\Big|f_{\theta'}(X,Y)-f_\theta(X,Y)\Big|\Big|f_{\theta'}(X,Y) + f_\theta(X,Y)\Big| \\
        &\quad+ \sup_{\theta\in\Theta} \E\Big| f_{\theta'}(X,Y) - f_\theta(X,Y) \Big| \E\Big| f_{\theta'}(\Tilde{X},\Tilde{Y}) + f_\theta(\Tilde X, \Tilde Y) \Big| \\
        &\leq \sup_{\theta\in\Theta} L\|\theta'-\theta\|\cdot 2B + \sup_{\theta\in\Theta} L\|\theta'-\theta\|\cdot 2B = 4BL\rho,
    \end{align*}
    which yields the same bound for (a) when replacing $\E$ with its empirical expectation $\hat{\E}$.

    Next, we find a high probability bound on (b). Recalling the result of \cref{lem:nds-var-concentration}, and then taking the union bound over all $N$ points in $\U$, we get that with probability at least $1-\delta$
    \[
        \max_{\theta\in\U} |\hat{\sigma}_\theta^2 - \E\hat{\sigma}_\theta^2 |
        \leq
        5B^2 \sqrt{\frac{1}{2n}\left( \log\frac{2}{\delta} + D\log\frac{4R}{\rho} \right)}.
    \]
    Finally, combining (a), (b), (d), and using \cref{lem:nds-var-bias} to bound (c), we get that with probability at least $1-\delta$
    \[
        \sup_{\theta\in\Theta} |\hat{\sigma}_\theta^2 - \sigma_\theta^2|
        \leq
        8BL\rho + \frac{B^2}{n} + 5B^2 \sqrt{\frac{1}{2n}\left( \log\frac{2}{\delta} + D\log\frac{4R}{\rho} \right)},
    \]
    which, for $\rho=1/\sqrt{n}$, yields the desired result.
\end{proof}

\begin{prop}[Uniform convergence of $\hat{\xi}^2$] \label{prop:nds-sigma0-convergence}
    For any critic family $\{f_\theta: \theta\in\Theta\}$ and parameter space $\Theta$ satisfying \cref{assumption:nds-A,assumption:nds-B,assumption:nds-C}, we have with probability at least $1-\delta$ that
    \[
        \sup_{\theta\in\Theta} | \hat{\xi}_\theta^2 - \xi_\theta^2 |
        <
        \frac{8BL}{\sqrt{n}} 
        + 
        \frac{7B^2}{n}
        +
        \frac{10B^2}{2\sqrt{n}}\sqrt{ 2\log\frac{2}{\delta} + 2D\log(4R\sqrt{n}) }
        =
        \Tilde{\bigO}(n^{-1/2}).
    \]
\end{prop}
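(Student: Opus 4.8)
The plan is to reuse, essentially verbatim, the covering-number scheme of \cref{prop:nds-T1-convergence,prop:nds-T0-convergence,prop:nds-sigma-convergence}, now assembled from the two ingredients already established for this statistic: the concentration bound \cref{lem:nds-var0-concentration} and the bias bound \cref{lem:nds-var0-bias}. First I would fix $\rho>0$ and let $\U=\{\theta_i\}_{i=1}^N\subseteq\Theta$ be a minimal $\rho$-cover of $\Theta$, which by \cref{assumption:nds-A} exists with $N\le(4R/\rho)^D$ points. For arbitrary $\theta\in\Theta$ with nearest center $\theta'\in\U$, I would use the triangle-inequality decomposition
\[
    \sup_{\theta\in\Theta}|\hat{\xi}^2_\theta-\xi^2_\theta|
    \leq
    \underbrace{\sup_{\theta\in\Theta}|\hat{\xi}^2_\theta-\hat{\xi}^2_{\theta'}|}_{(a)}
    +\underbrace{\max_{\theta'\in\U}|\hat{\xi}^2_{\theta'}-\E\hat{\xi}^2_{\theta'}|}_{(b)}
    +\underbrace{\max_{\theta'\in\U}|\E\hat{\xi}^2_{\theta'}-\xi^2_{\theta'}|}_{(c)}
    +\underbrace{\sup_{\theta\in\Theta}|\xi^2_{\theta'}-\xi^2_\theta|}_{(d)}.
\]

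For the discretization terms (a) and (d), I would write $\xi^2_\theta=\nu_{2,\theta}-T_{0,\theta}^2$ with $\nu_{2,\theta}=\E_{\P_X\times\P_Y}[f_\theta(X,Y)^2]$ and apply \cref{assumption:nds-B} together with the boundedness \cref{assumption:nds-C}: pointwise $|f_{\theta'}^2-f_\theta^2|=|f_{\theta'}-f_\theta|\,|f_{\theta'}+f_\theta|\le 2BL\|\theta'-\theta\|$, and the same estimate controls the squared-mean term after pulling out a factor of $2B$, so (exactly as in the $\hat\sigma^2$ proof, but with $X$ and $\tilde Y$ drawn independently from $\P_X$ and $\P_Y$) each of (a) and (d) is at most $4BL\rho$, giving $(a)+(d)\le 8BL\rho$; the empirical-expectation version needed for (a) is identical. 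For (b) I would apply \cref{lem:nds-var0-concentration} at confidence level $\delta/N$ and union-bound over the $N$ centers, obtaining $(b)\le 10B^2\sqrt{\tfrac{1}{2n}\bigl(\log\tfrac2\delta+D\log\tfrac{4R}{\rho}\bigr)}$ with probability at least $1-\delta$. Term (c) is deterministic, bounded by \cref{lem:nds-var0-bias} as $(c)<7B^2/n$. Collecting the pieces yields, with probability at least $1-\delta$,
\[
    \sup_{\theta\in\Theta}|\hat{\xi}^2_\theta-\xi^2_\theta|
    <8BL\rho+\frac{7B^2}{n}+10B^2\sqrt{\frac{1}{2n}\left(\log\frac2\delta+D\log\frac{4R}{\rho}\right)},
\]
and setting $\rho=1/\sqrt n$ and rewriting $10B^2\sqrt{\tfrac{1}{2n}(\cdots)}=\tfrac{10B^2}{2\sqrt n}\sqrt{2\log\tfrac2\delta+2D\log(4R\sqrt n)}$ gives the claimed bound and hence the $\tilde{\bigO}(n^{-1/2})$ rate.

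I do not expect a genuine obstacle here: the argument is structurally identical to the three sibling propositions. The only bookkeeping that needs care is (i) that the null-variance functional is taken over the product measure $\P_X\times\P_Y$, so the Lipschitz step must be phrased with an independent pair $(X,\tilde Y)$ rather than a joint draw, and (ii) propagating the somewhat larger constants attached to $\hat\xi^2$ — the $10B^2$ coming from the ``$i=\ell$ or $j=\ell$'' off-diagonal contributions in the bounded-differences computation of \cref{lem:nds-var0-concentration}, and the $7B^2$ arising from the $(n)_4$-versus-$n^4$ gap in \cref{lem:nds-var0-bias} — so that the final constants match those in the statement.
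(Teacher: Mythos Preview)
Your proposal is correct and follows essentially the same approach as the paper's own proof: the same four-term covering decomposition, the same $4BL\rho$ bound on each of the discretization terms (a) and (d) via the $\nu_2 - T_0^2$ splitting, the same union bound over the $\rho$-cover applied to \cref{lem:nds-var0-concentration} for (b), the same appeal to \cref{lem:nds-var0-bias} for (c), and the same final choice $\rho = 1/\sqrt{n}$.
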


\begin{proof}
    We employ a covering-number argument on the parameter space. Let $\U = \{\theta_i\}_{i=1..N} \subseteq \Theta$ be a $\rho$-cover of $\Theta$; i.e., any point in $\Theta$ is in the closed $\rho$-neighborhood of some point in $\U$. \cref{assumption:nds-A} ensures the minimal cover exists with at most $N=(4R/\rho)^D$ points \citep[Proposition 5]{cucker2002mathematical}.
    Now, for any $\theta\in\Theta$, let $\theta'$ be the center in $\U$ closest to $\theta$, and then decompose the bound into
    \[
        \sup_{\theta\in\Theta} |\hat{\xi}_\theta^2 - \xi_\theta^2|
        \leq
        \underbrace{
            \sup_{\theta\in\Theta} | \hat{\xi}_\theta^2 - \hat{\xi}_{\theta'}^2 |
        }_{(a)}
        + 
        \underbrace{
            \max_{\theta'\in\U} | \hat{\xi}_{\theta'}^2 - \E\hat{\xi}_{\theta'}^2 |
        }_{(b)}
        +
        \underbrace{
            \max_{\theta'\in\U} | \E\hat{\xi}_{\theta'}^2 - \xi_{\theta'}^2 |
        }_{(c)}
        +
        \underbrace{
            \sup_{\theta\in\Theta} | \xi_{\theta'}^2 - \xi_\theta^2 | 
        }_{(d)}.
    \]
    First, we bound terms (a) and (d) using \cref{assumption:nds-B} and the covering-number argument. Let $T_\theta = \E[f_\theta(X,Y)]$ and $\nu_\theta = \E[f_\theta(X,Y)^2]$ be the first and second moments under the null distribution, so that $\xi_\theta^2 = \nu_\theta - T_\theta^2$. It follows that
    \begin{align*}
        \sup_{\theta\in\Theta} | \xi_{\theta'}^2 - \xi_\theta^2 | 
        &\leq 
        \sup_{\theta\in\Theta} | \nu_{\theta'} - \nu_\theta |
        + \sup_{\theta\in\Theta} | T_{\theta'}^2 - T_\theta^2 | \\
        &\leq \sup_{\theta\in\Theta} \E| f_{\theta'}(X,Y) - f_\theta(X,Y) | | f_{\theta'}(X,Y) + f_\theta(X,Y) |  
        + \sup_{\theta\in\Theta} | T_{\theta'} - T_\theta | | T_{\theta'} + T_\theta | \\
        &=4BL \sup_{\theta\in\Theta} \|\theta' - \theta\|
        = 4BL\rho.
    \end{align*}
    We obtain the same bound for (a) by replacing $\E$ with its V-statistic estimator. 

    Next, we bound (b) using \cref{lem:nds-var0-concentration} and a uniform bound over all $N$ points in the $\rho$-cover. We get that with probability at least $1-\delta$
    \[
        \max_{\theta\in\U} | \hat{\xi}_\theta^2 - \E\hat{\xi}_\theta^2 | 
        < 
        10B^2 \sqrt{ \frac{1}{2n} \left( \log\frac{2}{\delta} + D\log\frac{4R}{\rho} \right) }.
    \]
    Finally, combining (a), (b), (d), and using \cref{lem:nds-var0-bias} to bound (c), we get that with probability at least $1-\delta$
    \[
        \sup_{\theta\in\Theta} | \hat{\xi}_\theta^2 - \xi_\theta^2 | 
        <
        8BL\rho
        +
        \frac{7B^2}{n}
        +
        10B^2 \sqrt{ \frac{1}{2n} \left( \log\frac{2}{\delta} + D\log\frac{4R}{\rho} \right) },
    \]
    which, for $\rho = 1/\sqrt{n}$, yields the desired result.
\end{proof}

\subsection{Main Results}

\begin{theorem}[Uniform convergence of $\hat{J}_{\text{NDS}}^\lambda$] \label{thm:nds-snr-convergence}
    Let $\{f_\theta: \theta\in\Theta\}$ be a critic family with parameter space $\Theta$ satisfying \cref{assumption:nds-A,assumption:nds-B,assumption:nds-C}. Suppose $\sigma_\theta^2 \geq s^2$ and $\xi_\theta^2 \geq s^2$ for some positive $s$, and $\lambda=n^{-1/3}$. Then, with probability at least $1-\delta$,
    \[
        \sup_{\theta\in\Theta} | \hat{J}_\text{NDS}^\lambda - J_\text{NDS} |
        <
        \frac{2B}{s^3 n^{1/3}} + \frac{16B^2L}{s^2 n^{1/3}}
        + \frac{5B^3}{s^2 n^{1/3}}\sqrt{2\log\frac{12}{\delta} + 2D\log(4R\sqrt{n})}
        + \smallO(n^{-1/3}),
    \]
    and thus
    \[
        \sup_{\theta\in\Theta} | \hat{J}_\text{NDS}^\lambda - J_\text{NDS} |
        =
        \Tilde{\bigO}_P \left( \frac{1}{s^2n^{1/3}} \left[ \frac{B}{s} + B^2L + B^3\sqrt{D} \right] \right).
    \]
\end{theorem}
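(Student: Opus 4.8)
The plan is to reduce the uniform control of $\hat J_\text{NDS}^\lambda$ to the four component uniform-convergence bounds already proved in \cref{prop:nds-T1-convergence,prop:nds-T0-convergence,prop:nds-sigma-convergence,prop:nds-sigma0-convergence}, and then to propagate those through the algebra of the ratio defining $J_\text{NDS}$ in \cref{eq:J-nds}, keeping careful watch on how the regularizer $\lambda$ bounds the denominator below. First I would union-bound over the relevant high-probability events (splitting $\delta$, which is where the $\log(12/\delta)$ in the statement comes from) so that, with probability at least $1-\delta$, uniformly in $\theta$, $|\hat T_{1,\theta}-T_{1,\theta}|\le\varepsilon_1$, $|\hat T_{0,\theta}-T_{0,\theta}|\le\varepsilon_0$, $|\hat\sigma_\theta^2-\sigma_\theta^2|\le\varepsilon_\sigma$, and $|\hat\xi_\theta^2-\xi_\theta^2|\le\varepsilon_\xi$, each $\varepsilon_\bullet$ being the explicit $\Tilde{\bigO}(n^{-1/2})$ bound of the corresponding proposition. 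On this event I would then record the consequences for the regularized deviations $\hat\sigma_{\theta,\lambda}:=\sqrt{\hat\sigma_\theta^2+\lambda}$ and $\hat\xi_{\theta,\lambda}:=\sqrt{\hat\xi_\theta^2+\lambda}$: unconditionally $\hat\sigma_{\theta,\lambda}\ge\sqrt\lambda$; since $\sigma_\theta^2\ge s^2$ and $\xi_\theta^2\ge s^2$ we also get $\hat\sigma_{\theta,\lambda}\ge\sqrt{s^2-\varepsilon_\sigma}\ge s/\sqrt2$ once $n$ is large enough that $\varepsilon_\sigma\le s^2/2$; and, via $a-b=(a^2-b^2)/(a+b)$, $|\hat\sigma_{\theta,\lambda}-\sigma_\theta|\le(\varepsilon_\sigma+\lambda)/s$ and likewise for $\hat\xi_{\theta,\lambda}$.

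Next, fixing $\theta$ and abbreviating $A_\theta=T_{1,\theta}-T_{0,\theta}$, $\hat A_\theta=\hat T_{1,\theta}-\hat T_{0,\theta}$, $c=\Phi^{-1}(1-\alpha)$, I would split
\[
  \bigl|\hat J_\text{NDS}^\lambda(\theta)-J_\text{NDS}(\theta)\bigr|
  \le\Bigl|\frac{\hat A_\theta}{\hat\sigma_{\theta,\lambda}}-\frac{A_\theta}{\sigma_\theta}\Bigr|
  +\frac{c}{\sqrt m}\Bigl|\frac{\hat\xi_{\theta,\lambda}}{\hat\sigma_{\theta,\lambda}}-\frac{\xi_\theta}{\sigma_\theta}\Bigr|,
\]
and treat each quotient difference with $|a/b-a'/b'|\le|a-a'|/|b|+|a'|\,|b-b'|/(|b|\,|b'|)$. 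The signal term yields $|\hat A_\theta-A_\theta|/\hat\sigma_{\theta,\lambda}$, which is $\smallO(n^{-1/3})$ because $\hat\sigma_{\theta,\lambda}\gtrsim s$ and $|\hat A_\theta-A_\theta|\le\varepsilon_1+\varepsilon_0=\Tilde{\bigO}(n^{-1/2})$, plus $|A_\theta|\,|\sigma_\theta-\hat\sigma_{\theta,\lambda}|/(\hat\sigma_{\theta,\lambda}\sigma_\theta)$; for the latter I use $|A_\theta|\le 2B$, the step-one bound on $|\sigma_\theta-\hat\sigma_{\theta,\lambda}|$, and then split its numerator $\varepsilon_\sigma+\lambda$, bounding the denominator by $\sqrt\lambda\cdot s\cdot s$ for the $\varepsilon_\sigma$ piece and by $(s/\sqrt2)\cdot s\cdot s$ for the $\lambda$ piece, which contributes $\bigO(B\varepsilon_\sigma/(s^2\sqrt\lambda))$ and $\bigO(B\lambda/s^3)$ respectively. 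The threshold quotient is handled identically (using $\xi_\theta\le B$), but the $1/\sqrt m$ prefactor makes all of its pieces lower-order. Every estimate only used the suprema from step one, so it is uniform in $\theta$ and we may take $\sup_\theta$ at the end.

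Finally I would set $\lambda=n^{-1/3}$: the regularization-bias pieces are $\bigO(\lambda)=\bigO(n^{-1/3})$, and the ``(component deviation)$/\sqrt\lambda$'' pieces are $\Tilde{\bigO}(n^{-1/2}/n^{-1/6})=\Tilde{\bigO}(n^{-1/3})$. Plugging the explicit $\varepsilon_\sigma$ of \cref{prop:nds-sigma-convergence} into $\bigO(B\varepsilon_\sigma/(s^2\sqrt\lambda))$ gives exactly the displayed $\tfrac{16B^2L}{s^2 n^{1/3}}$ and $\tfrac{5B^3}{s^2 n^{1/3}}\sqrt{2\log\tfrac{12}{\delta}+2D\log(4R\sqrt n)}$ terms, the $\lambda$-piece gives $\tfrac{2B}{s^3 n^{1/3}}$, and everything else — the $\varepsilon_1,\varepsilon_0,\varepsilon_\xi$ contributions (which are divided only by $s$, not $\sqrt\lambda$), the $\bigO(n^{-1})$ and $\bigO(n^{-5/6})$ leftovers from the concentration bounds, and the entire threshold term — is $\smallO(n^{-1/3})$. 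Since for fixed $\delta$ the explicit bound is $\Tilde{\bigO}(n^{-1/3})$ in $n$ with purely logarithmic $\delta$-dependence, converting to an in-probability statement yields the claimed $\Tilde{\bigO}_P$ rate.

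The crux is controlling the denominator. Because $\hat\sigma_\theta^2$ can be arbitrarily close to $0$ on atypical draws even when $\sigma_\theta^2\ge s^2$, the crude bound $\hat\sigma_{\theta,\lambda}\ge\sqrt\lambda$ must be used for the variance-estimation piece, which costs a $1/\sqrt\lambda$ factor; trading that against the $\bigO(\lambda)$ bias is precisely what forces $\lambda\asymp n^{-1/3}$ and the resulting $n^{-1/3}$ rate, strictly slower than the $n^{-1/2}$ rate of the component estimators. A secondary source of care is bookkeeping: within each three-factor denominator (and even within a single term) one must decide which factor absorbs the $\sqrt\lambda$ bound and which absorb the $s$ bounds, and this choice is what pins down the precise powers of $s$ and $B$ in the stated expression.
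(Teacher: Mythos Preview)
Your proposal is correct and takes essentially the same route as the paper: union-bound the four component propositions, push the resulting uniform deviations through the ratio defining $J_\text{NDS}$, floor the random denominator by $\sqrt\lambda$ for the variance-estimation piece, and balance $\lambda=n^{-1/3}$. The only notable difference is bookkeeping: the paper treats $\hat T_1/\hat\sigma_\lambda$ and $\hat T_0/\hat\sigma_\lambda$ as separate terms rather than combining into $A=T_1-T_0$, and for each it telescopes through the \emph{deterministic} intermediate $\sigma_{\theta,\lambda}:=\sqrt{\sigma_\theta^2+\lambda}$ (writing $\hat T/\hat\sigma_\lambda - T/\sigma$ as $(\hat T/\hat\sigma_\lambda-\hat T/\sigma_\lambda)+(\hat T/\sigma_\lambda-\hat T/\sigma)+(\hat T/\sigma-T/\sigma)$), so the $\lambda$-bias piece is bounded using only $\sigma_{\theta,\lambda},\sigma_\theta\ge s$ and never needs your on-event bound $\hat\sigma_{\theta,\lambda}\ge s/\sqrt2$ or its ``$n$ large enough'' qualifier --- this is what recovers the exact constant $2B/(s^3 n^{1/3})$ rather than $2\sqrt2\,B/(s^3 n^{1/3})$; the $\log(12/\delta)$ likewise comes from the paper's three-way split of $\delta$ over those separate terms, whereas your single four-way union bound would actually yield the slightly sharper $\log(8/\delta)$.
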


\begin{proof}
    Let $z_\alpha$ denote the standard normal $\alpha$-quantile. We start by making the following decomposition:
    \begin{align*}
        \sup_{\theta\in\Theta}| \hat{J}_\text{NDS}^\lambda - J_\text{NDS} |
        &=
        \sup_{\theta\in\Theta} \left| \frac{\hat{T}_{1,\theta} - \hat{T}_{0,\theta}}{\hat{\sigma}_{\theta,\lambda}} - \frac{T_{1,\theta} - T_{0,\theta}}{\sigma_\theta} + \frac{\xi_\theta}{\sqrt{n}\,\sigma_\theta}z_{1-\alpha} - \frac{\hat{\xi}_{\theta,\lambda}}{\sqrt{n}\, \hat{\sigma}_{\theta,\lambda}} z_{1-\alpha} \right| \\
        &\leq 
        \underbrace{
        \sup_{\theta\in\Theta} \left| \frac{\hat{T}_{1,\theta}}{\hat{\sigma}_{\theta,\lambda}} - \frac{T_{1,\theta}}{\sigma_\theta} \right| }_{(a)}
        +
        \underbrace{
        \sup_{\theta\in\Theta} \left| \frac{\hat{T}_{0,\theta}}{\hat{\sigma}_\theta} - \frac{T_{0,\theta}}{\sigma_\theta} \right|
        }_{(b)}
        +
        \frac{z_{1-\alpha}}{\sqrt{n}}
        \underbrace{
        \sup_{\theta\in\Theta} \left| \frac{\hat{\xi}_{\theta,\lambda}}{\hat{\sigma}_{\theta,\lambda}} - \frac{\xi_\theta}{\sigma_\theta} \right|
        }_{(c)}.
    \end{align*}
    Define $\sigma_{\theta,\lambda}^2 = \sigma_\theta^2 + \lambda$ and notice that $\hat{\sigma}_\theta^2 \geq 0$ since it is the biased variance estimator; this implies that $\sigma_{\theta,\lambda}^2 = \sigma_\theta^2 + \lambda \geq s^2 + \lambda$ and $\hat{\sigma}_{\theta,\lambda}^2 = \hat{\sigma}_\theta^2 + \lambda \geq \lambda$.

    Now, we rewrite (a) (with subscripts $\theta$ omitted) as
    \begin{align*}
        \sup_{\theta\in\Theta} \left| \frac{\hat{T}_{1}}{\hat{\sigma}_{\lambda}} - \frac{T_{1}}{\sigma} \right|
        &\leq
        \sup_{\theta\in\Theta} \left| \frac{\hat{T}_{1}}{\hat{\sigma}_{\lambda}} - \frac{\hat{T}_{1}}{\sigma_{\lambda}} \right|
        + 
        \sup_{\theta\in\Theta} \left| \frac{\hat{T}_{1}}{\sigma_{\lambda}} - \frac{\hat{T}_{1}}{\sigma} \right|
        + \sup_{\theta\in\Theta} \left| \frac{\hat{T}_{1}}{\sigma} - \frac{T_{1}}{\sigma} \right| \\
        &=
        \sup_{\theta\in\Theta} | \hat{T}_1 | \left| \frac{\hat{\sigma}_\lambda^2 - \sigma_\lambda^2}{\hat{\sigma}_\lambda \sigma_\lambda(\hat{\sigma}_\lambda + \sigma_\lambda)} \right|
        + \sup_{\theta\in\Theta} |\hat{T}_1| \left| \frac{\sigma_\lambda^2 - \sigma^2}{\sigma_\lambda \sigma (\sigma_\lambda + \sigma)} \right|
        + \sup_{\theta\in\Theta} \frac{1}{\sigma} |\hat{T}_1 - T_1| \\
        &\leq
        \frac{B}{\lambda\sqrt{s^2 + \lambda} + \sqrt{\lambda}(s^2 + \lambda)} \sup_{\theta\in\Theta} |\hat{\sigma}^2 - \sigma^2|
        + \frac{B\lambda}{s(s^2 + \lambda) + s^2\sqrt{s^2 + \lambda}}
        + \frac{1}{s}\sup_{\theta\in\Theta} |\hat{T}_1 - T_1| \\
        &< \frac{B}{\sqrt{\lambda}s^2} \sup_{\theta\in\Theta} |\hat{\sigma}^2 - \sigma^2| + \frac{B\lambda}{s^3}
        + \frac{1}{s} \sup_{\theta\in\Theta} |\hat{T}_1 - T_1|,
    \end{align*}
    where the first term in the last line uses the slowest growing upper bound as $\lambda\to 0$.
    
    Employ \cref{prop:nds-T1-convergence,prop:nds-sigma-convergence} as uniform bounds on $|\hat{T}_1 - T_1|$ and $|\hat{\sigma}^2 - \sigma^2|$ respectively, then take the union bound to give us, with probability at least $1-\delta$,
    \begin{align*}
        \sup_{\theta\in\Theta} \left| \frac{\hat{T}_{1}}{\hat{\sigma}_{\lambda}} - \frac{T_{1}}{\sigma} \right|
        &<
        \frac{8B^2L}{s^2\sqrt{n\lambda}} + \frac{B^3}{s^2 n \sqrt{\lambda}} + \frac{5B^3}{2s^2\sqrt{n\lambda}}\sqrt{2\log\frac{4}{\delta} + 2D\log(4R\sqrt{n})} \\
        &\quad +
        \frac{B\lambda}{s^3} \\
        &\quad + \frac{2L}{s\sqrt{n}} + \frac{B}{s\sqrt{n}} \sqrt{2\log\frac{4}{\delta} + 2D\log(4R\sqrt{n})} \\
        &= \frac{B\lambda}{s^3} + \frac{8B^2L}{s^2\sqrt{n\lambda}} + \frac{2L}{s\sqrt{n}} + \frac{B^3}{s^2n\sqrt{\lambda}}
        + \left( \frac{5B^3}{2s^2\sqrt{n\lambda}} + \frac{B}{s\sqrt{n}} \right)\sqrt{2\log\frac{4}{\delta} + 2D\log(4R\sqrt{n})}.
    \end{align*}
    The best-case (fastest to decay) asymptotic bound follows when $\lambda=n^{-1/3}$, which yields with probability at least $1-\delta$,
    \begin{align*}
        \sup_{\theta\in\Theta} \left| \frac{\hat{T}_1}{\hat{\sigma}_\lambda} - \frac{T_1}{\sigma} \right| 
        <
        \frac{B}{s^3 n^{1/3}} + \frac{8B^2L}{s^2 n^{1/3}}
        + \frac{5B^3}{2s^2 n^{1/3}}\sqrt{2\log\frac{4}{\delta} + 2D\log(4R\sqrt{n})}
        + \smallO(n^{-1/3}).
    \end{align*}
    Term (b) can be handled in a similar manner, except we use \cref{prop:nds-T0-convergence} as a uniform bound on $|\hat{T}_0 - T_0|$. It turns out this gives us the same asymptotic bound as (a). 

    For term (c), we first define $\xi_{\theta,\lambda} = \xi_\theta^2 + \lambda$. Since $\hat{\xi}_\theta^2$ is the biased estimator we have $\hat{\xi}_\theta^2 \geq 0$. Also, $\xi_{\theta,\lambda}^2 = \xi_\theta^2 + \lambda \geq s^2 + \lambda$ and $\hat{\xi}_{\theta
    ,\lambda} = \hat{\xi}_\theta^2 + \lambda \geq \lambda$ so that
    \begin{align*}
        \sup_{\theta\in\Theta} \left| \frac{\hat{\xi}_\lambda}{\hat{\sigma}_\lambda} - \frac{\xi}{\sigma} \right|
        &\leq 
        \sup_{\theta\in\Theta} \left| \frac{\hat{\xi}_\lambda}{\hat{\sigma}_\lambda} - \frac{\hat{\xi}}{\hat{\sigma}_\lambda} \right| 
        + \sup_{\theta\in\Theta} \left| \frac{\hat{\xi}}{\hat{\sigma}_\lambda} - \frac{\hat{\xi}}{\sigma_\lambda} \right|
        + \sup_{\theta\in\Theta} \left| \frac{\hat{\xi}}{\sigma_\lambda} - \frac{\hat{\xi}}{\sigma} \right|
        + \sup_{\theta\in\Theta} \left| \frac{\hat{\xi}}{\sigma} - \frac{\xi}{\sigma} \right| \\
        &= \sup_{\theta\in\Theta} \left| \frac{\hat{\xi}_\lambda^2 - \hat{\xi}^2}{ \hat{\sigma}_\lambda (\hat{\xi}_\lambda + \hat{\xi}) } \right|
        + \sup_{\theta\in\Theta} |\hat{\xi}| \left| \frac{\hat{\sigma}_\lambda^2 - \sigma_\lambda^2}{\hat{\sigma}_\lambda \sigma_\lambda (\hat{\sigma}_\lambda + \sigma_\lambda) } \right|
        + \sup_{\theta\in\Theta} |\hat{\xi}|\left| \frac{\sigma_\lambda^2 - \sigma^2}{ \sigma_\lambda \sigma (\sigma_\lambda + \sigma) } \right|
        + \sup_{\theta\in\Theta} \left| \frac{\hat{\xi}^2 - \xi^2}{\sigma(\hat{\xi}+\xi)} \right| \\
        &\leq 1
        + \frac{B}{\lambda\sqrt{s^2+\lambda} + \sqrt{\lambda}(s^2 +\lambda) }\sup_{\theta\in\Theta} \left| \hat{\sigma}^2 - \sigma^2 \right|
        + \frac{B\lambda}{s(s^2+\lambda)} + \frac{1}{s^2} \sup_{\theta\in\Theta} \left| \hat{\xi}^2 - \xi^2 \right| \\
        &< 1 + \frac{B}{\sqrt{\lambda}s^2}\sup_{\theta\in\Theta} \left| \hat{\sigma}^2 - \sigma^2 \right| + \frac{B\lambda}{s^3} + \frac{1}{s^2} \sup_{\theta\in\Theta} \left| \hat{\xi}^2 - \xi^2 \right|.
    \end{align*}
    \cref{prop:nds-sigma-convergence,prop:nds-sigma0-convergence} are uniform bounds on $|\hat{\sigma}^2 - \sigma^2|$ and $|\hat{\xi}^2 - \xi^2|$. Then, taking the union bound gives us, with probability at least $1-\delta$,
    \begin{align*}
        \sup_{\theta\in\Theta} \left| \frac{\hat{\xi}_\lambda}{\hat{\sigma}_\lambda} - \frac{ \xi}{\sigma} \right|
        &< 1 + \frac{B\lambda}{s^3}
        + \left[ \frac{B}{s^2\sqrt{\lambda}} + \frac{1}{s^2} \right] \frac{8BL}{\sqrt{n}} 
        + \left[ \frac{B}{s^2\sqrt{\lambda}} + \frac{7}{s^2} \right]\frac{B^2} {n} \\
        &\quad+ \left[ \frac{B}{s^2\sqrt{\lambda}} + \frac{2}{s^2} \right] \frac{5B^2}{2\sqrt{n}} \sqrt{2\log\frac{4}{\delta} + 2D\log(4R\sqrt{n})}.
    \end{align*}
    When $\lambda=n^{-1/3}$, we have $1/\sqrt{n\lambda} = n^{-1/3}$ and $1/(n\sqrt{\lambda}) = n^{-5/6}$ so that the overall order of this bound is $O(1)$. Then, multiplying by the factor $z_{1-\alpha}/\sqrt{n}$ in front of (c) gives us order $O(n^{-1/2})$.

    Finally, taking the union bound of (a), (b), and (c) gives us, with probability at least $1-\delta$,
    \begin{align*}
        \sup_{\theta\in\Theta} \left| \hat{J}_\text{NDS}^\lambda - J_\text{NDS} \right|
        &< 
        \frac{2B}{s^3 n^{1/3}} + \frac{16B^2L}{s^2 n^{1/3}}
        + \frac{5B^3}{s^2 n^{1/3}}\sqrt{2\log\frac{12}{\delta} + 2D\log(4R\sqrt{n})}
        + \smallO(n^{-1/3}).
    \end{align*}
\end{proof}

\begin{theorem}[Consistency {[\citealp{Vaart_1998}, Theorem 5.7]}] \label{thm:vdv-consistency}
    Let $\hat{J}_n$ be the finite sample estimate of $J$ with $n$ samples, as a function of the parameters $\theta$, and suppose that $J_n$ satisfies uniform convergence
    \[
        \sup_{\theta\in\Theta} | \hat{J}_n(\theta) - J(\theta) | \xrightarrow[]{P} 0.
    \]
    If $\theta^*$ is a well-separated maximum of $J$, then any sequence of estimators $\hat{\theta}_n$ approximately maximizing $\hat{J}_n$, i.e.
    \[
        \hat{J}_n(\theta^*) - \hat{J}_n(\hat{\theta}_n) \leq o_P(1),
    \]
    is a consistent estimator, i.e.\ $\hat{\theta}_n \xrightarrow[]{p} \theta^*$.
\end{theorem}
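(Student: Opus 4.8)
This is the classical argmax-consistency result (the Wald-type consistency theorem), stated as \citet[Theorem~5.7]{Vaart_1998}; the plan is to recall its short proof in our notation. First I would make the hypothesis ``well-separated maximum'' precise: using the metric of the ambient Banach space in which $\Theta$ lives, it means that for every $\varepsilon>0$,
\[
    \sup\{\, J(\theta) \,:\, \theta\in\Theta,\; \norm{\theta-\theta^*}\geq\varepsilon \,\} \;<\; J(\theta^*).
\]
Fix such an $\varepsilon$ and set $\eta := J(\theta^*) - \sup\{\, J(\theta) : \norm{\theta-\theta^*}\geq\varepsilon \,\} > 0$, a constant depending on $\varepsilon$ but not on $n$.

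The argument then reduces to a deterministic sandwich on a high-probability event. Let $A_n$ be the event on which \emph{both} $\sup_{\theta\in\Theta}\abs{\hat J_n(\theta)-J(\theta)} < \eta/4$ and $\hat J_n(\theta^*)-\hat J_n(\hat\theta_n) < \eta/4$. On $A_n$, chaining the two inequalities gives
\[
    J(\hat\theta_n) \;\geq\; \hat J_n(\hat\theta_n)-\tfrac{\eta}{4} \;>\; \hat J_n(\theta^*)-\tfrac{\eta}{2} \;>\; J(\theta^*)-\tfrac{3\eta}{4} \;>\; J(\theta^*)-\eta \;=\; \sup_{\norm{\theta-\theta^*}\geq\varepsilon} J(\theta),
\]
so $\hat\theta_n$ cannot belong to $\{\theta:\norm{\theta-\theta^*}\geq\varepsilon\}$; that is, $A_n\subseteq\{\norm{\hat\theta_n-\theta^*}<\varepsilon\}$. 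Finally, $\Pr(A_n)\to 1$, since the first defining event has probability tending to one by the assumed uniform convergence $\sup_\theta\abs{\hat J_n(\theta)-J(\theta)}\xrightarrow[]{P}0$ and the second by the assumed near-optimality $\hat J_n(\theta^*)-\hat J_n(\hat\theta_n)\leq o_P(1)$. Hence $\Pr(\norm{\hat\theta_n-\theta^*}\geq\varepsilon)\to 0$, and since $\varepsilon>0$ was arbitrary, $\hat\theta_n\xrightarrow[]{p}\theta^*$.

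There is no real obstacle here; the two things worth stating carefully are that (i) the uniform convergence must hold over the \emph{entire} admissible set $\Theta$ (not merely a fixed neighborhood of $\theta^*$), which is precisely what \cref{thm:nds-snr-convergence-main,thm:hsic-snr-convergence-main} supply and what justifies applying the bound at the \emph{random} point $\hat\theta_n$, and that (ii) the $o_P(1)$ slack in the approximate-maximization hypothesis is exactly what a gradient-based optimizer is assumed to deliver. Composing this theorem with \cref{thm:nds-snr-convergence-main,thm:hsic-snr-convergence-main} then shows that a successful maximizer of $\hat J_\text{NDS}^\lambda$ (resp.\ $\hat J_{\text{HSIC}}^\lambda$) converges in probability to the population-optimal critic (resp.\ kernel pair), and therefore, via the asymptotic power expressions \eqref{eq:nds-asymptotic-power} and \eqref{eq:hsic-asymptotic-power}, to the asymptotically most powerful test in the chosen family. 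Alternatively, one may simply invoke \citet[Theorem~5.7]{Vaart_1998} directly and append these two remarks.
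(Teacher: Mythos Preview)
The paper does not supply its own proof of this statement: it is stated as a direct citation of \citet[Theorem~5.7]{Vaart_1998} with no argument given. Your proposal reproduces the standard Wald-type argument from that reference, and it is correct; so in effect you have written out the proof the paper is invoking by citation.
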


\section{Uniform Convergence: HSIC} \label{sec:hsic-proof}

\subsection{Preliminaries}
We start by defining the notation used in the proofs. Let kernels $k_\omega$ and $l_\gamma$ be parameterized by some $\omega\in\Omega$ and $\gamma\in\Gamma$, with $\Theta\subseteq\Omega\times\Gamma$ as the joint parameter space. The samples $(X_i, Y_i)\sim\P_{xy}$ are drawn i.i.d. from the joint distribution. We denote the $n\times n$ gram matrices of $k_\omega$ and $l_\gamma$ by $K^{(\omega)}$ and $L^{(\gamma)}$ respectively. We will often omit the kernel parameters $\omega$ and $\gamma$ when it is clear from the context.

Let $\eta$ be the HSIC test statistic and $\heta$ to be its U-statistic estimator given by
\[ \heta = \frac{1}{(n)_4}\sum_{(i,j,q,r)\in\mathbf{i}^n_4} H_{ijqr}, \]
where $(n)_k=n!/(n-k)!$ is the Pochhammer symbol and $\mathbf{i}^n_4$ is all possible 4-tuples drawn without replacement from 1 to n. $H$ is the kernel gram matrix of the U-statistic defined by
\[ H_{ijqr} = \frac{1}{4!}\sum^{(i,j,q,r)}_{(a,b,c,d)} K_{ab}\left( L_{ab} + L_{cd} - 2L_{ac} \right), \]
where sum represents all $4!$ combinations of tuples $(a,b,c,d)$ that can be selected without replacement from $(i,j,q,r)$.

\subsection{Assumptions} \label{sec:hsic-assumptions}
Our main uniform convergence results require the following assumptions.

\begin{assumplist}[label=(\Alph*')]
    \item The set of kernel parameters $\Omega$ lies in a Banach space of dimension $D_\Omega$, and the set of kernel parameters $\Gamma$ lies in a Banach space of dimension $D_\Gamma$. Furthermore, each parameter space is bounded by $R_\Omega$ and $R_\Gamma$ respectively, i.e.,
    \begin{align*}
        \Omega \subseteq \{\omega \,\,|\,\, \|\omega\|\leq R_\Omega\}, \\
        \Gamma \subseteq \{\gamma \,\,|\,\, \|\gamma\|\leq R_\Gamma\}.
    \end{align*}
    \label{assumption:hsic-A}
    \vspace{-10pt}
    \item Both kernels $k$ and $l$ are Lipschitz with respect to the parameter space: for all $x,x'\in\X$ and $\omega,\omega'\in\Omega$
    \[ |k_\omega(x,x')-k_{\omega'}(x,x')| \leq L_k\|\omega-\omega'\| , \]
    and for all $y,y'\in\Y$ and $\gamma,\gamma'\in\Gamma$
    \[ |l_\gamma(y,y')-l_{\gamma'}(y,y')| \leq L_l\|\gamma-\gamma'\|. \]
    \label{assumption:hsic-B}
    \vspace{-10pt}
    \item The kernels $k_\omega$ and $l_v$ are uniformly bounded: 
    \begin{align*}
        \sup_{\omega\in\Omega} \sup_{x\in\X} k_\omega(x,x) \leq \nu_k, \\
        \sup_{\gamma\in\Gamma} \sup_{x\in\Y} l_\gamma(y,y) \leq \nu_l.
    \end{align*}
    For the kernels we use in practice, $\nu_k=\nu_l=1$.
    \label{assumption:hsic-C}
\end{assumplist}

\subsection{Lemmas}

\begin{lemma}[Concentration of $\hat{\sigma}_{\omega,\gamma}^2$] \label{lem:hsic-var-concentration}
    For any kernels $k$ and $l$ satisfying \cref{assumption:hsic-A}, with probability at least $1-\delta$ we have
    \[ |\hsigma^2 - \E\hsigma^2| \leq 6144\nu^2_k\nu^2_l\sqrt{\frac{2}{n}\log\frac{2}{\delta}}. \]
\end{lemma}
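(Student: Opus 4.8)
The plan is to apply McDiarmid's bounded-differences inequality to $\hsigma^2 = 16(R - \widehat{\HSIC}_{\rm u}^2)$, in the same style as the proof of \cref{lem:nds-var-concentration}, after first pinning down the range of every quantity involved. From \cref{assumption:hsic-C} together with positive definiteness of $k_\omega$ and $l_\gamma$ (Cauchy--Schwarz gives $|k_\omega(x,x')| \le \sqrt{k_\omega(x,x)\,k_\omega(x',x')} \le \nu_k$, and likewise $|l_\gamma(y,y')| \le \nu_l$), each entry of the $U$-statistic core obeys $|H_{ijqr}| \le \nu_k(\nu_l + \nu_l + 2\nu_l) = 4\nu_k\nu_l$; the same bound holds for $h(i,j,q,r)$ and hence for the conditional averages $\hat h_1(i) := \frac{(n-4)!}{(n-1)!}\sum_{(j,q,r) \in \mathbf{i}^n_3 \setminus \{i\}} h(i,j,q,r)$. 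Consequently $|\widehat{\HSIC}_{\rm u}| \le 4\nu_k\nu_l$, $R = \frac1n\sum_i \hat h_1(i)^2 \le 16\nu_k^2\nu_l^2$, and $0 \le \hsigma^2 \le 256\nu_k^2\nu_l^2$.

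Next I would bound the effect on $\hsigma^2$ of replacing a single observation $(X_k, Y_k)$ by an arbitrary $(x', y')$, splitting it as $16$ times the change in $R$ plus $16$ times the change in $\widehat{\HSIC}_{\rm u}^2$. For $\widehat{\HSIC}_{\rm u}$, only the $\frac{(n)_4 - (n-1)_4}{(n)_4} = \frac4n$ fraction of ordered $4$-tuples that contain index $k$ are affected, each $H$-entry by at most $8\nu_k\nu_l$, so $\widehat{\HSIC}_{\rm u}$ changes by at most $\frac{32\nu_k\nu_l}{n}$; using $a^2 - b^2 = (a-b)(a+b)$ with $|a|,|b| \le 4\nu_k\nu_l$, $\widehat{\HSIC}_{\rm u}^2$ changes by at most $\frac{256\nu_k^2\nu_l^2}{n}$. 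For $R = \frac1n\sum_i \hat h_1(i)^2$ I would separate the single index $i = k$ (for which $\hat h_1(i)^2$ can change by at most $16\nu_k^2\nu_l^2$, contributing $\frac{16\nu_k^2\nu_l^2}{n}$) from the $n-1$ indices $i \ne k$ (for each of which only a $\frac{3}{n-1}$ fraction of triples $(j,q,r)$ involve $k$, so $\hat h_1(i)$ changes by at most $\frac{24\nu_k\nu_l}{n-1}$ and $\hat h_1(i)^2$ by at most $\frac{192\nu_k^2\nu_l^2}{n-1}$, contributing at most $\frac{192\nu_k^2\nu_l^2}{n}$ in total). Adding these shows $\hsigma^2$ has bounded differences $c_k \le c\,\nu_k^2\nu_l^2/n$ with $c$ an explicit absolute constant, small enough that McDiarmid's inequality below yields the stated coefficient $6144$.

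Finally I would plug the $n$ constants $c_k = c\,\nu_k^2\nu_l^2/n$ into McDiarmid's inequality: with probability at least $1-\delta$, $|\hsigma^2 - \E\hsigma^2| \le \sqrt{\tfrac12 \sum_{k=1}^n c_k^2 \log\tfrac2\delta} = \tfrac{c}{\sqrt2}\,\nu_k^2\nu_l^2\,\tfrac{1}{\sqrt n}\sqrt{\log\tfrac2\delta} \le 6144\,\nu_k^2\nu_l^2\sqrt{\tfrac2n\log\tfrac2\delta}$ once the constant is checked. The only genuinely delicate step is the bounded-differences count for $R$: because $R$ is an average of squared conditional averages, swapping $(X_k,Y_k)$ affects both the single summand indexed by $i=k$ and (weakly) every other summand, so one has to do a two-level tally of how many nested tuples are touched --- but the exact constant is immaterial downstream, where only the $\Tilde{\bigO}(n^{-1/2})$ rate is used.
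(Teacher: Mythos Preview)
Your proposal is correct and follows essentially the same route as the paper: apply McDiarmid to $\hsigma^2 = 16(R - \heta^2)$, bound the change in $\heta^2$ via $|\heta - \heta'| \le 32\nu_k\nu_l/n$, and separately bound the change in $R$. The only cosmetic difference is that the paper writes $R$ as a single sum over 7-tuples $(i,j,q,r,b,c,d)$ and counts how many contain the perturbed index, whereas you keep $R = \frac1n\sum_i \hat h_1(i)^2$ and do a two-level tally; your bookkeeping in fact yields a slightly smaller bounded-difference constant ($c = 16(208+256) = 7424 < 12288$), comfortably delivering the stated $6144$.
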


\begin{proof}
    We apply McDiarmid's inequality to $\hsigma^2$. First, we show that the variance estimator satisfies bounded differences. For convenience, we denote $(i,j,q,r)\in{\bf i}^n_4$ simply as $(i,j,q,r)$, and $(i,j,q)\backslash k$ to be the set of 3-tuples drawn without replacement from ${\bf i}^n_3$ that exclude the number $k$. Recall that 
    \[ \hsigma^2 = 16\left(\frac{1}{(n)_4(n-1)_3} \sum_{\substack{(i,j,q,r) \\ (b,c,d)\backslash i}} H_{ijqr}H_{ibcd} - \heta^2 \right). \]
    Let $F$ denote the kernel tensor $H$ but with sample $(X_\ell,Y_\ell)$ replaced by $(X'_\ell, Y'_\ell)$ so that $F$ agrees with $H$ except at indices $\ell$, and let $\heta'$ and $\hsigma'^2$ denote the HSIC and its variance estimators according to this updated sample set. The deviation is then
    \begin{align*}
        |\hsigma^2 - \hsigma'^2| \leq \frac{16}{(n)_4(n-1)_3} \sum_{\substack{(i,j,q,r) \\ (b,c,d)\backslash i}} |H_{ijqr}H_{ibcd} - F_{ijqr}F_{ibcd}| + 16|\heta^2 - \heta'^2|.
    \end{align*}
    We bound the first term by noticing that $\Delta:=H_{ijqr}H_{ibcd} - F_{ijqr}F_{ibcd}$ is zero when none of the indices $\{i,j,q,r,b,c,d\}$ is $\ell$. Let $S:=\{ (i,j,q,r,b,c,d) : (i,j,q,r)\in{\bf i}^n_4, (b,c,d)\in{\bf i}^n_3\backslash\{i\}, \ell \in \{i,j,q,r,b,c,d\} \}$ be the set of indices where $\Delta$ may be non-zero. By Assumption \ref{assumption:hsic-A} we know that $|\Delta|\leq 32\nu^2_k\nu^2_l$. Thus, we can bound the first term by
    \begin{align*}
        \frac{16}{(n)_4(n-1)_3} \sum_{S} |\Delta| &= \frac{512\nu^2_k\nu^2_l}{(n)_4(n-1)_3}|S| \\
        &= \frac{512\nu^2_k\nu^2_l}{(n)_4(n-1)_3} \left[ \underbrace{4(n-1)^2_3}_{\ell\in\{i,j,q,r\}} + \underbrace{3(n-1)(n-2)_2(n-1)_3}_{\ell\in\{b,c,d\}} - \underbrace{9(n-1)_3(n-2)_2}_{\ell\in\{j,q,r\} \text{ and } \ell\in\{b,c,d\} } \right] \\
        &=512\nu^2_k\nu^2_l \left( \frac{16}{n} - \frac{9}{n-1} \right) \\
        &\leq \frac{8192\nu^2_k\nu^2_l}{n} \quad(\forall n>1).
    \end{align*}
    We can bound the second term using $|\heta|\leq 4\nu_k\nu_l$ and the bounded difference result (\ref{eq:hsic-bounded-difference}) from \cref{prop:hsic-conv}:
    \begin{align*}
        16|\heta^2 - \heta'^2| &= 16|\heta + \heta'||\heta - \heta'| \leq 16 \cdot 8\nu_k\nu_l \cdot \frac{32\nu_k\nu_l}{n} = \frac{4096\nu^2_k\nu^2_l}{n}.
    \end{align*}
    Combining these two terms, the maximal bounded difference for $\hsigma^2$ is 
    \begin{align*}
        |\hsigma^2 - \hsigma'^2| \leq \frac{12288\nu^2_k\nu^2_l}{n}.
    \end{align*}
    Finally, applying McDiarmid's inequality gives us, with probability at least $1-\delta$,
    \[
        |\hsigma^2 - \E\hsigma^2| \leq 6144\nu^2_k\nu^2_l\sqrt{\frac{2}{n}\log\frac{2}{\delta}}
    .\qedhere \]
\end{proof}

\begin{lemma}[Bias of $\hat{\sigma}_{\omega,\gamma}^2$] \label{lem:hsic-var-bias}
    For any kernels $k$ and $l$ satisfying \cref{assumption:hsic-A}, the bias is bounded by
    \[ |\E\hsigma^2 - \sigma^2| \leq \frac{4608\nu^2_k\nu^2_l}{n}. \]
\end{lemma}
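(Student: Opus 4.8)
The plan is to route the bias through the Hoeffding (ANOVA) decomposition of the degree-$4$ U-statistic $\heta = \widehat{\HSIC}_{\rm u}$, reducing the claim to a comparison of two nonnegative $O(1/n)$ quantities. Write $h_1(z) = \E[H(z, Z_2, Z_3, Z_4)]$ for the first Hoeffding projection of the (symmetrized) kernel $H_{ijqr}$ of \cref{prop:asymptotics}, where $Z_i = (X_i,Y_i)$; standard U-statistic theory identifies the asymptotic variance there as $\sigma^2 = 16\,\Var(h_1(Z_1)) = 16(\E[h_1(Z_1)^2] - \eta^2)$, which is exactly what the estimator $\hsigma^2 = 16(R - \heta^2)$ is built to mimic, with $R$ playing the role of $\E[h_1(Z_1)^2]$. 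The key observation is that $R = \frac1n\sum_{i=1}^n \bar h_{1}(Z_i)^2$, where $\bar h_{1}(Z_i) = \frac{1}{(n-1)_3}\sum_{(j,q,r)\in\mathbf{i}_3^n\setminus\{i\}} H_{ijqr}$ is a complete degree-$3$ U-statistic in the remaining points with $\E[\bar h_1(Z_i)\mid Z_i] = h_1(Z_i)$. Hence, by symmetry and the conditional-variance identity, $\E[R] = \E[h_1(Z_1)^2] + \E\bigl[\Var(\bar h_1(Z_1)\mid Z_1)\bigr]$, while $\E[\heta^2] = \eta^2 + \Var(\heta)$; subtracting,
\[
    \E[\hsigma^2] - \sigma^2 = 16\Bigl(\E\bigl[\Var(\bar h_1(Z_1)\mid Z_1)\bigr] - \Var(\heta)\Bigr).
\]

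Since both bracketed terms are nonnegative, $\bigl|\E[\hsigma^2] - \sigma^2\bigr| \le 16\max\bigl\{\E[\Var(\bar h_1(Z_1)\mid Z_1)],\,\Var(\heta)\bigr\}$ (using a triangle inequality instead gives a cruder but still $O(1/n)$ bound). It remains to bound the variance of each of these complete U-statistics. I would use Hoeffding's averaging representation: a complete degree-$m$ U-statistic of $N$ i.i.d.\ points is an average of copies of a sample mean over $\lfloor N/m\rfloor$ i.i.d.\ kernel evaluations, so by Jensen its variance is at most $\E[\mathrm{kernel}^2]/\lfloor N/m\rfloor$. Combined with the pointwise bound $|H_{ijqr}| \le 4\nu_k\nu_l$ — each of the $24$ terms $K_{ab}(L_{ab}+L_{cd}-2L_{ac})$ has modulus at most $4\nu_k\nu_l$, since $|K_{ab}|\le\nu_k$ and $|L_{\cdot\cdot}|\le\nu_l$ by Cauchy--Schwarz for the positive-definite kernels $k,l$ — this gives $\Var(\heta) \le 16\nu_k^2\nu_l^2/\lfloor n/4\rfloor$ and, applying the same inequality conditionally on $Z_1$ to $\bar h_1$ and then taking expectations, $\E[\Var(\bar h_1(Z_1)\mid Z_1)] \le 16\nu_k^2\nu_l^2/\lfloor (n-1)/3\rfloor$.

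Substituting these and simplifying the floors by their worst case over the admissible range (e.g.\ $\lfloor n/4\rfloor \ge (n-3)/4$) yields a bound of the form $c\,\nu_k^2\nu_l^2/n$; what remains is purely arithmetic bookkeeping to verify the constant can be taken to be $4608$ uniformly in $n$, i.e.\ that a ratio like $c_0\,n/(n-3)$ stays below $4608$ for all admissible $n$. I expect this constant-chasing — in particular deciding whether the crude Hoeffding variance bound suffices or whether one needs the sharper explicit formula $\Var(\heta) = \binom{n}{4}^{-1}\sum_{c=1}^4\binom{4}{c}\binom{n-4}{4-c}\zeta_c$ (and its degree-$3$ analogue for $\bar h_1$) to keep both pieces under $4608\nu_k^2\nu_l^2/n$ — to be the only real obstacle; the structural argument above is otherwise routine. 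An alternative that sidesteps the projection $h_1$ is to expand $R - \heta^2$ directly as a signed combination of products $H_{\cdot}H_{\cdot}$ over index tuples and bound the discrepancy by $16\nu_k^2\nu_l^2$ times the fraction of ``colliding'' tuples, which also works but is messier.
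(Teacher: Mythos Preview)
Your approach is correct and genuinely different from the paper's. The paper does exactly the ``alternative'' you mention at the end: it expands $\E\hsigma^2$ and $\sigma^2$ as sums of $\E[H_{ijqr}H_{abcd}]$ over index tuples, separates the tuples with all-distinct indices from the ``colliding'' ones, bounds each colliding term by $16\nu_k^2\nu_l^2$, and counts collisions to get the factor $9/n$ (whence $512\cdot 9 = 4608$). Your route via the identity
\[
\E[\hsigma^2]-\sigma^2 \;=\; 16\bigl(\E[\Var(\bar h_1(Z_1)\mid Z_1)] - \Var(\heta)\bigr)
\]
is cleaner and more structural: it reduces the bias to a difference of two U-statistic variances, each handled by Hoeffding's averaging bound with the kernel bound $|H|\le 4\nu_k\nu_l$. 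The constant-chasing you flag as the obstacle is in fact benign: since $\lfloor n/4\rfloor \le \lfloor(n-1)/3\rfloor$ for $n\ge 4$, the $\Var(\heta)$ term dominates, giving $|\E\hsigma^2-\sigma^2|\le 256\nu_k^2\nu_l^2/\lfloor n/4\rfloor \le 1024\nu_k^2\nu_l^2/(n-3)$, and $1024n/(n-3)\le 4608$ already at $n=4$. So the crude Hoeffding bound suffices; no need for the sharper $\zeta_c$ formula. What the paper's counting buys is that it avoids invoking the Hoeffding representation and keeps everything at the level of index combinatorics, which matches the style of the surrounding lemmas; what your approach buys is a transparent reason \emph{why} the bias is $O(1/n)$ (it is literally a difference of two $O(1/n)$ variances) and a tighter asymptotic constant.
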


\begin{proof}
    The expectation of the variance estimator is 
    \[ \E\hsigma^2 = 16\left( \frac{1}{(n)_4(n-1)_3} \sum_{\substack{ (i,j,q,r)\\(b,c,d)\backslash i }} \E[H_{ijqr}H_{ibcd}] - \frac{1}{(n)^2_4}\sum_{\substack{ (i,j,q,r)\\(a,b,c,d) }} \E[H_{ijqr}H_{abcd}] \right). \]
    First, we can break down the left-hand sum into only terms of $\E[H_{1234}H_{1567}]$ by considering the cases where $\{i,j,q,r,b,c,d\}$ are unique. Let $S=\{(i,j,q,r,b,c,d) : (i,j,q,r)\in{\bf i}^n_4, (b,c,d)\in{\bf i}^n_3\backslash\{i\} \}$ be the set of all possible indices of our left-hand sum. It follows that
    \begin{align*}
        \sum_{S} \E[H_{ijqr}H_{ibcd}] = \sum_{(i,j,q,r,b,c,d)\in{\bf i}^n_7}\E[H_{ijqr}H_{ibcd}] + \sum_{S\backslash{\bf i}^n_7} \E[H_{ijqr}H_{ibcd}].
    \end{align*}
    If all indices are unique, then the expectation $\E[H_{ijqr}H_{ibcd}]$ is equivalent to $\E[H_{1234}H_{1567}]$; otherwise, we can bound the expectation by $16\nu^2_k\nu^2_l$ via Assumption \ref{assumption:hsic-A}. Thus, the bound on the left-hand sum is 
    \begin{align*}
        \sum_{\substack{ (i,j,q,r)\\(b,c,d)\backslash i }} \E[H_{ijqr}H_{ibcd}] \leq (n)_7\E[H_{1234}H_{1567}] + \Big( (n)_4(n-1)_3 - (n)_7 \Big)16\nu^2_k\nu^2_l.
    \end{align*}
    Similarly, we can break down the right-hand sum into only terms of $\E[H_{1234}H_{5678}]$. Let $R=\{(i,j,q,r,a,b,c,d) : (i,j,q,r)\in{\bf i}^n_4, (a,b,c,d)\in{\bf i}^n_4 \}$ be the possible indices of our right-hand sum. We have that
    \begin{align*}
        \sum_{\substack{ (i,j,q,r)\\(a,b,c,d) }} \E[H_{ijqr}H_{abcd}] &= \sum_{(i,j,q,r,a,b,c,d)\in{\bf i}^n_8} \E[H_{ijqr}H_{abcd}] + \sum_{R\backslash{\bf i}^n_8} \E[H_{ijqr}H_{abcd}] \\
        &\leq (n)_8\E[H_{1234}H_{5678}] + \Big( (n)^2_4 - (n)_8 \Big)16\nu^2_k\nu^2_l.
    \end{align*}
    Now, using these two results and Assumption \ref{assumption:hsic-A}, we can compute a bound on the desired bias of $\hsigma^2$:
    \begin{align*}
        |\E\hsigma^2 - \sigma^2| &= 16\left| \frac{1}{(n)_4(n-1)_3} \sum_{\substack{ (i,j,q,r)\\(b,c,d)\backslash i }} \E[H_{ijqr}H_{ibcd}] - \E[H_{1234}H_{1567}] - \frac{1}{(n)^2_4}\sum_{\substack{ (i,j,q,r)\\(a,b,c,d) }} \E[H_{ijqr}H_{abcd}] + \E[H_{1234}H_{5678}] \right| \\
        &\leq 16\left| \left( 1-\frac{(n)_7}{(n)_4(n-1)_3} \right) \left( 16\nu^2_k\nu^2_l - \underbrace{\E[H_{1234}H_{1567}]}_{-16\nu^2_k\nu^2_l \leq \cdot \leq 16\nu^2_k\nu^2_l } \right) + \left(1-\frac{(n)_8}{(n)^2_4}\right)\left( \underbrace{\E[H_{1234}H_{5678}]}_{0 \leq \cdot \leq 16\nu^2_k\nu^2_l } - 16\nu^2_k\nu^2_l \right) \right|. \\
        &\leq 16\left(1 - \frac{(n)_7}{(n)_4(n-1)_3}\right)32\nu^2_k\nu^2_l <  512\nu^2_k\nu^2_l\cdot\frac{9}{n} \quad(\forall n\geq 4)\\
        &= \frac{4608\nu^2_k\nu^2_l}{n}.
    \qedhere \end{align*}
\end{proof}

\subsection{Propositions}

\begin{prop}[Uniform convergence of $\widehat\HSIC_u$] \label{prop:hsic-conv}
    Under \cref{assumption:hsic-A,assumption:hsic-B,assumption:hsic-C}, we have that with probability at least $1-\delta$,
    \[ \sup_{\theta\in\Theta} | \heta_\theta - \eta_\theta | \leq 
    \frac{8\nu_k\nu_l}{\sqrt{n}} \left( \frac{L_k}{\nu_k} + \frac{L_l}{\nu_l} + 2 \sqrt{2\log\frac{2}{\delta} + 2D_\Omega \log (4R_\Omega\sqrt{n}) + 2D_\Gamma\log (4R_\Gamma\sqrt{n}) } \right). \]
\end{prop}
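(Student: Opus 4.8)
The plan is to run the same covering-number argument used for $\hat T_1$ in \cref{prop:nds-T1-convergence}, now over the product parameter space. First I would fix a resolution $\rho>0$, take minimal $\rho$-covers $\U_\Omega$ of $\Omega$ and $\U_\Gamma$ of $\Gamma$ — which by \cref{assumption:hsic-A} and \citet[Proposition 5]{cucker2002mathematical} have at most $(4R_\Omega/\rho)^{D_\Omega}$ and $(4R_\Gamma/\rho)^{D_\Gamma}$ points — and use the product cover $\U=\U_\Omega\times\U_\Gamma$, so every $\theta=(\omega,\gamma)$ has a neighbour $\theta'=(\omega',\gamma')\in\U$ with $\|\omega-\omega'\|\le\rho$ and $\|\gamma-\gamma'\|\le\rho$. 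Then I split
\[
  \sup_{\theta\in\Theta}|\heta_\theta-\eta_\theta|
  \le \underbrace{\sup_{\theta\in\Theta}|\heta_\theta-\heta_{\theta'}|}_{(a)}
  + \underbrace{\max_{\theta'\in\U}|\heta_{\theta'}-\eta_{\theta'}|}_{(b)}
  + \underbrace{\sup_{\theta\in\Theta}|\eta_{\theta'}-\eta_\theta|}_{(c)}.
\]

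For the concentration term $(b)$ I would use that $\heta$ is an unbiased $U$-statistic, so $\E\heta_{\theta'}=\eta_{\theta'}$, and apply McDiarmid's inequality after establishing a bounded-difference constant. Since each summand satisfies $|H_{ijqr}|\le 4\nu_k\nu_l$ (using $|K_{ab}|\le\nu_k$ and $|L_{ab}|\le\nu_l$, which follow from \cref{assumption:hsic-C}), replacing one sample $(X_\ell,Y_\ell)$ alters only the $4(n-1)_3$ of the $(n)_4$ terms $H_{ijqr}$ that involve index $\ell$, each by at most $8\nu_k\nu_l$, so
\begin{equation}\label{eq:hsic-bounded-difference}
  |\heta_\theta - \heta'_\theta| \le \frac{8\nu_k\nu_l\cdot 4(n-1)_3}{(n)_4} = \frac{32\nu_k\nu_l}{n}.
\end{equation}
McDiarmid then gives $|\heta_{\theta'}-\eta_{\theta'}|\le 16\nu_k\nu_l\sqrt{\tfrac2n\log\tfrac2\delta}$ at a fixed $\theta'$ with probability $1-\delta$, and a union bound over the $|\U|\le(4R_\Omega/\rho)^{D_\Omega}(4R_\Gamma/\rho)^{D_\Gamma}$ cover points turns $\log\tfrac2\delta$ into $\log\tfrac2\delta+D_\Omega\log\tfrac{4R_\Omega}\rho+D_\Gamma\log\tfrac{4R_\Gamma}\rho$.

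For the discretization terms $(a)$ and $(c)$ I would show that both $\heta$ and $\eta$ are Lipschitz in $(\omega,\gamma)$. Writing $H_{ijqr}=\tfrac1{24}\sum_{(a,b,c,d)}K_{ab}(L_{ab}+L_{cd}-2L_{ac})$, perturbing $\omega\to\omega'$, $\gamma\to\gamma'$ and using $|k_\omega-k_{\omega'}|\le L_k\|\omega-\omega'\|$, $|l_\gamma-l_{\gamma'}|\le L_l\|\gamma-\gamma'\|$ (\cref{assumption:hsic-B}) together with $|L_{ab}|+|L_{cd}|+2|L_{ac}|\le 4\nu_l$ and $|K_{ab}|\le\nu_k$, I obtain $|H_{ijqr}-H'_{ijqr}|\le 4\nu_l L_k\|\omega-\omega'\|+4\nu_k L_l\|\gamma-\gamma'\|$; averaging over the tuples preserves this, so $|\heta_\theta-\heta_{\theta'}|\le 4(\nu_l L_k+\nu_k L_l)\rho$, and the same bound holds for $|\eta_\theta-\eta_{\theta'}|$ since $\eta$ is an expectation of the same bounded, parameter-Lipschitz quantity. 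Combining $(a)+(b)+(c)$ and choosing $\rho=1/\sqrt n$ gives
\[
  \sup_{\theta\in\Theta}|\heta_\theta-\eta_\theta|
  \le \frac{8\nu_k\nu_l}{\sqrt n}\left(\frac{L_k}{\nu_k}+\frac{L_l}{\nu_l}
  + 2\sqrt{2\log\tfrac2\delta+2D_\Omega\log(4R_\Omega\sqrt n)+2D_\Gamma\log(4R_\Gamma\sqrt n)}\right),
\]
which is exactly the claim. The step I expect to be the main obstacle is the Lipschitz estimate in $(a)$/$(c)$: because $H_{ijqr}$ mixes entries of $k$ and $l$ multiplicatively, one must carefully bound one family of factors while perturbing the other and keep track of the combinatorial weight ($=4$) carried by the $L$-terms inside each $H_{ijqr}$; once that constant is pinned down, everything else is a routine McDiarmid-plus-union-bound paralleling \cref{prop:nds-T1-convergence}.
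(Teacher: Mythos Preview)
Your proposal is correct and essentially identical to the paper's own proof: the same three-term covering decomposition, the same Lipschitz bound $|H_{ijqr}-H'_{ijqr}|\le 4(\nu_l L_k+\nu_k L_l)\rho$ via the add-subtract trick on the bilinear $K\cdot L$ terms, the same bounded-difference constant $32\nu_k\nu_l/n$ for McDiarmid, and the same choice $\rho=1/\sqrt n$. The only cosmetic difference is that the paper allows separate radii $\rho_\Omega,\rho_\Gamma$ before setting both to $1/\sqrt n$, whereas you use a common $\rho$ from the start.
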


\begin{proof}
    We use $\epsilon$-net arguments on both spaces $\Omega$ and $\Gamma$. Let $\{\omega_i\}_{i=1}^{T_\Omega}$ be arbitrarily placed centers with radius $\rho_\Omega$ such that any point $\omega\in\Omega$ satisfies $\min\|\omega-\omega_i\|\leq\rho_\Omega$. Similarly, let $\{\gamma_i\}_{i=1}^{T_\Gamma}$ be centers with radius $\rho_\Gamma$ satisfying $\min\|\gamma-\gamma_i\|\leq\rho_\Gamma$ for any $\gamma\in\Gamma$. Assumption \ref{assumption:hsic-A} ensures this is possible with at most $T_\Omega = (4R_\Omega/\rho_\Omega)^{D_\Omega}$ and $T_\Gamma = (4R_\Gamma/\rho_\Gamma)^{D_\Gamma}$ points respectively \citep[Proposition 5]{cucker2002mathematical}.

    We can decompose the convergence bound into simpler components and tackle each component individually
    \begin{align*}
        \sup_{\theta\in\Theta}|\heta_\theta - \eta_\theta| \leq \sup_\theta |\heta_\theta - \heta_{\theta'}| + \max_{ \substack{ \omega'\in\{\omega_1,...,\omega_{T_\Omega}\} \\ \gamma' \in \{\gamma_1,...,\gamma_{T_\Gamma}\} } } |\heta_{\theta'} - \eta_{\theta'}| + \sup_\theta | \eta_{\theta'} - \eta_\theta |.
    \end{align*}

    First, let us analyze $|\eta_\theta - \eta_{\theta'}|$ for any $\theta,\theta' \in \Theta$. Recall that $\eta = \E[H_{1234}]$ where $H_{1234}=\frac{1}{4!}\sum_{(a,b,c,d)}^{(1,2,3,4)} K_{ab}(L_{ab} + L_{cd} - 2L_{ac})$. We have that
    \begin{align*}
        |H^{(\theta)}_{1234} - H^{(\theta')}_{1234}| &\leq \frac{1}{4!}\sum^{(1234)}_{(abcd)} \left| K^{(\omega)}_{ab}( L^{(\gamma)}_{ab} + L^{(\gamma)}_{cd} - 2L^{(\gamma)}_{ac} ) - K^{(\omega')}_{ab}( L^{(\gamma')}_{ab} + L^{(\gamma')}_{cd} - 2L^{(\gamma')}_{ac} )\right|  \nonumber \\
        &\leq \frac{1}{4!}\sum^{(1234)}_{(abcd)} \left( \left| K^{(\omega)}_{ab}L^{(\gamma)}_{ab} - K^{(\omega')}_{ab}L^{(\gamma')}_{ab} \right| + \left| K^{(\omega)}_{ab}L^{(\gamma)}_{cd} - K^{(\omega')}_{ab}L^{(\gamma')}_{cd} \right| + 2\left| K^{(\omega')}_{ab}L^{(\gamma')}_{ac} - K^{(\omega)}_{ab}L^{(\gamma)}_{ac} \right| \right).
    \end{align*}
    From Assumption \ref{assumption:hsic-A} we know that $|K_{ab}|\leq\nu_k$ and $|L_{ab}|\leq\nu_l$, and via Assumption \ref{assumption:hsic-B} we notice that
    \begin{align*}
        \left| K^{(\omega)}_{ab}L^{(\gamma)}_{ab} - K^{(\omega')}_{ab}L^{(\gamma')}_{ab} \right| &= \left| K^{(\omega)}_{ab}L^{(\gamma)}_{ab} - K^{(\omega)}_{ab}L^{(\gamma')}_{ab} + K^{(\omega)}_{ab}L^{(\gamma')}_{ab} - K^{(\omega')}_{ab}L^{(\gamma')}_{ab} \right| \\
        &\leq \left| K^{(\omega)}_{ab} \right| \left| L^{(\gamma)}_{ab} - L^{(\gamma')}_{ab} \right| + \left| L^{(\gamma')}_{ab} \right| \left| K^{(\omega)}_{ab} - K^{(\omega')}_{ab} \right| \\
        &\leq \nu_k L_l\|v-v'\| + \nu_l L_k\|\omega-\omega'\| \\
        &\leq \nu_k L_ l\rho_\Gamma + \nu_l L_k \rho_\Omega.
    \end{align*}
    This expression is true for all three components of $|H^{(\theta)}_{1234} - H^{(\theta')}_{1234}|$ and so it follows that
    \begin{align*}
        |\eta_\theta - \eta_{\theta'}| &= \left| \E[ H^{(\theta)}_{1234}] - \E[H^{(\theta')}_{1234}] \right| \leq \E \left| H^{(\theta)}_{1234} - H^{(\theta')}_{1234} \right| \leq 4\nu_k L_ l\rho_\Gamma + 4\nu_l L_k \rho_\Omega, \\
        |\heta_\theta - \heta_{\theta'}| &= \left| \frac{1}{(n)_4}\sum_{(i,j,q,r)\in {\bf i}^n_4}  H^{(\theta)}_{ijqr} - H^{(\theta')}_{ijqr} \right| \leq \frac{1}{(n)_4}\sum_{(i,j,q,r)\in {\bf i}^n_4} \left| H^{(\theta)}_{1234} - H^{(\theta')}_{1234} \right| \leq 4\nu_k L_ l\rho_\Gamma + 4\nu_l L_k \rho_\Omega.
    \end{align*}

    Now, we study the random error function $\Delta := \heta - \eta$. Note that $\E\Delta = 0$ since $\heta$ is unbiased, and $|H_{ijqr}|\leq 4\nu_k\nu_l$ via \cref{assumption:hsic-A}. This $\heta$, and hence $\Delta$, satisfies bounded differences. Let $F$ denote the kernel tensor $H$ but with sample $(X_\ell,Y_\ell)$ replaced by $(X'_\ell, Y'_\ell)$ so that $F$ agrees with $H$ except at indicies $\ell$, and let $\heta'=\frac{1}{(n)_4}\sum_{(i,j,q,r)\in {\bf i}^n_4} F_{ijqr}$ be it's HSIC estimator. 

    For convenience, we denote $(i,j,q,r)\in{\bf i}^n_4$ simply as $(i,j,q,r)$, and $(i,j,q)\backslash k$ to be the set of 3-tuples drawn without replacement from ${\bf i}^n_3$ that exclude the number $k$. We can compute the maximal bounded difference $|\Delta - \Delta'| = |\heta - \heta'|$ as
    \begin{align} \label{eq:hsic-bounded-difference} 
        |\heta - \heta'| &= \left| \frac{1}{(n)_4}\sum_{(i,j,q,r)} H_{ijqr} - F_{ijqr} \right| \leq \frac{1}{(n)_4}\sum_{(i,j,q,r)} |H_{ijqr} - F_{ijqr}| \\
        &= \frac{1}{(n)_4} \left( \sum_{(j,q,r)\backslash\ell} \underbrace{|H_{\ell jqr} - F_{\ell jqr}|}_{\leq 8\nu_k\nu_l} + \sum_{(i,q,r)\backslash\ell} |H_{i\ell qr} - F_{i\ell qr}| + \sum_{(i,j,r)\backslash\ell} |H_{ij\ell r} - F_{ij\ell r}| + \sum_{(i,j,q)\backslash\ell} |H_{ijq\ell} - F_{ijq\ell}| \right) \nonumber\\
        & = \frac{1}{(n)_4} \Big( (n-1)_3 \cdot 8\nu_k\nu_l \cdot 4 \Big) = \frac{32\nu_k\nu_l}{n}. \nonumber
    \end{align}
    Then, applying McDiarmid's inequality on $\Delta:=\heta - \eta$ followed by a union bound over the $T_\Omega T_\Gamma$ center pairs gives us, with probability at least $1-\delta$, that
    \begin{align*}
        \max_{ \substack{ \omega'\in\{\omega_1,...,\omega_{T_\Omega}\} \\ \gamma' \in \{\gamma_1,...,\gamma_{T_\Gamma}\} } } |\heta_{\theta'} - \eta_{\theta'}| &\leq 32\nu_k\nu_l\sqrt{\frac{1}{2n}\log\frac{2T_\Omega T_\Gamma}{\delta}} \\
        &= \frac{16\nu_k\nu_l}{\sqrt{n}}\sqrt{2\log\frac{2}{\delta} + 2\log T_\Omega +2\log T_\Gamma} \\
        &= \frac{16\nu_k\nu_l}{\sqrt{n}}\sqrt{2\log\frac{2}{\delta} + 2D_\Omega \log \frac{4R_\Omega}{\rho_\Omega} +2D_\Gamma\log \frac{4R_\Gamma}{\rho_\Gamma}}.
    \end{align*}
    Finally, we combine these results to get our uniform convergence bound:
    \begin{align*}
        \sup_{\theta\in\Theta} |\heta_\theta - \eta_\theta| &\leq 8\nu_k L_ l\rho_\Gamma + 8\nu_l L_k \rho_\Omega + \frac{16\nu_k\nu_l}{\sqrt{n}}\sqrt{2\log\frac{2}{\delta} + 2D_\Omega \log \frac{4R_\Omega}{\rho_\Omega} +2D_\Gamma\log \frac{4R_\Gamma}{\rho_\Gamma}} \\
        &= 8\nu_k\nu_l \left( \frac{L_k}{\nu_k}\rho_\Omega + \frac{L_l}{\nu_l}\rho_\Gamma + \frac{2}{\sqrt{n}} \sqrt{2\log\frac{2}{\delta} + 2D_\Omega \log \frac{4R_\Omega}{\rho_\Omega} +2D_\Gamma\log \frac{4R_\Gamma}{\rho_\Gamma}} \right).
    \end{align*}
    Setting $\rho_\Omega = \rho_\Gamma = 1/\sqrt{n}$ yields the desired result.
\end{proof}

\begin{prop}[Uniform convergence of $\hat{\sigma}_{\omega,\gamma}^2$] \label{prop:hsic-var-conv}
    Under \cref{assumption:hsic-A,assumption:hsic-B,assumption:hsic-C}, we have that with probability at least $1-\delta$,
    \[ \sup_{\theta\in\Theta} |\hsigma^2_\theta - \sigma^2_\theta| \leq \frac{2048\nu^2_k\nu^2_l}{\sqrt{n}} \left( \frac{L_k}{\nu_k} + \frac{L_l}{\nu_l} + 3\sqrt{ 2\log\frac{2}{\delta} + 2D_\Omega\log(4R_\Omega\sqrt{n}) + 2D_\Gamma\log(4R_\Gamma\sqrt{n}) } + \frac{9}{4\sqrt{n}} \right).
 \]
\end{prop}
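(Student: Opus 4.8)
The plan is to follow the same $\epsilon$-net argument used for \cref{prop:hsic-conv} and its NDS analogue \cref{prop:nds-sigma-convergence}. First I would fix $\rho_\Omega,\rho_\Gamma>0$ and take minimal covers $\{\omega_i\}_{i\le T_\Omega}\subseteq\Omega$ and $\{\gamma_j\}_{j\le T_\Gamma}\subseteq\Gamma$ of radius $\rho_\Omega$, $\rho_\Gamma$; by \cref{assumption:hsic-A} these exist with $T_\Omega\le(4R_\Omega/\rho_\Omega)^{D_\Omega}$ and $T_\Gamma\le(4R_\Gamma/\rho_\Gamma)^{D_\Gamma}$ points \citep[Proposition 5]{cucker2002mathematical}. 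For each $\theta=(\omega,\gamma)$ I write $\theta'=(\omega',\gamma')$ for the nearest net point and decompose
\[
    \sup_{\theta\in\Theta}\bigl|\hsigma^2_\theta-\sigma^2_\theta\bigr|
    \le \underbrace{\sup_{\theta}\bigl|\hsigma^2_\theta-\hsigma^2_{\theta'}\bigr|}_{(a)}
    + \underbrace{\max_{\theta'}\bigl|\hsigma^2_{\theta'}-\E\hsigma^2_{\theta'}\bigr|}_{(b)}
    + \underbrace{\max_{\theta'}\bigl|\E\hsigma^2_{\theta'}-\sigma^2_{\theta'}\bigr|}_{(c)}
    + \underbrace{\sup_{\theta}\bigl|\sigma^2_{\theta'}-\sigma^2_\theta\bigr|}_{(d)}.
\]

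Term (b) I would control by applying \cref{lem:hsic-var-concentration} at each of the $T_\Omega T_\Gamma$ net points and union bounding, which yields with probability at least $1-\delta$ that $(b)\le 6144\nu_k^2\nu_l^2\sqrt{\tfrac2n\bigl(\log\tfrac2\delta+D_\Omega\log\tfrac{4R_\Omega}{\rho_\Omega}+D_\Gamma\log\tfrac{4R_\Gamma}{\rho_\Gamma}\bigr)}$, and term (c) by applying \cref{lem:hsic-var-bias} at each net point, giving the deterministic bound $(c)\le 4608\nu_k^2\nu_l^2/n$.

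The substance is in (a) and (d). Both $\hsigma^2$ and $\sigma^2$ have the shape $16\bigl(\text{(average or expectation of products }H_{ijqr}H_{ibcd})-(\text{U-statistic or expectation of }H)^2\bigr)$; in particular $\sigma^2=16\bigl(\E[H_{1234}H_{1567}]-\eta^2\bigr)$. I would bound their parameter-Lipschitz constants using two ingredients already available from the proof of \cref{prop:hsic-conv}: the uniform bound $|H^{(\theta)}_{ijqr}|\le 4\nu_k\nu_l$ (from \cref{assumption:hsic-C}) and the Lipschitz estimate $|H^{(\theta)}_{ijqr}-H^{(\theta')}_{ijqr}|\le 4\nu_k L_l\|\gamma-\gamma'\|+4\nu_l L_k\|\omega-\omega'\|$ (from \cref{assumption:hsic-B}). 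The product rule then gives $|H^{(\theta)}_{ijqr}H^{(\theta)}_{ibcd}-H^{(\theta')}_{ijqr}H^{(\theta')}_{ibcd}|\le 2\cdot 4\nu_k\nu_l\cdot(4\nu_k L_l\rho_\Gamma+4\nu_l L_k\rho_\Omega)$ and $|\heta_\theta^2-\heta_{\theta'}^2|=|\heta_\theta+\heta_{\theta'}|\,|\heta_\theta-\heta_{\theta'}|\le 8\nu_k\nu_l\cdot(4\nu_k L_l\rho_\Gamma+4\nu_l L_k\rho_\Omega)$; since averaging and taking expectations are contractions the same bounds pass to the aggregated quantities, so both $(a)$ and $(d)$ are at most $16(32+32)\,\nu_k\nu_l(\nu_k L_l\rho_\Gamma+\nu_l L_k\rho_\Omega)=1024\,\nu_k\nu_l(\nu_k L_l\rho_\Gamma+\nu_l L_k\rho_\Omega)$.

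Summing the four bounds and setting $\rho_\Omega=\rho_\Gamma=1/\sqrt n$ then collects everything into the claimed expression: $(a)+(d)$ contributes $\tfrac{2048\nu_k^2\nu_l^2}{\sqrt n}(L_k/\nu_k+L_l/\nu_l)$, the concentration term $(b)$ becomes $\tfrac{2048\nu_k^2\nu_l^2}{\sqrt n}\cdot 3\sqrt{2\log\tfrac2\delta+2D_\Omega\log(4R_\Omega\sqrt n)+2D_\Gamma\log(4R_\Gamma\sqrt n)}$ (using $6144=3\cdot2048$), and the bias term $(c)$ is $\tfrac{2048\nu_k^2\nu_l^2}{\sqrt n}\cdot\tfrac{9}{4\sqrt n}$ (using $4608/2048=9/4$). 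I expect the only real obstacle to be the bookkeeping in (a) and (d): one must verify that the internal normalizations $\tfrac1{(n)_4(n-1)_3}$ in $\hsigma^2$ and $\tfrac1{4!}\sum_{(abcd)}$ inside $H$ are genuine averages, so they do not inflate the boundedness or Lipschitz constants, and one must pin down $\sigma^2=16(\E[H_{1234}H_{1567}]-\eta^2)$ so the (d) estimate runs exactly parallel to (a) with the same constant $1024$.
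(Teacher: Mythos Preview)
Your proposal is correct and follows essentially the same approach as the paper's proof: the same $\epsilon$-net decomposition, the same product-rule Lipschitz bound $1024\,\nu_k\nu_l(\nu_k L_l\rho_\Gamma+\nu_l L_k\rho_\Omega)$ for the outer terms via $|H|\le 4\nu_k\nu_l$ and $|H^{(\theta)}-H^{(\theta')}|\le 4\nu_kL_l\rho_\Gamma+4\nu_lL_k\rho_\Omega$, and the same appeal to \cref{lem:hsic-var-concentration,lem:hsic-var-bias} for the net points. The only cosmetic difference is that the paper writes the middle as a single term $|\hsigma^2_{\theta'}-\sigma^2_{\theta'}|$ before applying the two lemmas, whereas you split it into (b) and (c) up front.
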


\begin{proof}
    We use an $\epsilon$-net argument on both spaces $\Omega$ and $\Gamma$. Using the same construction as in \cref{prop:hsic-conv}, we once again decompose our convergence bound:
    \begin{align*}
        \sup_{\theta\in\Theta} |\hsigma^2_\theta - \sigma^2_\theta| \leq \sup_\theta |\hsigma^2_\theta - \hsigma^2_{\theta'}| + \max_{ \substack{ \omega'\in\{\omega_1,...,\omega_{T_\Omega}\} \\ \gamma' \in \{\gamma_1,...,\gamma_{T_\Gamma}\}} } | \hsigma^2_{\theta'} - \sigma^2_{\theta'} | + \sup_\theta | \sigma^2_{\theta'} - \sigma^2_\theta |.
    \end{align*}
    First, let us analyze $|\sigma^2_\theta - \sigma^2_{\theta'}|$ for any $\theta, \theta'\in\Theta$. Recall that $\sigma^2 = 16\left( \E[H_{1234}H_{1567}] - \eta^2 \right)$. It follows that
    \begin{align*}
        |\sigma^2_\theta - \sigma^2_{\theta'}| &= 16\left| \E[ H^{(\theta)}_{1234}H^{(\theta)}_{1567} - H^{(\theta')}_{1234}H^{(\theta')}_{1567} ] - \E[H^{(\theta)}_{1234}H^{(\theta)}_{5678}] + \E[H^{(\theta')}_{1234}H^{(\theta')}_{5678}] ] \right| \\
        &\leq 16\E\left| H^{(\theta)}_{1234}H^{(\theta)}_{1567} - H^{(\theta')}_{1234}H^{(\theta')}_{1567} \right| + 16\E\left| H^{(\theta)}_{1234}H^{(\theta)}_{5678} - H^{(\theta')}_{1234}H^{(\theta')}_{5678} \right|.
    \end{align*}
    Under \cref{assumption:hsic-A,assumption:hsic-C} we know that $|H_{1234}|\leq 4\nu_k\nu_l$ and $|H^{(\theta)}_{1234} - H^{(\theta')}_{1234}|\leq 4\nu_k L_l \rho_\Gamma + 4\nu_l L_k \rho_\Omega$, and so
    \begin{align*}
        |H^{(\theta)}_{1234}H^{(\theta)}_{1567} - H^{(\theta')}_{1234}H^{(\theta')}_{1567}| &\leq | H^{(\theta)}_{1234}H^{(\theta)}_{1567} - H^{(\theta)}_{1234}H^{(\theta')}_{1567} | + | H^{(\theta)}_{1234}H^{(\theta')}_{1567} - H^{(\theta')}_{1234}H^{(\theta')}_{1567} | \\
        &= |H^{(\theta)}_{1234}||H^{(\theta)}_{1567} - H^{(\theta')}_{1567}| + |H^{(\theta')}_{1567}||H^{(\theta)}_{1234} - H^{(\theta')}_{1234}| \\
        &\leq 32\nu_k\nu_l(\nu_k L_l \rho_\Gamma + \nu_l L_k \rho_\Omega).
    \end{align*}
    This expression is true for both components of $|\sigma^2_\theta - \sigma^2_{\theta'}|$ and so it follows that 
    \begin{align} \label{eq:var-lip-bound}
        |\sigma^2_\theta - \sigma^2_{\theta'}| \leq 1024\nu_k\nu_l(\nu_k L_l \rho_\Gamma + \nu_l L_k \rho_\Omega).
    \end{align}
    Similarly, replacing the expectations $\E[H_{1234}H_{1567}]$ and $\E[H_{1234}H_{5678}]$ with the respective estimators \\
    $\frac{1}{(n)_4(n-1)_3}\sum_{(ijqr),(bcd)\backslash i} H_{ijqr}H_{ibcd} $ and $\frac{1}{(n)^2_4}\sum_{(ijqr), (abcd)} H_{ijqr}H_{abcd}$ give us the same bound
    \begin{align} \label{eq:varest-lip-bound}
        |\hsigma^2_\theta - \hsigma^2_{\theta'}| \leq 1024\nu_k\nu_l(\nu_k L_l \rho_\Gamma + \nu_l L_k \rho_\Omega).
    \end{align}
    Next, using \cref{lem:hsic-var-concentration} and \cref{lem:hsic-var-bias} followed by a union bound over the $T_\Omega T_\Gamma$ center combinations gives us, with probability at least $1-\delta$, 
    \begin{align} \label{eq:var-max-bound}
        \max_{ \substack{ \omega'\in\{\omega_1,...,\omega_{T_\Omega}\} \\ \gamma' \in \{\gamma_1,...,\gamma_{T_\Gamma}\}} } |\hsigma^2_{\theta'} - \sigma^2_{\theta'}| &\leq 6144\nu^2_k\nu^2_l\sqrt{\frac{2}{n}\log\frac{2T_\Omega T_\Gamma}{\delta}} + \frac{4608\nu^2_k\nu^2_l}{n} \\
        &\leq \frac{2048\nu^2_k\nu^2_l}{\sqrt{n}} \left( 3\sqrt{ 2\log\frac{2}{\delta} + 2D_\Omega\log\frac{4R_\Omega}{\rho_\Omega} + 2D_\Gamma\log\frac{4R_\Gamma}{\rho_\Gamma} } + \frac{9}{4\sqrt{n}} \right). \nonumber
    \end{align}
    Finally, we combine \cref{eq:var-lip-bound,eq:varest-lip-bound,eq:var-max-bound} to get
    \begin{align*}
        \sup_{\theta\in\Theta}|\hsigma^2_\theta - \sigma^2_\theta| \leq \frac{2048\nu^2_k\nu^2_l}{\sqrt{n}} \left( 3\sqrt{ 2\log\frac{2}{\delta} + 2D_\Omega\log\frac{4R_\Omega}{\rho_\Omega} + 2D_\Gamma\log\frac{4R_\Gamma}{\rho_\Gamma} } + \frac{9}{4\sqrt{n}} +\sqrt{n}\left( \frac{L_k}{\nu_k}\rho_\Omega + \frac{L_l}{\nu_l}\rho_\Gamma \right) \right).
    \end{align*}
    Setting $\rho_\Omega = \rho_\Gamma = 1/\sqrt{n}$ gives us our desired uniform convergence bound.
    
\end{proof}

\subsection{Main Results}

\begin{theorem}[Uniform convergence of $\hat{J}_\text{HSIC}^\lambda$] \label{thm:hsic-snr-convergence}
    Under \cref{assumption:hsic-A,assumption:hsic-B,assumption:hsic-C}, let $\Theta\subseteq\Omega\times\Gamma$ be the set of kernel parameters $\theta\in\Theta$ for which $\sigma_\theta^2\geq s^2$, and take $\lambda = n^{-1/3}$. Assume $\nu_k,\nu_l\geq 1$. Then, with probability at least $1-\delta$,
    \begin{align*}
        \sup_{\theta\in\Theta} \left| \frac{\heta_\theta}{\hsigma_{\theta,\lambda}} - \frac{\eta_\theta}{\sigma_\theta} \right| \leq\, 
        &\frac{2\nu_k\nu_l}{s^2 n^{1/3}}
        \Bigg[
        \frac{1}{s}
        + \frac{9216\nu^2_k\nu^2_l}{\sqrt{n}} \\
        &+ \left( 12288\nu^2_k\nu^2_l + \frac{8s}{n^{1/6}} \right)
        \left( \frac{L_k}{\nu_k} + \frac{L_l}{\nu_l} + \sqrt{ 2\log\frac{4}{\delta} + 2D_\Omega\log(4R_\Omega\sqrt{n}) + 2D_\Gamma\log(4R_\Gamma\sqrt{n}) } \right) 
        \Bigg],
    \end{align*}
    and thus, treating $\nu_k,\nu_l$ as constants, 
    \begin{align*}
        \sup_{\theta\in\Theta} \left| \frac{\heta_\theta}{\hsigma_{\theta,\lambda}} - \frac{\eta_\theta}{\sigma_\theta} \right| = \Tilde{\bigO}_P\left( \frac{1}{s^2n^{1/3}}\left[ \frac{1}{s} + L_k + L_l + \sqrt{D_\Omega} + \sqrt{D_\Gamma} \right] \right).
    \end{align*}
\end{theorem}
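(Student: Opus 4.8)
The plan is to control $\sup_{\theta}\bigl|\heta_\theta/\hsigma_{\theta,\lambda}-\eta_\theta/\sigma_\theta\bigr|$ by a short triangle-inequality chain that separates the three sources of error -- estimating the HSIC numerator $\heta$, the $+\lambda$ regularization of the denominator, and estimating the variance $\hsigma^2$ -- and then to plug in the uniform-convergence estimates \cref{prop:hsic-conv} (for $\heta$) and \cref{prop:hsic-var-conv} (for $\hsigma^2$) already proved above, via a union bound. The elementary ingredients are: writing $\sigma_{\theta,\lambda}^2=\sigma_\theta^2+\lambda$ and $\hsigma_{\theta,\lambda}^2=\hsigma_\theta^2+\lambda$, we have $\hsigma_{\theta,\lambda}\ge\sqrt\lambda$ because $\hsigma_\theta^2\ge 0$ (the variance estimator is nonnegative, exactly as in the NDS case); $\sigma_\theta,\sigma_{\theta,\lambda}\ge s$ on the restricted parameter set by hypothesis; and $|\heta_\theta|\le 4\nu_k\nu_l$ from \cref{assumption:hsic-C}, just as used inside \cref{prop:hsic-conv}. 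Note a supremum over the restricted $\Theta$ is dominated by the supremum over the full parameter space that \cref{prop:hsic-conv,prop:hsic-var-conv} control, so those bounds apply verbatim.

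Concretely, for each $\theta$,
\[
\left| \frac{\heta_\theta}{\hsigma_{\theta,\lambda}} - \frac{\eta_\theta}{\sigma_\theta} \right|
\le \underbrace{\left| \frac{\heta_\theta}{\hsigma_{\theta,\lambda}} - \frac{\heta_\theta}{\sigma_{\theta,\lambda}} \right|}_{(a)}
+ \underbrace{\left| \frac{\heta_\theta}{\sigma_{\theta,\lambda}} - \frac{\heta_\theta}{\sigma_\theta} \right|}_{(b)}
+ \underbrace{\left| \frac{\heta_\theta}{\sigma_\theta} - \frac{\eta_\theta}{\sigma_\theta} \right|}_{(c)}.
\]
For $(a)$, using $\tfrac1{\hsigma_{\theta,\lambda}}-\tfrac1{\sigma_{\theta,\lambda}}=\tfrac{\sigma_\theta^2-\hsigma_\theta^2}{\hsigma_{\theta,\lambda}\sigma_{\theta,\lambda}(\hsigma_{\theta,\lambda}+\sigma_{\theta,\lambda})}$ (the $\lambda$'s cancel in the numerator) together with $\hsigma_{\theta,\lambda}\ge\sqrt\lambda$ and $\sigma_{\theta,\lambda}\ge s$ gives $(a)\le \tfrac{4\nu_k\nu_l}{s^2\sqrt\lambda}\,|\hsigma_\theta^2-\sigma_\theta^2|$; for $(b)$, $\tfrac1{\sigma_{\theta,\lambda}}-\tfrac1{\sigma_\theta}=\tfrac{-\lambda}{\sigma_{\theta,\lambda}\sigma_\theta(\sigma_{\theta,\lambda}+\sigma_\theta)}$ and $\sigma_{\theta,\lambda},\sigma_\theta\ge s$ give $(b)\le\tfrac{2\nu_k\nu_l\lambda}{s^3}$; and $(c)=|\heta_\theta-\eta_\theta|/\sigma_\theta\le\tfrac1s|\heta_\theta-\eta_\theta|$. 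Taking $\sup_\theta$, bounding the two estimation errors by \cref{prop:hsic-var-conv} and \cref{prop:hsic-conv} at confidence $1-\delta/2$ each, and union-bounding yields, with probability $\ge1-\delta$,
\[
\sup_{\theta\in\Theta}\left| \frac{\heta_\theta}{\hsigma_{\theta,\lambda}} - \frac{\eta_\theta}{\sigma_\theta} \right|
\le \frac{4\nu_k\nu_l}{s^2\sqrt\lambda}\,\sup_{\theta}|\hsigma_\theta^2-\sigma_\theta^2| + \frac{2\nu_k\nu_l\lambda}{s^3} + \frac1s\,\sup_{\theta}|\heta_\theta-\eta_\theta|.
\]

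The remaining step is to tune $\lambda$. Term $(a)$ carries an amplification factor $1/\sqrt\lambda$ multiplying a $\Tilde{\bigO}(n^{-1/2})$ quantity, while term $(b)$ is exactly of order $\lambda$; balancing $n^{-1/2}/\sqrt\lambda$ against $\lambda$ gives $\lambda=n^{-1/3}$, under which both $(a)$ and $(b)$ are $\Theta(n^{-1/3})$ and the numerator term $(c)$ is the strictly lower-order $\Theta(n^{-1/2})$, i.e. a $\Theta(s/n^{1/6})$ fraction of the $n^{-1/3}$ scale. Substituting the explicit bounds of \cref{prop:hsic-conv,prop:hsic-var-conv} (with their cover radii $\rho_\Omega=\rho_\Gamma=1/\sqrt n$, so the covering-log terms are $D_\Omega\log(4R_\Omega\sqrt n)+D_\Gamma\log(4R_\Gamma\sqrt n)$) and collecting constants -- using $\nu_k,\nu_l\ge1$ to fold the $L_k/\nu_k+L_l/\nu_l$ pieces in with the $\sqrt{\cdots}$ pieces under a common (loose) coefficient, which is where constants like $12288$ arise -- produces the stated non-asymptotic inequality. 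The $\Tilde{\bigO}_P$ form then follows by treating $\nu_k,\nu_l$ as $\bigO(1)$, noting the $\log(4R\sqrt n)$ factors are polylogarithmic, and observing that the $9216\nu_k^2\nu_l^2/\sqrt n$ and $8s/n^{1/6}$ contributions are lower order than the $n^{-1/3}$ terms they scale, leaving $\Tilde{\bigO}_P\bigl(\tfrac1{s^2n^{1/3}}[\tfrac1s+L_k+L_l+\sqrt{D_\Omega}+\sqrt{D_\Gamma}]\bigr)$.

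The genuine difficulty is upstream, in \cref{prop:hsic-conv,prop:hsic-var-conv}: these must be truly \emph{uniform} over the kernel-parameter families $\Omega\times\Gamma$, which requires the covering-number argument combined with McDiarmid bounded-differences estimates for both $\heta$ and $\hsigma^2$ (the latter relying on \cref{lem:hsic-var-concentration,lem:hsic-var-bias}) and the parameter-Lipschitz control of \cref{assumption:hsic-B}; a merely pointwise argument -- the gap the paper flags in \citet{ren24a} -- would not suffice. Given those propositions, the only delicate point internal to the present proof is the $\lambda=n^{-1/3}$ balance above, together with the use of $\hsigma_\theta^2\ge0$ to pin the denominator below by $\sqrt\lambda$ (so the amplification is exactly $1/\sqrt\lambda$ rather than uncontrolled); the rest is bookkeeping of constants.
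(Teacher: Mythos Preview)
Your proposal is correct and follows essentially the same route as the paper's proof: the identical three-term triangle-inequality decomposition $(a)+(b)+(c)$, the same elementary bounds on each piece (using $|\heta_\theta|\le 4\nu_k\nu_l$, $\hsigma_{\theta,\lambda}\ge\sqrt\lambda$, $\sigma_\theta,\sigma_{\theta,\lambda}\ge s$), the same union-bound application of \cref{prop:hsic-conv,prop:hsic-var-conv} at level $\delta/2$ each, and the same $\lambda=n^{-1/3}$ balance. The one remark worth making is that your claim $\hsigma_\theta^2\ge 0$ is true but not ``exactly as in the NDS case'': here $\hsigma^2=16(R-\widehat{\HSIC}_u^2)$ with $R=\frac1n\sum_i g_i^2$ and $\widehat{\HSIC}_u=\frac1n\sum_i g_i$, so nonnegativity follows from Jensen rather than from being a raw sum of squares---the paper simply asserts $\hsigma_{\theta,\lambda}^2>0$ without comment, so your version is if anything slightly more careful.
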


\begin{proof}
    Let $\hsigma_{\theta, \lambda}^2 := \hsigma_\theta^2 + \lambda$ be our regularized variance estimator from which we can assume is positive. We start by decomposing
    \begin{align*}
        \sup_{\theta\in\Theta} \left| \frac{\heta_\theta}{\hsigma_{\theta,\lambda}} - \frac{\eta_\theta}{\sigma_\theta} \right| &\leq \sup_{\theta\in\Theta} \left| \frac{\heta_\theta}{\hsigma_{\theta,\lambda}} - \frac{\heta_\theta}{\sigma_{\theta,\lambda}} \right| + \sup_{\theta\in\Theta} \left| \frac{\heta_\theta}{\sigma_{\theta,\lambda}} - \frac{\heta_\theta}{\sigma_\theta} \right| + \sup_{\theta\in\Theta} \left| \frac{\heta_\theta}{\sigma_\theta} -  \frac{\eta_\theta}{\sigma_\theta} \right| \\
        &= \sup_\theta \frac{|\heta_\theta|}{\hsigma_{\theta,\lambda}\cdot\sigma_{\theta,\lambda}} \frac{| \hsigma_{\theta,\lambda}^2 - \sigma_{\theta,\lambda}^2 |}{\hsigma_{\theta,\lambda} + \sigma_{\theta,\lambda}} + \sup_\theta \frac{|\heta_\theta|}{\sigma_{\theta,\lambda}\cdot\sigma_\theta} \frac{| \sigma_{\theta,\lambda}^2 - \sigma_\theta^2 |}{\sigma_{\theta,\lambda} + \sigma_\theta} + \sup_\theta \frac{1}{\sigma_\theta}|\heta_\theta - \eta_\theta| \\
        &\leq \frac{4\nu_k\nu_l}{s\sqrt{\lambda}(s+\sqrt{\lambda})}\sup_\theta | \hsigma_\theta^2 - \sigma_\theta^2| + \frac{4\nu_k\nu_l\lambda}{s\sqrt{s^2+\lambda}(s+\sqrt{s^2+\lambda})} + \frac{1}{s}\sup_\theta |\heta_\theta - \eta_\theta| \\
        &\leq \frac{4\nu_k\nu_l}{s^2\sqrt{\lambda}} \sup_\theta |\hsigma_\theta^2 - \sigma_\theta^2| + \frac{1}{s} \sup_\theta | \heta_\theta - \eta_\theta | + \frac{2\nu_k\nu_l\lambda}{s^3}.
    \end{align*}
    \cref{prop:hsic-conv} and \cref{prop:hsic-var-conv} show the uniform convergence of $\heta_\theta$ and $\hsigma_\theta$, from which we get that with probability at least $1-\delta$, the error is at most
    \begin{align*}
        \sup_{\theta\in\Theta} \left| \frac{\heta_\theta}{\hsigma_{\theta,\lambda}} - \frac{\eta_\theta}{\sigma_\theta} \right| \leq\, 
        &\frac{2\nu_k\nu_l\lambda}{s^3}
        + \frac{18432\nu^3_k\nu^3_l}{s^2n\sqrt{\lambda}}
        + \left[ \frac{8192\nu^3_k\nu^3_l}{s^2\sqrt{\lambda n}} + \frac{8\nu_k\nu_l}{s\sqrt{n}} \right]\left( \frac{L_k}{\nu_k} + \frac{L_l}{\nu_l} \right) \\
        &+ \left[ \frac{24576\nu^3_k\nu^3_l}{s^2\sqrt{\lambda n}} + \frac{16\nu_k\nu_l}{s\sqrt{n}} \right]\sqrt{ 2\log\frac{4}{\delta} + 2D_\Omega\log(4R_\Omega\sqrt{n}) + 2D_\Gamma\log(4R_\Gamma\sqrt{n}) }.
    \end{align*}
    Taking $\lambda=n^{-1/3}$ gives
    \begin{align*}
        \sup_{\theta\in\Theta} \left| \frac{\heta_\theta}{\hsigma_{\theta,\lambda}} - \frac{\eta_\theta}{\sigma_\theta} \right| \leq\, 
        &\frac{2\nu_k\nu_l}{s^3n^{1/3}}
        + \frac{18432\nu^3_k\nu^3_l}{s^2n^{5/6}}
        + \left[ \frac{8192\nu^3_k\nu^3_l}{s^2n^{1/3}} + \frac{8\nu_k\nu_l}{s\sqrt{n}} \right]\left( \frac{L_k}{\nu_k} + \frac{L_l}{\nu_l} \right) \\
        &+ \left[ \frac{24576\nu^3_k\nu^3_l}{s^2 n^{1/3}} + \frac{16\nu_k\nu_l}{s\sqrt{n}} \right]\sqrt{ 2\log\frac{4}{\delta} + 2D_\Omega\log(4R_\Omega\sqrt{n}) + 2D_\Gamma\log(4R_\Gamma\sqrt{n}) }.
    \end{align*}
    Using $\nu_k, \nu_l \geq 1$ we can slightly simplify our bound to
    \begin{align*}
        \sup_{\theta\in\Theta} \left| \frac{\heta_\theta}{\hsigma_{\theta,\lambda}} - \frac{\eta_\theta}{\sigma_\theta} \right| \leq\, 
        &\left[ \frac{24576\nu^3_k\nu^3_l}{s^2 n^{1/3}} + \frac{16\nu_k\nu_l}{s\sqrt{n}} \right] \left( \frac{L_k}{\nu_k} + \frac{L_l}{\nu_l} + \sqrt{ 2\log\frac{4}{\delta} + 2D_\Omega\log(4R_\Omega\sqrt{n}) + 2D_\Gamma\log(4R_\Gamma\sqrt{n}) } \right) \\
        &+ \frac{2\nu_k\nu_l}{s^3n^{1/3}}
        + \frac{18432\nu^3_k\nu^3_l}{s^2n^{5/6}}
    \end{align*}
\end{proof}

\section{Experimental Details} \label{app:exp-details}

\begin{figure}[!htbp]
  \centering
  \begin{minipage}{0.40\textwidth}
    \centering
    \includegraphics[width=0.55\textwidth]{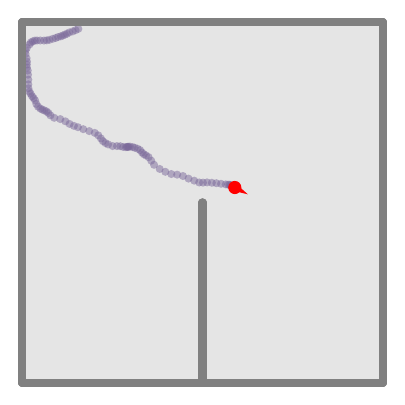}
    \caption{RatInABox simulation environment. The red dot is the current position of the rat and the purple circles indicate the past trajectory over 5 seconds. The box is designed to have only a single protruding wall.}
    \label{fig:riab_box}
  \end{minipage}
  \hfill
  \begin{minipage}{0.55\textwidth}
    \centering
    \includegraphics[width=0.45\textwidth]{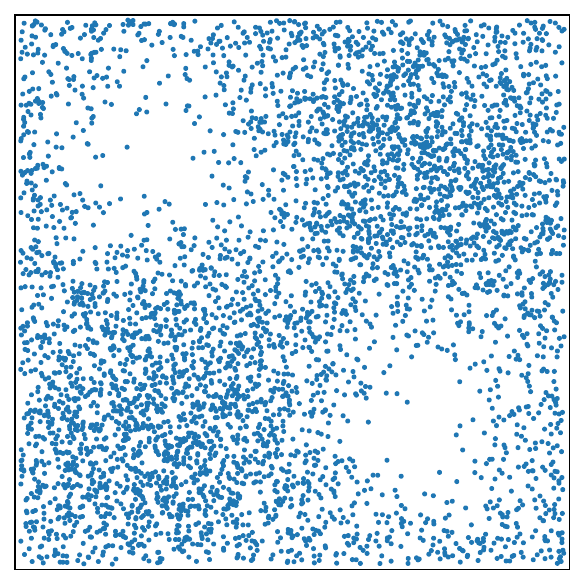}
    \includegraphics[width=0.45\textwidth]{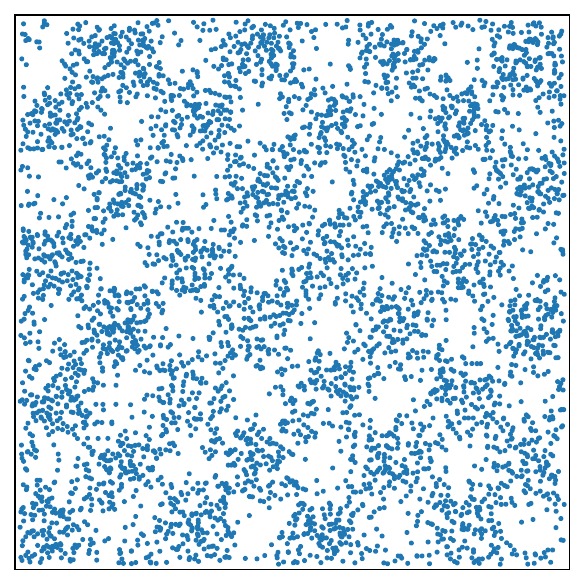}
    \caption{Samples drawn from the sinusoidal problem with frequency $\ell=1$ (left) and $\ell=4$ (right). We consider the latter frequency in our experiments.}
    \label{fig:sinusoid_density}
  \end{minipage}
\end{figure}

\subsection{Training \& Test Details}
\label{sec:training-details}

We design the featurizers $\phi_\omega$ and $\phi_\kappa$ of our deep kernels $k_\omega$ and $l_\kappa$ to be neural networks with ReLU activations. We avoid using normalization as it may affect test power. Moreover, we make the Gaussian bandwidth of both $k_\omega$ and $l_\kappa$ a learnable parameter, as well as the smoothing rate $\epsilon$. To make comparisons as fair as possible, we use similar neural network architectures for each deep learning based method. In general, we let the featurizer of HSIC-D and MMD-D be identical up to a concatenation layer which concatenates $X$ and $Y$ to frame the problem as a two-sample test. We construct the C2ST-S/L classifier as the MMD-D featurizer plus a linear layer classification head with scalar output, and we let C2ST-S/L, InfoNCE, NWJ, and NDS all use identical architectures for the classifier and critic. Detailed descriptions of each architecture are demonstrated in the following subsections.

All optimization-based methods (HSIC-D/Dx/O, NDS/InfoNCE/NWJ, MMD-D, C2ST-S/L) are first trained on an identical split of the data, and then tested on the remaining split. In contrast, HSIC-M selects the median bandwidth based on the entire dataset, and is evaluated on the test set. 
We train our models using the AdamW optimizer with a learning rate of 1e-4 over 1,000 epochs for HDGM and RatInABox, and 10,000 epochs for Sinusoid and Wine. We use a batch size of 512. All methods are implemented in PyTorch and trained on a NVIDIA A100SXM4 GPU.

Regarding the power vs. test size experiments, we use a training size of 10,000 for HDGM $\leq$ 30, 100,000 for HDGM $>$ 30, and 2000 validation samples for all dimensions. For the Sinusoid problem, we train on 5,000 samples and use 1,000 for validation. RatInABox uses a training size of 4,000 samples without validation, and Wine uses 1,200 training samples without validation.

Once learned, each methods' empirical power is evaluated on 100 test sets $(S^{te_1}_Z, ..., S^{te_{100}}_Z)$. Each test set contains $m$ test samples $S^{te_i}_Z = (Z^{te_i}_1, ..., Z^{te_i}_N)$, which are then used to compute the average rejection rate under the null via a permutation test. We use 500 permutations for each test and with a predetermined type-I error rate of 0.05.

\subsection{Validation}
Although validation sets were only used for early stopping in our experiments, it's certainly possible to use them for hyperparameter selection. The validity of permutation tests still hold as long as this validation set is separate from the test set. As for the validation criterion, a natural approach is to evaluate the same approximation of the asymptotic power used for our training objective. This avoids the need for relatively-expensive permutations, while providing a reasonable estimate of performance. Alternatively, we could also directly estimate the test statistic in cases where there isn't a simple expression for asymptotic power, though this approach may be less reliable since the statistic is not necessarily correlated with test power. It is also possible to select hyperparameters through cross-validation, where the training data is split into folds: one fold for evaluating hyperparameters and the remaining folds for training the model. Again, as long as this data is separate from the test set, any permutation test would be well-controlled. The drawback to these validation-based approaches is that effective power is reduced since we need to construct this validation set from either the training or test data. 

Ideally, we would like to perform hyperparameter selection while avoiding sample splitting, but this is a challenging problem. One potential approach is to perform separate tests corresponding to each hyperparameter, and then combine these tests using a multiple test correction procedure like MMDAgg \citep{schrab2023mmdaggregatedtwosampletest} to ensure valid level control on the test data. However, this is potentially expensive if hyperparameters need to be optimized. It's also unclear if this would perform better than simply using a validation set due to the conservatism of the multiple test correction procedure. A related approach is MMDFuse \citep{biggs2023mmdfuselearningcombiningkernels}, which constructs an aggregate statistic by combining MMD estimates at varying bandwidths. This approach could be naturally extended to consider estimates based on different training sets and would naturally fit into our optimization framework; though doing so in a way that does not explode compute nor reduce critic quality is another challenge.

\subsection{Architectures}
In all experiments we consider deep kernels with Gaussian feature and smoothing kernels $\kappa$ and $q$, where each bandwidth is a trainable parameter randomly initialized around $1.0$. We let the smoothing weight $\epsilon$ also be a learnable parameter initialized to $0.01$. No batch normalization is used and all hidden layers use ReLU activations. Dataset-specific designs are elaborated below.

\textbf{High-dimensional Gaussian mixture}.
We use a feed-forward network for our deep kernel featurizer with latent dimensions $2d, 3d, \text{and } 2d$. Details of each model is given in \cref{tab:architectures_hdgm}.

\textbf{Sinusoid}.
The deep kernel featurizer is taken to be a feed-forward network with widths 1x8x12x8. C2st, infoNCE, and NWJ use a similar architecture --one with widths 2x8x12x8x1-- which includes an additional scalar output layer.

\textbf{RatInABox}.
We use a feed-forward featurizer with details given in \cref{tab:architectures_riab}. Unlike the previous two problems, the sample spaces $\X$ and $\Y$ are not equivalent, and so the deep featurizers for $k$ and $l$ have different architectures.

\newcommand{\emptyrow}{ &  &  & }
\newcommand{\linearblockA}[4]{
    \multirow{2}{*}{\(
        \left[
        \begin{array}{c}
            #1 \rightarrow #2 \rightarrow #3 \rightarrow #4 \\
        \end{array}
        \right]
    \)}
}
\newcommand{\linearblockB}[5]{
    \multirow{2}{*}{\(
        \left[
        \begin{array}{c}
            #1 \rightarrow #2 \rightarrow #3 \rightarrow #4 \rightarrow #5 \\
        \end{array}
        \right]
    \)}
}

\begin{table}[!ht]
    \centering
    \resizebox{0.75\linewidth}{!}{
    \begin{tabular}{|c||c|c|c|c|}
        \hline
        dataset & model & input & featurizer \\

        \hline
        \multirow{6}{*}{HDGM-4} & \multirow{2}{*}{HSIC-D} & \multirow{2}{*}{X or Y} & \linearblockA{2}{4}{6}{4} \\
        \emptyrow \\
        \cline{2-4}
         & \multirow{2}{*}{MMD-D} & \multirow{2}{*}{[X, Y]} & \linearblockA{4}{8}{12}{8} \\
        \emptyrow \\
        \cline{2-4}
         & \multirow{2}{*}{C2ST-S/L} & \multirow{2}{*}{[X, Y]} & \linearblockB{4}{8}{12}{8}{1} \\
        \emptyrow \\
        \cline{2-4}

        \hline
        \multirow{6}{*}{HDGM-8} & \multirow{2}{*}{HSIC-D} & \multirow{2}{*}{X or Y} & \linearblockA{4}{8}{12}{8} \\
        \emptyrow \\
        \cline{2-4}
         & \multirow{2}{*}{MMD-D} & \multirow{2}{*}{[X, Y]} & \linearblockA{8}{16}{24}{16} \\
        \emptyrow \\
        \cline{2-4}
         & \multirow{2}{*}{C2ST-S/L} & \multirow{2}{*}{[X, Y]} & \linearblockB{8}{16}{24}{16}{1} \\
        \emptyrow \\
        \cline{2-4}

        \hline
        \multirow{6}{*}{HDGM-10} & \multirow{2}{*}{HSIC-D} & \multirow{2}{*}{X or Y} & \linearblockA{5}{10}{15}{10} \\
        \emptyrow \\
        \cline{2-4}
         & \multirow{2}{*}{MMD-D} & \multirow{2}{*}{[X, Y]} & \linearblockA{10}{20}{30}{20} \\
        \emptyrow \\
        \cline{2-4}
         & \multirow{2}{*}{C2ST-S/L} & \multirow{2}{*}{[X, Y]} & \linearblockB{10}{20}{30}{20}{1} \\
        \emptyrow \\
        \cline{2-4}

        \hline
        \multirow{6}{*}{HDGM-20} & \multirow{2}{*}{HSIC-D} & \multirow{2}{*}{X or Y} & \linearblockA{10}{20}{30}{20} \\
        \emptyrow \\
        \cline{2-4}
         & \multirow{2}{*}{MMD-D} & \multirow{2}{*}{[X, Y]} & \linearblockA{20}{40}{60}{40} \\
        \emptyrow \\
        \cline{2-4}
         & \multirow{2}{*}{C2ST-S/L} & \multirow{2}{*}{[X, Y]} & \linearblockB{20}{40}{60}{40}{1} \\
        \emptyrow \\
        \cline{2-4}

        \hline
    \end{tabular}
    \quad
    \begin{tabular}{|c||c|c|c|c|}
        \hline
        dataset & model & input & featurizer \\

        \hline
        \multirow{6}{*}{HDGM-30} & \multirow{2}{*}{HSIC-D} & \multirow{2}{*}{X or Y} & \linearblockA{15}{30}{45}{30} \\
        \emptyrow \\
        \cline{2-4}
         & \multirow{2}{*}{MMD-D} & \multirow{2}{*}{[X, Y]} & \linearblockA{30}{60}{90}{60} \\
        \emptyrow \\
        \cline{2-4}
         & \multirow{2}{*}{C2ST-S/L} & \multirow{2}{*}{[X, Y]} & \linearblockB{30}{60}{90}{60}{1} \\
        \emptyrow \\
        \cline{2-4}

        \hline
        \multirow{6}{*}{HDGM-40} & \multirow{2}{*}{HSIC-D} & \multirow{2}{*}{X or Y} & \linearblockA{20}{40}{60}{40} \\
        \emptyrow \\
        \cline{2-4}
         & \multirow{2}{*}{MMD-D} & \multirow{2}{*}{[X, Y]} & \linearblockA{40}{80}{120}{80} \\
        \emptyrow \\
        \cline{2-4}
         & \multirow{2}{*}{C2ST-S/L} & \multirow{2}{*}{[X, Y]} & \linearblockB{40}{80}{120}{80}{1} \\
        \emptyrow \\
        \cline{2-4}

        \hline
        \multirow{6}{*}{HDGM-50} & \multirow{2}{*}{HSIC-D} & \multirow{2}{*}{X or Y} & \linearblockA{25}{50}{75}{50} \\
        \emptyrow \\
        \cline{2-4}
         & \multirow{2}{*}{MMD-D} & \multirow{2}{*}{[X, Y]} & \linearblockA{50}{100}{150}{100} \\
        \emptyrow \\
        \cline{2-4}
         & \multirow{2}{*}{C2ST-S/L} & \multirow{2}{*}{[X, Y]} & \linearblockB{50}{100}{150}{100}{1} \\
        \emptyrow \\
        \cline{2-4}

        \hline
    \end{tabular}
    }
    \caption{Featurizer architectures used in deep kernels for HSIC-D, MMD-D, and classifier architecture used for C2ST-S/L on the HDGM problem. Brackets denote a sequence of linear layers with corresponding input and output features.}
    \label{tab:architectures_hdgm}
\end{table}

\newcommand{\linearblock}[4]{
    \multirow{2}{*}{\(
        \left[
        \begin{array}{c}
            #1 \rightarrow #2 \rightarrow #3 \rightarrow #4 \\
        \end{array}
        \right]
    \)}
}
\newcommand{\linearblockscalar}[4]{
    \multirow{2}{*}{\(
        \left[
        \begin{array}{c}
            #1 \rightarrow #2 \rightarrow #3 \rightarrow #4 \rightarrow 1 \\
        \end{array}
        \right]
    \)}
}

\begin{table}[!ht]
    \centering
    \begin{tabular}{|c||c|c|}
        \hline
        method & input & network \\
        \hline \hline
        \multirow{4}{*}{HSIC-D} & \multirow{2}{*}{X} & \linearblock{8}{32}{64}{32} \\
        & & \\
        \cline{2-3}
        & \multirow{2}{*}{Y} & \linearblock{2}{4}{8}{4} \\
        & & \\
        \hline
        \multirow{2}{*}{MMD-D} & \multirow{2}{*}{[X, Y]} & \linearblock{10}{32}{64}{32} \\
        & & \\
        \cline{2-3}
        \hline
        \multirow{2}{*}{C2ST-S/L} & \multirow{2}{*}{[X, Y]} & \linearblockscalar{10}{32}{64}{32} \\
        & & \\
        \cline{2-3}
        \hline
    \end{tabular}
    \vspace{5pt}
    \caption{Featurizer architectures used in deep kernels for HSIC-D, MMD-D, and classifier architecture used for C2ST-S/L on the RatInABox problem. Brackets denote a sequence of linear layers with corresponding input and output features.}
    \label{tab:architectures_riab}
\end{table}

\section{Additional Experiments}\label{app:more-exps}

\subsection{High-Dimensional Gaussian Mixture}
We provide comprehensive power versus test size results for the HDGM problem at all dimensions $d = \{2,4,5,10,15,20\}$ in \cref{fig:power_vs_testsize_hdgm}. Overall, our method HSIC-D/Dx achieves highest power at the smallest test sizes.

\begin{figure*}[!ht]
    \begin{center}
        {\pdftooltip{\includegraphics[width=0.8\textwidth]{figures/exp/legend.pdf}}{These are baselines considered in this paper. See more details in Section~\ref{sec:exp}.}}\\
        \subfigure[HDGM-4]
        {\includegraphics[width=0.245\textwidth]{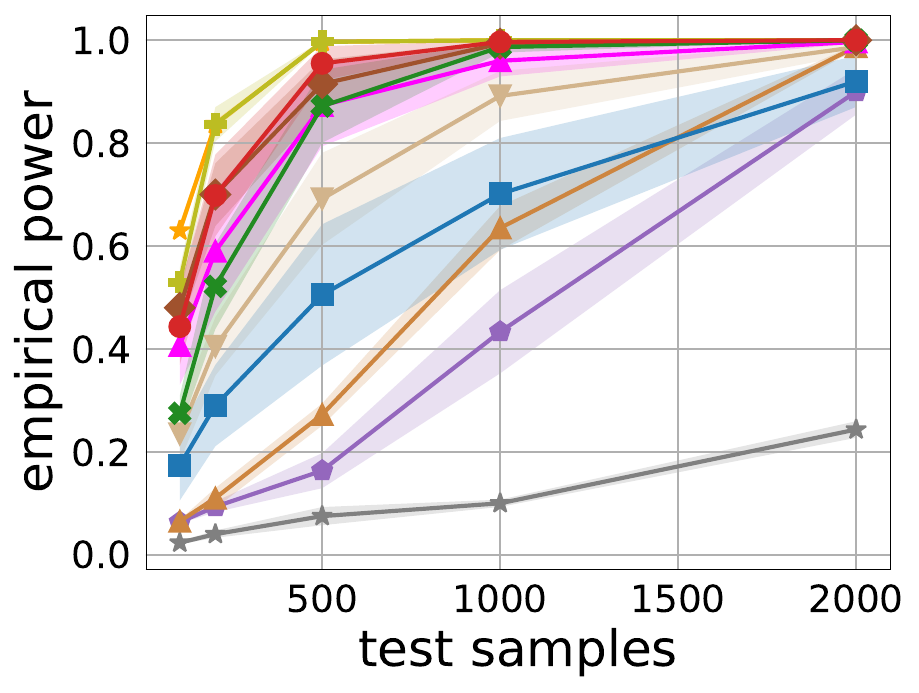}}
        \subfigure[HDGM-8]
        {\includegraphics[width=0.245\textwidth]{figures/exp/hdgm/power_vs_testsize_HDGM8.pdf}}
        \subfigure[HDGM-10]
        {\includegraphics[width=0.245\textwidth]{figures/exp/hdgm/power_vs_testsize_HDGM10.pdf}}
        \subfigure[HDGM-20]
        {\includegraphics[width=0.245\textwidth]{figures/exp/hdgm/power_vs_testsize_HDGM20.pdf}}
        \subfigure[HDGM-30]
        {\includegraphics[width=0.245\textwidth]{figures/exp/hdgm/power_vs_testsize_HDGM30.pdf}}
        \subfigure[HDGM-40]
        {\includegraphics[width=0.245\textwidth]{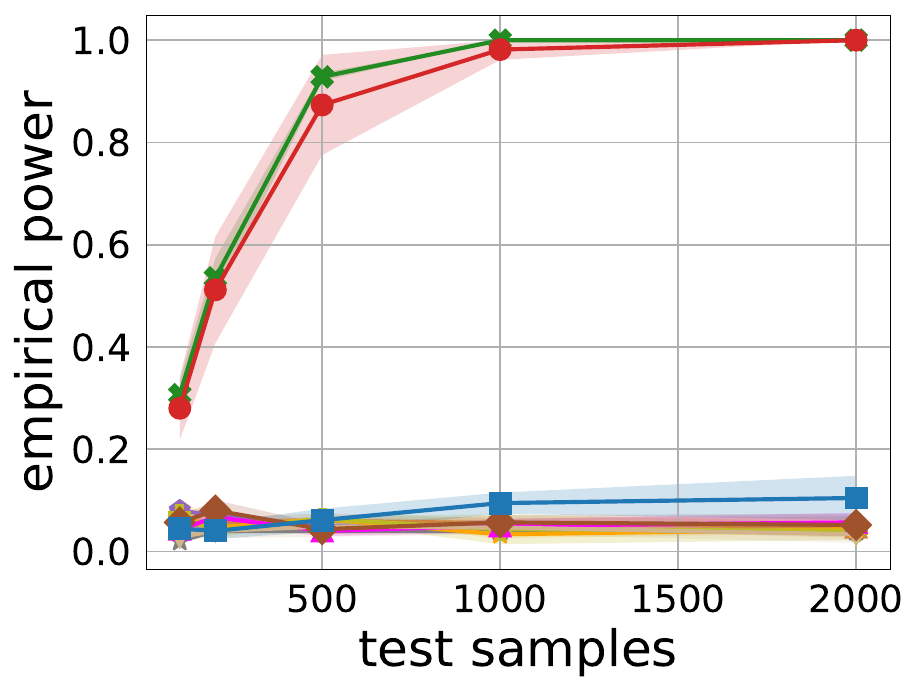}}
        \subfigure[HDGM-50]
        {\includegraphics[width=0.245\textwidth]{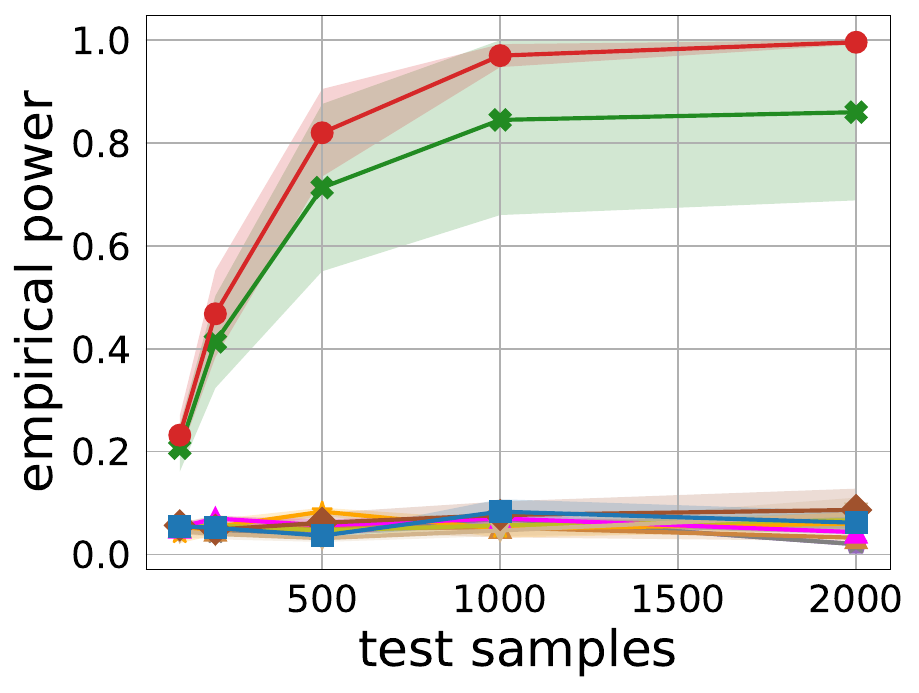}}
        \caption{Power vs test size $m$ for the HDGM problem at dimensions $d = \{2,4,5,10,15,20\}$. The average test power is computed over 5 training runs, where the empirical power is determined over 100 permutation tests. The shaded region covers one standard error from the mean} 
        \label{fig:power_vs_testsize_hdgm}
    \end{center}
\end{figure*}

Additionally, we demonstrate the effectiveness of our method at various dimensions $d/2$ by examining the empirical test power at HDGM-$d$ for $d\in\{4,8,10,20,30,40,50\}$ with fixed test sizes $m$. Results are shown in \cref{fig:power_vs_dim}. Again, HSIC-D exhibits the highest test power across all dimensions. When using a small number of test samples (e.g. $m=100$), the performance of HSIC-D slightly degrades with increasing dimension, whereas at larger test sample sizes it consistently has near-perfect power.

\begin{figure}[!ht]
    \begin{center}
        {\pdftooltip{\includegraphics[width=0.8\textwidth]{figures/exp/legend.pdf}}{These are baselines considered in this paper. See more details in Section~\ref{sec:exp}.}}\\
        \subfigure[$m = 100$]
        {\includegraphics[width=0.244\textwidth]{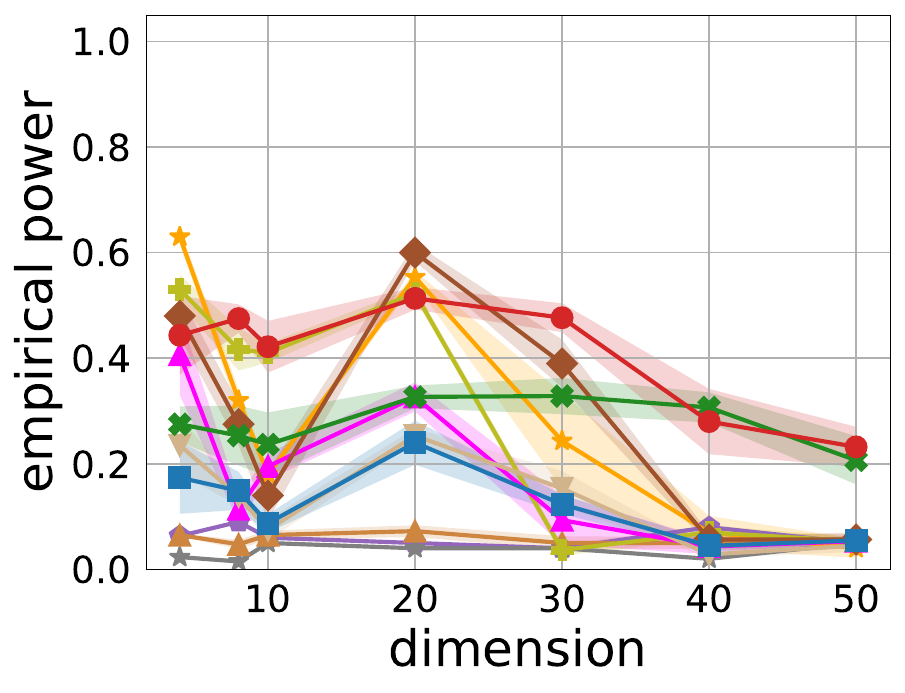}}
        \subfigure[$m = 200$]
        {\includegraphics[width=0.244\textwidth]{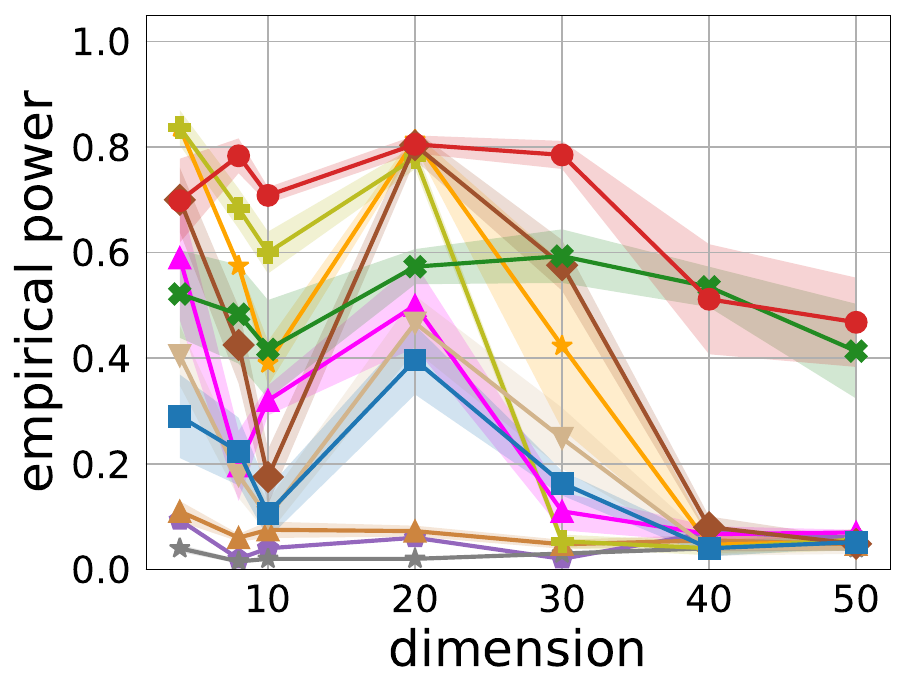}}
        \subfigure[$m = 500$]
        {\includegraphics[width=0.244\textwidth]{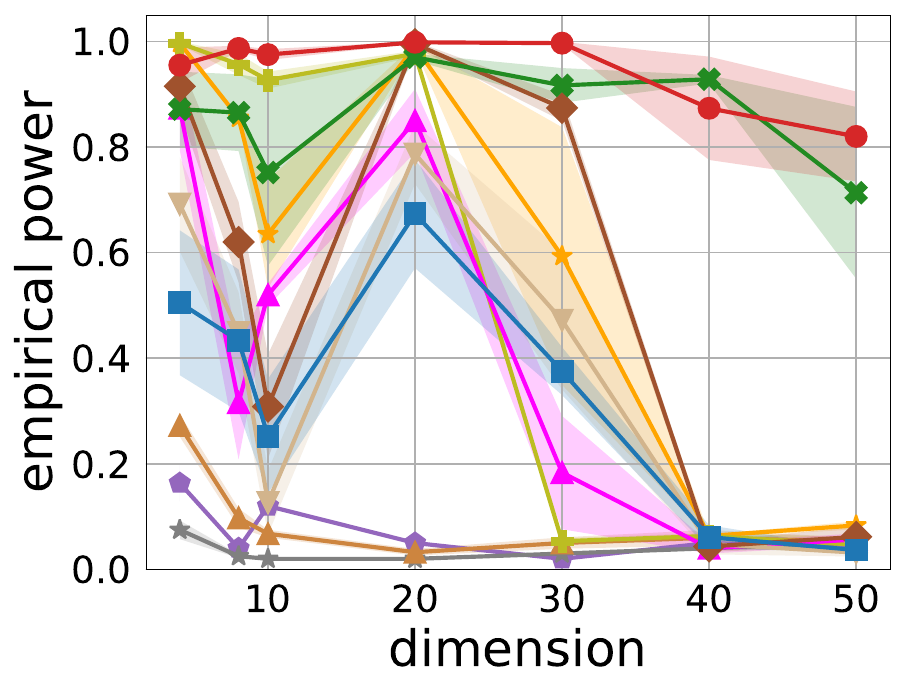}}
        \subfigure[$m = 1000$]
        {\includegraphics[width=0.244\textwidth]{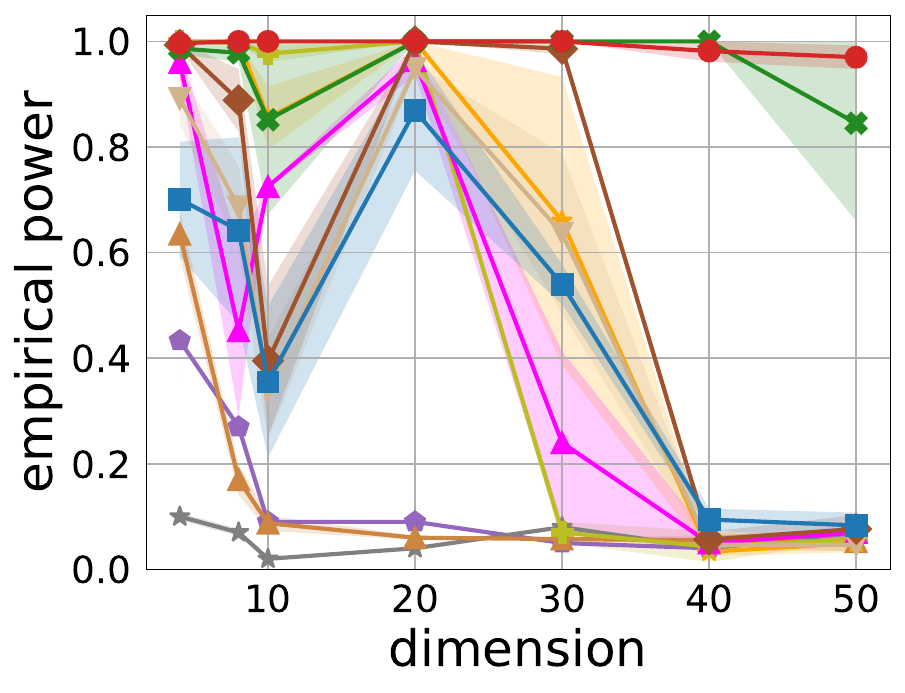}}
        \vspace{-1em}
        \caption{Empirical power vs dimension across various test sample sizes $m=\{100,200,500,1000\}$ for HDGM. The shaded region covers one standard error over 5 training runs.}
        \label{fig:power_vs_dim}
    \end{center}
\end{figure}

\subsection{Type-I Error}
\cref{tab:type1_error} shows that the type-I error rates for our optimization-based tests are well-controlled.

\begin{table}[h]
    \centering
    \begin{adjustbox}{width=1\textwidth}
    \begin{tabular}{|c||c|c|c|c|c|c|c|c|c|}
        \hline
        Method & HDGM-4 & HDGM-8 & HDGM-10 & HDGM-20 & HDGM-30 & HDGM-40 & HDGM-50 & Sinusoid & RatInABox \\
        \hline\hline
        HSIC-D & 0.043 & 0.043 & 0.050 & 0.050 & 0.062 & 0.057 & 0.052 & 0.050 & 0.048 \\
        \hline
        MMD-D & 0.048 & 0.055 & 0.040 & 0.053 & 0.048 & 0.048 & 0.055 & 0.054 & 0.050 \\
        \hline
        C2ST-L & 0.060 & 0.030 & 0.053 & 0.048 & 0.053 & 0.058 & 0.045 & 0.046 & 0.048 \\
        \hline
        InfoNCE & 0.046 & 0.046 & 0.046 & 0.054 & 0.044 & 0.050 & 0.048 & 0.048 & 0.045 \\
        \hline
        NWJ & 0.050 & 0.054 & 0.058 & 0.052 & 0.044 & 0.064 & 0.054 & 0.052 & 0.042 \\
        \hline
    \end{tabular}
    \end{adjustbox}
    \caption{Average type-I error rates under the null distribution over 400 tests. We use $m=512$ samples.}
    \label{tab:type1_error}
\end{table}

\subsection{Maximizing HSIC vs. its SNR} \label{app:J_vs_hsic}
We examine the trade-off between directly optimizing HSIC versus its SNR objective $J$. The results on power versus test size are shown in \cref{fig:appendix_J_vs_hsic}. Optimizing our proposed objective $J$ significantly outperforms optimizing HSIC for all problems. 

\begin{figure*}[!ht]
    \begin{center}
        \subfigure[HDGM-10]
        {\includegraphics[width=0.244\textwidth]{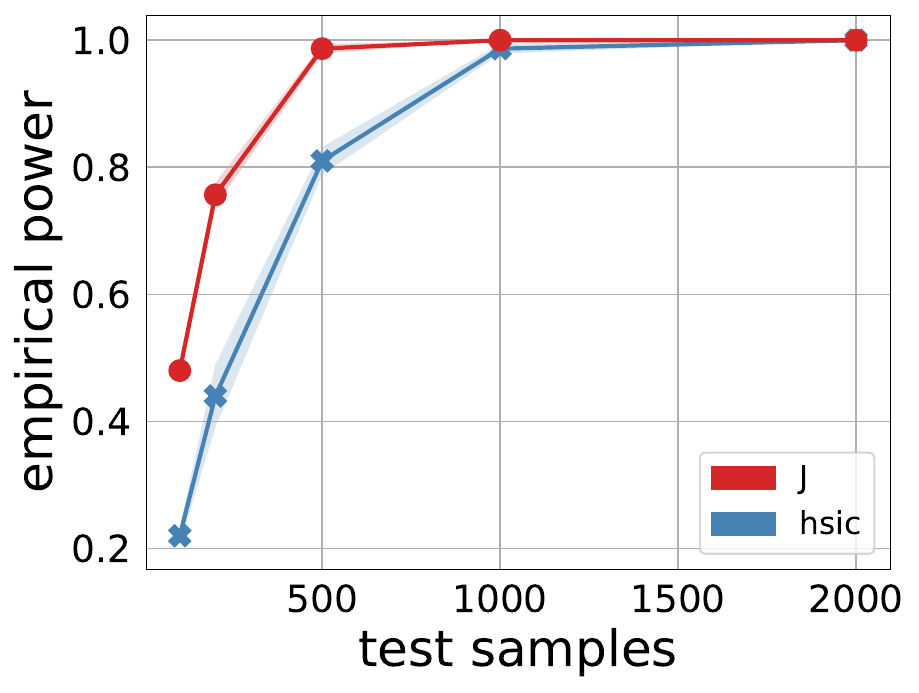}}
        \subfigure[HDGM-30]
        {\includegraphics[width=0.244\textwidth]{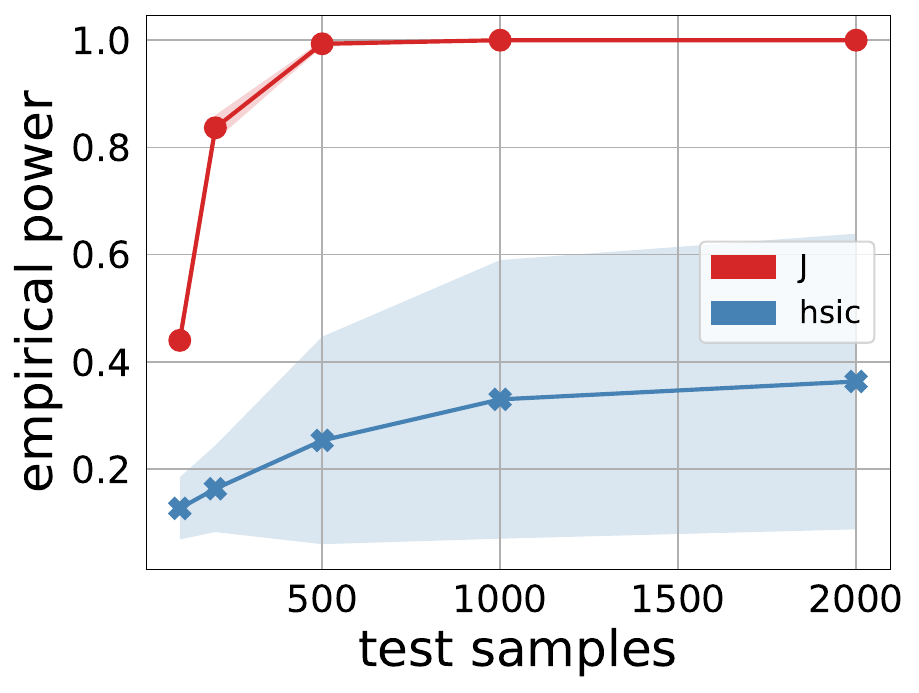}}
        \subfigure[HDGM-50]
        {\includegraphics[width=0.244\textwidth]{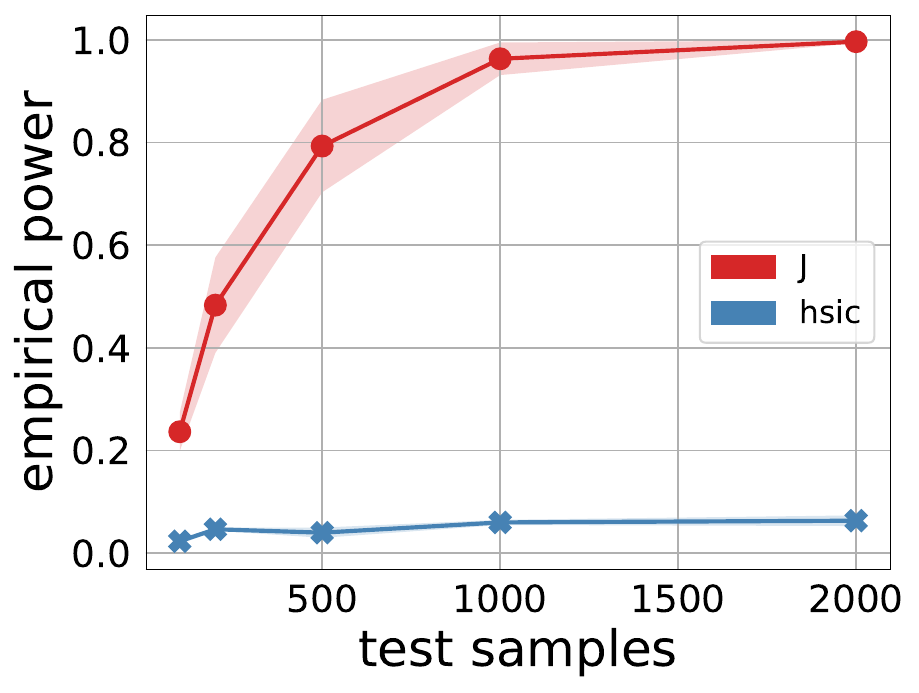}}
        \subfigure[RatInABox]
        {\includegraphics[width=0.244\textwidth]{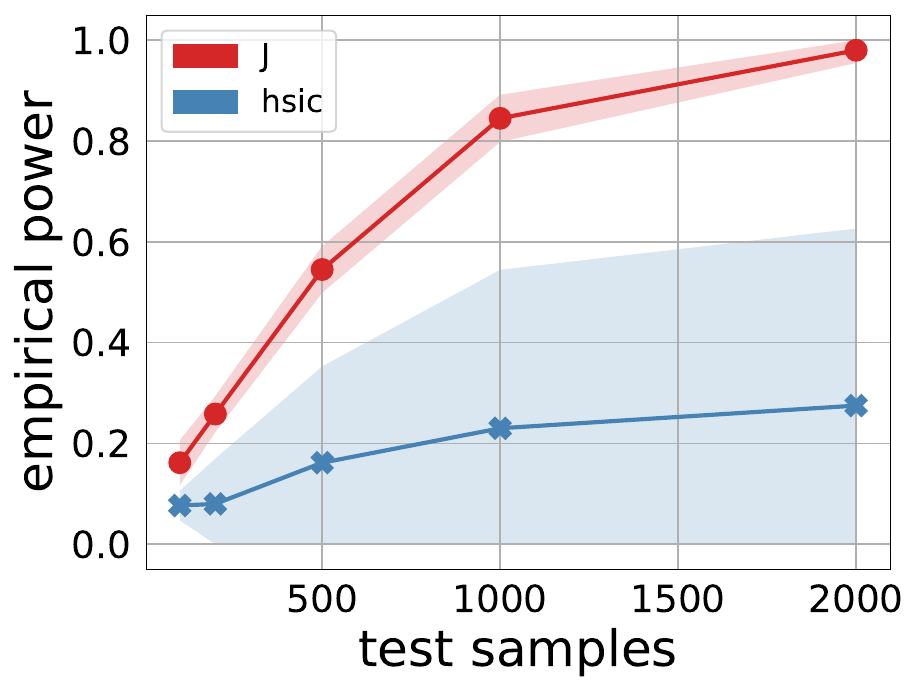}}
        \vspace{-1em}
        \caption{Test power using deep kernels optimized for the approximate asymptotic test power $J$ (red) versus optimizing just the test statistic HSIC (blue).}
        \label{fig:appendix_J_vs_hsic}
    \end{center}
\end{figure*}

\subsection{Independence Testing with MMD} \label{app:mmd-independence-tests}
Let $\bZ = \{ (X_1,Y_1),...,(X_m, Y_m) \}$ be a test set and $\mathcal{S}_m$ be the permutation group of $[m]$ with elements $\sigma\in\mathcal{S}_m$. Suppose we take $p$ permutations $\boldsymbol{\sigma} = \{\sigma_1,...,\sigma_p\}$, with each permutation sampled uniformly from $\mathcal{G}_m$. We define the action of $\bsigma$ on test samples $\bZ$ as $\bsigma\bZ = \{ (X_i, Y_{\sigma_\ell(i)}) \}_{i \in [m], \ell \in [p]}$, and the action of $\bsigma$ on the empirical distribution $\hat{\P}_{XY}$ as the empirical distribution of the permuted samples, i.e.\ $\bsigma\hat{\P}_{XY} = \frac{1}{np} \sum_{i=1}^n \sum_{\ell=1}^p \delta_{X_i \times Y_{\sigma_\ell (i)}} $.

One way we can construct an independence criterion is by taking the MMD between $\P_{XY}$ and $\P_X\times\P_Y$. We do this for MMD-D by using the empirical distributions $\hat{\P}_{XY}$ and $\bsigma\hat{\P}_{XY}$ respectively, which yields
\begin{equation*}
    \MMD_{k\times l}^2(\hat{\P}_{XY}, \bsigma\hat{\P}_{XY})
    = \frac{1}{m^2} \sum_{i,j\in[m]} k_{i,j} l_{i,j} 
    - \frac{2}{m^2p} \sum_{\substack{i,j\in[m] \\ q\in[p]}} k_{i,j}l_{i,\sigma_q(j)}
    + \frac{1}{m^2p^2} \sum_{\substack{i,j\in[m] \\ q,r\in[p]}} k_{i,j} l_{\sigma_q(i), \sigma_r(j)}.
\end{equation*}

We first show that even with one permutation, this yields a consistent estimator.
\thmoneperm*
\begin{proof}
First notice that
\[
    \MMD^2(\P_{xy}, \P_x \times P_y)
    = \underbrace{\E h((X, Y), (X', Y'))}_{\mu_1}
    + \underbrace{\E h((X, Y'), (X'', Y'''))}_{\mu_2}
    - 2 \underbrace{\E h((X, Y), (X', Y''))}_{\mu_3}
,\]
while
the estimator
     $\widehat{\MMD}_b^2(\bZ, \sigma \bZ)$
is given by
\[
     \underbrace{\frac{1}{m^2} \sum_{i=1}^m \sum_{j=1}^m h((x_i, y_i), (x_j, y_j))}_{T_1}
   + \underbrace{\frac{1}{m^2} \sum_{i=1}^m \sum_{j=1}^m h((x_i, y_{\sigma_i}), (x_j, y_{\sigma_j}))}_{T_2}
   - 2 \underbrace{\frac{1}{m^2} \sum_{i=1}^m \sum_{j=1}^m h((x_i, y_i), (x_j, y_{\sigma_j}))}_{T_3}
.\]
The first term $T_1$ is a typical $V$-statistic:
\[
     \frac{1}{m^2}
     \sum_{i \ne j} h((x_i, y_i), (x_j, y_j))
     + \frac{1}{m^2}
     \sum_{i = 1}^m h((x_i, y_i), (x_i, y_i))
     = \frac{m-1}{m} U_1 + \frac1m R_1
,\]
where we have defined a $U$-statistic
$U_1 = \frac{1}{m (m-1)} \sum_{i \ne j} h((x_i, y_i), (x_j, y_j))$
and a remainder term
$R_1 = \frac1m \sum_i h((x_i, y_i), (x_i, y_i))$.
For $U_1$,
we have immediately that
$\E U_1 = \E h((X, Y), (X', Y')) =: \mu_1$.
Noting that both $R_1$ and $\mu_1$ are necessarily in $[0, \nu^2]$,
the overall error from this term is therefore
\[
    \abs{T_1 - \mu_1}
    = \abs*{
    \left( 1 - \frac1m \right) (U_1 - \mu_1) + \frac1m (R_1 - \mu_1)
    }
    \le \left(1 - \frac1m \right) \abs{U_1 - \mu_1}
    + \frac1m \nu^2
.\]

Changing a single $(x_i, y_i)$ pair
changes $U_1$ by at most $\frac{1}{m (m-1)} \cdot 2 (m-1) \cdot \nu^2 = \frac{2 \nu^2}{m}$,
and so McDiarmid's inequality gives that
with probability at least $1 - \delta_1$,
\[
    \abs{U_1 - \mu_1}
    \le \frac{2 \nu^2}{m} \sqrt{\frac{m}{2} \log\frac{2}{\delta_1}}
    = \nu^2 \sqrt{\frac{2}{m} \log\frac{2}{\delta_1}}
,\]
and so 
with probability at least $1 - \delta_1$,
\begin{equation} \label{eq:oneperm:t1}
    \abs*{
        T_1
        - \mu_1
    }
    \le
      \left( 1 - \frac1m \right) \nu^2 \sqrt{\frac2m \log\frac{2}{\delta_1}}
    + \frac1m \nu^2
    \le
      \frac{\nu^2}{\sqrt m} \left[ \sqrt{2 \log\frac{2}{\delta_1}}
    + \frac{1}{\sqrt m}
    \right]
.\end{equation}

Turning to $T_3$ next,
we can use a similar approach
if we also take into account the random $\sigma$.
We can write $T_3$ as
\[
     \frac{1}{m^2} \sum_{i, j : \abs{\{i, j, \sigma_j\}} = 3} h((x_i, y_i), (x_j, y_{\sigma_j}))
     + \frac{1}{m^2} \sum_{i, j : \abs{\{i, j, \sigma_j\}} < 3} h((x_i, y_i), (x_j, y_{\sigma_j}))
     = \frac{N_3^{(\sigma)}}{m^2} U_3
     + \left( 1 - \frac{N_3^{(\sigma)}}{m^2} \right) R_3
,\]
where we let $N_3^{(\sigma)}$ be the (random) number of $(i, j)$ pairs
for which $i, j, \sigma_j$ are all distinct,
$U_3$ the mean of $h((x_i, y_i), (X_j, y_{\sigma_j}))$ for which these indices are distinct,
and $R_3$ the mean for which they are not.
Note that $\abs{\mu_3}, \abs{R_3} \le \nu^2$,
regardless of the choice of $\sigma$ and $\bZ$,
and so
\[
    \abs{T_3 - \mu_3}
    =
    \abs*{
    \frac{N_3^{(\sigma)}}{m^2} (U_3 - \mu_3)
    + \left(1 - \frac{N_3^{(\sigma)}}{m^2} \right) (R_3 - \mu_3)
    }
    \le
    \frac{N_3^{(\sigma)}}{m^2} \abs{U_3 - \mu_3}
    + \left(1 - \frac{N_3^{(\sigma)}}{m^2} \right) 2 \nu^2
.\]

Fix the choice of $\sigma$, but let $\bZ$ be random.
Then $\E U_3 = \mu_3$,
and changing a single $(x_i, y_i)$ pair
changes the value of $U_3$
by at most
$\frac{1}{N_3^{(\sigma)}} \cdot 3 (m-1) \cdot \nu^2$.
Thus, applying McDiarmid's inequality
conditionally on the choice of $\sigma$,
with probability at least $\delta_{3}'$
\[
\abs{U_3 - \mu_3} \le
\frac{3 \nu^2}{\sqrt2} \, \frac{\sqrt{m} (m - 1)}{N_3^{(\sigma)}} \sqrt{\log\frac{2}{\delta_3'}}
,\]
obtaining that
\[
    \abs{T_3 - \mu_3}
    \le
    \frac{3 \nu^2}{\sqrt2} \, \frac{m - 1}{m \sqrt m} \sqrt{\log\frac{2}{\delta_3'}}
    + \left(1 - \frac{N_3^{(\sigma)}}{m^2} \right) 2 \nu^2
.\]
We will also need to show that $N_3^{(\sigma)} / m^2$ is nearly 1.
We have that
\begin{align*}
     \E N_3^{(\sigma)}
  &= \E \sum_{i=1}^m \sum_{j=1}^m \mathbbm{1}( \abs{ \{ i, j, \sigma_j \} } = 3 )
   = \sum_{i \ne j} \Pr( \sigma_j \notin \{ i, j \} )
   = m (m-1) \cdot \frac{m-2}{m}
   = (m-1) (m-2)
,\end{align*}
so that $\E\left( 1 - N_3^{(\sigma)} / m^2 \right) = \frac3m - \frac{2}{m^2}$.
Moreover, if $\sigma$ and $\sigma'$ almost agree except
that $\sigma'_k = \sigma_l$ and $\sigma'_l = \sigma_k$,
then $\abs{N_3^{(\sigma)} - N_3^{(\sigma')}} \le 2 m$:
the only $(i, j)$ which are potentially affected
are those of the form $(\cdot, k)$ or $(\cdot, l)$.
Using a version of McDiarmid's inequality for uniform distributions of permutations
(\cref{thm:bd-perms})
then gives us that with probability at least $1 - \delta_3'$ over the choice of $\sigma$,
\[
    N_3^{(\sigma)} \ge (m-1) (m-2)
    - m \sqrt{8 m \log\frac{1}{\delta_3''}}
\]
and so
\[
    1 - \frac{N_3^{(\sigma)}}{m^2}
    \le 
    \frac3m - \frac{2}{m^2}
    + \sqrt{\frac8m \log\frac{1}{\delta_3''}}
;\]
thus with probability at least $\delta_3' + \delta_3''$,
we have that
\begin{align*}
    \abs{T_3 - \mu_3}
  &\le 
    \frac{\nu^2}{\sqrt{m}} \left[
    \frac{3}{\sqrt2} \, \frac{m - 1}{m} \sqrt{\log\frac{2}{\delta_3'}}
    + \frac{6}{\sqrt m} - \frac{4}{m \sqrt m} + 4 \sqrt{2 \log\frac{1}{\delta_3''}}
    \right]
\\&\le
    \frac{\nu^2}{\sqrt{m}} \left[
    \frac{3}{2} \sqrt{2 \log\frac{2}{\delta_3'}}
    + 4 \sqrt{2 \log\frac{1}{\delta_3''}}
    + \frac{6}{\sqrt m} 
    \right]
.\end{align*}
To simplify this a little further,
let $\delta_3' = \frac23 \delta_3$
and $\delta_3'' = \frac13 \delta_3$;
then it holds with probability at least $1 - \delta_3$ that
\begin{equation} \label{eq:oneperm:t3}
    \abs{T_3 - \mu_3}
    \le \frac{6 \nu^2}{\sqrt m} \left[ 
        \sqrt{2 \log \frac{3}{\delta_3}}
        + \frac{1}{\sqrt m}
    \right]
.\end{equation}

It remains to handle $T_2$,
which is similar to $T_3$.
Letting
$N_2^{(\sigma)}$
be the number of $(i, j)$ pairs
for which $i, j, \sigma_i, \sigma_j$ are all distinct,
we can similarly define $U_2$ with mean $\mu_2$
and $R_2$ with $\abs{R_2} \le \nu^2$ so that
\[
    \abs{T_2 - \mu_2}
    =
    \abs*{
    \frac{N_2^{(\sigma)}}{m^2} (U_2 - \mu_2)
    + \left(1 - \frac{N_2^{(\sigma)}}{m^2} \right) (R_2 - \mu_2)
    }
    \le
    \frac{N_2^{(\sigma)}}{m^2} \abs{U_2 - \mu_2}
    + \left(1 - \frac{N_3^{(\sigma)}}{m^2} \right) 2 \nu^2
.\]
For a fixed $\sigma$,
changing a single $(x_i, y_i)$ pair
changes the value of $U_2$
by at most $\frac{1}{N_2^{(\sigma)}} \cdot 4 (m-1) \cdot \nu^2$,
and we obtain like before that
with probability at least $1 - \delta_2'$,
\[
    \abs{T_2 - \mu_2}
    \le
    \frac{4 \nu^2}{\sqrt2} \, \frac{m - 1}{m \sqrt m} \sqrt{\log\frac{2}{\delta_2'}}
    + \left(1 - \frac{N_2^{(\sigma)}}{m^2} \right) 2 \nu^2
.\]
Since when $i \ne j$ any permutation satisfies that $\sigma_i \ne \sigma_j$,
we have that
\begin{align*}
     \E N_2^{(\sigma)}
  &= \sum_{i \ne j} \Pr( \sigma_i, \sigma_j \notin \{ i, j \} )
   = \sum_{i \ne j} \frac{m-2}{m} \cdot \frac{m-3}{m-1}
   = (m-2) (m-3)
.\end{align*}
A single transposition changes
changes $N_2^{(\sigma)}$ by no more than $4 m$,
and so by \cref{thm:bd-perms} we obtain that
with probability at least $1 - \delta_2''$,
\[
    1 - \frac{N_2^{(\sigma}}{m^2}
    \le 
        \frac5m
        - \frac{6}{m^2}
        + \frac{4}{\sqrt m} \sqrt{2 \log\frac{1}{\delta_2''}}
.\]
Letting $\delta_2' = \frac23 \delta_2$ and $\delta_2'' = \frac13 \delta_2$,
it thus holds with probability at least $1 - \delta_2$
that
\begin{align}
    \abs{T_2 - \mu_2}
  &\le 
    \frac{2 \nu^2}{\sqrt{m}} \left[
    \frac{m - 1}{m} \sqrt{2 \log\frac{3}{\delta_2}}
    + \frac{5}{\sqrt m} - \frac{6}{m \sqrt m} + 4 \sqrt{2 \log\frac{3}{\delta_2}}
    \right]
\notag
\\&\le
    \frac{10 \nu^2}{\sqrt{m}} \left[
    \sqrt{2 \log\frac{3}{\delta_2}}
    + \frac{1}{\sqrt m} 
    \right]
\label{eq:oneperm:t2}
.\end{align}
To combine \eqref{eq:oneperm:t1}, \eqref{eq:oneperm:t3}, and \eqref{eq:oneperm:t2},
it will be convenient to use
$\delta_1 = \frac14 \delta$ and $\delta_2 = \delta_3 = \frac38 \delta$,
since then $\delta_1 / 2 = \delta / 8$,
$\delta_2 / 3 = \delta / 8$
and $\delta_1 + \delta_2 + \delta_3 = \delta$.
Then we obtain that with probability at least $1 - \delta$,
\[
    \abs{T_1 + T_2 - 2 T_3 - (\mu_1 + \mu_2 - 2 \mu_3)}
    \le \frac{17 \nu^2}{\sqrt m} \left[ 
          \sqrt{2 \log \frac{8}{\delta}}
        + \frac{1}{\sqrt m}
    \right]
.\qedhere\]
\end{proof}

\begin{lemma}[McDiarmid's inequality for uniform permutations] \label{thm:bd-perms}
    Let $f : \mathcal S_m \to \R$,
    where $\mathcal S_m$ is the symmetric group of permutations on $[m]$,
    satisfy that
    for every $\sigma, \sigma' \in \mathcal S_m$,
    $\abs{f(\sigma) - f(\sigma')} \le c \, \abs{ \{ i : \sigma_i \ne \sigma'_i \}}$.
    Let $S$ be a random variable which is uniform on $\mathcal S_m$.
    Then it holds with probability at least $1 - \delta$ that
    ${f(S) - \E_S f(S)} \le c \sqrt{8 m \log\frac1\delta}$.
\end{lemma}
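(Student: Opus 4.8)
The plan is to run the standard method of bounded differences (Azuma--Hoeffding) on the Doob martingale obtained by revealing the values of the permutation $S$ one coordinate at a time. Set $\mathcal F_i = \sigma(S_1,\dots,S_i)$ and $Z_i = \E[f(S)\mid\mathcal F_i]$, so that $Z_0 = \E_S f(S)$ and $Z_{m-1} = f(S)$ (once the first $m-1$ entries of the permutation are known, the last is forced). Then $(Z_i)_{i=0}^{m-1}$ is a martingale, and the entire argument reduces to showing that its increments satisfy $\lvert Z_i - Z_{i-1}\rvert \le 2c$ almost surely; the factor $2$ is precisely where the permutation structure enters, since one cannot alter a single coordinate of a permutation in isolation.

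For the increment bound, fix any admissible prefix $S_1 = s_1,\dots,S_{i-1}=s_{i-1}$ and let $A = [m]\setminus\{s_1,\dots,s_{i-1}\}$ be the set of still-available values; write $g(a) = \E[f(S)\mid S_{<i}=s_{<i},\,S_i=a]$ for $a\in A$. Then, conditioned on the prefix, $Z_{i-1}$ is the uniform average of the $g(a)$ and $Z_i = g(S_i)$, so it suffices to show $\lvert g(a)-g(b)\rvert\le 2c$ for all $a,b\in A$, which then gives $\lvert Z_i - Z_{i-1}\rvert \le \max_a g(a) - \min_a g(a) \le 2c$. To bound $\lvert g(a)-g(b)\rvert$, couple the two conditional laws: given a permutation $\sigma$ consistent with the prefix and $\sigma_i = a$, let $j$ be the (necessarily $>i$, since $b\notin\{s_1,\dots,s_{i-1}\}$ and $\sigma_i=a\ne b$) position with $\sigma_j = b$, and let $\sigma'$ be obtained from $\sigma$ by swapping the entries in positions $i$ and $j$. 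The map $\sigma\mapsto\sigma'$ is a bijection between the two fibers $\{\sigma : \sigma_{<i}=s_{<i},\sigma_i=a\}$ and $\{\sigma : \sigma_{<i}=s_{<i},\sigma_i=b\}$, which are equinumerous, so it pushes one uniform conditional law forward to the other; and $\sigma,\sigma'$ differ in exactly the two positions $i$ and $j$, whence $\lvert f(\sigma)-f(\sigma')\rvert\le 2c$ by hypothesis. Averaging this over $\sigma$ gives $\lvert g(a)-g(b)\rvert\le 2c$, as needed.

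Finally, apply the Azuma--Hoeffding inequality to $(Z_i)_{i=0}^{m-1}$ with increment bounds $c_i = 2c$: since $\sum_{i=1}^{m-1}c_i^2 = 4c^2(m-1)\le 4c^2 m$,
\[
    \Pr\bigl(f(S) - \E_S f(S) \ge t\bigr)
    \le \exp\!\left(-\frac{t^2}{8\,c^2 m}\right).
\]
Setting the right-hand side equal to $\delta$ and solving for $t$ gives $t = c\sqrt{8m\log(1/\delta)}$, which is the claimed bound. The one step that requires genuine care is the increment estimate of the second paragraph: the naive "change one coordinate" bounded-differences argument is not available, and the transposition coupling that repairs it is exactly what produces the constant $8 = 2\cdot 2^2$ in the exponent.
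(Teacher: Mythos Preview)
Your proposal is correct and follows essentially the same route as the paper: both reveal the permutation one coordinate at a time to form a Doob martingale, bound each increment by $2c$ via a transposition coupling between the conditional laws, and finish with Azuma--Hoeffding. Your write-up is in fact slightly tighter (you use $m-1$ steps and the one-sided Azuma bound, whereas the paper runs all $m$ steps and writes the two-sided form), but the argument is the same.
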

\begin{proof}[Proof (based on \citealp{bd-perms}).]
For $i \in \{0, \dots, m\}$ and any permutation $\sigma$,
define the Doob martingale
\[ X_i = \E_S[ f(S) \mid S_1 = \sigma_1, \dots, S_i = \sigma_i ] ,\]
so that $X_0 = \E_S f(S)$ and $X_m = f(\sigma)$.

We now show that
$\abs{X_i - X_{i+1}} \le 2 c$.
No matter the known values $(\sigma_1, \dots, \sigma_i)$,
for any 
$(\sigma_{i+1}, \dots, \sigma_n)$,
we can identify a uniquely paired
$(\sigma'_{i+1}, \dots, \sigma'_n)$
differing in at most two positions,
which implies that $\abs{f(\sigma) - f(\sigma')} \le 2 c$.
Because these pairs are equiprobable under the uniform distribution of $S$,
this implies that
any $\E\left[ X_{i+1} \mid S_{i+1} = j\right]$
differs from any $\E\left[ X_{i+1} \mid S_{i+1} = j' \right]$
by at most $2 c$,
and so the same holds on average over $j$.

The Azuma-Hoeffding inequality
then gives
$\Pr({X_m - X_0} \ge \varepsilon) \le 2 \exp\left( - \varepsilon^2 / (2 m (2 c)^2) \right)$,
and the desired result follows by solving for $\varepsilon$.
\end{proof}

When considering more than one permutation,
this looks reminiscent of the biased HSIC estimator. Indeed, when we consider the set of $n$ circular shifts $\bsigma_\circ$ (i.e.\ permutations where the order of the $Y$s are rotations of one another), then the two are equal with
\begin{equation*}
    \MMD_{k\times l}^2(\hat{\P}_{XY}, \bsigma_{\circ}\hat{\P}_{XY} ) = \MMD_{k\times l}^2(\hat{\P}_{XY}, \hat{\P}_X \otimes \hat{\P}_Y) = \HSIC_{k,l}(\hat{\P}_{XY}).
\end{equation*}

In practice, MMD-D underperforms against HSIC-D. We believe this might be because the variance of the permuted MMD estimator is often higher than that of HSIC. The following proposition proves this for the biased estimators. 

\begin{prop} \label{prop:mmd-hsic-var}
    Suppose we have samples $\bZ = \{(X_1,Y_1),...,(X_m,Y_m)\}$ drawn iid from $\P_{XY}$, and let $\bsigma = \{\sigma_1,...,\sigma_p\}$ be a set of $p$ permutations sampled uniformly from $\mathcal{S}_m$, the permutation group of $[m]$. We define the action of $\bsigma$ on samples $\bZ$ as $\bsigma\bZ = \{ (X_i, Y_{\sigma_{\ell}(i)}) \}_{\substack{i \in [m], \ell \in [p]}}$. Then for the biased HSIC and MMD estimators we have
    \[
        \Var[ \widehat\HSIC_{k,l}(\bZ) ]
        \leq
        \Var[ \widehat{\MMD^2}_{k\times l}(\bZ, \bsigma\bZ) ].
    \]
\end{prop}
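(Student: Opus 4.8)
\emph{Proof plan.} The plan is to cast both estimators as squared distances between empirical mean embeddings in the RKHS $\cH$ of the product kernel $h = k\times l$. Writing $\phi$ for the feature map of $h$, let $\hat\mu_{XY} = \frac1m\sum_{i=1}^m\phi(X_i,Y_i)$, let $\hat\mu_\otimes = \frac1{m^2}\sum_{i,j=1}^m\phi(X_i,Y_j)$, and for a permutation $\sigma$ let $\hat\nu_\sigma = \frac1m\sum_{i=1}^m\phi(X_i,Y_{\sigma(i)})$; then $\widehat\HSIC_{k,l}(\bZ) = \lVert\hat\mu_{XY}-\hat\mu_\otimes\rVert_\cH^2$ and $\widehat{\MMD}^2_{k\times l}(\bZ,\bsigma\bZ) = \lVert\hat\mu_{XY}-\hat\mu_0\rVert_\cH^2$ with $\hat\mu_0 = \frac1p\sum_{\ell=1}^p\hat\nu_{\sigma_\ell}$. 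First I would record the elementary fact that $\E_\sigma[\hat\nu_\sigma\mid\bZ] = \hat\mu_\otimes$ for uniform $\sigma$ (using $\Pr_\sigma[\sigma(i)=j]=1/m$), so that with $\Delta := \hat\mu_{XY}-\hat\mu_\otimes$ (a function of $\bZ$ alone, with $\lVert\Delta\rVert_\cH^2 = \widehat\HSIC_{k,l}(\bZ)$) and $W_\ell := \hat\nu_{\sigma_\ell}-\hat\mu_\otimes$ (i.i.d.\ given $\bZ$, mean zero) one gets the exact decomposition
\[
  \widehat{\MMD}^2_{k\times l}(\bZ,\bsigma\bZ) = \widehat\HSIC_{k,l}(\bZ) - 2\bigl\langle\Delta,\bar W\bigr\rangle_\cH + \lVert\bar W\rVert_\cH^2, \qquad \bar W := \tfrac1p\sum_{\ell=1}^p W_\ell .
\]

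Next I would expand $\Var_{\bZ,\bsigma}$ of the right-hand side. The cross term $\langle\Delta,\bar W\rangle_\cH$ has conditional mean zero, so it contributes a non-negative variance $\frac4p\E_\bZ\bigl[\Var_\sigma\langle\Delta,\hat\nu_\sigma\rangle_\cH\bigr]$ and zero covariance with the $\bZ$-measurable quantity $\widehat\HSIC_{k,l}(\bZ)$; and splitting $\lVert\bar W\rVert_\cH^2$ into its off-diagonal part $\frac1{p^2}\sum_{\ell\neq\ell'}\langle W_\ell,W_{\ell'}\rangle_\cH$ (conditional mean zero) and its diagonal part $\frac1{p^2}\sum_\ell\lVert W_\ell\rVert_\cH^2$ (conditional mean $\frac1p V(\bZ)$, where $V(\bZ):=\E_\sigma\lVert\hat\nu_\sigma-\hat\mu_\otimes\rVert_\cH^2\ge 0$) shows $\E_\bsigma[\widehat{\MMD}^2\mid\bZ] = \widehat\HSIC_{k,l}(\bZ) + \frac1pV(\bZ)$. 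By the law of total variance the claim then reduces to
\[
  \E_\bZ\bigl[\Var_\bsigma[\widehat{\MMD}^2\mid\bZ]\bigr] + \frac2p\operatorname{Cov}_\bZ\bigl[\widehat\HSIC_{k,l}(\bZ),\,V(\bZ)\bigr] + \frac1{p^2}\Var_\bZ[V(\bZ)] \ \ge\ 0 ,
\]
where $\Var_\bsigma[\widehat{\MMD}^2\mid\bZ]$ itself equals $\frac4p\Var_\sigma\langle\Delta,\hat\nu_\sigma\rangle_\cH + \Var_\bsigma[\lVert\bar W\rVert_\cH^2\mid\bZ] - \frac4{p^2}\bigl\langle\Delta,\E_\sigma[W_1\lVert W_1\rVert_\cH^2\mid\bZ]\bigr\rangle_\cH$, the last being a conditional third-moment term.

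The hard part will be certifying this last inequality — essentially, showing that the permutation ``self-variance'' $V$ cannot drag the total variance below $\Var[\widehat\HSIC_{k,l}]$. I expect to do this by writing $\widehat\HSIC_{k,l}(\bZ)$, $V(\bZ)$, $\Var_\sigma\langle\Delta,\hat\nu_\sigma\rangle_\cH$ and the third-moment term as explicit sums over the index-coincidence patterns of the Gram matrices $\K,\L$, and then matching terms to exhibit the left-hand side as a sum of manifestly non-negative quantities. As a sanity check and a cleaner special case: if one uses instead the de-biased permuted estimator for $p\ge 2$ — replacing $\frac1{p^2}\sum_{\ell,\ell'}$ by $\frac1{p(p-1)}\sum_{\ell\neq\ell'}$ in the third term of the MMD formula — then the diagonal term vanishes, $\E_\bsigma[\widehat{\MMD}^2\mid\bZ] = \widehat\HSIC_{k,l}(\bZ)$ exactly, and the inequality is immediate from Rao--Blackwell, $\Var[\widehat\HSIC_{k,l}] = \Var\bigl[\E_\bsigma[\widehat{\MMD}^2\mid\bZ]\bigr] \le \Var[\widehat{\MMD}^2]$.
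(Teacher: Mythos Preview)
The paper's proof is a two-line Rao--Blackwell: it asserts $\E_{\bsigma}\bigl[\widehat{\MMD}^2_{k\times l}(\bZ,\bsigma\bZ)\mid\bZ\bigr] = \widehat{\HSIC}_{k,l}(\bZ)$ and then invokes the law of total variance. You have been more careful than the paper at exactly the right place. In computing the conditional expectation of the third sum $\tfrac{1}{m^2p^2}\sum_{i,j,q,r}k_{ij}\,l_{\sigma_q(i),\sigma_r(j)}$, the paper writes an iterated expectation $\E_{\sigma_q}\E_{\sigma_r}[\,\cdot\,]$ as though $\sigma_q$ and $\sigma_r$ were always independent; this is wrong on the diagonal $q=r$, where for $i\ne j$ the pair $(\sigma_q(i),\sigma_q(j))$ is drawn \emph{without} replacement and $\E_\sigma[l_{\sigma(i),\sigma(j)}]=\tfrac{1}{m(m-1)}\sum_{s\ne t}l_{st}$ rather than $\tfrac{1}{m^2}\sum_{s,t}l_{st}$. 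Your identity $\E_{\bsigma}[\widehat{\MMD}^2\mid\bZ] = \widehat{\HSIC}(\bZ)+\tfrac{1}{p}V(\bZ)$ with $V(\bZ)=\E_\sigma\lVert\hat\nu_\sigma-\hat\mu_\otimes\rVert_\cH^2$ is the correct conditional mean, and your ``sanity check'' pinpoints the repair: if the third term used $\tfrac{1}{p(p-1)}\sum_{\ell\ne\ell'}$ in place of $\tfrac{1}{p^2}\sum_{\ell,\ell'}$, the paper's argument would go through verbatim.

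For the estimator as actually stated, however, your proposal does not finish either. The residual inequality you isolate rests on the sign of $\operatorname{Cov}_\bZ\bigl[\widehat{\HSIC}(\bZ),V(\bZ)\bigr]$ balanced against the conditional-variance contributions, and you give only a program (expand over index-coincidence patterns, exhibit a sum of squares) rather than a proof. In particular, at order $1/p$ one would need $2\,\E_\bZ\bigl[\Var_\sigma\langle\Delta,\hat\nu_\sigma\rangle_\cH\bigr] + \operatorname{Cov}_\bZ\bigl[\widehat{\HSIC}(\bZ),V(\bZ)\bigr]\ge 0$, and there is no evident mechanism preventing the covariance from being sufficiently negative. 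Your decomposition is sharper than the paper's, but neither argument is complete for the biased-in-$p$ estimator in the proposition.
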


\begin{proof}
    First we note that
    \[
        \widehat{\MMD^2}(\bZ, \bsigma\bZ)
        = \frac{1}{m^2} \sum_{i,j\in[m]} k_{i,j} l_{i,j} 
        - \frac{2}{m^2p} \sum_{\substack{i,j\in[m] \\ q\in[p]}} k_{i,j}l_{i,\sigma_q(j)}
        + \frac{1}{m^2p^2} \sum_{\substack{i,j\in[m] \\ q,r\in[p]}} k_{i,j} l_{\sigma_q(i), \sigma_r(j)}.
    \]
    Taking the expectation of this estimator conditioned on the samples $\bZ$ gives us a Rao-Blackwellization. Now notice that for part of the second term
    \[
        \E_{\bsigma}\left[\frac{1}{p}\sum_{q=1}^p k_{i,j} l_{i,\sigma_q(j) }\right]
        = k_{i,j} \left( \frac{1}{p} \sum_{q=1}^p \E_{\sigma_q}[l_{i,\sigma_q(j)} ] \right)
        = \frac{1}{m}\sum_{t=1}^m k_{i,j} l_{i,t},
    \]
    where we use the fact that $\E_{\sigma_q}[l_{i,\sigma_q(j)}] = \frac{1}{m}\sum_{t=1}^m l_{i,t}$ since the permutation of $j$ is equally likely to be any number in $[m]$. Similarly part of the last term yields
    \[
        \E_\bsigma\left[ \frac{1}{p^2} \sum_{q,r\in [p]} k_{i,j} l_{\sigma_q(i),\sigma_r(j)} \right]
        = k_{i,j} \left( \frac{1}{p}\sum_q \E_{\sigma_q}\left[ \frac{1}{p} \sum_r \E_{\sigma_r}[l_{\sigma_q(i), \sigma_r(j)}  ]  \right] \right)
        = \frac{1}{m^2} \sum_{t,u\in[m]} k_{i,j}l_{t,u}.
    \]
    Therefore, the Rao-Blackwellization is just the biased HSIC estimator
    \begin{equation} \label{eq:hsic-rao-blackwell}
        \E\left[\widehat{\MMD^2}(\bZ,\bsigma\bZ) \mid \bZ\right] = \widehat{\HSIC}(\bZ).
    \end{equation}
    It follows from the law of total variance that
    \begin{align*} 
        \Var[ \widehat{\MMD^2}(\bZ, \bsigma\bZ) ]
        = \Var[ \E[\widehat{\MMD^2}(\bZ, \bsigma\bZ) \mid \bZ] ]
        + \E[ \Var[\widehat{\MMD^2}(\bZ, \bsigma\bZ) \mid \bZ] ]
        \geq \Var[\widehat{\HSIC}(\bZ)].
    \end{align*}    
    The result follows.
\end{proof}

\begin{figure*}[!ht]
    \begin{center}
        \subfigure[variance]
        {\includegraphics[width=0.3\textwidth]{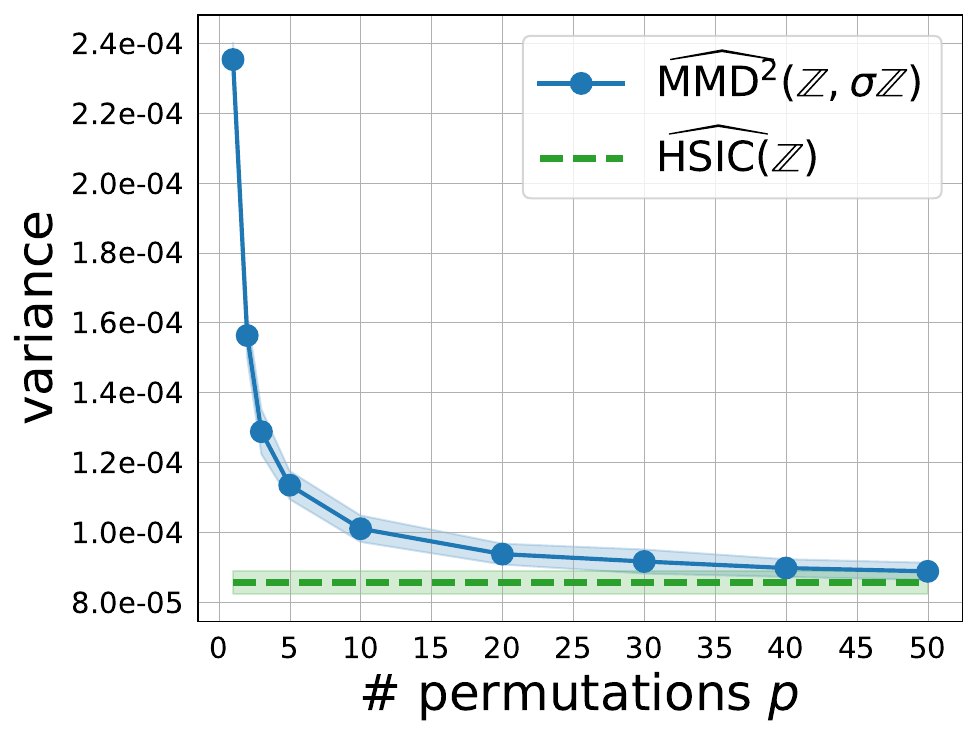}}
        \subfigure[power]
        {\includegraphics[width=0.3\textwidth]{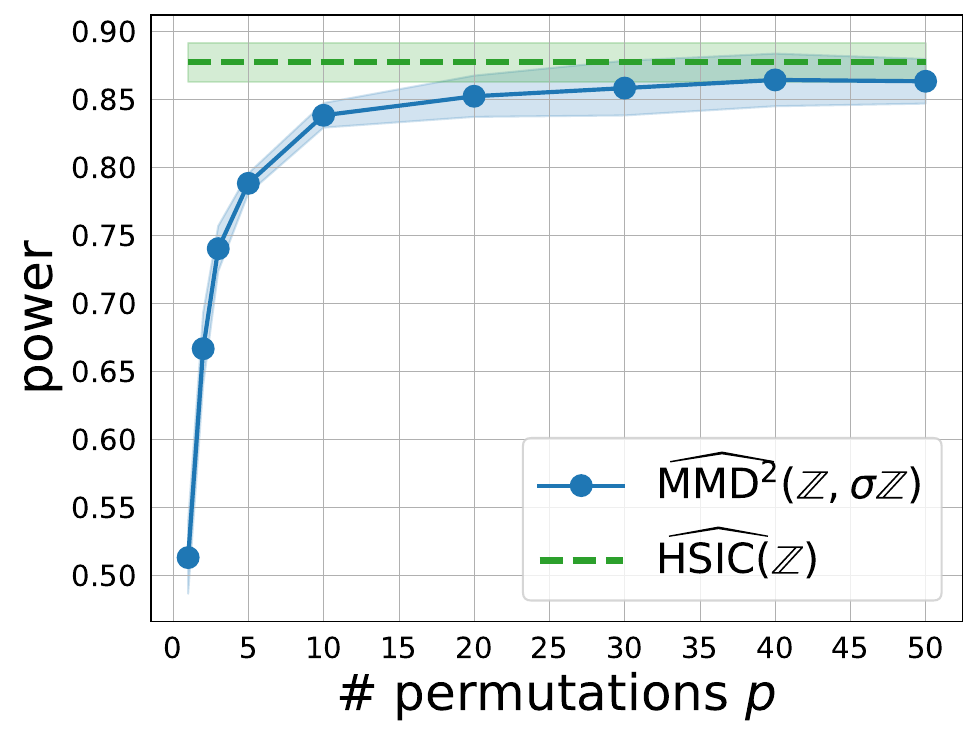}}
        \caption{(a) compares the sample variance of the biased MMD estimator using $p$ shuffles of the original sample. (b) examines how the number of shuffles $p$ affects the empirical test power. All results are based on $m=50$ samples from the Sinusoid problem with frequency parameter $\ell=1$. Both MMD and HSIC use Gaussian kernels $k$ and $l$ with bandwidth 1.}
        \label{fig:mmd-var-power-p}
    \end{center}
\end{figure*}

\cref{fig:mmd-var-power-p} shows that the higher variance of the permuted MMD estimator negatively impacts its overall test power compared to HSIC. Also, including more permutations seems to decrease the overall variance, although never lower than that of HSIC. 

In practice, the kernels learned by MMD-D and HSIC-D may not correspond, and so \cref{prop:mmd-hsic-var} is inapplicable. That said, we observe the same phenomenon in experiments. \cref{fig:variance_hsic_mmd} shows estimates of the asymptotic variance of MMD and HSIC along a training trajectory. For permuted MMD we consider both a single shuffling of the data (MMD-full), as well as a split shuffling (MMD-split) where we use half the data for our joint distribution sample, and the other half to permute for our product-of-marginals sample. We note that the initial variance of MMD-split is substantially higher than that of MMD-full, which is much higher than the variance of HSIC. MMD-split/full also exhibit greater final variances, particularly at larger batch sizes.

\begin{figure*}[!ht]
    \begin{center}
        \subfigure[$m=128$]
        {\includegraphics[width=0.245\textwidth]{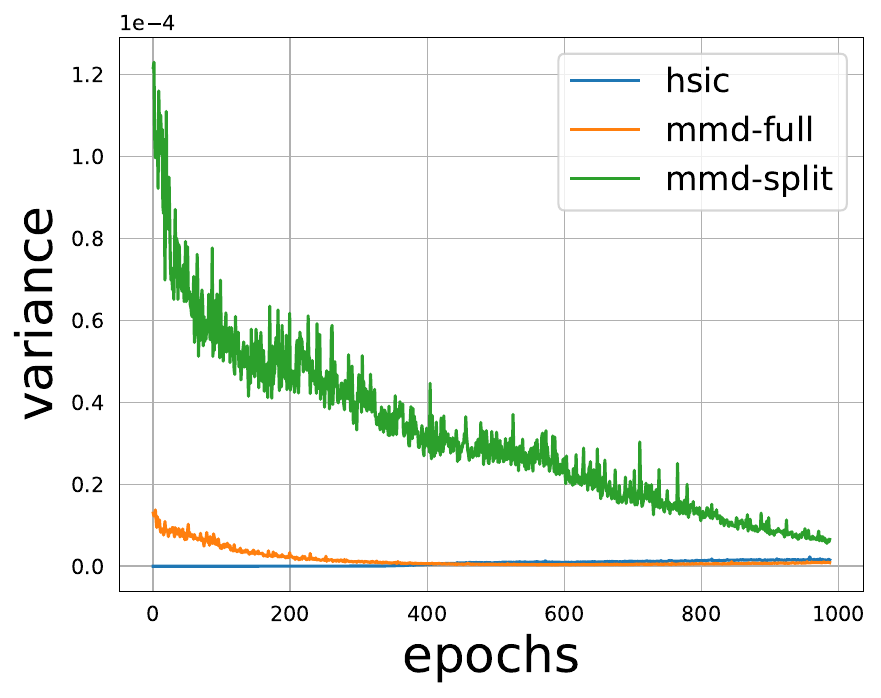}}
        \subfigure[$m=512$]
        {\includegraphics[width=0.245\textwidth]{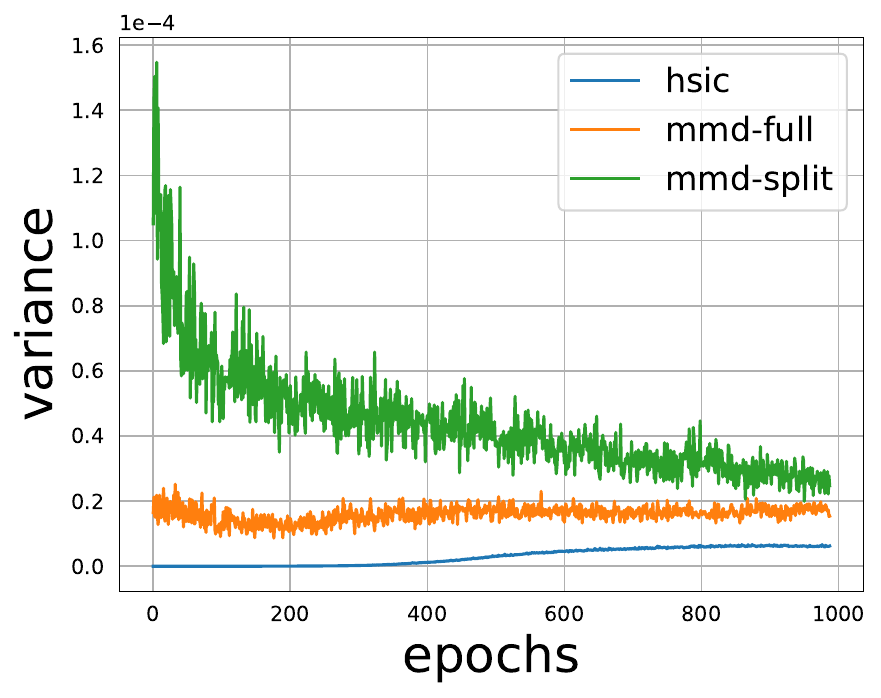}}
        \subfigure[$m=1024$]
        {\includegraphics[width=0.245\textwidth]{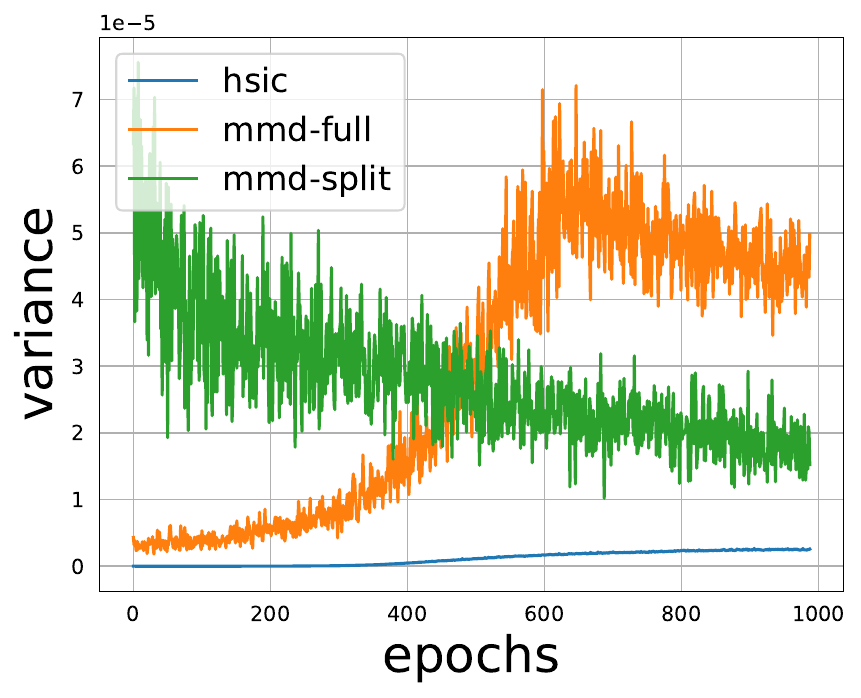}}
        \caption{Estimates of the asymptotic variance of HSIC (blue), MMD with a single permutation (orange), and MMD with a split permutation (green) along a training trajectory for HDGM-10 at sample sizes $m=128$ (a), $m=512$ (b), and $m=1024$ (c).}
        \label{fig:variance_hsic_mmd}
    \end{center}
\end{figure*}

\subsection{SNR Pitfall} \label{app:snr-pitfall}
Recent work by \cite{ren24a} argues a so-called pitfall of the HSIC signal-to-noise ratio paradigm. They identify a corner case whereby, when the bandwidth of one of kernel $k$ or $l$ approaches 0, ignoring the threshold term causes the SNR objective  $J_{w/o} = \widehat\HSIC_b / \hat{\sigma}_{\althyp} $ to differ from the true asymptotic power objective $J_{w/} = \left(\widehat\HSIC_b - r/m\right) / \hat{\sigma}_{\althyp} $ by a factor of $-(m-1)$, where $r$ denotes the asymptotic threshold. We argue that this is not a cause for concern, and in some cases, ignoring this is even preferred.

First, the behavior of the SNR objective in the bandwidth limit tells us nothing about the actual global maximum. Their argument tells us that $J_{w/o} = -(m-1) J_{w/}$ when the bandwidth $w_X\to 0$ and for a specific estimator $r = \E[m\widehat{\HSIC}_b] $. This does not imply, however, that $J_{w/o}$ explodes as $w_X \to 0$ since the SNR objective is optimized at a fixed sample size $m$, and even in the bandwidth limit $J_{w/o}$ can not be $-\infty$. The latter observation is true because $\widehat{\HSIC}_b$ is bounded and our variance estimates $\hat{\sigma}^2_{\althyp}$ are also bounded away from zero. Therefore even though $J_{w/o}$ is high when $w_X \searrow 0$, this value is not necessarily the global maximum. 
Second, their argument seems entirely dependent on the choice of estimators. For instance, if we use $\widehat{\HSIC}_u$ rather than $\widehat{\HSIC}_b$, and take the asymptotic threshold to be $\E[m\widehat{\HSIC}_u]=m\HSIC(\P_x \times \P_y) = 0$, we get that $J_{w/o} = J_{w/}$ in this regime.

As further evidence, we plot both $J_{w/}$ and $J_{w/o}$ at varying bandwidths $w_X$ on the ISA dataset used in \cite{ren24a}. We use the same settings as they do: $m=250$, $d=3$, $\theta=\pi/10$, $w_Y=1.0$. Results are shown in \cref{fig:snr-pitfall}. For very small bandwidths, $J_{w/o}$ does not explode and is not the global maximum. Moreover, the actual maximum agrees relatively well between $J_{w/}$ and $J_{w/o}$ at larger sample sizes $m$. We were unable to reproduce their Figure 1.

Additionally, \cref{fig:snr-pitfall} (a) shows that ignoring this threshold term may actually be preferred. Consider the limiting behavior as the bandwidth $\omega_X$ goes to infinity. In this regime, the Gram and centered Gram matrices are respectively $K|_{\omega = \infty} = \mathbf{1}\mathbf{1}^T$ and $K_c|_{\omega = \infty} = \mathbf{0}$, which tells us that $\widehat{\HSIC}_b = \text{Tr}[K_cL]/m = 0$. Since $m\widehat{\HSIC}_b$ is degenerate its variance and $1-\alpha$ quantile are zero, and so both $J_{w/}$ and $J_{w/o}$ are also 0 in this bandwidth limit.\footnote{Recall that our variance estimate in $J$ adds a small positive constant for stability, preventing $J$ from being undefined.} Now consider a difficult problem where the power at initialization is less than 0.5, meaning $J_{w/}$ is less than 0. Then the maximum of $J_{w/}$ is erroneously this bandwidth limit.

\begin{figure}[!ht]
    \begin{center}
        \subfigure[$m=250$]
        {\includegraphics[width=0.24\textwidth]{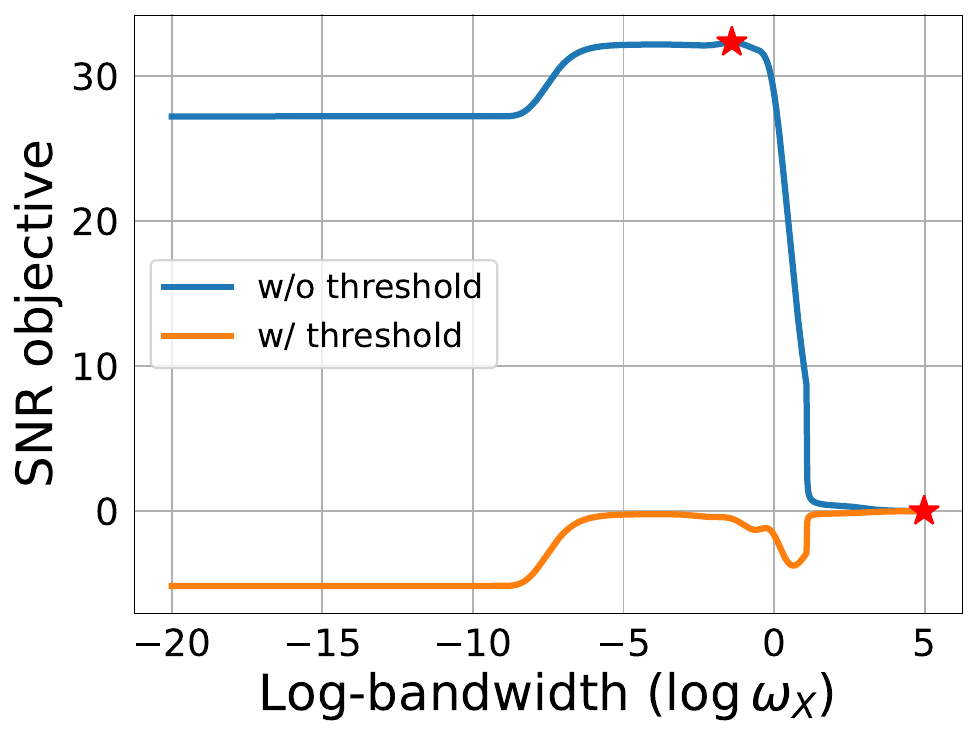}}
        \subfigure[$m=500$]
        {\includegraphics[width=0.24\textwidth]{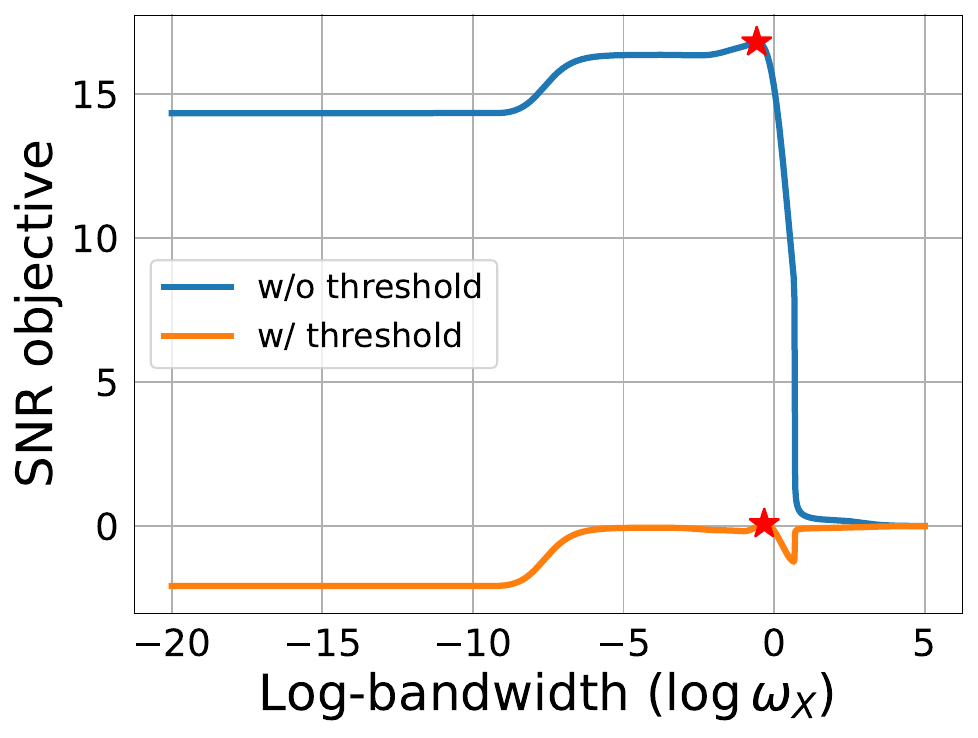}}
        \subfigure[$m=750$]
        {\includegraphics[width=0.24\textwidth]{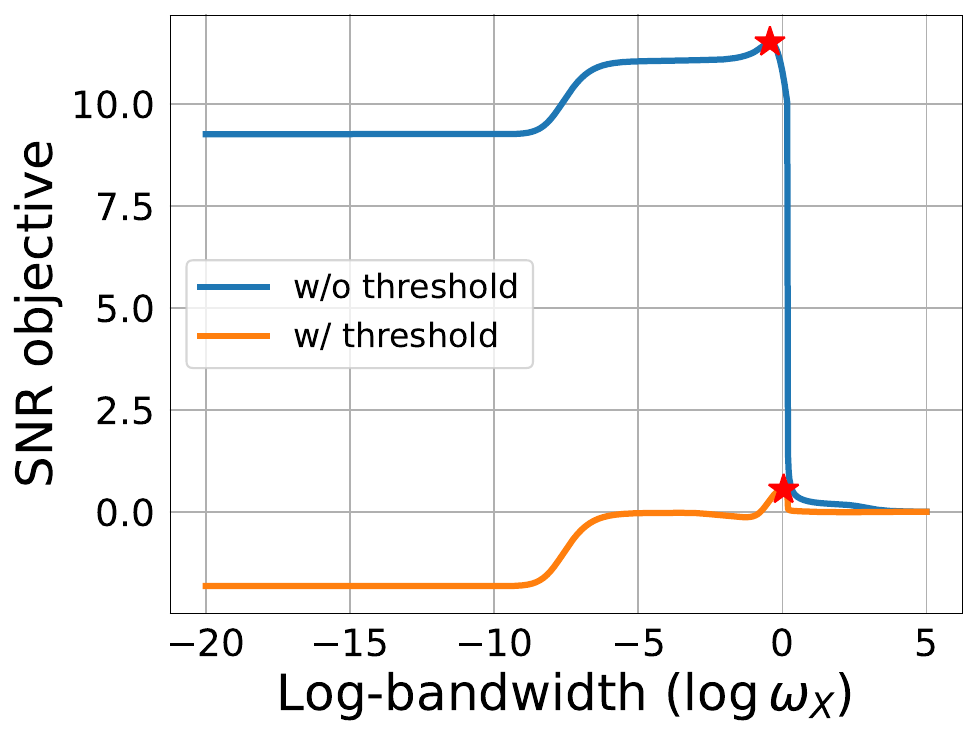}}
        \subfigure[$m=1000$]
        {\includegraphics[width=0.24\textwidth]{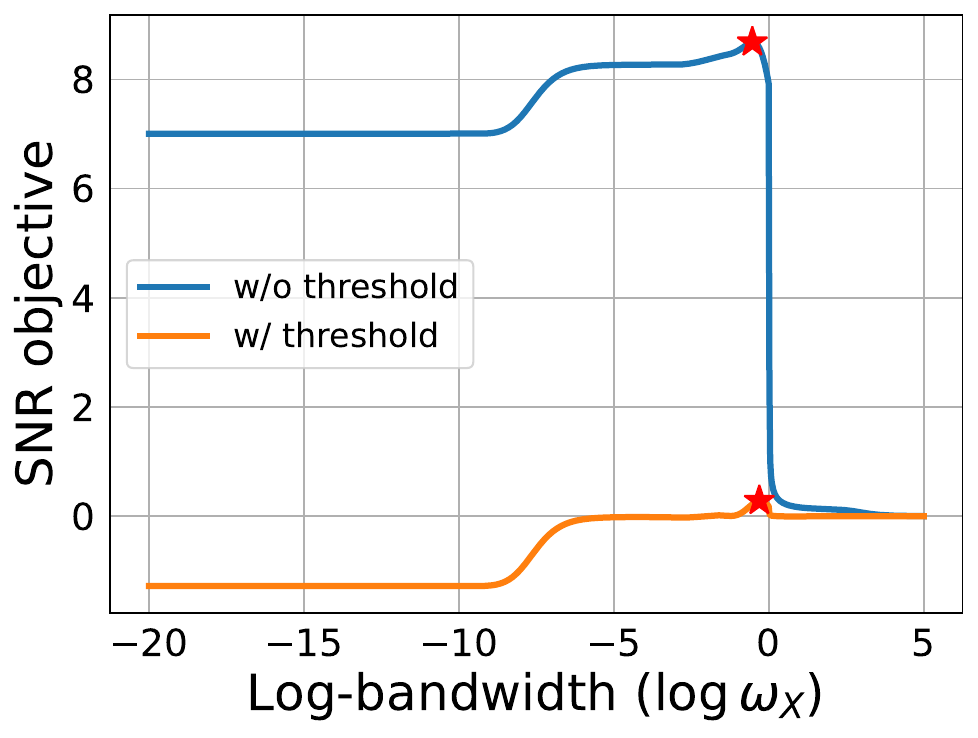}}
        \caption{Plots of $J_{w/o}$ (blue) and $J_{w/}$ (orange) at very small bandwidths $\omega_X$ with sample size $m$. For the threshold estimate, we use the $.95$-quantile of the Gamma approximation. The objective maximum is indicated by stars.}
        \label{fig:snr-pitfall}
    \end{center}
\end{figure}

\subsection{SNR with Threshold} \label{app:snr-with-thresh}
Although maximizing the HSIC signal-to-noise objective $J_{\HSIC}$ is sufficient for learning powerful tests, it may also be beneficial to estimate the threshold-dependent term in the asymptotic power expression of \cref{eq:hsic-asymptotic-power}. Adopting the same notation as before, we define the SNR with threshold expression as
\begin{align*}
    \Tilde{J}_\text{HSIC}(X,Y;k,l) &= \frac{\HSIC(X,Y;k,l)}{\sigma_{\althyp}(X,Y;k,l)} - \frac{\Psi^{-1}(1-\alpha)}{m \sigma_{\althyp}(X,Y;k,l)}
\end{align*}
where $\Psi$ is the problem-dependent cdf described in \cref{prop:asymptotics_null}. Estimating $\Tilde{J}_{\HSIC}$ proves troublesome, largely due to $\Psi^{-1}$ depending on both the variables $X,Y$ as well as kernels $k,l$. Additionally, it needs to be differentiated with respect to those kernel parameters. We opt to approximate the null distribution $\Psi$ with a Gamma distribution $F_{\nu,\theta}$ with shape and scale parameters 
\begin{align*}
    \nu = \frac{\E[\widehat\HSIC_b]^2}{\text{Var}[\widehat{\HSIC}_b]}
    \qquad\text{and}\qquad
    \theta = \frac{m \text{Var}[\widehat{\HSIC}_b] }{\E[\widehat{\HSIC}_b]},
\end{align*}
as suggested by \cite{gretton2007akernel}. When performing gradient updates, we need to compute the gradients $\nabla_\nu F_{\nu,\theta}^{-1}(1-\alpha)$ and $\nabla_\theta F_{\nu,\theta}^{-1}(1-\alpha)$. This is typically not possible with auto-differentiation, as the inverse cdf is not tractable. Instead, we propose a solution based on implicit differentiation: suppose $f_{\nu,\theta}$ is the Gamma pdf and $r = F_{\nu,\theta}^{-1}(1-\alpha)$. Then
\begin{align*}
    \nabla_{\nu} F_{\nu,\theta}^{-1}(1-\alpha) = - \frac{\nabla_\nu F_{\nu,\theta}(r) }{f_{\nu,\theta}(r)}
    \qquad\text{and}\qquad
    \nabla_{\theta} F_{\nu,\theta}^{-1}(1-\alpha) = - \frac{\nabla_\theta F_{\nu,\theta}(r) }{f_{\nu,\theta}(r)}.
\end{align*}
The gradient of the Gamma cdf with respect to the scale $\theta$ yields the simplified expression
\[
    \nabla_\theta F_{\nu,\theta}(r) = - \frac{1}{\theta \cdot\Gamma(\nu)} \left( \frac{r}{\theta} \right)^\nu e^{-r/\theta}.
\]
The gradient $\nabla_\nu F_{\nu,\theta}(r)$ is more troublesome. We resort to approximating this gradient via series expansions, detailed in \cite{moore-gamma}.

We compare the performance of HSIC and NDS tests maximized with and without the threshold-dependent term in \cref{fig:snr-with-threshold}. For HSIC, including this term seems to hurt the overall test power for harder problems. This is expected, since the null distribution is difficult to accurately estimate. We use two approximations here: one being the Gamma approximation, and two being the series expansion used to estimate the derivative of the Gamma cdf with respect to the shape parameter. On the other hand, NDS with the threshold term yields higher test power. Again, this is not too surprising as, unlike for HSIC, the null distribution here follows the CLT and thus is easily approximated given enough samples. 

\begin{figure}[!ht]
    \begin{center}
        \subfigure[NDS @ HDGM-4]
        {\includegraphics[width=0.24\textwidth]{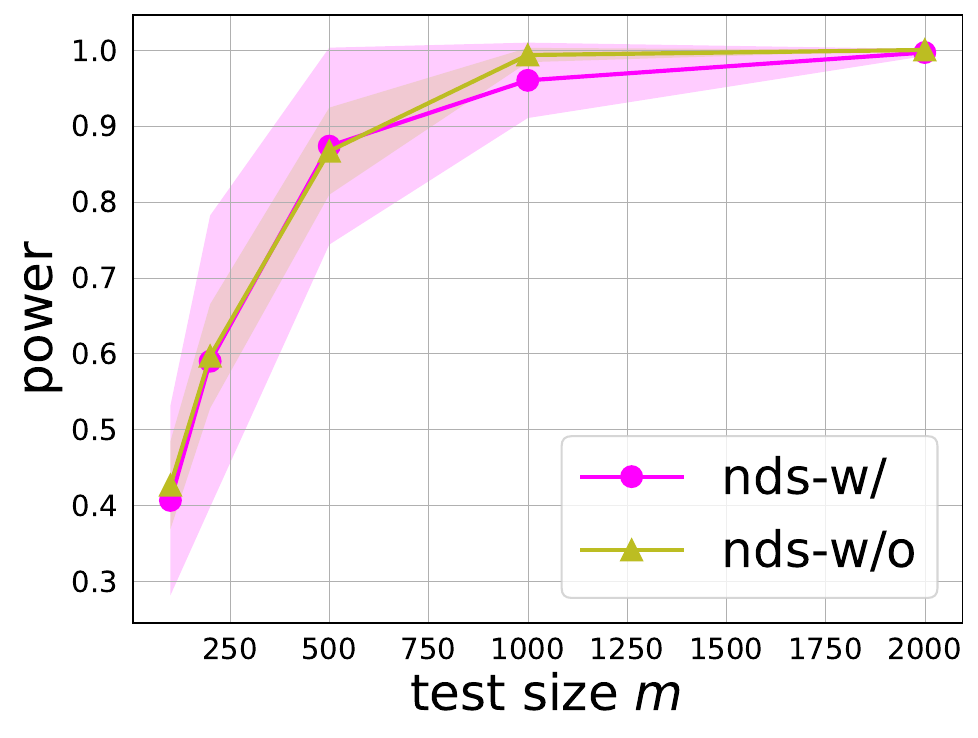}}
        \subfigure[NDS @ HDGM-10]
        {\includegraphics[width=0.24\textwidth]{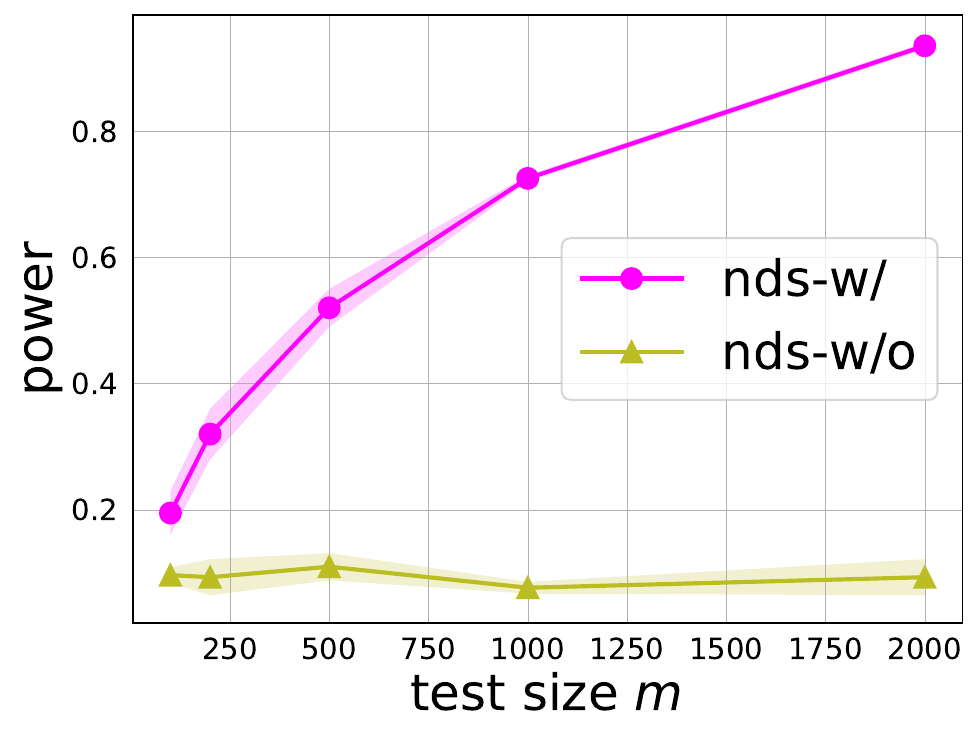}}
        \subfigure[HSIC @ HDGM-4]
        {\includegraphics[width=0.24\textwidth]{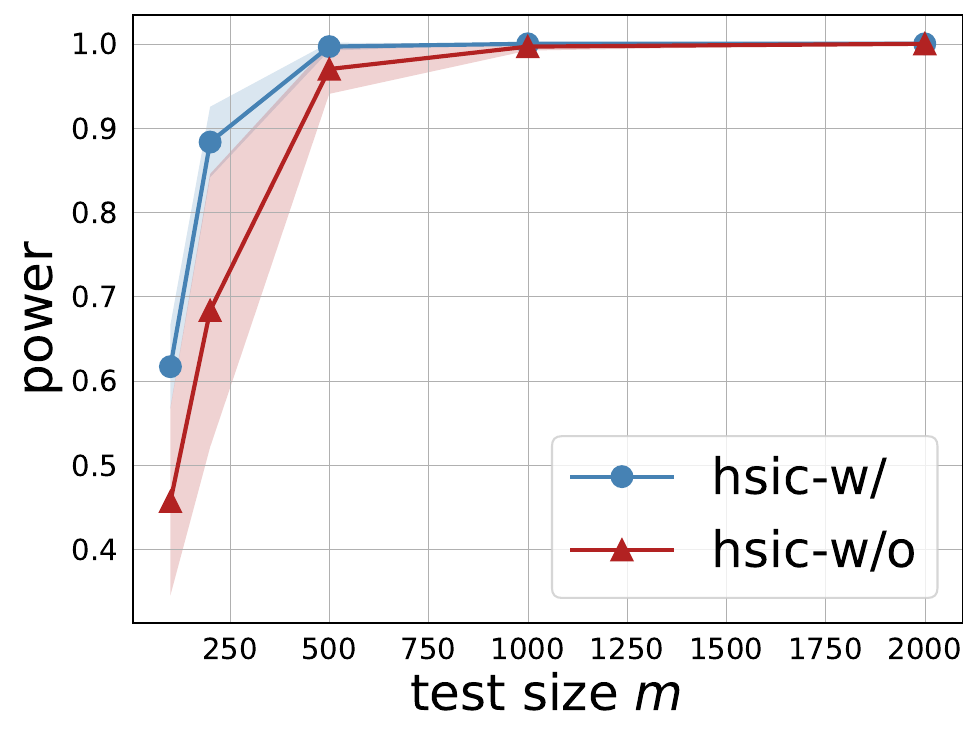}}
        \subfigure[HSIC @ HDGM-10]
        {\includegraphics[width=0.24\textwidth]{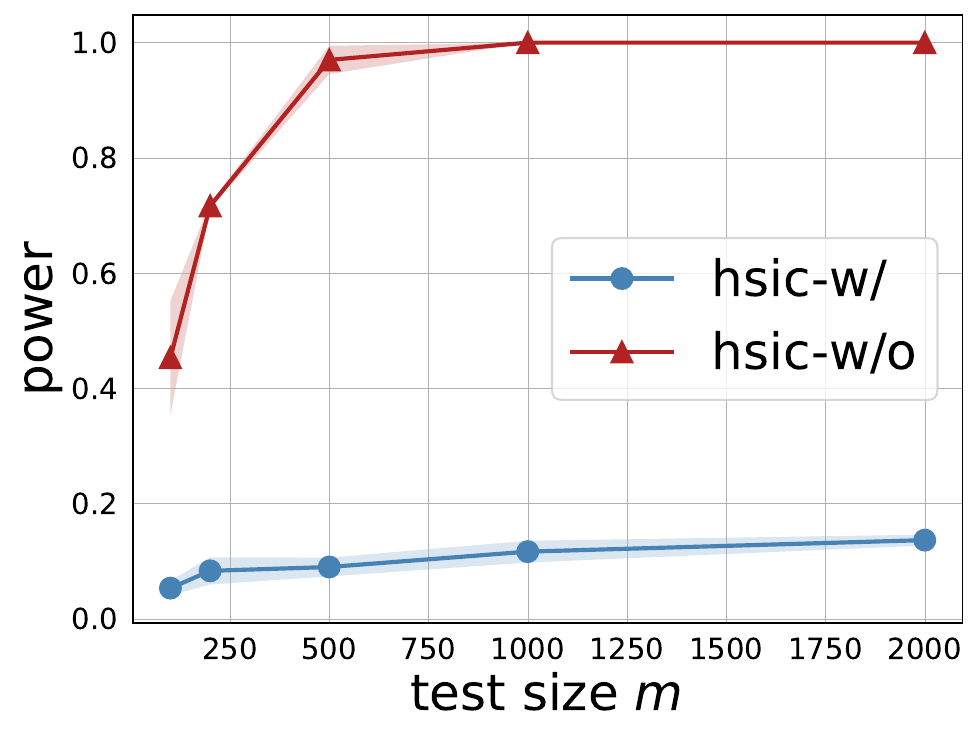}}
        \caption{Power vs test size based on NDS and HSIC signal-to-noise objectives, both with (blue) and without (red) estimating the threshold-dependent term. Our proposed methods NDS \eqref{eq:J-nds-est} and HSIC-D \eqref{eq:J-hsic-est} correspond to nds-w/ and hsic-w/o, respectively. We consider both HDGM-8 and HDGM-10, and use the same training configuration as specified in \cref{sec:exp}.}
        \label{fig:snr-with-threshold}
    \end{center}
\end{figure}

\subsection{Why do Permutation Tests deviate from Asymptotics?} \label{app:permutation-vs-asymptotics}
As \cref{fig:asymptotics-nce} suggests, the asymptotics (i.e.\ tests based on the asymptotic null and alternative distributions) may not explain permutation tests well. \cref{fig:perm-corr} illustrates why this might be: we observe strong dependency between the sample statistic $\hat{T}(\bX, \bY)$ and its rejection threshold, estimated via permutation. 

For the InfoNCE critic this dependency is substantial, with a correlation between the two of 98\%. While permutation tests still ensure the appropriate level, this coupling between the test statistic and permutation threshold is not accounted for in our asymptotic analysis. If these tests do follow their asymptotics, then we expect this correlation to be relatively small, since the threshold should always be a reasonable estimate of the $1-\alpha$ quantile. This is certainly not true for InfoNCE, and explains why its permutation test power is so much higher than what its asymptotics suggest. As for why or how InfoNCE critics exhibit such strong coupling, we do not yet know; this is a very interesting area for future work.

\begin{figure}[ht!]
    \centering
    \subfigure[NDS critic]{
        \includegraphics[width=0.3\textwidth]{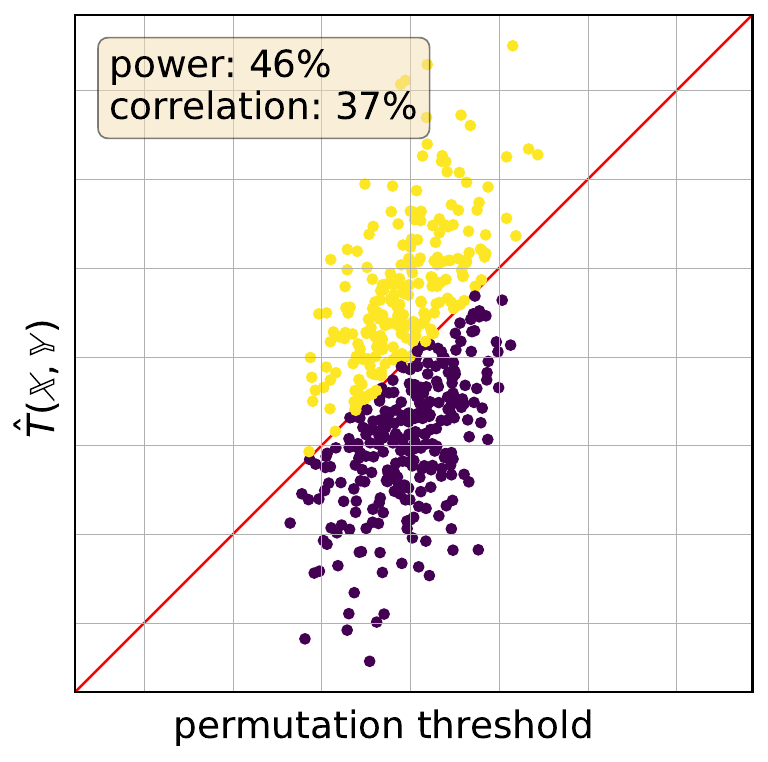}}
    \subfigure[InfoNCE critic]{
        \includegraphics[width=0.3\textwidth]{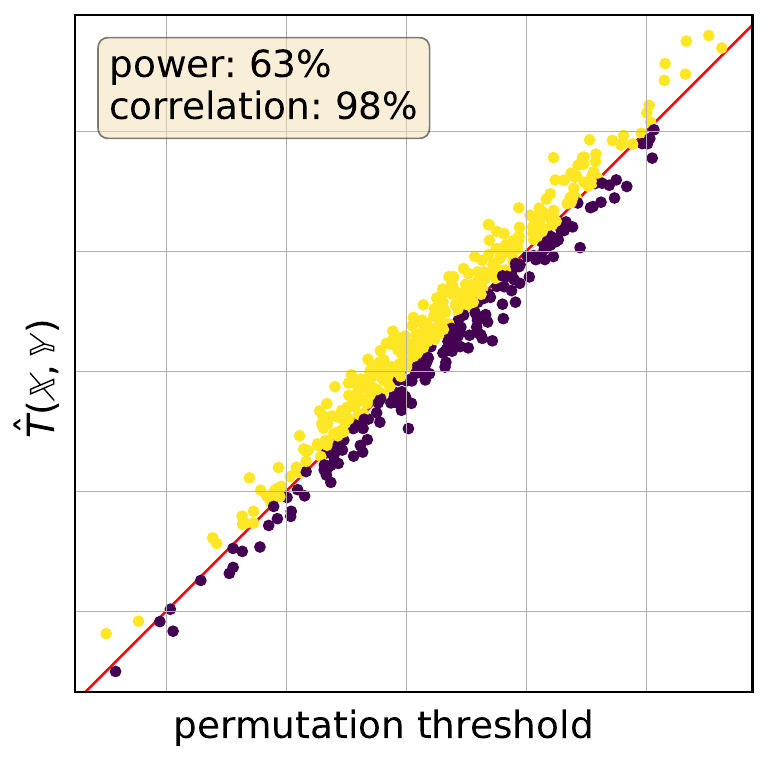}}
    \caption{Plot of the NDS statistic $\hat{T}$ evaluated on a sample $(\bX,\bY)$ from RatInABox, versus its rejection threshold estimated with 200 permutations. Each point corresponds to a separate test set $(\bX,\bY)$ of size $m=1000$, but is evaluated with the same critic function $f$. The results in (a) use a critic that maximizes the NDS test power, as in \cref{sec:method}. The critic in (b) directly maximizes InfoNCE. The red line is $y=x$, which is the rejection boundary; points above this line (colored yellow) are tests that reject the null hypothesis, while points below (purple) do not.}
    \label{fig:perm-corr}
\end{figure}

\end{document}